\def \algname {We-DRIVE-U}
\title{Upper and Lower Bounds for Distributionally Robust Off-Dynamics Reinforcement Learning}
\author{
    Zhishuai Liu\thanks{ 
    Duke University; email: {\tt
    zhishuai.liu@duke.edu}}~\footnotemark[3] 
    ~~~~~~
    Weixin Wang\thanks{ 
    Duke University; email: {\tt
    weixin.wang@duke.edu}}~\thanks{Equal contribution} 
    ~~~~~~
    Pan Xu\thanks{
    Duke University; email: {\tt
    pan.xu@duke.edu}}\\
}
\begin{document}

\date{}
\maketitle

\begin{abstract}
We study off-dynamics Reinforcement Learning (RL), where the policy training and deployment environments are different. To deal with this environmental perturbation, we focus on learning policies robust to uncertainties in transition dynamics under the framework of distributionally robust Markov decision processes (DRMDPs), where the nominal and perturbed dynamics are linear Markov Decision Processes. We propose a novel algorithm We-DRIVE-U that enjoys an average suboptimality $\widetilde{\mathcal{O}}\big({d H \cdot \min \{1/{\rho}, H\}/\sqrt{K} }\big)$, where $K$ is the number of episodes, $H$ is the horizon length, $d$ is the feature dimension and $\rho$ is the uncertainty level. This result improves the state-of-the-art by $\mathcal{O}(dH/\min\{1/\rho,H\})$. We also construct a novel hard instance and derive the first information-theoretic lower bound in this setting, which indicates our algorithm is near-optimal up to $\mathcal{O}(\sqrt{H})$ for any uncertainty level $\rho\in(0,1]$. Our algorithm also enjoys a `rare-switching' design, and thus only requires  $\mathcal{O}(dH\log(1+H^2K))$ policy switches and $\mathcal{O}(d^2H\log(1+H^2K))$ calls for oracle to solve dual optimization problems, which significantly improves the computational efficiency of existing algorithms for DRMDPs, whose policy switch and oracle complexities are both $\mathcal{O}(K)$. 
\end{abstract}

\section{Introduction}
\label{sec:introduction}
In dynamic decision-making and reinforcement learning (RL), Markov decision processes (MDPs) offer a well-established framework for understanding complex systems and guiding agent behavior \citep{sutton2018reinforcement}. However, MDPs encounter significant challenges in practical applications due to incomplete knowledge of model parameters, especially transition probabilities. This sim-to-real gap, representing the difference between training and testing environments, can lead to failures in fields like infectious disease control and robotics \citep{farebrother2018generalization, zhao2020sim, laber2018optimal, liu2023deep, peng2018sim}. To address these challenges, off-dynamics RL provides a framework where policies are trained on a source domain and deployed to a distinct target domain, promoting robust performance across varying environments \citep{eysenbach2020off, jiang2021simgan}. Within this framework, distributionally robust Markov decision processes (DRMDPs) have emerged as a promising way to model transition uncertainty. DRMDPs focus on learning robust policies that perform well under worst-case scenarios \citep{nilim2005robust, iyengar2005robust}. Prior works \citep{zhang2021robust, yang2022toward, panaganti2022robust, shi2022distributionally, yang2023distributionally, shen2024wasserstein} have proposed algorithms mainly for tabular DRMDP settings, where the number of states and actions is finite, which are infeasible in large state and action spaces.

In environments characterized by large state and action spaces, function approximation techniques are crucial to overcome the computational burden posed by high dimensionality. Linear function approximation methods, based on relatively simple function classes, have shown significant theoretical and practical successes in standard MDP environments \citep{jin2020provably, he2021logarithmic, he2023nearly, yang2020reinforcement, hsu2024randomized}. However, their application in DRMDPs introduces additional complexities. These complexities arise from the nonlinearity caused by the dual formulation in the worst-case analysis, even when the transition dynamics in the source domain are modeled as linear. 

Recently, \citet{liu2024distributionally} provided the first theoretical results in the online setting of $d$-rectangular linear DRMDPs, a specific type of DRMDPs where the nominal model is a linear MDP \citep{jin2020provably} and the uncertainty set is defined based on the linear structure of the nominal transition kernel. Apart from this,  online DRMDP with linear function approximation is largely underexplored and it is not clear how far existing algorithms are from optimal. Consequently, two natural questions arise:
\begin{center}
{\it Can we improve the current results for online DRMDPs with linear function approximation? \\ What is the fundamental limit in this setting?}  
\end{center} 
In this paper, we provide an affirmative answer to the first question and answer the second question by providing an information theoretic lower bound for the online setting of $d$-rectangular linear DRMDPs. In particular,
motivated by the adoption of variance-weighted ridge regression to achieve nearly optimal result in standard linear MDPs \citep{zhou2021nearly,zhou2022computationally,zhang2021improved,kim2022improved,zhao2023variance,he2023nearly,hu2023nearly}, we propose a variance-aware distributionally robust algorithm to solve the off-dynamics RL problem. Due to the nonlinearity caused by the dual optimization of DRMDPs, the adoption of variance information in linear DRMDPs is highly nontrivial. The only existing algorithm that incorporates variance information in learning linear DRMDPs requires coverage assumptions on the offline dataset \citep{liu2024minimax}, which is infeasible in our setting where the algorithm needs to interact with the environment in an online fashion. Therefore, our work poses a distinct algorithm design and calls for different theoretical analysis techniques. %
Specifically, our \textbf{main contributions} are summarized as follows:
\begin{itemize}[leftmargin=*,nosep]
    \item We propose a novel algorithm, \algname, for $d$-rectangular linear DRMDPs with total-variation (TV) divergence uncertainty sets. \algname\ is designed based on the optimistic principle \citep{jin2018q, jin2020provably, he2023nearly} to trade off the exploration and exploitation during  interacting with the source environment to learn a robust policy. The key idea of \algname\ lies in incorporating the variance information into the policy learning. In particular, a carefully designed optimistic estimator of the variance of the optimal robust value function is established at each episode, which will be used in variance-weighted regressions under a novel `rare-switching' regime to update the robust policy estimation.

    \item We prove that  \algname\ achieves  $\widetilde{\cO}({d H \cdot \min \{1/{\rho}, H\}/\sqrt{K} })$ average suboptimality when the number of episode $K$ is large, which improves the state-of-the-art result \citep{liu2024distributionally} by $\widetilde{\cO}(dH/\min\{1/\rho,H\})$, %
    We highlight that the average suboptimality of \algname\ demonstrates the `Range Shrinkage' property (refer to \Cref{lem:range_shrinkage}) through the term $\min\{1/\rho,H\}$.
    We further established an information-theoretic lower bound $\Omega(dH^{1/2}\cdot \min\{1/\rho,H\}/\sqrt{K})$, which shows that \algname\ is near-optimal up to $\cO(\sqrt{H})$ for any uncertainty level $\rho\in(0,1]$.

     \item \algname\ is favorable in applications where policy switching is risky or costly, 
     since \algname\ achieves  $\cO(dH\log(1+H^2K))$ global policy switch (refer to \Cref{def:Global Switching Cost}). Moreover, we note that calls for oracle to solve dual
     optimizations \eqref{eq:dual optimization} are one of the main sources of computation complexity in DRMDP with linear function approximation. Thanks to the specifically designed `rare-switching' regime, \algname\ achieves $\cO(d^2H\log(1+H^2K))$ oracle complexity (refer to \Cref{def:dual oracle}). Both results improve exiting online DRMDP algorithms by a factor of $K$. Thus, \algname\ enjoys low switching cost and low computation cost.
\end{itemize}

\paragraph{Novelty in Algorithm and Hard Instance Design} 
The variance estimator and the variance-weighted ridge regression in 
\algname\ lead to two major improvements on the average suboptimality compared to the previous result: 1) the incorporation of variance information enables us to leverage the recently discovered  `Range Shrinkage' property for linear DRMDPs, which is crucial in achieving the tighter dependence on the horizon length $H$; 2) inspired by previous works on standard MDPs \citep{azar2017minimax, he2023nearly}, we design a new `rare-switching' regime (refer to \Cref{remark: low switching}) and monotonic robust value function estimation (refer to \Cref{remark:monotonic update}). Together with the optimistic variance estimator, we achieve the tight dependence on $d$. As for the lower bound, we construct a novel family of hard-to-learn linear DRMDPs, showing the `Range Shrinkage' property on robust value functions for any policy $\pi$. 

\paragraph{Technical Challenges}
The incorporation of variance information 
poses unique challenges to our theoretical analysis. In particular, in order to get the near-optimal upper bound on average suboptimality, we need to bound the variance-weighted version of the $d$-rectangular estimation error (see \eqref{eq:AveSubopt_thm_origin} for more details), instead of the vanilla one in (5.1) of \citet{liu2024distributionally}. However, this term is in general intractable through direct matrix analysis. To solve this challenge, we seek to convert the variance-weighted $d$-rectangular estimation error to the vanilla version, which requires a precise upper bound on the variance estimator. Intuitively, the variance estimator should be close to the true variance when the `sample size' $k$ is large. While different from the recent study \citep{liu2024minimax} on the offline linear DRMDP, our variance estimator is an optimistic one and thus cannot be trivially upper bounded by the true variance.
To this end,  we carefully analyze the error of the variance estimator in the large $k$ regime, and meticulous calculation shows that the optimistic variance estimator can be upper bounded by a clipped version of the true variance (refer to \Cref{lem:est_var_upper_bound}). 

\paragraph{Notations} 
For any positive integer $H\in\ZZ_{+}$, we denote $[H]=\{1,2,\cdots, H\}$. For any set $\cS$, define $\Delta(\cS)$ as the set of probability distributions over $\cS$. For any function $V:\cS\rightarrow \RR$, define $[\PP_hV](s,a) = \EE_{s'\sim P_h(\cdot|s,a)}[V(s')]$, and $[V(s)]_{\alpha}=\min\{V(s),\alpha\}$, where  $\alpha>0$ is a constant. For a vector $\bx$, define $x_j$ as its $j$-th entry. Moreover, denote $[x_i]_{i\in [d]}$ as a vector with the $i$-th entry being $x_i$.
For a matrix $A$, denote $\lambda_i(A)$ as the $i$-th eigenvalue of $A$. For two matrices $A$ and $B$, denote $A\preceq B$ as the fact that $B-A$ is a positive semi-definite matrix. For any  $P,Q\in\Delta(\cS)$, the total variation divergence of $P$ and $Q$ is defined as $D(P||Q)=1/2\int_{\cS}|P(s)-Q(s)|ds$.

\section{Related Work}

\paragraph{Distributionally Robust MDPs} 
There has been a large body of works studying DRMDPs under various settings, for instance, the setting of planning and control \citep{xu2006robustness, wiesemann2013robust, yu2015distributionally, mannor2016robust, goyal2023robust} where the exact transition model is known, the setting with a generative model \citep{zhou2021finite, yang2022toward, panaganti2022sample, xu2023improved, shi2023curious, yang2023avoiding}, the offline setting \citep{panaganti2022robust, shi2022distributionally, blanchet2023double}
and the online setting \citep{dong2022online, liu2024distributionally, lu2024distributionally}.
Among tabular DRMDPs, the most relevant studies to ours are \citet{shi2023curious, lu2024distributionally}. In particular, \citet{shi2023curious} studies tabular DRMDPs with TV uncertainty sets. %
They provide an information-theoretic lower bound, as well as a matching upper bound on the sample complexity. The key message is that the sample complexity bounds depend on the uncertainty level, and when the uncertainty level is of constant order, policy learning in a DRMDP requires less samples than in a standard MDP. Further, \citet{lu2024distributionally} studies the online tabular DRMDPs with TV uncertainty sets, they provide an  algorithm  that achieves the near-optimal sample complexity under a vanishing minimal value assumption to circumvent the curse of support shift. 

\paragraph{Online Linear MDPs and Linear DRMDPs} 
The nominal model studied in our paper is assumed to be a linear MDP with a simplex feature space. There is a line of works studying online linear MDPs \citep{yang2020reinforcement, jin2020provably, modi2020sample, zanette2020frequentist, wang2020reward, he2021logarithmic, wagenmaker2022reward, ishfaq2023provable}, and the minimax optimality of this setting is studied in the recent work of \citet{he2023nearly}. In particular, they adopt the variance-weighted ridge regression scheme and the `rare-switching' policy update strategy in their algorithm design. 
The setting of online linear DRMDP is relatively understudied, with both the lower bound and the near-optimal upper bound remain elusive. Specifically, the only work studies the online linear DRMDP setting is \citet{liu2024distributionally}. 
Under the TV uncertainty set, their algorithm, DR-LSVI-UCB, achieves an average suboptimality of the order $\tilde{O}(d^2H^2/\sqrt{K})$. However, recent evidence from studies \citep{liu2024minimax, wang2024sample} on offline linear DRMDPs suggests that this rate is far from optimality. In particular, \citet{liu2024minimax} proves that their algorithm, VA-DRPVI, achieves
an upper bound on the suboptimality in the order of $\tilde{O}({dH\min\{1/\rho, H\}}/{\sqrt{K}})$. Nonetheless, their algorithm and analysis are based on a pre-collected offline dataset which satisfies some coverage assumption, and thus cannot be utilized in the online setting, where a strategy on data collection is required to deal with the challenge of exploration and exploitation trade-off.

\section{Preliminary}
\label{sec:preliminary}

We use a tuple $\text{DRMDP}(\mathcal{S}, \mathcal{A}, H, \mathcal{U}^{\rho}(P^0), r)$ to denote a finite horizon distributionally robust Markov decision process (DRMDP), where $\mathcal{S}$ and $\mathcal{A}$ are the state and action spaces, 
$H\in \ZZ_+$ is the horizon length, 
$P^0=\{P^0_h\}_{h=1}^{H}$ is the nominal transition kernel, $\cU^{\rho}(P^0) =  \bigotimes_{h\in[H]}\cU^{\rho}_h(P_h^0)$ denotes an uncertainty set centered around the nominal transition kernel with an uncertainty level $\rho\geq 0$, $r=\{r_h\}_{h=1}^H$ is the reward function. A policy $\pi=\{\pi_h\}_{h=1}^H$ is a sequence of decision rules.
For any policy $\pi$, we define the robust value function $V_h^{\pi, \rho}(s) =\inf_{P \in \mathcal{U}^{\rho}(P^0)}\mathbb{E}^P[\sum_{t=h}^H r_t(s_t,a_t)|s_h=s, \pi ]$ and the robust Q-function $Q_h^{\pi, \rho}(s,a) =\inf_{P \in \mathcal{U}^{\rho}(P^0)}\mathbb{E}^P[\sum_{t=h}^H r_t(s_t,a_t)|s_h=s,a_h=a, \pi ]$ for any $(h,s,a)\in[H]\times\cS\times\cA$. 
Moreover, we define the optimal robust value function and optimal robust state-action value function: for any $(h,s,a)\in[H]\times\cS\times\cA$,  $V_h^{\star, \rho}(s) = \sup_{\pi \in \Pi} V_h^{\pi, \rho}(s)$, $Q_h^{\star, \rho}(s,a) = \sup_{\pi \in \Pi} Q_h^{\pi, \rho}(s,a)$, where $\Pi$ is the set of all policies.
Correspondingly, the optimal robust policy is the policy that achieves the optimal robust value function $\pi^{\star} = \text{argsup}_{\pi \in \Pi}V_h^{\pi, \rho}(s)$.

We study the $d$-rectangular linear DRMDP \citep{ma2022distributionally,blanchet2023double,liu2024distributionally,liu2024minimax}, which is a special DRMDP where the nominal environment is a linear MDP \citep{jin2020provably} with a simplex state space, defined as follows.
\begin{assumption}
\label{assumption:linear_mdp}
Given a known feature mapping $\bphi: \cS \times \cA \rightarrow \RR^d$ satisfying $ \sum_{i=1}^d\phi_i(s,a)=1$, $\phi_i(s,a) \geq 0$,
for any $(i, s,a)\in [d] \times \cS \times \cA$,
we assume the reward functions $\{r_h\}_{h=1}^H$ and nominal transition kernels $\{P_h^0\}_{h=1}^H$ are linearly parameterized. Specifically, for any $(h,s,a) \in [H]\times \mathcal{S} \times \mathcal{A}$, $r_h(s,a)=\langle \bphi(s,a), \btheta_h\rangle, P_h^0(\cdot|s,a)=\langle \bphi(s,a), \bmu_h^0(\cdot)\rangle,$ where $\{\btheta_h\}_{h=1}^H$ are known vectors with bounded norm $\Vert \btheta_h \Vert_2 \leq \sqrt{d}$ and $\{\bmu_h\}_{h=1}^H$ are unknown probability measures over $\cS$. 
\end{assumption}
In $d$-rectangular linear DRMDPs, the uncertainty set $\cU_h^\rho(P_h^0)$ is defined based on the linear structure of $P_h^0$ satisfying \Cref{assumption:linear_mdp}. In particular, 
we first define the factor uncertainty sets as $\cU_{h, i}^{\rho}(\mu^0_{h,i}) = \{\mu: \mu\in \Delta(\cS), D(\mu||\mu_{h,i}^0)\leq \rho\},\forall (h,i)\in[H]\times[d]$. In this work we choose $D(\cdot||\cdot)$ as the total variation (TV) divergence. Then we define the uncertainty set as $\cU^{\rho}_h(P_h^0) = \bigotimes_{(s,a)\in \cS\times\cA}\cU_h^{\rho}(s,a; \bmu^0_h)$, where $\cU_h^{\rho}(s,a; \bmu^0_h)=\{\sum_{i=1}^d \phi_i(s,a)\mu_{h,i}(\cdot): \mu_{h,i}(\cdot) \in \cU_{h, i}^{\rho}(\mu^0_{h,i}), \forall i \in [d]\}$. 
 \citet{liu2024distributionally} show that the following robust Bellman equations hold, that is for any policy $\pi$, 
\begin{subequations}
\label{eq:robust bellman equation}
\begin{align}
 Q_h^{\pi, \rho}(s,a)&=\textstyle r_h(s,a)+\inf_{P_h(\cdot|s,a)\in\cU_h^{\rho}(s,a;\bmu_h^0)}[\PP_h V_{h+1}^{\pi,\rho}](s,a), \\
 V_h^{\pi, \rho}(s) &=\textstyle \EE_{a\sim\pi_h(\cdot|s)}\big[Q_h^{\pi,\rho}(s,a)\big],
\end{align}
\end{subequations}
as well as the robust Bellman optimality equations
\begin{subequations}
\label{eq:optimal robust bellman equation}
\begin{align}
     Q_h^{\star, \rho}(s,a) &= \textstyle r_h(s,a) +\inf_{P_h(\cdot|s,a) \in \cU_h^{\rho}(s,a;\bmu_h^0)}[\PP_h V_{h+1}^{\star, \rho}](s,a),\\
    V_h^{\star, \rho}(s)&= \textstyle \max_{a\in\cA}Q_h^{\star}(s,a).
\end{align}
\end{subequations}
In the context of online DRMDPs, an agent actively interacts with the nominal environment within $K$ episodes to learn the optimal robust policy. Specifically, at the start of episode $k$, an agent chooses a policy $\pi^k$ based on the history information and receives the initial state $s_1^k$. Then the agent interacts with the nominal environment by executing $\pi^k$ until the end of episode $k$, and collects a new trajectory. The goal of the agent is to minimize the average suboptimality after $K$ episodes, which is defined as $\text{AveSubopt}(K) = 1/K\sum_{k=1}^K\big[V_1^{\star,\rho}(s_1^k) - V_1^{\pi^k, \rho}(s_1^k)\big]$.

\citet{lu2024distributionally} recently show that in general sample efficient learning in online DRMDPs is impossible due to the curse of support shift, i.e., the nominal kernel and target kernel do not share the same support. By designing proper feature mappings, we show that their hard example implies the same hardness result for the online linear DRMDP setting.
\begin{proposition}(Hardness result)
\label{prop:hardness result}
    There exists two $d$-rectangular linear DRMDPs $\{\cM_0, \cM_1\}$, such that $\inf_{\cA\cL\cG}\sup_{\theta\in\{0,1\}}\EE[\text{AveSubopt}^{\cM_{\theta},\cA\cL\cG}(K)] \geq \Omega(\rho\cdot H)$, where $\text{AveSubopt}^{\cM_{\theta},\cA\cL\cG}(K)$ is the average suboptimality of algorithm $\cA\cL\cG$ under the $d$-rectangular linear DRMDP $\cM_{\theta}$.
\end{proposition}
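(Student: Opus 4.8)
The plan is to derive \Cref{prop:hardness result} by reducing the hard tabular DRMDP family of \citet{lu2024distributionally} to the $d$-rectangular linear setting through an exact feature embedding, and then concluding with a standard two-point (Le Cam) argument. Let $\widetilde\cM_0,\widetilde\cM_1$ denote the two hard tabular DRMDPs with TV uncertainty sets supplied by \citet{lu2024distributionally}. The properties I would extract from them are: (i) they share the \emph{same} reward and the \emph{same} nominal transition kernel on the portion of the state space reachable under the nominal dynamics, so that online interaction with the nominal environment yields statistically identical data under both instances; (ii) they differ only in the transition behavior at states that lie off the nominal support and are therefore never visited online (the ``curse of support shift''); and (iii) no single policy can be better than $\Omega(\rho H)$-suboptimal summed over the two instances, since the worst-case shift of $\rho$ mass onto those unobservable states makes their optimal robust policies incompatible.

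For the embedding I would keep the same state and action spaces and take feature dimension $d=|\cS|\cdot|\cA|$ with the one-hot map $\bphi(s,a)=\mathbf{e}_{(s,a)}$, which automatically satisfies the simplex conditions $\sum_i\phi_i(s,a)=1$ and $\phi_i(s,a)\ge 0$ of \Cref{assumption:linear_mdp}. I would set the factor measure indexed by $(s',a')$ to $\bmu^0_{h,(s',a')}(\cdot)=\widetilde P^0_h(\cdot\,|\,s',a')$ and the reward parameter to $\theta_{h,(s',a')}=\widetilde r_h(s',a')$, so that $P^0_h(\cdot|s,a)=\langle\bphi(s,a),\bmu^0_h\rangle$ and $r_h(s,a)=\langle\bphi(s,a),\btheta_h\rangle$ reproduce the tabular kernel and reward exactly, with $\Vert\btheta_h\Vert_2\le\sqrt{d}$ because rewards lie in $[0,1]$. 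The crucial bookkeeping here is that the known reward parameter $\btheta_h$ is identical across $\cM_0$ and $\cM_1$ (property (i)), while the entire distinguishing difference is loaded into the \emph{unknown} factor measures $\bmu^0_{h,(s',a')}$ at the off-support coordinates (property (ii)); a larger target dimension can be accommodated by padding with unused factors whose feature coordinates are identically zero.

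The key step I would verify is that the induced $d$-rectangular TV uncertainty set coincides with the original tabular $(s,a)$-rectangular one: plugging $\bphi(s,a)=\mathbf{e}_{(s,a)}$ into $\cU_h^{\rho}(s,a;\bmu_h^0)=\{\sum_i\phi_i(s,a)\mu_{h,i}:\mu_{h,i}\in\cU^\rho_{h,i}(\mu^0_{h,i})\}$ collapses the mixture onto the single factor $(s,a)$, yielding exactly $\{\mu:D(\mu\|\widetilde P^0_h(\cdot|s,a))\le\rho\}$. Hence the robust Bellman equations \eqref{eq:robust bellman equation} of the embedded instances match those of $\widetilde\cM_0,\widetilde\cM_1$ term by term, so all robust value functions, the $\Omega(\rho H)$ policy gap, and the indistinguishability of the online data are preserved. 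To finish, I would invoke the two-point argument: since the two embedded instances share the same (known) reward and nominal kernel, the law of the history and therefore of every deployed policy $\pi^k$ is identical under $\cM_0$ and $\cM_1$, and combining this with property (iii) gives
\[
\sup_{\theta\in\{0,1\}}\EE\big[\text{AveSubopt}^{\cM_\theta,\cA\cL\cG}(K)\big]\ \ge\ \tfrac12\sum_{\theta\in\{0,1\}}\EE\big[\text{AveSubopt}^{\cM_\theta,\cA\cL\cG}(K)\big]\ \ge\ \Omega(\rho H).
\]
The main obstacle is the uncertainty-set correspondence together with the requirement that the distinguishing information reside entirely in the unknown off-support factor measures rather than in the known reward; the one-hot embedding is chosen precisely so that the factored perturbation structure neither weakens nor strengthens the support-shift adversary relative to the tabular TV ball, making the correspondence exact at the cost of dimension $d=|\cS|\cdot|\cA|$ (reducible by more economical features if desired).
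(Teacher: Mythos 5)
Your proposal is correct and shares the paper's high-level strategy: both proofs establish \Cref{prop:hardness result} by realizing the support-shift hard example (Example 3.1 of Lu et al., 2024) as a pair of $d$-rectangular linear DRMDPs and inheriting the $\Omega(\rho H)$ lower bound. The difference lies in the embedding. The paper builds a bespoke $5$-dimensional feature map $\bphi^{\varrho}$ in which the two instances $\cM_0,\cM_1$ differ through the \emph{feature map itself} (indexed by $\varrho$), while the factor distributions $\bmu$ and reward parameter $\btheta$ are held fixed; it then asserts that this ``recovers'' the tabular hard example. You instead use the generic one-hot embedding $\bphi(s,a)=\mathbf{e}_{(s,a)}$ with $d=|\cS|\cdot|\cA|$, keep the feature map and $\btheta$ identical across $\cM_0$ and $\cM_1$, and load the entire distinguishing difference into the \emph{unknown} factor measures at coordinates corresponding to the never-visited state. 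Your route buys two things. First, the uncertainty-set correspondence is exact: one-hot features collapse the $d$-rectangular set to the tabular $(s,a)$-rectangular TV ball, so robust values transfer verbatim, whereas the paper's mixture-of-factors set is only contained in the tabular TV ball (the worst cases happen to coincide for this example, which is what the paper's ``trivial to check'' step implicitly verifies). Second, and more substantively, your bookkeeping of what is known versus unknown makes the two-point argument airtight: under \Cref{assumption:linear_mdp} the feature map is given to the learner, so instances distinguished by their (known) feature maps are separable a priori, and the paper's construction needs to be read modulo a re-parameterization into instances sharing one feature map and differing in $\bmu$ — exactly the form you construct directly. The only cost of your embedding is the larger generic dimension $d=|\cS|\cdot|\cA|$, which is immaterial here (and, for the two-state two-action example, is in fact no larger than the paper's $d=5$).
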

Note that the lower bound in \Cref{prop:hardness result} does not converge to zero as $K$ increases, which means that in general no algorithm can guarantee to learn the optimal robust policy approximately. To circumvent this problem, in the rest of paper we focus on a tractable subclass of $d$-rectangular linear DRMDP following \citet{liu2024distributionally, lu2024distributionally},  which is formally defined in the following assumption.
\begin{assumption}[Fail-state]
\label{assumption:fail_state}
Assume there exists a `fail state' $s_f$ in the $d$-rectangular linear DRMDP, such that for all $ (h,a) \in [H]\times \cA$, $r_h(s_f, a)=0$, $\PP_h^0(s_f|s_f,a)=1$.
\end{assumption}
With \Cref{assumption:fail_state}, we can follow the framework in \citet{liu2024distributionally}, where we have the following results on robust value functions and dual formulation that are helpful in solving the optimization in \eqref{eq:optimal robust bellman equation}.
\begin{proposition}[Remark 4.2 of \citet{liu2024distributionally}]
\label{prop:dual with fail state}
    Under \Cref{assumption:fail_state}, for any $(\pi, h,a) \in \Pi\times [H]\times \cA$, we have $Q^{\pi, \rho}_h(s_f,a)=0$, and $V^{\pi,\rho}_h(s_f)=0$.
    Moreover, for any function $V:\cS \rightarrow [0,H]$ with $\min_{s\in\cS}V(s)=V(s_f)=0$, we have $\inf_{\mu\in\cU^{\rho}(\mu^0)}\EE_{s\sim\mu}V(s) = \max_{\alpha \in [0,H]}\{\EE_{s\sim \mu^0}[V(s)]_\alpha - \rho\alpha \}$.
\end{proposition}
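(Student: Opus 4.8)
The plan is to treat the two fail-state identities by backward induction and the dual reformulation by establishing the two inequalities separately, with the hypothesis $\min_s V(s)=V(s_f)=0$ doing the essential work in both places.

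For the claims $Q_h^{\pi,\rho}(s_f,a)=V_h^{\pi,\rho}(s_f)=0$, I would induct on $h$ from $H+1$ downward, using the convention $V_{H+1}^{\pi,\rho}\equiv 0$. Assuming $V_{h+1}^{\pi,\rho}(s_f)=0$, I apply the robust Bellman equation \eqref{eq:robust bellman equation} at $(s_f,a)$: since $r_h(s_f,a)=0$ by \Cref{assumption:fail_state}, this leaves $Q_h^{\pi,\rho}(s_f,a)=\inf_{P_h\in\cU_h^{\rho}(s_f,a;\bmu_h^0)}[\PP_h V_{h+1}^{\pi,\rho}](s_f,a)$. Under the standard non-negative reward convention every robust value function is non-negative, so this infimum is at least $0$; and because the nominal kernel $P_h^0(\cdot\,|\,s_f,a)$ has zero TV divergence from itself (hence lies in $\cU_h^{\rho}$) and places all its mass on $s_f$, where $V_{h+1}^{\pi,\rho}$ vanishes by hypothesis, the infimum is at most $[\PP_h^0 V_{h+1}^{\pi,\rho}](s_f,a)=V_{h+1}^{\pi,\rho}(s_f)=0$. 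Thus $Q_h^{\pi,\rho}(s_f,a)=0$, and averaging over $a\sim\pi_h(\cdot\,|\,s_f)$ via the second line of \eqref{eq:robust bellman equation} gives $V_h^{\pi,\rho}(s_f)=0$, closing the induction.

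For the dual identity, set $g(\alpha)=\EE_{s\sim\mu^0}[V(s)]_\alpha-\rho\alpha$ and prove $\inf_{\mu\in\cU^{\rho}(\mu^0)}\EE_{s\sim\mu}V(s)=\max_{\alpha\in[0,H]}g(\alpha)$ by matching lower and upper bounds. The ``$\geq$'' (weak duality) direction is the clean part: for any feasible $\mu$ and any $\alpha\in[0,H]$, the pointwise bound $[V]_\alpha\leq V$ gives $\EE_\mu V\geq\EE_\mu[V]_\alpha$, and I then write $\EE_\mu[V]_\alpha=\EE_{\mu^0}[V]_\alpha+\int[V(s)]_\alpha\,d(\mu-\mu^0)$. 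Since $\int d(\mu-\mu^0)=0$, I may recenter $[V]_\alpha$ by its midpoint constant before bounding, so that $\big|\int[V(s)]_\alpha\,d(\mu-\mu^0)\big|$ is at most (oscillation of $[V]_\alpha)\times D(\mu\|\mu^0)$. This is exactly where $\min_s V(s)=0$ is indispensable: it forces $[V]_\alpha$ to range in $[0,\alpha]$, so its oscillation is at most $\alpha$ and $D(\mu\|\mu^0)\leq\rho$ yields $\EE_\mu[V]_\alpha\geq\EE_{\mu^0}[V]_\alpha-\rho\alpha=g(\alpha)$. Taking $\inf_\mu$ and then $\max_\alpha$ finishes this direction.

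The ``$\leq$'' (strong duality) direction is where the real work lies and is the step I expect to be the main obstacle. I would exhibit a near-optimal adversary that relocates a total probability mass of $\rho$ from the states on which $V$ is largest onto $s_f$: this perturbation has TV distance exactly $\rho$ to $\mu^0$ (hence is feasible), and since $V(s_f)=0$ the relocated mass contributes nothing to the objective, removing its full nominal value. Choosing the dual variable $\alpha^\star$ to be the threshold at which the cumulative nominal mass on states with value exceeding $\alpha^\star$ equals $\rho$ enforces complementary slackness, and a direct computation then matches $\EE_{\mu^\star}V$ with $g(\alpha^\star)$; one must separately dispose of the boundary regime where $\rho$ exceeds the total removable mass, in which all mass collapses onto $s_f$ and the optimum is $0$, attained at $\alpha^\star=0$. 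Equivalently, since the primal is a linear objective over the probability simplex intersected with a TV ball and $\mu^0$ is strictly feasible for $\rho>0$, strong Lagrangian duality applies and the inner minimization collapses to the clipped form $g(\alpha)$; the delicate points there are verifying Slater's condition, justifying the $\inf$–$\sup$ interchange, and confirming that restricting $\alpha$ to $[0,H]$ loses nothing because $g$ is non-increasing once $\alpha\geq\max_s V(s)$, which is itself at most $H$.
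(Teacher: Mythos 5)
Your proposal is correct, but it takes a genuinely different route from the paper. The paper never proves \Cref{prop:dual with fail state} at all: it imports the statement wholesale from Remark 4.2 of \citet{liu2024distributionally}, and the only supporting ingredient it records is the general TV strong duality (\Cref{prop:strong duality for TV}, quoted from \citet{shi2023curious}), from which the stated dual form follows in one line by noting that $\min_{s'}[V(s')]_{\alpha}=0$ whenever $\min_{s}V(s)=0$, so the correction term $\rho\min_{s'}[V(s')]_{\alpha}$ vanishes and the maximization range can be relaxed to $[0,H]$. You instead build everything from scratch: backward induction through the robust Bellman equation \eqref{eq:robust bellman equation} for the fail-state identities (using feasibility of the nominal kernel for the upper bound and nonnegativity of robust values for the lower bound), then weak duality via the recentred oscillation-times-TV bound, and tightness via the explicit adversary that relocates mass $\rho$ from the highest-value states onto $s_f$. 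Both arguments are sound, and your adversary construction (including the atom-splitting threshold $\alpha^{\star}$ and the degenerate regime) is exactly the mechanism hidden inside the cited lemma; what your route buys is self-containedness and a transparent view of where the fail state matters (the relocated mass lands on a zero-value state), while the paper's route buys brevity. One small misattribution worth correcting: in your weak-duality step, the bound $[V]_{\alpha}\in[0,\alpha]$ needs only $V\geq 0$, not $\min_{s}V(s)=0$; the hypothesis $\min_{s}V(s)=V(s_f)=0$ is genuinely indispensable only in the achievability direction, since otherwise the infimum strictly exceeds $\max_{\alpha\in[0,H]}\{\EE_{s\sim\mu^0}[V(s)]_{\alpha}-\rho\alpha\}$ by the residual term in the general duality.
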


\section{Algorithm Design}
\label{sec:algorithm_design}
One prominent property of the $d$-rectangular DRMDP is that the robust Q-functions possess linear representations with respect to the feature mapping $\bphi$.
In particular, under \Cref{assumption:linear_mdp,assumption:fail_state}, \citet{liu2024distributionally} show that for any $(\pi,s,a,h)\in \Pi\times\cS \times \cA \times [H]$, the robust Q-function $Q_h^{\pi,\rho}(s,a)$ has a linear form as follows $Q_h^{\pi,\rho}(s,a) = \big( r_h(s,a) + \bphi(s,a)^\top \bnu_{h}^{\pi,\rho} \big)\ind\{s\neq s_f\}$, where $\bnu_h^{\pi,\rho}=\big(\nu_{h,1}^{\pi, \rho},\ldots,\nu_{h,d}^{\pi, \rho}\big)^{\top}$, $\nu_{h,i}^{\pi, \rho}=\max_{\alpha\in[0,H]}\big\{ \allowbreak z_{h,i}^{\pi}(\alpha)-\rho\alpha\big\}$, $z_{h,i}^{\pi}(\alpha)=\EE^{\mu_{h,i}^0}\big[V_{h+1}^{\pi, \rho}(s')\big]_{\alpha}$ and $\alpha\in[0,H]$ is the dual variable derived from the dual formulation (see \Cref{prop:strong duality for TV} for more details). Moreover, the robust Bellman optimality equation \eqref{eq:optimal robust bellman equation} shows that the greedy policy with respect to the optimal robust Q-function is exactly the optimal robust policy $\pi^\star$. Therefore, the core idea behind the algorithm design is to estimate the optimal robust Q-function using linear function approximation, and then find $\pi^\star$ by the greedy policy derived from the estimated optimal robust Q-function. We present our algorithm in \Cref{alg:DR-LSVI-UCB+}. In the sequel, we provide detailed discussion about the components in our algorithm design. 

\subsection{Variance-Weighted Ridge Regression for Online DRMDPs}
From Line \ref{line:start backward induction} to \ref{algline:end of the first loop} of \Cref{alg:DR-LSVI-UCB+},
we adopt the backward induction procedure to update the robust Q-function estimation. In particular, for any $(k,h)\in [K]\times[H]$, suppose we have an estimated robust value function {\small$\hat{V}_{k,h+1}^\rho$}. By the robust Bellman optimality equation \eqref{eq:optimal robust bellman equation} and \Cref{prop:dual with fail state}, conducting one step backward induction on {\small$\hat{V}_{k,h+1}^\rho$} leads to the following linear form \citep{liu2024distributionally}:
\begin{align}
\label{equ:backward_induction_linear_form}
    \textstyle r_h(s,a) + \inf_{P_h\in\cU_h^\rho(s,a;\bmu^0)}\PP_h[\hat{V}_{k,h+1}](s,a) =  \bphi(s,a)^\top (\btheta_h+\bnu_h^{\rho,k} )\ind\{s\neq s_f\},
\end{align}
where $\nu_{h,i}^{\rho,k} := \max_{\alpha \in [0,H]}\{z_{h,i}^k(\alpha)-\rho\alpha\}$ and $z^k_{h,i}(\alpha) := \EE^{\mu_{h,i}^0}[\hat{V}_{k,h+1}(s') ]_{\alpha}$, for any $i\in[d]$. 
Note that under \Cref{assumption:linear_mdp}, for any $\alpha\in[0,H]$, $z^k_{h,i}(\alpha)$ is the $i$-th element of the parameter of the following linear formulation, $[\PP_h^0[\hat{V}_{k,h+1}]_{\alpha}](s,a) =\la \bphi(s,a),\bz^k_h(\alpha) \ra$. Thus, we can estimate $\bz^k_h(\alpha)$ from data to get estimations of $z^k_{h,i}(\alpha), \forall i\in[d]$. To this end, we introduce the variance-weighted ridge regression regime to estimate $\bz^k_h(\alpha)$ as follows
\begin{align}
    \hat{\bz}_h^k(\alpha)& \textstyle=\argmin_{\bz \in \RR^d}\sum_{\tau=1}^{k-1} \bar{\sigma}^{-2}_{\tau,h} \Big(\bz^\top \bphi\big(s_h^\tau, a_h^\tau\big)- \big[\hat{V}_{k, h+1}^\rho\big(s_{h+1}^\tau\big)\big]_{\alpha}\Big)^2 + \lambda\Vert \bz \Vert_2^2\notag\\
    & \textstyle= \bSigma_{k,h}^{-1} \sum_{\tau=1}^{k-1} \bar{\sigma}^{-2}_{\tau,h} \bphi\big(s_h^\tau, a_h^\tau\big) \big[\hat{V}_{k, h+1}^\rho\big(s_{h+1}^\tau\big)\big]_{\alpha},
    \label{eq:optimistic_parameter_closed_form}
\end{align}
where {\small$\bSigma_{k,h} = \lambda\Ib + \sum_{\tau=1}^{k-1} \bar{\sigma}^{-2}_{\tau,h} \bphi\big(s_h^\tau, a_h^\tau\big)\bphi\big(s_h^\tau, a_h^\tau\big)^\top$}, $\bar{\sigma}_{\tau,h}$ are regression weights that will be formally introduced later. We then approximate $\bnu_h^{\rho,k}$ by solving the optimization problem element-wisely
\begin{align}
\label{eq:dual optimization}
    {\textstyle\hat{\nu}_{h,i}^{\rho, k} = \max_{\alpha \in [0,H]}\big\{\hat{z}_{h,i}^k (\alpha)-\rho\alpha\big\},\quad i\in[d]}.
\end{align} 
Further, we incorporate a bonus term $\hat{\Gamma}_{k,h}(s,a) = \beta \sum_{i=1}^d\phi_i(s,a)\sqrt{\mathbf{1}_i^\top \bSigma_{k,h}^{-1}\mathbf{1}_i}$, where $\beta = \widetilde{\cO}\big(H\sqrt{d\lambda} + \sqrt{d}\big)$, into the robust Q-function estimation.  We will prove in our analysis that the estimated Q-function $\hat{Q}^{\rho}_{k,h}$ in Line \ref{line:Q_hat} of \Cref{alg:DR-LSVI-UCB+} is an optimistic estimator for the optimal robust Q-function.  
Inspired by \citet{he2023nearly}, we also establish pessimistic estimated robust Q-functions by the same backward induction procedure, which will be helpful in constructing the variance estimator $\bar{\sigma}_{\tau,h}$ as shown in the next section. 
In particular, given $\check{V}^\rho_{k,h}$, we estimate 
\begin{align*}
    \check{\bz}_h^k(\alpha) & \textstyle= \argmin_{\bz \in \RR^d}\sum_{\tau=1}^{k-1} \bar{\sigma}^{-2}_{\tau,h} \big(\bz^\top \bphi\big(s_h^\tau, a_h^\tau\big)- \big[\check{V}_{k, h+1}^\rho\big(s_{h+1}^\tau\big)\big]_{\alpha}\big)^2 + \lambda\Vert \bz \Vert_2^2 %
    \\
    & \textstyle= \bSigma_{k,h}^{-1} \sum_{\tau=1}^{k-1} \bar{\sigma}^{-2}_{\tau,h} \bphi\big(s_h^\tau, a_h^\tau\big) \big[\check{V}_{k, h+1}^\rho\big(s_{h+1}^\tau\big)\big]_{\alpha}, %
\end{align*}
and then get the estimation 
\begin{align}
\label{eq:pessimistic_parameter}
    \textstyle \check{\nu}_{h,i}^{\rho, k} = \max_{\alpha \in [0,H]}\big\{\check{z}_{h,i}^k (\alpha)-\rho\alpha\big\},\quad i\in[d].
\end{align} 
Next, by incorporating a penalty term $\check{\Gamma}_{k,h}(s,a) = \bar{\beta} \sum_{i=1}^d\phi_i(s,a)\sqrt{\mathbf{1}_i^\top \bSigma_{k,h}^{-1}\mathbf{1}_i}$, 
where $\bar{\beta} = \widetilde{\cO}\big(H\sqrt{d\lambda} + \sqrt{d^3 H^3}\big)$, we get the pessimistic estimated Q-function $\check{Q}_{k,h}^{\rho}$ as Line \ref{line:Q_check} in \Cref{alg:DR-LSVI-UCB+}.
We note that \citet{liu2024minimax} also construct pessimistic robust Q-function estimations, but 1) they do not construct the estimation episodically, 2) their pessimistic estimators are used to get the optimal robust policy estimation. While ours are used to construct the variance estimator, as is shown in the next section. 

\begin{algorithm}[t]
    \caption{{\small Weighted Distributionally Robust Iterative Value Estimation with UCB (\algname)}\label{alg:DR-LSVI-UCB+}}
    \begin{algorithmic}[1]

        \STATE {\bf Initialization:} hyperparameters $\beta, \bar{\beta}, \widetilde{\beta} >0$ and $\lambda>0$. Set $k_{\text {last }}=0$; for each stage $h \in[H]$, set $\bSigma_{0, h}, \bSigma_{1, h}, \bLambda_{1, h} \leftarrow \lambda \Ib$ and set $\hat{Q}_{0, h}^\rho(\cdot, \cdot) \leftarrow H, \check{Q}_{0, h}^\rho(\cdot, \cdot) \leftarrow 0$

        \FOR {episode $k=1, \cdots, K$}
        
            \STATE Receive the initial state $s_1^k$

            \STATE Set $\hat{V}_{k,H+1}^\rho(\cdot) \leftarrow 0, \check{V}_{k,H+1}^\rho(\cdot) \leftarrow 0$ \label{line:value_function_initialization}
            
            \IF{there exists a stage $h^{\prime} \in[H]$ such that $\text{det}(\bSigma_{k, h^{\prime}}) \geq 2 \text{det}(\bSigma_{k_{\text {last}}, h^{\prime}})$} \label{line:update_rule}
            \FOR {stage $h=H, \cdots, 1$}
            \label{line:start backward induction}
                \IF{$h=H$} \label{line:backward_bellman_update_begin}
                
                    \STATE $\hat{\bnu}_h^{\rho, k} \leftarrow 0$, $\check{\bnu}_h^{\rho, k} \leftarrow 0$ \label{line:nu_initialization}

                \ELSE

                    \STATE \label{line:compute parameter}
                    Compute $\hat{\nu}_{h,i}^{\rho, k},\forall i\in[d]$ according to \eqref{eq:dual optimization} and $\check{\nu}_{h,i}^{\rho, k},\forall i\in[d]$ according to \eqref{eq:pessimistic_parameter}.

                \ENDIF

                    \STATE
                    \label{line:Q_hat}
                    {\small $\hat{Q}_{k,h}^\rho(s,a) \leftarrow \min \big\{ r_h(s,a) + \bphi(s,a)^\top \hat{\bnu}_h^{\rho,k} + \hat{\Gamma}_{k,h}(s,a) , \hat{Q}_{k-1, h}^\rho(s,a), H-h+1 \big\}\ind\{s\neq s_f\}$ }

                    \STATE 
                    \label{line:Q_check}
                    $\check{Q}_{k,h}^\rho(s,a) \leftarrow \max \big\{ r_h(s,a) + \bphi(s,a)^\top \check{\bnu}_h^{\rho,k} - \check{\Gamma}_{k,h}(s,a) , \check{Q}_{k-1, h}^\rho(s,a), 0 \big\}\ind\{s\neq s_f\}$
                    
                    \STATE Set the last updating episode $k_{\text {last }}\leftarrow k$
                \STATE $\hat{V}_{k,h}^\rho(s) \leftarrow \max_a \hat{Q}_{k,h}^\rho(s, a), \quad \check{V}_{k,h}^\rho(s) \leftarrow \max_a \check{Q}_{k,h}^\rho(s,a)$

                \STATE $\pi_{h}^{k} (s)\leftarrow \argmax_{a\in \mathcal{A}}\hat{Q}_{k,h}^\rho(s, a)$
              
            \ENDFOR \label{algline:end of the first loop}
            
            \ELSE 
                \STATE $\hat{V}_{k,h}^\rho(s)  \leftarrow  \hat{V}_{k-1,h}^\rho(s),~\check{V}_{k,h}^\rho(s)  \leftarrow  \check{V}_{k-1,h}^\rho(s)$,~$\pi_{h}^{k}(s)\leftarrow\pi_{h}^{k-1} (s)$   for all $h\in[H]$
            \ENDIF \label{line:first_stage_end}
            \FOR{stage $h=1, \cdots, H$}\label{algline:begin of the second loop}

                \STATE Take the action $a_h^k \leftarrow \pi_{h}^{k}(s_h^k)$
    
                \STATE Calculate the estimated variance $\sigma_{k, h}$ according to \eqref{eq:estimated_variance_optimal_v_function} and $\bar{\sigma}_{k, h}$ according to \eqref{eq:weight}
    
                \STATE $\bSigma_{k+1, h} \leftarrow \bSigma_{k, h}+\bar{\sigma}_{k, h}^{-2} \bphi\big(s_h^k, a_h^k\big) \bphi\big(s_h^k, a_h^k\big)^{\top}$, $\bLambda_{k+1, h} \leftarrow \bLambda_{k, h}+ \bphi\big(s_h^k, a_h^k\big) \bphi\big(s_h^k, a_h^k\big)^{\top}$
    
                \STATE Receive next state $s_{h+1}^k$
            
            \ENDFOR \label{algline:end of the second loop}

        \ENDFOR
    \end{algorithmic}
\end{algorithm}

\subsection{Variance Estimator with Refined Dependence on Problem Parameters}
\label{sec:variance_estimator}

In this section, we construct the weights used in \eqref{eq:optimistic_parameter_closed_form} and aim to get an optimistic estimator for the variance of the optimal robust value function, $\mathbb{V}_h V_{h+1}^{*,\rho}$. Inspired by \citet{he2023nearly}, the variance estimator at episode $k$ should be a uniform variance upper bound for all subsequent episodes. To obtain the optimistic estimator for $\mathbb{V}_h V_{h+1}^{*,\rho}$, we first solve regression problems to obtain the estimator for $\mathbb{V}_h \hat{V}_{k,h+1}^{\rho}$, which is denoted as $\bar{\mathbb{V}}_h \hat{V}_{k,h+1}^{\rho}$. Then we analyze the error between $\mathbb{V}_h V_{h+1}^{*,\rho}$ and $\bar{\mathbb{V}}_h \hat{V}_{k,h+1}^{\rho}$ to finish the construction. Different from \citet[Equation (5.2)]{liu2024minimax}, the variance estimator here is not trivially constructed from subtracting a specific penalty term because we should guarantee the monotonicity of estimated variance for the online exploration.

The variance of estimated optimistic value function $\hat{V}_{k,h+1}^\rho$ can be denoted by
\begin{align}
\label{eq:true_variance_optimistic_v_function}
    {\textstyle\big[\mathbb{V}_h \hat{V}_{k,h+1}^\rho\big](s,a)=\big[\mathbb{P}_h^0 \big(\hat{V}_{k,h+1}^\rho\big)^2\big](s,a)-\big(\big[\mathbb{P}_h^0 \hat{V}_{k,h+1}^\rho\big](s,a)\big)^2}.
\end{align}
Under \Cref{assumption:linear_mdp}, {\small$\mathbb{P}_h^0 \big(\hat{V}_{k,h+1}^\rho\big)^2$} and {\small$\mathbb{P}_h^0 \hat{V}_{k,h+1}^\rho$} on the RHS of \eqref{eq:true_variance_optimistic_v_function} are linear in $\bphi(s,a)$ based on \citet[Proposition 2.3]{jin2020provably}. Thus we can approximate them as follows
\begin{align*}
    {\textstyle\big[\mathbb{V}_h \hat{V}_{k,h+1}^\rho\big](s, a) \approx \big[\bar{\mathbb{V}}_h \hat{V}_{k,h+1}^\rho\big](s, a) = \big[\bphi(s,a)^\top \widetilde{\bz}^k_{h,2} \big]_{[0,H^2]} -\big[\bphi(s,a)^\top \hat{\bz}^k_{h,1}\big]_{[0,H]}^2},
\end{align*}
where $\hat{\bz}^k_{h,1}$ and $\widetilde{\bz}^k_{h,2}$ are solutions to the following ridge regression problems
\begin{align*}
    \widetilde{\bz}^k_{h,2} &\textstyle= \argmin_{\bz \in \RR^d}\sum_{\tau=1}^{k-1}  \big(\bz^\top \bphi\big(s_h^\tau, a_h^\tau\big) - \big(\hat{V}_{k, h+1}^\rho\big(s_{h+1}^\tau\big)\big)^2\big)^2 + \lambda\Vert \bz \Vert_2^2, \\
    \hat{\bz}^k_{h,1} &\textstyle= \argmin_{\bz \in \RR^d}\sum_{\tau=1}^{k-1} \big(\bz^\top \bphi\big(s_h^\tau, a_h^\tau\big) - \hat{V}_{k, h+1}^\rho\big(s_{h+1}^\tau\big)\big)^2 + \lambda\Vert \bz \Vert_2^2.
\end{align*}
Different from the variance estimation in \citet{he2023nearly}, we construct both $\tilde{\bz}_{h,2}^k$ and $\hat{\bz}_{h,1}^k$  by solving vanilla ridge regressions, instead of variance-weighted ridge regressions.
This specific choice of parameter estimation will simplify our analysis of the variance estimation error, %
while fully capture the variance information.
Now we can construct $\sigma_{k,h}$, which is the estimated variance of the optimal robust value function $V_h^{*,\rho}$ in episode $k$, as follows
\begin{align} 
\label{eq:estimated_variance_optimal_v_function}
    \sigma_{k,h} \textstyle= \sqrt{\big[\bar{\mathbb{V}}_h \hat{V}_{k, h+1}^\rho\big](s_h^k, a_h^k) + E_{k,h} + d^3 H \cdot D_{k,h} + 1/2},
\end{align}
where $E_{k,h}$ represents the error between the estimated variance and the true variance of $\hat{V}_{k,h+1}^\rho$, and $D_{k,h}$ represents the error between the true variance of $\hat{V}_{k,h+1}^\rho$ and the true variance of $V_{h+1}^{*,\rho}$. We define $E_{k,h}, D_{k,h}$ as follows
\begin{align*}
    E_{k,h} &\textstyle = \min \big\{\widetilde{\beta} \big\|\bphi\big(s_h^k, a_h^k\big)\big\|_{\bLambda_{k, h}^{-1}}, H^2\big\} + \min \big\{2 H \bar{\beta}\big\|\bphi\big(s_h^k, a_h^k\big)\big\|_{\bLambda_{k, h}^{-1}}, H^2\big\}, \\
    D_{k,h} &\textstyle = \min \big\{4 H \big(\bphi\big(s_h^k, a_h^k\big)^\top \hat{\bz}^k_{h,1} - \bphi\big(s_h^k, a_h^k\big)^\top \check{\bz}^k_{h,1}
    + 2 \bar{\beta}\big\|\bphi\big(s_h^k, a_h^k\big)\big\|_{\bLambda_{k, h}^{-1}}\big), H^2\big\},
\end{align*}
where $\bLambda_{k,h} = \lambda\Ib + \sum_{\tau=1}^{k-1} \bphi\big(s_h^\tau, a_h^\tau\big)\bphi\big(s_h^\tau, a_h^\tau\big)^\top$, $\bar{\beta} = \widetilde{\cO}\big(H\sqrt{d\lambda} + \sqrt{d^3 H^3}\big)$ and $\widetilde{\beta} = \widetilde{\cO}\big(H^2\sqrt{d\lambda} + \sqrt{d^3 H^6}\big)$, and $\check{\bz}^k_{h,1}$ is the solution of the following regression problems
\begin{align*}
    \check{\bz}^k_{h,1} \textstyle= \argmin_{\bz \in \RR^d}\sum_{\tau=1}^{k-1}  \big(\bz^\top \bphi\big(s_h^\tau, a_h^\tau\big) - \check{V}_{k, h+1}^\rho\big(s_{h+1}^\tau\big)\big)^2 + \lambda\Vert \bz \Vert_2^2.
\end{align*}
Finally, we construct weights for the variance-weighted ridge regression problem \eqref{eq:optimistic_parameter_closed_form} as follows
\begin{align}
\label{eq:weight}
   \forall (k,h)\in[K]\times[H], \quad \bar{\sigma}_{k, h} \textstyle = \max \big\{\sigma_{k, h}, 1, \sqrt{2d^3H^2} \big\|\bphi\big(s_h^k, a_h^k\big)\big\|_{\bSigma_{k, h}^{-1}}^{1/2} \big\}.
\end{align}
\citet{he2023nearly} construct a similar weight in the form 
{\small$\bar{\sigma}_{k,h} = \max \big\{\sigma_{k, h}, H, 2d^3H^2 \big\|\bphi\big(s_h^k, a_h^k\big)\big\|_{\bSigma_{k, h}^{-1}}^{1/2} \big\}$}. 
Differently, the second term of our constructed weight in \eqref{eq:weight} is $1$, instead of $H$. This is important in achieving a tighter dependence on $H$. The intuition is that, when $k$ is large, $\bar{\sigma}_{k,h}$ should be close to the  variance of the optimal robust value function. According to the `Range Shrinkage' phenomenon unique to DRMDPs, which will be introduced in the next section, the true variance is in the order of $\cO(1)$ when $\rho=\cO(1)$. To get a precise variance estimation, $\bar{\sigma}_{k,h}$ should be in the same order of the true variance. Moreover, a constant order lower bound on $\bar{\sigma}_{k,h}$ will also ensure the weight  will not cause any inflation in the weighted regression \eqref{eq:optimistic_parameter_closed_form}. As for the third term, we choose it to be tight without infecting our analysis. We refer to more details of the analysis in the proof of \Cref{lem:est_var_upper_bound}.

\subsection{Algorithm Interpretation}
In this section, we provide several remarks to fully interpret \Cref{alg:DR-LSVI-UCB+}.
\begin{remark}
\label{remark:rare-switch}
    We highlight that \Cref{alg:DR-LSVI-UCB+} is the first algorithm adopting `rare-switching' update strategy for distributionally robust RL. Different from \citet{he2023nearly}, the `rare-switching' condition on Line \eqref{line:update_rule} is set at the beginning of each episode. This is achieved by our variance estimator design, which is independent of the parameter $\bz_h^k(\alpha)$ update. The update rule on Line \eqref{line:update_rule}
    determines whether to update robust Q-function estimations and switch to a new policy for the current episode, and leads to two advantages, 1) the number of times solving the ridge regression \eqref{eq:optimistic_parameter_closed_form} and dual optimization \eqref{eq:dual optimization} significantly decreases, which constitute the main computation cost of \Cref{alg:DR-LSVI-UCB+}, and 2) in real application scenarios where policy switching is costly or risky, \Cref{alg:DR-LSVI-UCB+} possesses low policy switching property. We refer the readers to \Cref{prop: low switching} and \Cref{remark: low switching} for more details. 
\end{remark}

\begin{remark}
\label{remark:bonus term}
    In Line \ref{line:compute parameter}, we estimate $\bnu_{h}^{\rho, k}$ element-wisely, and thus the estimator $\hat{\bnu}_{h}^{\rho, k}$ is derived from $d$ separate variance-weighted ridge regressions \eqref{eq:optimistic_parameter_closed_form} and dual optimizations \eqref{eq:dual optimization}. This leads to the specific form of bonus term $\hat{\Gamma}_{k,h}(s,a) = \beta \sum_{i=1}^d\phi_i(s,a)\sqrt{\mathbf{1}_i^\top \bSigma_{k,h}^{-1}\mathbf{1}_i}$, which is actually an upper bound of the robust estimation error (see \Cref{lem:error_bound} and its proof) at episode $k$. Though the bonus term resembles that in \citet{liu2024distributionally}, we highlight that the sampling covariance matrix $\bSigma_{k,h}$ in $\hat{\Gamma}_{k,h}(s,a)$ is indeed a variance-weighted one. The specific form of the bonus term leads to the new variance-weighted $d$-rectangular robust estimation error defined in \eqref{eq:AveSubopt_thm_origin}.
\end{remark}

\begin{remark}%
\label{remark:monotonic update}
On Line \ref{line:Q_hat} and \ref{line:Q_check}, we adopt a monotonic Q-function update strategy, such that the estimated optimistic (pessimistic) robust value function is monotonically decreasing (increasing) to the optimal robust value function. 
This strategy is to make sure that the variance estimator $\sigma_{k,h}$ at any episode $k\in[H]$ is a uniform upper bound for those in the subsequent episodes, which would be helpful in bounding the estimation error arising from the variance-weighted ridge regression \eqref{eq:optimistic_parameter_closed_form}.
This idea is first introduced by \citet{azar2017minimax} for standard tabular MDPs and then utilized by \citet{he2023nearly} for standard linear MDPs. 
This is the first time it is utilized in the online linear DRMDP setting, where the episodic estimation regime proposes additional requirement on the variance estimator construction compared to the offline setting studied in \citet{liu2024minimax}.

\end{remark}

\section{Theoretical Analysis}
\label{sec:theoretical_analysis}
We now provide theoretical results on the upper and lower bounds on the suboptimality of \Cref{alg:DR-LSVI-UCB+}.

\begin{theorem}
\label{thm:AveSubopt_origin}
Under \Cref{assumption:linear_mdp,assumption:fail_state}, set $\lambda = 1/H^2$, then for any fixed $\delta \in (0,1)$ and $\rho \in (0,1]$, with probability at least $1-\delta$, the average suboptimality of \algname\ satisfies
\begin{align}
\label{eq:AveSubopt_thm_origin}
   \text{AveSubopt}(K) \textstyle \leq 2 \sqrt{2 H^3 \log (6 / \delta)/K} + \frac{4\beta}{K}\underbrace{\textstyle \sum_{k=1}^K\sum_{h=1}^H\sum_{i=1}^d\phi_{h,i}^k\sqrt{\mathbf{1}_i^\top \bSigma_{k,h}^{-1}\mathbf{1}_i}}_{\text{variance-weighted}~d\text{-rectangular estimation error}},
\end{align}
where $\beta=\widetilde{\cO}(\sqrt{d})$,  $\phi_{h,i}^k$ is the $i$-th element of $\bphi_{h}^k = \bphi(s_h^k, a_h^k)$ and $\mathbf{1}_i$ is the one-hot vector with its $i$-th entry being 1.
\end{theorem}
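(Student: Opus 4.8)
The plan is to follow the optimism-plus-value-difference template, handling the robust structure through the factorwise dual of \Cref{prop:dual with fail state}. First I would establish that the optimistic estimate dominates the optimal robust value function, i.e. $\hat{V}_{k,h}^\rho(s) \geq V_h^{\star,\rho}(s)$ for all $(k,h,s)$. This follows by backward induction on $h$ (with the base case $h=H+1$ trivial) nested inside an induction on $k$ to accommodate the monotonic clipping against $\hat{Q}_{k-1,h}^\rho$ on Line \ref{line:Q_hat}. The inductive step combines the error bound of \Cref{lem:error_bound}, which controls $|\bphi(s,a)^\top(\hat{\bnu}_h^{\rho,k}-\bnu_h^{\rho,k})|$ by the bonus $\hat{\Gamma}_{k,h}(s,a)$ (and whose validity under rare-switching rests on the determinant-doubling condition of Line \ref{line:update_rule}), with the monotonicity of the robust one-step backup: if $\hat{V}_{k,h+1}^\rho \geq V_{h+1}^{\star,\rho}$ then their robust backups are ordered. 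Optimism then yields the reduction $V_1^{\star,\rho}(s_1^k)-V_1^{\pi^k,\rho}(s_1^k) \leq \hat{V}_{k,1}^\rho(s_1^k)-V_1^{\pi^k,\rho}(s_1^k)$.

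Second, I would derive a robust value-difference recursion for $\Delta_{k,h} := \hat{V}_{k,h}^\rho(s_h^k)-V_h^{\pi^k,\rho}(s_h^k)$. With $a_h^k=\pi_h^k(s_h^k)$, the greedy choice gives $\Delta_{k,h} = \hat{Q}_{k,h}^\rho(s_h^k,a_h^k)-Q_h^{\pi^k,\rho}(s_h^k,a_h^k)$. Using the linear form \eqref{equ:backward_induction_linear_form}, the definition of $\hat{Q}_{k,h}^\rho$, and \Cref{lem:error_bound}, this is bounded by $\inf_{P_h}\PP_h\hat{V}_{k,h+1}^\rho - \inf_{P_h}\PP_h V_{h+1}^{\pi^k,\rho}$ plus a constant multiple of $\hat{\Gamma}_{k,h}(s_h^k,a_h^k)$. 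The crux is converting the difference of robust infima into a \emph{nominal} expectation. Here I invoke \Cref{prop:dual with fail state} factorwise, so each robust backup equals $\sum_i\phi_i(s,a)\max_\alpha\{\EE^{\mu_{h,i}^0}[\,\cdot\,]_\alpha-\rho\alpha\}$; then using $\max_\alpha f(\alpha)-\max_\alpha g(\alpha)\leq \max_\alpha(f(\alpha)-g(\alpha))$, the monotonicity and $1$-Lipschitzness of the clipping $[\,\cdot\,]_\alpha$, and $\hat{V}_{k,h+1}^\rho\geq V_{h+1}^{\pi^k,\rho}$, the difference collapses to $\sum_i\phi_i(s,a)\,\EE^{\mu_{h,i}^0}[\hat{V}_{k,h+1}^\rho-V_{h+1}^{\pi^k,\rho}] = [\PP_h^0(\hat{V}_{k,h+1}^\rho-V_{h+1}^{\pi^k,\rho})](s_h^k,a_h^k)$, an expectation under the nominal kernel we actually sample from.

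Third, I would split the nominal expectation into the realized next-state difference plus a martingale term, $[\PP_h^0(\hat{V}_{k,h+1}^\rho-V_{h+1}^{\pi^k,\rho})](s_h^k,a_h^k) = \Delta_{k,h+1} + \xi_{k,h}$, where $\xi_{k,h}$ is a martingale difference of magnitude at most $2H$. Unrolling $\Delta_{k,h}\leq \Delta_{k,h+1}+\xi_{k,h}+C\,\hat{\Gamma}_{k,h}(s_h^k,a_h^k)$ from $h=1$ to $H$ (with $\Delta_{k,H+1}=0$), averaging over $k$, and substituting $\hat{\Gamma}_{k,h}(s_h^k,a_h^k)=\beta\sum_i\phi_{h,i}^k\sqrt{\mathbf{1}_i^\top\bSigma_{k,h}^{-1}\mathbf{1}_i}$ produces the variance-weighted $d$-rectangular estimation error of \eqref{eq:AveSubopt_thm_origin} with the stated constant. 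Finally, an Azuma--Hoeffding bound on $\sum_{k,h}\xi_{k,h}$ over the $KH$ terms of range $2H$ gives the concentration term $2\sqrt{2H^3\log(6/\delta)/K}$ after dividing by $K$.

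I expect the main obstacle to be the second step: bounding the difference of robust infima without destroying the online structure. The naive route through the worst-case minimizer $P^\pi$ produces an expectation under the \emph{perturbed} kernel $\PP^\pi(\hat{V}-V^\pi)$, which cannot be matched to the observed transitions, so routing the argument through the factorwise dual of \Cref{prop:dual with fail state} is essential to surface the nominal kernel $\PP_h^0$. The delicate bookkeeping is verifying that both $\hat{V}_{k,h+1}^\rho$ and $V_{h+1}^{\pi^k,\rho}$ satisfy the hypotheses of \Cref{prop:dual with fail state} (nonnegativity, value $0$ at $s_f$, range contained in $[0,H]$) and that the clipping comparisons hold under optimism; the underlying concentration behind \Cref{lem:error_bound}, namely that the variance-weighted regression error is genuinely captured by $\hat{\Gamma}_{k,h}$, is assumed here and established separately.
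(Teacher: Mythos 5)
Your proposal is correct and follows essentially the same route as the paper's own proof: reduce via optimism to $\hat{V}_{k,1}^\rho - V_1^{\pi^k,\rho}$, bound the per-step gap by the robust-infima difference plus bonus using the error-bound lemma, convert that difference to a nominal-kernel expectation through the factorwise dual (using $\max_\alpha f - \max_\alpha g \leq \max_\alpha(f-g)$, clipping monotonicity, and $\hat{V}_{k,h+1}^\rho \geq V_{h+1}^{\star,\rho} \geq V_{h+1}^{\pi^k,\rho}$), then unroll the resulting martingale recursion and apply Azuma--Hoeffding. The only presentational difference is that the paper makes the rare-switching conversion from $\hat{\Gamma}_{k_{\text{last}},h}$ to $\hat{\Gamma}_{k,h}$ an explicit separate step (via the determinant-doubling condition and Lemma 12 of \citet{abbasi2011improved}, yielding the constant $4$), whereas you fold it into an unspecified constant multiple of the bonus.
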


Recall from \Cref{remark:bonus term}, the quantity {\small$\sum_{i=1}^d\phi_{h,i}^k\sqrt{\mathbf{1}_i^\top\bSigma_{k,h}^{-1}\mathbf{1}_i}$} in \eqref{thm:AveSubopt_origin} originates from solving $d$ separate variance-weighted ridge regressions at step $h$ in episode $k$. 
A similar term also appears in the Theorem 5.1 of \citet{liu2024distributionally}. Differently, the the quantity {\small$\sum_{i=1}^d\phi_{h,i}^k\sqrt{\mathbf{1}_i^\top\bSigma_{k,h}^{-1}\mathbf{1}_i}$} is based on the variance-weighted sampling covariance matrix $\bSigma_{k,h}$, rather than the vanilla sampling covariance matrix $\bLambda_{k,h}$ as in \citet{liu2024distributionally}. 
In order to further bound \eqref{thm:AveSubopt_origin}, we need to take a closer examination of the variance estimator.
Intuitively, when episode $k$ is large, the variance estimator should be close to the variance of the optimal robust value function. Recent study \citep{liu2024minimax} shows a  `Range shrinkage' phenomenon in the $d$-rectangular linear DRMDP (refer to \Cref{lem:range_shrinkage}), stating that the range of any robust value function satisfies
$\max_{s\in\cS}V_{h}^{\pi, \rho}(s) - \min_{s\in\cS}V_{h}^{\pi, \rho}(s)\leq \min\{1/\rho,H\}, \forall (\pi, h, \rho)\in\Pi\times[H]\times(0,1]$. This implies that the variance of the optimal robust value function is upper bounded by $\min\{1/\rho,H\}$.
Thus, when $k$ is large, we can expect $\bar{\sigma}_{k,h}\lesssim \widetilde{\cO}(\min\{1/\rho,H\})$ and hence $\bSigma_{k,h}^{-1}\preceq \widetilde{\cO}(\min\{1/\rho^2,H^2\})\bLambda_{k,h}^{-1}$. 
Based on this idea, next we rigorously bound \eqref{thm:AveSubopt_origin} under the same setting as the Corrollary 5.3 of \citet{liu2024distributionally}, and formally show in the following theorem and remark that the variance information leads to a tighter dependence on $H$ compared to \citet{liu2024distributionally}.

\begin{theorem}
\label{thm:DRLSVIUCB}
Assume that there exists an absolute constant $c>0$, such that for all $(\pi, h) \in \Pi \times [H]$
\begin{align}
\label{assumption:lambda_lower_bound}
    \EE_{\pi}^{P^0}\big[\bphi(s_h,a_h)\bphi(s_h, a_h)^\top\big] \geq c/d \cdot \Ib.
\end{align}
Then under the same setting in \Cref{thm:AveSubopt_origin} and the additional assumption in \eqref{assumption:lambda_lower_bound}, 
for any fixed $\delta \in (0,1)$, with probability at least $1-\delta$, the average suboptimality of \algname\ satisfies
\begin{align}
\label{eq:AveSubopt_thm}
{\textstyle \text{AveSubopt}(K) \leq \widetilde{\cO}\big( \big({d H \cdot \min \big\{1/{\rho}, H\big\} + H^{3/2} }\big)/{\sqrt{K}} + {d^{15} H^{13}}/{K}\big).
}   
\end{align}
\end{theorem}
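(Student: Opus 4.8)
The plan is to start from the reduction already provided by \Cref{thm:AveSubopt_origin}, which leaves only two quantities to control: the benign leading term $2\sqrt{2H^3\log(6/\delta)/K}$, which gives the $H^{3/2}/\sqrt{K}$ contribution immediately, and the variance-weighted $d$-rectangular estimation error $\sum_{k=1}^K\sum_{h=1}^H\sum_{i=1}^d\phi_{h,i}^k\sqrt{\mathbf{1}_i^\top\bSigma_{k,h}^{-1}\mathbf{1}_i}$. The central idea is to trade the variance-weighted covariance $\bSigma_{k,h}$ for the vanilla covariance $\bLambda_{k,h}$ at the price of a multiplicative factor $\min\{1/\rho,H\}$, and then bound the resulting vanilla $d$-rectangular error by the elliptical-potential summation underlying Corollary 5.3 of \citet{liu2024distributionally}.

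First I would establish a uniform upper bound on the regression weights. By \Cref{lem:est_var_upper_bound}, the optimistic variance estimator $\sigma_{k,h}^2$ is dominated by a clipped version of the true variance of the optimal robust value function plus error terms that vanish as the effective sample size grows. Combined with the Range Shrinkage property (\Cref{lem:range_shrinkage}), which caps the variance of any robust value function at $\cO(\min\{1/\rho^2,H^2\})$, this forces $\bar{\sigma}_{k,h}=\widetilde{\cO}(\min\{1/\rho,H\})$ once $k$ exceeds a threshold $k_0$. Since \eqref{eq:weight} also enforces $\bar{\sigma}_{k,h}\geq 1$, the resulting two-sided control yields the operator inequality $\bSigma_{k,h}^{-1}\preceq \widetilde{\cO}(\min\{1/\rho^2,H^2\})\,\bLambda_{k,h}^{-1}$ for all $k\geq k_0$, and hence $\sqrt{\mathbf{1}_i^\top\bSigma_{k,h}^{-1}\mathbf{1}_i}\leq \widetilde{\cO}(\min\{1/\rho,H\})\sqrt{\mathbf{1}_i^\top\bLambda_{k,h}^{-1}\mathbf{1}_i}$, converting the variance-weighted error into $\min\{1/\rho,H\}$ times the vanilla one.

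The role of the extra assumption \eqref{assumption:lambda_lower_bound} is to guarantee, via a matrix concentration argument along the trajectories generated by $\pi^k$, that $\lambda_{\min}(\bLambda_{k,h})\gtrsim ck/d$ with high probability. This simultaneously certifies that the large-$k$ regime $k\geq k_0$ is reached after a polynomially large threshold and enables the elliptical-potential summation to control $\sum_{k,h,i}\phi_{h,i}^k\sqrt{\mathbf{1}_i^\top\bLambda_{k,h}^{-1}\mathbf{1}_i}$. Multiplying this vanilla bound by $\beta=\widetilde{\cO}(\sqrt{d})$, by the conversion factor $\min\{1/\rho,H\}$, and dividing by $K$ produces the leading term $\widetilde{\cO}(dH\min\{1/\rho,H\}/\sqrt{K})$. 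The episodes $k<k_0$ in the small-sample regime, where no nontrivial cap on $\bar{\sigma}_{k,h}$ is available, are handled crudely via $\sqrt{\mathbf{1}_i^\top\bSigma_{k,h}^{-1}\mathbf{1}_i}\leq 1/\sqrt{\lambda}=H$; since $k_0$ is polynomial in $d$ and $H$, their total contribution collapses into the lower-order $\widetilde{\cO}(d^{15}H^{13}/K)$ term.

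The main obstacle is the weight upper bound in \Cref{lem:est_var_upper_bound}: because the variance estimator is deliberately \emph{optimistic}, it cannot be dominated by the true variance termwise, so one must carefully trace the estimation errors $E_{k,h}$ and $D_{k,h}$ and show that in the large-$k$ regime they shrink fast enough that the optimistic estimator remains sandwiched below a clipped true variance. Obtaining a clean, $\rho$-aware cap here---rather than a loose $\cO(H^2)$ bound---is precisely what activates the Range Shrinkage phenomenon and is the source of the tighter $\min\{1/\rho,H\}$ dependence; a secondary subtlety is quantifying the threshold $k_0$ and bookkeeping the two regimes so that the small-$k$ contribution stays a genuine lower-order $1/K$ term.
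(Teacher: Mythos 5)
Your proposal follows essentially the same route as the paper's proof: split the episodes at a polynomial threshold $\widetilde{K}=\widetilde{\cO}(d^{15}H^{12})$, invoke \Cref{lem:est_var_upper_bound} together with the Range Shrinkage property to cap $\bar{\sigma}_{k,h}=\widetilde{\cO}(\min\{1/\rho,H\})$ in the large-$k$ regime and thereby trade $\bSigma_{k,h}^{-1}$-quadratic forms for $\min\{1/\rho,H\}$ times $\bLambda_{k,h}^{-1}$-quadratic forms, use assumption \eqref{assumption:lambda_lower_bound} with matrix concentration to obtain $\lambda_{\min}(\bLambda_{k,h})\gtrsim ck/d$, sum $\sum_{k}\sqrt{2d/(ck)}=\cO(\sqrt{dK/c})$ for the leading term, and absorb the $k\leq\widetilde{K}$ episodes into the $1/K$ lower-order term (the paper bounds each such episode's suboptimality directly by $H$, which is slightly cleaner than your bonus-based crude bound but immaterial). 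One terminological correction: your appeal to an ``elliptical-potential summation'' is a misnomer --- the paper explicitly remarks that the elliptical potential lemma does \emph{not} apply to the $d$-rectangular quantity $\sum_{i}\phi_{h,i}^k\sqrt{\mathbf{1}_i^\top\bLambda_{k,h}^{-1}\mathbf{1}_i}$ (this is precisely why \eqref{assumption:lambda_lower_bound} is needed at all); the summation is controlled purely by the per-term bound $\sqrt{\mathbf{1}_i^\top\bLambda_{k,h}^{-1}\mathbf{1}_i}\leq 1/\sqrt{\lambda_{\min}(\bLambda_{k,h})}\leq\sqrt{2d/(ck)}$, which your min-eigenvalue step already supplies, so no gap results.
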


\begin{remark}
\label{remark:simplified upper bound}
    When $d\geq H$ and the total number of episodes $K$ is sufficiently large, the average suboptimality can be simplified as $\widetilde{\cO}\{dH\min\{1/\rho, H\}/{\sqrt{K}}\}$. %
    Note that under the same assumption in \eqref{assumption:lambda_lower_bound}, \citet{liu2024distributionally} prove that the average suboptimality of their algorithm DR-LSVI-UCB is of the order $\widetilde{\cO}(d^2H^2/\sqrt{K})$. Thus, \algname\ improves the state-of-the-art result by $\cO(dH/\min\{1/\rho,H\})$.
    Moreover, we highlight that the upper bound \eqref{eq:AveSubopt_thm} depends on the uncertainty level $\rho$, which arises from the `Range Shrinkage' phenomenon. When $\rho$ increases from 0 to 1, the suboptimality decreases up to a factor of $\cO(H)$.
\end{remark}

\begin{remark}
The assumption \eqref{assumption:lambda_lower_bound} is actually imposed on the DRMDP, requiring that the environment we encounter is exploratory enough. We would like to note that this assumption is necessary in deriving our upper bound, since the elliptical potential lemma \citep[Lemma 11]{abbasi2011improved}, which is critical in deriving upper bounds in linear bandits and linear MDPs, does not apply in the analysis of linear DRMDPs. We note that the previous work \citep{liu2024distributionally} also used this assumption to get the final upper bound for their algorithm. Moreover, the assumption \eqref{assumption:lambda_lower_bound} can be deemed as an online version of the well-known full-type coverage assumption on the offline dataset in offline (non-) robust RL. Specifically, in the context of standard offline RL, \citet{chen2019information, wang2020statistical, xie2021bellman} assume the offline dataset should cover the distribution measure induced by any policy under the nominal environment. In the context of offline robust RL, \citet{panaganti2022robust, panaganti2024model, zhangsoft} assume that the offline dataset should cover the distribution measure induced by any policy under any transition kernel in the uncertainty set. It would be an interesting future research direction to study if assumption \eqref{assumption:lambda_lower_bound} can be relaxed.
\end{remark}

Notably, when $\rho = \cO(1)$, the suboptimality of \algname\ is of order $\cO(dH/\sqrt{K})$. After multiplying $K$ to recover the cumulative suboptimality, it is smaller than the minimax lower bound for standard linear MDP, $\Omega(d\sqrt{H^3K})$ \citep{zhou2021nearly}.
To assess the optimality of \algname, we show an information-theoretic lower bound for the online linear DRMDP setting in the following theorem. 
\begin{theorem}\label{thm:lower_bound}
    Let uncertainty level $\rho\in(0,3/4]$, $H\geq 6$, and $K\geq 9d^2H/32$. Then for any algorithm, there exists a $d$-rectangular linear DRMDP parameterized by $\bxi=(\bxi_1,\cdots,\bxi_{H-1})$ such that the expected average suboptimality is lower bounded as follows:
    \begin{align}
    \label{eq:lower bound}
       {\textstyle \EE_{\bxi} \text{AveSubopt}(M_{\bxi},K) \geq \Omega\big(\big({dH^{1/2}\cdot \min\{1/{\rho},H\}}\big)/{\sqrt{K}}\big),
       } 
    \end{align}
    where $\EE_{\bxi}$ denotes the expectation over the probability distribution generated by the algorithm and the nominal environment.
\end{theorem}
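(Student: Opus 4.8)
The plan is to follow the classical information-theoretic recipe for linear MDP lower bounds---build a family of hard instances indexed by $\bxi$, reduce the average suboptimality to a composite hypothesis-testing problem over the coordinates of $\bxi$, and invoke a divergence-based lower bound (Assouad's lemma)---while engineering the \emph{robust} dual structure so that it carries an extra $\min\{1/\rho,H\}$ factor. Concretely, I would construct a $d$-rectangular linear DRMDP satisfying \Cref{assumption:linear_mdp,assumption:fail_state} with a small state space consisting of the fail state $s_f$, an absorbing ``good'' state, and a layered transition structure over the $H$ stages. The feature map $\bphi$ is placed on the probability simplex (so $\sum_i\phi_i=1$, $\phi_i\ge 0$) so that each action at stage $h$ selects a convex combination of the $d$ factor distributions $\mu_{h,i}^0$, and each factor is parameterized by a sign $\xi_{h,i}\in\{-1,+1\}$ that biases the nominal probability of transiting to the good state by a small gap $\Delta$. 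This produces exactly the parameterization $\bxi=(\bxi_1,\dots,\bxi_{H-1})$ with $\bxi_h\in\{-1,+1\}^d$, and reward is concentrated so that value propagates backward through the layers; the fail-state assumption guarantees the dual formula of \Cref{prop:dual with fail state} applies throughout.

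The crux is computing the robust value functions on this family. Using the dual form $\inf_{\mu\in\cU^\rho(\mu^0)}\EE_{s\sim\mu}V(s)=\max_{\alpha\in[0,H]}\{\EE_{s\sim\mu^0}[V(s)]_\alpha-\rho\alpha\}$, I would show that the robust Bellman update contracts the range of the value function at every stage, so that the per-stage value separation induced by flipping a single $\xi_{h,i}$ scales as $\Theta(\Delta\cdot\min\{1/\rho,H\})$ rather than $\Theta(\Delta)$---this is precisely the ``Range Shrinkage'' phenomenon of \Cref{lem:range_shrinkage} instantiated on the hard family, and it is what imports the $\min\{1/\rho,H\}$ factor into the bound. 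A backward-induction/telescoping identity for $V_1^{\star,\rho}-V_1^{\pi,\rho}$ then yields a lower bound of the schematic form
\begin{align*}
V_1^{\star,\rho}(s_1)-V_1^{\pi,\rho}(s_1)\;\gtrsim\;\Delta\cdot\min\{1/\rho,H\}\cdot\sum_{h,i}\PP^{\pi}\big(\pi\text{ errs on coordinate }(h,i)\big),
\end{align*}
which converts the average suboptimality into a weighted sum of per-coordinate testing errors.

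For the statistical step I would apply Assouad's lemma coordinate-wise over the $d(H-1)$ indices of $\bxi$. The key quantity is the KL divergence between the trajectory distributions generated by the algorithm under two instances differing only in $\xi_{h,i}$; by the divergence-decomposition argument for adaptive online interaction (chain rule across episodes and stages, with the expected visitation counts playing the role of sample sizes), this is controlled by the per-coordinate excitation---of order $1/d$ under the simplex features---times $K\Delta^2$. Requiring these instances to remain statistically indistinguishable forces the expected number of mis-identified coordinates to be $\Omega(dH)$ once $\Delta$ is below the identification threshold; the hypothesis $K\ge 9d^2H/32$ enters here to certify that the chosen $\Delta$ is both a valid probability perturbation and below that threshold. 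Optimizing the gap to balance signal against information, combining with the $\min\{1/\rho,H\}$ amplification and the horizon scaling of the layered value, and dividing by $K$ then yields the claimed $\Omega\big(dH^{1/2}\cdot\min\{1/\rho,H\}/\sqrt{K}\big)$; as a sanity check, taking $\rho\to 0$ recovers the standard linear MDP order $\Omega(dH^{3/2}/\sqrt{K})$ consistent with \citep{zhou2021nearly}.

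I expect the main obstacle to be the simultaneous control of signal and noise on the hard family: the \emph{same} gap $\Delta$ must produce a value separation amplified by $\min\{1/\rho,H\}$ through the robust dual, while the nominal trajectory distributions---on which the algorithm's information is based---stay within $\cO(\Delta)$ in total variation, so that robustness helps the \emph{value gap} without correspondingly easing \emph{identification}. Verifying that the dual maximizer $\alpha$ lands in the interior of $[0,H]$ on the entire family (so the shrinkage factor is genuinely realized and not clipped at the boundary) is the delicate calculation underlying this separation, and getting the bookkeeping of the horizon powers right---to land on $H^{1/2}$ rather than $H$---will require carefully distributing the informative decisions across the layers.
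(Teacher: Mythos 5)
Your proposal is correct and takes essentially the same route as the paper's proof: the paper likewise builds layered hard instances from the \citet{zhou2021nearly} linear-MDP construction with sign perturbations $\bxi_h\in\{-\Delta,\Delta\}^d$ that the algorithm can only probe in the nominal environment, shows by backward induction under the worst-case kernel (which absorbs mass $\rho$ per step into the zero-reward absorbing state) that the suboptimality is lower bounded by $c\,\min\{1/\rho,H\}\sum_{h\le H/2}\big(\max_{\ba\in\cA}\la\bxi_h,\ba\ra-\la\bxi_h,\bar{\ba}_h^{\pi}\ra\big)$ — exactly your schematic value-gap/identification separation — and then applies a per-stage linear-bandit lower bound. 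The only differences are execution-level: the paper realizes the $\min\{1/\rho,H\}$ factor by explicitly exhibiting the worst-case factor distributions and summing the resulting geometric series rather than analyzing the dual maximizer, and it invokes the bandit lower bound of \citet{zhou2021nearly} as a black box instead of re-running the coordinate-wise Assouad/KL argument you outline.
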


\begin{remark}
    \Cref{thm:lower_bound} shows that \algname\ is near-optimal up to a factor of $\cO(\sqrt{H})$ among the full range of uncertainty level. Moreover, when $\rho \rightarrow 0$, the linear DRMDP degrades to the standard linear MDP, and \eqref{eq:lower bound} matches the information-theoretic lower bound, $\Omega(d\sqrt{H^3K})$, for standard linear MDPs \citep{zhou2021nearly} after multiplying $K$ to recover the cumulative regret. When $\rho = \cO(1)$, \eqref{eq:lower bound} is realized to $\Omega(dH^{1/2}/\sqrt{K})$, which has a factor of $\cO(H)$ decrease compared to the lower bound for standard linear MDP. %
\end{remark}

Next, we study the deployment complexity of \Cref{alg:DR-LSVI-UCB+}, which constitutes two sources of cost. The first source is the policy switching cost, say, the total number of changes in the exploration policy. This might be the main bottleneck in applications where changing the exploration policy is costly or risky \citep{bai2019provably, wang2021provably}. The second source is the computation cost in solving the dual optimization in \eqref{eq:dual optimization}. 
Recall in \Cref{remark: low switching} we discuss that \Cref{alg:DR-LSVI-UCB+} adopts the `rare-switching' update strategy, which significantly reduces the two sources of cost. Next, we formally define them as follows.

\begin{definition}[Global Switching Cost]
\label{def:Global Switching Cost}
    We define the {\it global switching cost} of an algorithm that runs for $K$ episodes as $N_{\text{switch}}^{gl} \textstyle:= \sum_{k=1}^K\ind\{\pi_k\neq\pi_{k+1}\}$.
\end{definition}

\begin{definition}[Dual Oracle]
\label{def:dual oracle}
    We assume access to a maximization oracle, which takes a function $z:[0,H]\rightarrow \RR$ and a fixed constant $\rho>0$ as input, and outputs the maximum value $z_{\max}$ and the maximizer $\alpha_{\max}$ defined as $z_{\max} \textstyle= \max_{\alpha\in[0,H]}\{z(\alpha)-\rho\alpha\}$ and $\alpha_{\max} = \argmax_{\alpha\in[0,H]}\{z(\alpha)-\rho\alpha\}$.
For an algorithm, we define the {\it oracle complexity} as the number of calls of the dual oracle. Finally, we show that \algname\ admits low switching cost and low oracle complexity. 
\end{definition}

Next, we formally present theoretical results on the deployment complexity of \Cref{alg:DR-LSVI-UCB+}.
\begin{proposition}
\label{prop: low switching}
    Under the same setting as \Cref{thm:AveSubopt_origin}, the switching cost of \algname\ is upper bounded by $dH\log(1+H^2K)$, and the oracle complexity of \algname\ is upper bounded by $2d^2H\log(1+H^2K)$.
\end{proposition}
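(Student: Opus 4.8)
The plan is to derive both bounds from a single combinatorial quantity: the number of episodes at which the `rare-switching' condition on Line~\ref{line:update_rule} of \Cref{alg:DR-LSVI-UCB+} is satisfied. Every policy switch counted in \Cref{def:Global Switching Cost} and every invocation of the dual oracle of \Cref{def:dual oracle} happens only inside such an update, so once I control the number of updates, both statements follow by elementary bookkeeping.

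First I would establish the two structural facts that drive the determinant-doubling argument. For each fixed stage $h$, the weighted Gram matrices satisfy $\bSigma_{k,h} \preceq \bSigma_{k+1,h} = \bSigma_{k,h} + \bar{\sigma}_{k,h}^{-2}\bphi(s_h^k,a_h^k)\bphi(s_h^k,a_h^k)^\top$, so $\det(\bSigma_{k,h})$ is nondecreasing in $k$. For the upper bound I use that the weights in \eqref{eq:weight} satisfy $\bar{\sigma}_{k,h}\ge 1$, hence $\bar{\sigma}_{k,h}^{-2}\le 1$, together with $\|\bphi(s,a)\|_2\le \|\bphi(s,a)\|_1 = 1$ from \Cref{assumption:linear_mdp}. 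Taking traces gives $\mathrm{tr}(\bSigma_{K+1,h})\le d\lambda + K$, and AM--GM on the eigenvalues yields $\det(\bSigma_{K+1,h})\le (\lambda + K/d)^d$, while $\det(\bSigma_{1,h}) = \lambda^d$. With $\lambda = 1/H^2$, this shows the determinant of any single stage can double at most $\log_2\big((\lambda+K/d)^d/\lambda^d\big) = d\log_2(1+K/(d\lambda)) \le d\log(1+H^2K)$ times over the whole run.

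For the switching cost, I note that each update is triggered by at least one stage whose determinant has doubled relative to the last update; charging each update to such a stage and summing the per-stage doubling budget over $h\in[H]$ bounds the total number of updates, and hence the global switching cost, by $dH\log(1+H^2K)$. For the oracle complexity, I count the calls in Line~\ref{line:compute parameter}: whenever a stage is recomputed it solves $d$ optimistic dual problems \eqref{eq:dual optimization} and $d$ pessimistic ones \eqref{eq:pessimistic_parameter}, i.e.\ $2d$ oracle calls per stage update. Charging these $2d$ calls to the determinant doubling at that stage and again summing the per-stage budget over the $H$ stages gives at most $2d\cdot dH\log(1+H^2K) = 2d^2H\log(1+H^2K)$ oracle calls.

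I expect the main obstacle to be the accounting with the single shared counter $k_{\text{last}}$: because one global trigger resets the comparison baseline $\bSigma_{k_{\text{last}},h}$ simultaneously for all $H$ stages, I must argue that the per-stage budget $d\log(1+H^2K)$ remains a valid cap on how many times each stage can force (or be charged for) an update, and that the dual solves at a stage can be tied to that stage's own doublings rather than to every global update. The clean way is to observe that between two consecutive instants at which stage $h$ itself satisfies $\det(\bSigma_{k,h})\ge 2\det(\bSigma_{k_{\text{last}},h})$ the determinant grows by a factor of at least two, so the geometric-growth bound applies verbatim and is in fact only loosened by the shared resets (which raise the baseline and make the trigger harder). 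The remaining steps---the trace estimate and the substitution $\lambda=1/H^2$---are routine.
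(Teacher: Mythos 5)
Your first claim --- the bound on the global switching cost --- is correct and follows the paper's own route: the paper disposes of it via \Cref{lem:number_of_value_function_updates}, which bounds the number of update episodes by $dH\log(1+K/\lambda)$ through exactly the determinant-doubling/trace argument you give (the paper simply cites \citet[Lemma F.1]{he2023nearly}), and substituting $\lambda=1/H^2$ yields $dH\log(1+H^2K)$; since the policy can only change at an update episode, the switching cost inherits this bound. (A minor slip: the per-stage doubling budget is $d\log_2(1+K/(d\lambda))$, so your inequality ``$\leq d\log(1+H^2K)$'' is off by the constant $1/\log 2$; the paper glosses over the same constant.)

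The oracle-complexity half has a genuine gap. Your charging scheme assigns the $2d$ dual solves at stage $h$ to ``the determinant doubling at that stage,'' but the trigger on Line \ref{line:update_rule} is global: whenever \emph{any} stage $h''$ doubles, the backward induction on Lines \ref{line:start backward induction}--\ref{algline:end of the first loop} recomputes \emph{every} stage, so Line \ref{line:compute parameter} solves \eqref{eq:dual optimization} and \eqref{eq:pessimistic_parameter} at all stages $h \in [H-1]$, including stages whose determinants did not double --- for those calls there is no doubling to charge to. Your proposed fix (counting episodes at which stage $h$ itself satisfies the doubling test) bounds only the number of updates that stage $h$ can \emph{trigger}; the number of times stage $h$ is \emph{recomputed} equals the total number of update episodes, which can be as large as $dH\log(1+H^2K)$ in the worst case (different stages can take turns doubling). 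The rigorous version of your accounting is therefore $(\text{calls per update episode})\times(\text{number of update episodes}) = 2d(H-1)\cdot dH\log(1+H^2K) = \cO\big(d^2H^2\log(1+H^2K)\big)$, a factor of $H$ larger than the stated $2d^2H\log(1+H^2K)$. Closing this factor would require either recomputing only the stages that doubled (not what \Cref{alg:DR-LSVI-UCB+} does, and problematic anyway because changing $\hat{V}_{k,h''+1}^{\rho}$ forces recomputation of all stages below $h''$) or an update count of order $d\log(1+H^2K)$, which does not hold. Note the paper itself never spells out a proof of the oracle half --- it only establishes the update-count lemma --- so this discrepancy cannot be repaired by appealing to the paper's argument either.
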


\begin{remark}
\label{remark: low switching}
    The switching cost of the state-of-the-art algorithm DR-LSVI-UCB \citep{liu2024distributionally} is $K$ and the oracle complexity is $dK$. 
    Thus, \algname\ improves both the switching cost and oracle cost by a factor of $K$. 
    We highlight that different from the standard linear MDP setting, where the main computation complexity only comes from the policy update \citep{he2023nearly}, in the linear DRMDP setting, the calls of dual oracle, besides policy updates, are also a main source of computational burden. The update rule in Line \ref{line:update_rule} guarantees that \algname\ calls the dual oracle and updates the policy only when the criterion is met. Actually, \Cref{alg:DR-LSVI-UCB+} is the first DRMDP algorithm that admits low deployment complexity.
\end{remark}

\section{Discussion on the Tightness of the Upper and Lower Bounds}
\label{sec:discussion_upper_lower_bound}

There is a $\widetilde{\cO}(\sqrt{H})$ gap between the upper bound presented in \Cref{remark:simplified upper bound} and the lower bound derived in \Cref{thm:lower_bound}. 
We note that in our current analysis, we individually bound each term in the variance-weighted $d$-rectangular estimation error in \eqref{eq:AveSubopt_thm_origin}. 
However, in the analysis of non-robust MDPs \citep{azar2017minimax, jin2018q, he2023nearly} and tabular DRMDPs \citep{lu2024distributionally}, a tight dependence on $H$ is often achieved by exploiting the total variance law of the value function at each episode. We conjecture a tight upper bound can be achieved by first bounding the variance-weighted $d$-rectangular estimation error as a whole by the square root of the total variance and then invoking the total variance law.  In particular, inspired by the total variance law in Lemma C.6 of \citet{lu2024distributionally}, the total variance  should be in the order of $\cO(H\min\{1/\rho, H\})$. Together with an additional $\sqrt{H}$ arising in the suboptimality analysis, we conjecture the dependence of the upper bound on $H$ could be improved to  $\cO(\sqrt{H^2\min\{1/\rho, H\}})$.

Based on the conjectured total variance analysis, when $\rho = \cO(1/H)$, the improved dependence on $H$ is in the order of $\cO(H^{3/2})$, matching the lower bound we present in \eqref{eq:lower bound}. This implies that our lower bound is tight and our upper bound is loose by $\widetilde{\cO}(\sqrt{H})$. When $\rho = \cO(1)$, the improved dependence on $H$ is in the order of $\cO(H)$, which means the total variance analysis does not further improve the upper bound. In this case, we conjecture that a tighter lower bound is needed to showcase the fundamental limit of online linear DRMDPs. %
\Cref{table:comparison of upper and lower bounds} provides an illustration of our conjecture and comparison. Currently, we find essential difficulties in relating the variance-weighted $d$-rectangular estimation error with the total variance to show a tighter upper bound.
We leave the improvement of $\cO(\sqrt{H})$ on both upper and lower bounds for future research.
\begin{table}[t]
\centering
\caption{Summary of the upper and lower bounds of \algname, and a conjectured minimax lower bound. The bound in red represents it matches the conjectured minimax lower bound.\label{table:comparison of upper and lower bounds}}
\begin{tabular}{lcc}
\hline
& $\rho=\cO(1/H)$                      & $\rho = \cO(1)$                            \\ \hline
Upper Bound \eqref{eq:AveSubopt_thm}                      & $\widetilde{\cO}\Big(\frac{dH^2}{\sqrt{K}}\Big)$ & {\color{red} $\widetilde{\cO}\Big(\frac{dH}{\sqrt{K}}\Big)$ }\\ 
Lower Bound \eqref{eq:lower bound}                         & {\color{red} $\Omega\Big(\frac{dH^{3/2}}{\sqrt{K}}\Big)$ }   & $\Omega(\frac{dH^{1/2}}{\sqrt{K}})$      \\ 
\begin{tabular}[c]{@{}l@{}}Minimax Lower Bound\\ (Conjectured)\end{tabular} & $\Omega\Big(\frac{dH^{3/2}}{\sqrt{K}}\Big)$    & $\Omega\Big(\frac{dH}{\sqrt{K}}\Big)$    \\ \hline
\end{tabular}
\end{table}

\section{Experiments on Simulated Linear DRMDPs}
We conduct numerical experiments to illustrate the performances of our proposed algorithm, We-DRIVE-U, and compare it with the state-of-the-art algorithm for $d$-rectangular linear DRMDPs, DR-LSVI-UCB \citep{liu2024distributionally}, as well as their non-robust counterpart, LSVI-UCB \citep{jin2020provably}. All numerical experiments were conducted on a MacBook Pro with a 2.6 GHz 6-Core Intel CPU.

We leverage the simulated linear MDP setting proposed by \citet{liu2024distributionally}. For completeness, we recall the experiment setting as follows. The source and target linear MDP environment are shown in \Cref{fig:mdp_5states} and \Cref{fig:perturbed_mdp}. The state space is $\cS = \{x_1,\cdots,x_5\}$ and action space $\cA=\{-1,1\}^4\subset\RR^4$. At each episode, the initial state is always $x_1$, and it can transit to $x_2, x_4, x_5$ with probability defined in the figures. $x_2$ is an intermediate state from which the next state can be $x_3, x_4, x_5$. $x_4$ is the fail state with reward $0$ and $x_5$ is an absorbing state with reward 1. For the reward functions and transition probabilities, they are designed to depend on $\la \bxi,a\ra$, where $\bxi\in\RR^4$ is a hyperparameter controls the MDP instances. The target environment is constructed by only perturbing the transition probability at $x_1$ of the source domain, and the extend of perturbation is controlled by a hyperparameter $q\in(0,1)$. We refer more details on the construction of the linear DRMDP to the Supplementary A.1 of \cite{liu2024distributionally}.
\begin{figure*}[t]
    \centering
    \subfigure[The source MDP environment.]{
        \begin{tikzpicture}[->,>=stealth',shorten >=1pt,auto,node distance=3.4cm,thick]
            \tikzstyle{every state}=[fill=red,draw=none,text=white,minimum size=0.5cm]
            \node[state] (S1) {$x_1$};
            \node[state] (S2) [right of=S1] {$x_2$};
            \node[state] (S3) [right of=S2] {$x_3$};
            \node[state] (S4) [above=2cm of S2] {$x_4$};
            \node[state] (S5) [below=2cm of S2] {$x_5$};
            
            \path   (S1) edge[draw=blue!20] node[below] {\tiny $(1-p)(1-\delta-\la\xi,a\ra)$} (S2)
                         edge[draw=blue!20] node[below] {\tiny$p(1-\delta-\la\xi,a\ra)$} (S4)
                         edge[draw=blue!20] node[above] {\tiny$\delta+\la\xi,a\ra$} (S5)
                    (S2) edge[draw=blue!20] node[below] {\tiny$(1-p)(1-\delta-\la\xi,a\ra)$} (S3)
                         edge[draw=blue!20] node[above] {\tiny$p(1-\delta-\la\xi,a\ra)$} (S4)
                         edge[draw=blue!20] node[below] {\tiny$\delta+\la\xi,a\ra$} (S5)
                    (S3) edge[draw=blue!20] node[below] {\tiny$1-\delta-\la\xi,a\ra$} (S4)
                         edge[draw=blue!20] node[above] {\tiny$\delta+\la\xi,a\ra$} (S5)
                    (S4) edge[draw=blue!20] [loop above] node {\tiny 1} (S4)
                    (S5) edge[draw=blue!20] [loop below] node {\tiny 1} (S5);
        \end{tikzpicture}
        \label{fig:mdp_5states}
    }
    \subfigure[The target MDP environment.]{
        \begin{tikzpicture}[->,>=stealth',shorten >=1pt,auto,node distance=3.4cm,thick]
            \tikzstyle{every state}=[fill=red,draw=none,text=white,minimum size=0.5cm]
            \node[state] (S1) {$x_1$};
            \node[state] (S2) [right of=S1] {$x_2$};
            \node[state] (S3) [right of=S2] {$x_3$};
            \node[state] (S4) [above=2cm of S2] {$x_4$};
            \node[state] (S5) [below=2cm of S2] {$x_5$};
            
            \path   (S1) edge[draw=blue!20] node[below] {\tiny$(1-\delta-\la\xi,a\ra)$} (S2)
                         edge[draw=blue!20] node[below] {\tiny$q(\delta+\la\xi,a\ra)$} (S4)
                         edge[draw=blue!20] node[above] {\tiny$(1-q)(\delta+\la\xi,a\ra)$} (S5)
                    (S2) edge[draw=blue!20] node[below] {\tiny$(1-p)(1-\delta-\la\xi,a\ra)$} (S3)
                         edge[draw=blue!20] node[above] {\tiny$p(1-\delta-\la\xi,a\ra)$} (S4)
                         edge[draw=blue!20] node[below] {\tiny$\delta+\la\xi,a\ra$} (S5)
                    (S3) edge[draw=blue!20] node[below] {\tiny$1-\delta-\la\xi,a\ra$} (S4)
                         edge[draw=blue!20] node[above] {\tiny$\delta+\la\xi,a\ra$} (S5)
                    (S4) edge[draw=blue!20] [loop above] node {\tiny 1} (S4)
                    (S5) edge[draw=blue!20] [loop below] node {\tiny 1} (S5);
        \end{tikzpicture}
        \label{fig:perturbed_mdp}
    }
    \caption{The source and the target linear MDP environments. The value on each arrow represents the transition probability. For the source MDP, there are five states and three steps, with the initial state being  $x_1$, the fail state being $x_4$, and $x_5$ being an absorbing state with reward 1. The target MDP on the right is obtained by perturbing the transition probability at the first step of the source MDP, with others remaining the same. }
\end{figure*}
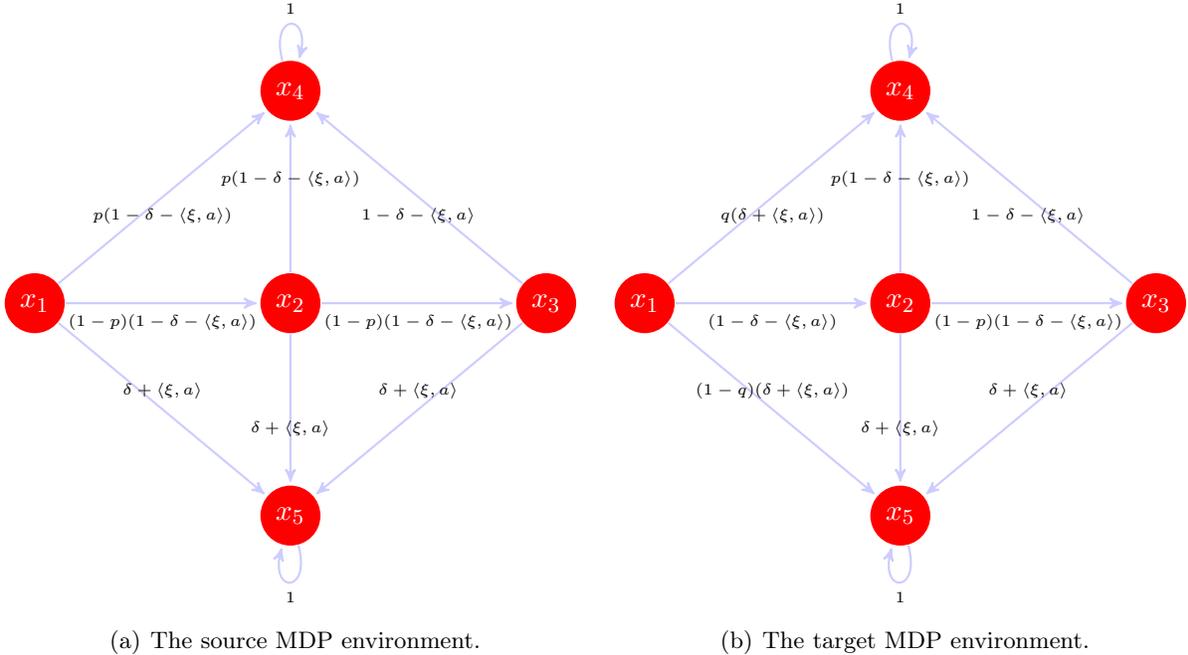

We set $\bxi = (1/\|\bxi\|_1, 1/\|\bxi\|_1, 1/\|\bxi\|_1, 1/\|\bxi\|_1)^{\top}$ and consider different choices of $\|\bxi\|_1$ from the set $\{0.1, 0.2, 0.3\}$. Following the implementation in \cite{liu2024distributionally}, we use heterogeneous uncertainty level and set $\rho_{1,4}=0.5$ and $\rho_{h,i}=0$ for all other cases.
We set the number of interactions with the nominal environment to $200$.  
We evaluate policies learned by \algname, DR-LSVI-UCB \citep{liu2024distributionally} and LSVI-UCB \citep{jin2020provably} by the accumulative rewards achieved in the target domain, which are illustrated in \Cref{fig:simulation-results-app}.
\Cref{fig:simulation-results-app} shows that: 1) policies learned by We-DRIVE-U are robust to environmental perturbation, and the extent of the robustness depends on the pre-specified parameter $\rho$; 2) In most cases, We-DRIVE-U outperforms DR-LSVI-UCB, meaning it being more robust to environment perturbation. Moreover, \Cref{tab:switch_time} demonstrates the low-switching property of We-DRIVE-U. During $200$ interactions of the training process, We-DRIVE-U switches policies only around $24$ times, which stands in stark contrast to the $200$ policy switches by LSVI-UCB and DR-LSVI-UCB. These numerical results prove the superiority of our proposed algorithm We-DRIVE-U and align well with our theoretical findings.

\begin{figure*}[t]%
    \centering
      \subfigure[$\Vert\xi\Vert_1 = 0.1$, $\rho_{1,4}=0.1$]{\includegraphics[scale=0.39]{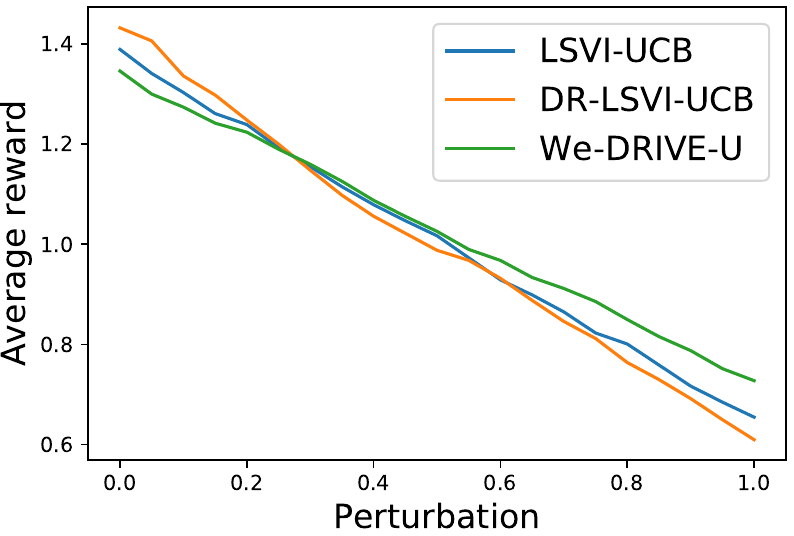}
      \label{fig:simulated_MDP_xi01_rho05}}
      \subfigure[$\Vert\xi\Vert_1 = 0.1$, $\rho_{1,4}=0.2$]{\includegraphics[scale=0.39]{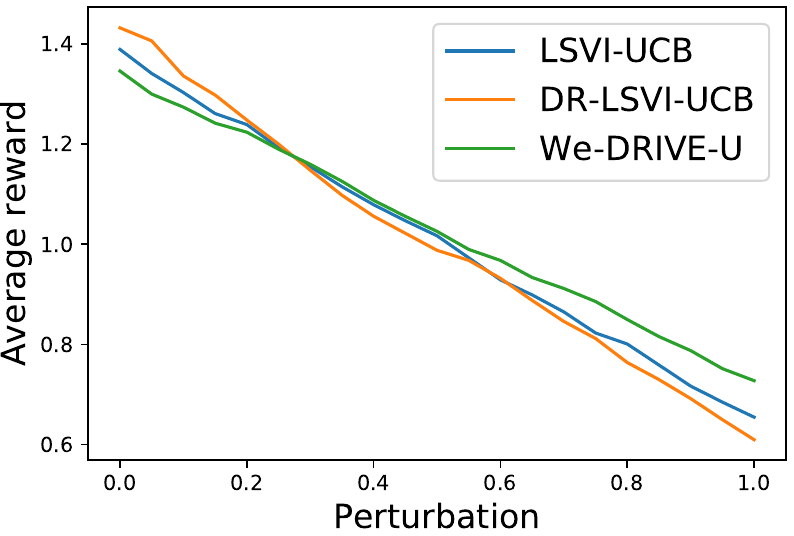}}
      \subfigure[$\Vert\xi\Vert_1 = 0.1$, $\rho_{1,4}=0.3$]{\includegraphics[scale=0.39]{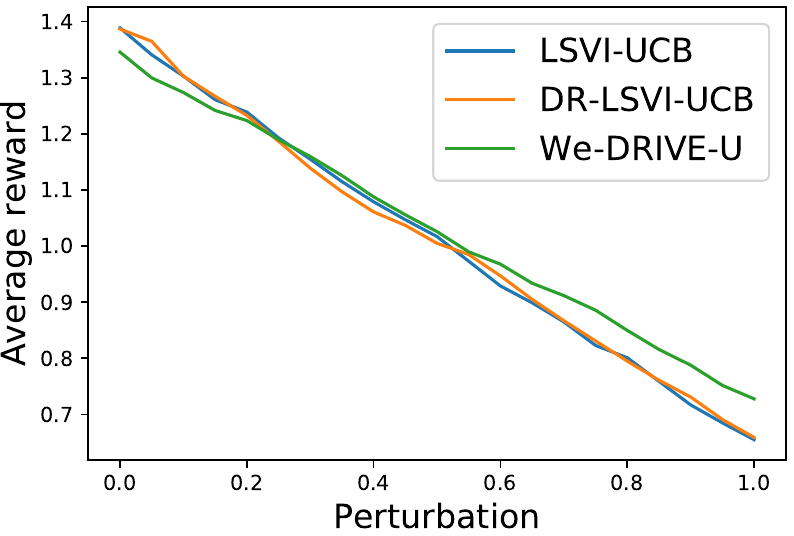}}\\
      \subfigure[$\Vert\xi\Vert_1 = 0.2$, $\rho_{1,4}=0.1$]{\includegraphics[scale=0.39]{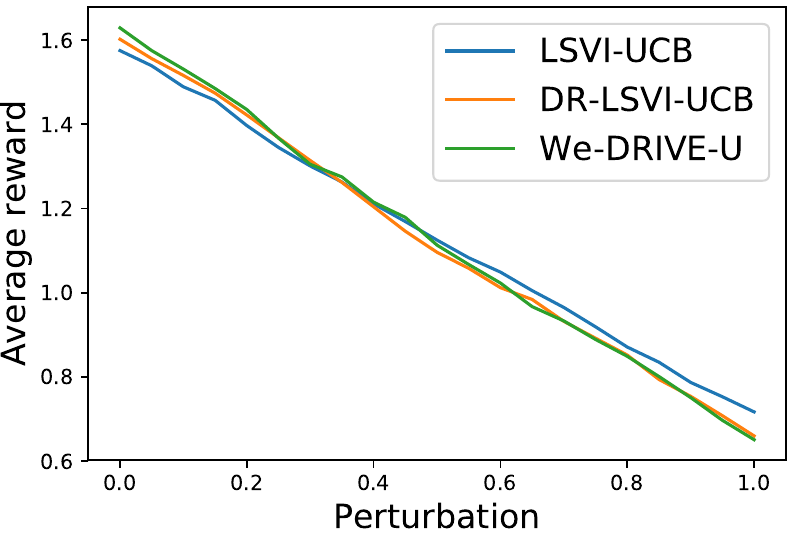}
      \label{fig:simulated_MDP_xi01_rho04}}
      \subfigure[$\Vert\xi\Vert_1 = 0.2$, $\rho_{1,4}=0.2$]{\includegraphics[scale=0.39]{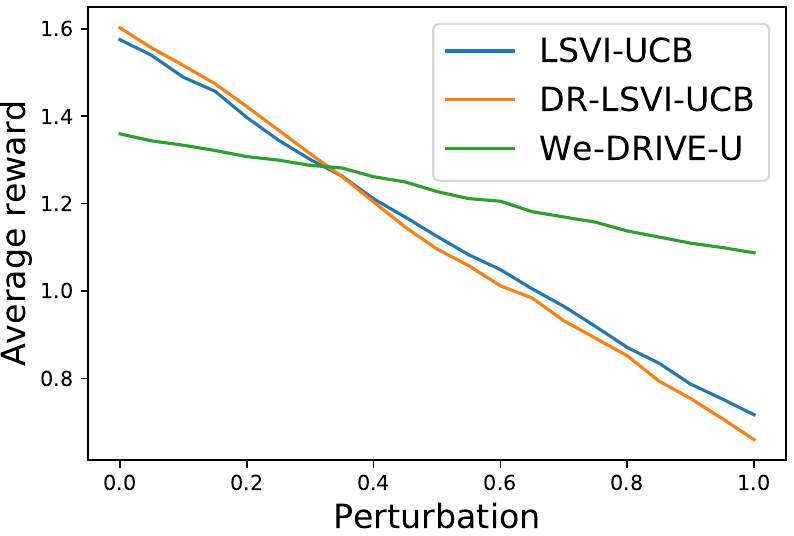}}
      \subfigure[$\Vert\xi\Vert_1 = 0.2$, $\rho_{1,4}=0.3$]{\includegraphics[scale=0.39]{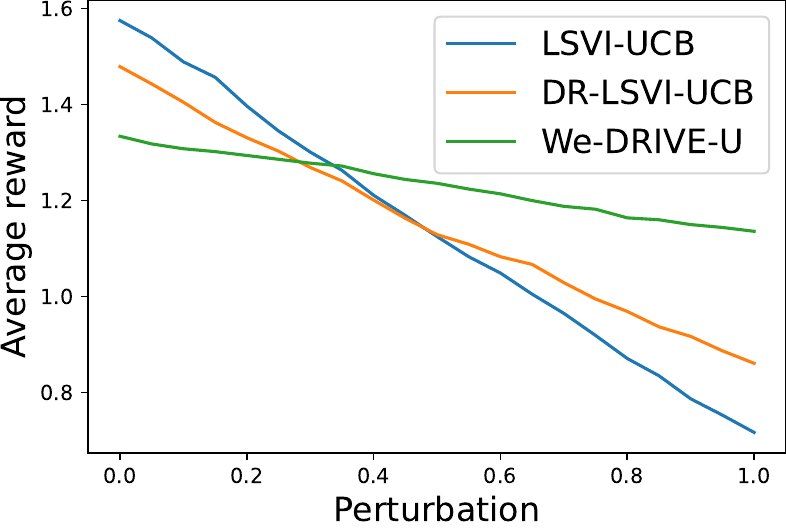}
      \label{fig:simulated_MDP_xi03_rho04}}\\
      \subfigure[$\Vert\xi\Vert_1 = 0.3$, $\rho_{1,4}=0.1$]{\includegraphics[scale=0.39]{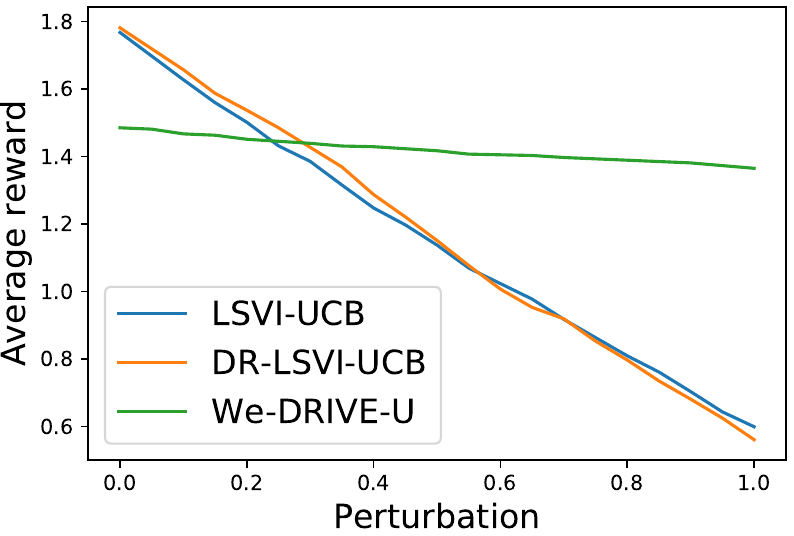}
      \label{fig:simulated_MDP_xi01_rho03}}
      \subfigure[$\Vert\xi\Vert_1 = 0.3$, $\rho_{1,4}=0.2$]{\includegraphics[scale=0.39]{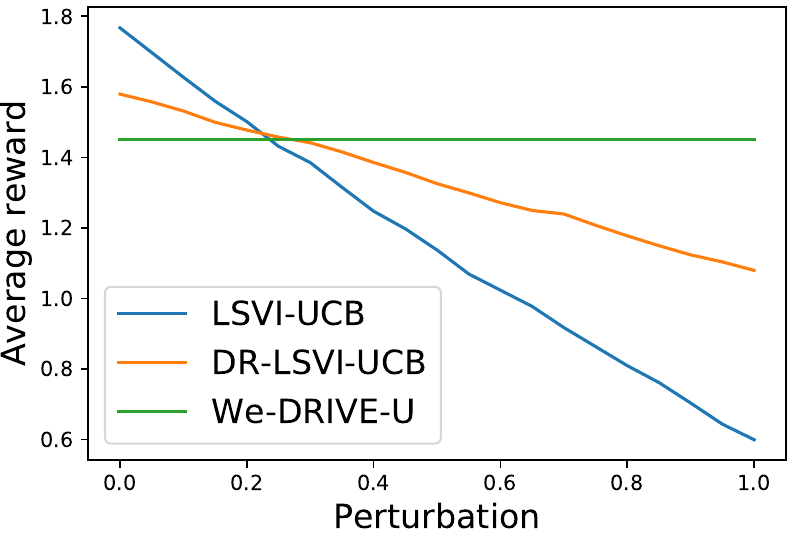}
      \label{fig:simulated_MDP_xi02_rho03}}
      \subfigure[$\Vert\xi\Vert_1 = 0.3$, $\rho_{1,4}=0.3$]{\includegraphics[scale=0.39]{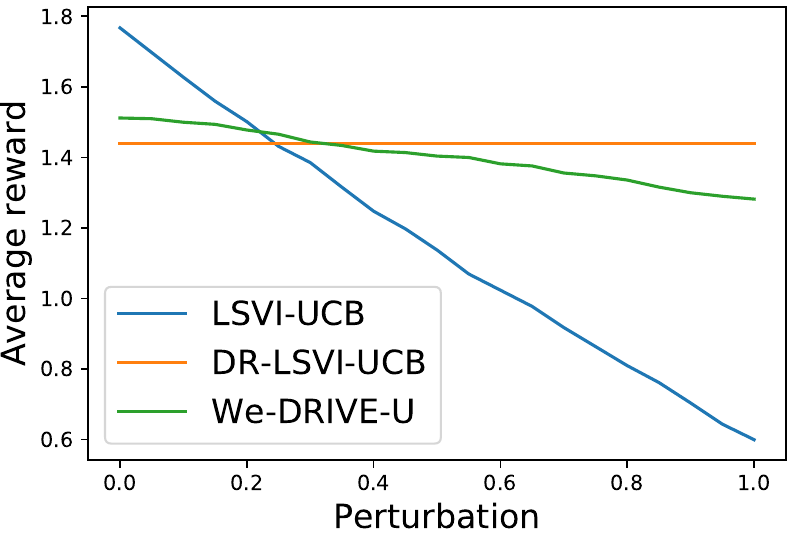}
      \label{fig:simulated_MDP_xi03_rho03}}
    \caption{Simulation results under different source domains. The $x$-axis represents the perturbation level corresponding to different target environments. $\rho_{1,4}$ is the input uncertainty level for our \algname\ algorithm. $\Vert\xi\Vert_1$ is the hyperparameter of the linear DRMDP environment.}
    \label{fig:simulation-results-app}
\end{figure*}

\begin{table}[h!]
\centering
\caption{Simulation results of the switch complexity of \algname. We present the average policy switch times of \algname\ during 200 interactions with the nominal environment, averaged over 10 replications. As a comparison, the policy switch times for LSVI-UCB and DR-LSVI-UCB are both \textbf{200} under each setting.}
\label{tab:switch_time}
\begin{tabular}{cccc}
\toprule
& {$\rho$=0.1} & {$\rho$=0.2} & {$\rho$=0.3} \\
\midrule
{$\|\xi\|_1$=0.1} & 23.8 & 24.0 & 23.8 \\ 
{$\|\xi\|_1$=0.2} & 24.2 & 24.4 & 24.0 \\ 
{$\|\xi\|_1$=0.3} & 24.3 & 23.6 & 24.8 \\ 
\bottomrule
\end{tabular}
\end{table}

\section{Conclusion} 
We studied upper and lower bounds in the setting of online linear DRMDPs. We proposed an algorithm, \algname, leveraging the variance-weighted ridge regression and low policy-switching techniques. Under assumptions on the structure of the MDP, we  showed that the average suboptimality of \algname\ is of order $\widetilde{\cO}(dH\min\{1/\rho, H\}/\sqrt{K})$. We further established an lower bound $\Omega(dH^{1/2}\min\{1/\rho,H\}/\sqrt{K})$, suggesting that \algname\ is near-optimal up to $\widetilde{\cO}(\sqrt{H})$ among the full range of the uncertainty level.

\newpage
\appendix

\section{Proof of \Cref{prop:hardness result}}
\label{sec:proof_of_proposition}
\begin{proof}
    We instantiate the hard example in Example 3.1 of \citet{lu2024distributionally} in terms of the formulation of $d$-rectangular linear DRMDP satisfying \Cref{assumption:linear_mdp}. 
    Consider two $d$-rectangular linear DRMDPs , $\cM_0$ and $\cM_1$. The state space $\cS = \{s_{\text{good}}, s_{\text{bad}}\}$, and the action space is $\cA = \{0,1\}$. We define the feature mapping as
    \begin{align*}
        \bphi^{\varrho}(s_{\text{good}},a) = \left(\begin{aligned}
            1\\0\\0\\0\\0
        \end{aligned}\right), 
        \forall a\in\cA,\ \bphi^{\varrho}(s_{\text{bad}}, 0) = \left(\begin{aligned}
           0&\\ p&(1-\varrho)\\ q&\varrho\\ (1-p)&(1-\varrho)\\ (1-q)&\varrho
        \end{aligned} \right), \ \bphi^{\varrho}(s_{\text{bad}}, 1) = \left(\begin{aligned}
           0&\\ p&\varrho\\ q&(1-\varrho)\\ (1-p)&\varrho\\ (1-q)&(1-\varrho)
        \end{aligned} \right),
    \end{align*}
    where $\varrho \in \{0,1\}$ is the index of the $d$-rectangular linear DRMDP instance. Define the factor distributions $\bmu = (\delta_{s_{\text{good}}}, \delta_{s_{\text{good}}}, \delta_{s_{\text{good}}}, \delta_{s_{\text{bad}}}, \delta_{s_{\text{bad}}})^{\top}$ and the reward parameter $\btheta = (1,0,0,0,0)^\top$. Then it is trivial to check that equipped with the $d$-rectangular TV divergence uncertainty set, this example recover the hard example in Example 3.1 of \citet{lu2024distributionally}.
\end{proof}

\section{Proof of the Upper Bound on the Suboptimality of \algname} 
\label{sec:upper_bound_proof}

In this section, we present the proofs of our main theoretical results \Cref{thm:AveSubopt_origin,thm:DRLSVIUCB}. We start with presenting the technical lemmas in \Cref{sec:technical_lemmas}, and then we derive the upper bound on the suboptimality of \algname\ in \Cref{sec:proof_of_thm_1,sec:proof_of_thm_2}.

\subsection{Technical Lemmas}
\label{sec:technical_lemmas}

\begin{definition}[Good event]
\label{lem:covering_concentration}
Under \Cref{assumption:linear_mdp,assumption:fail_state}, then for any fixed $\delta \in (0,1)$, $\alpha^\prime \in [0,H]$ and $\rho \in (0,1]$, we define $\cE_h$ be the event that for all episode $k\in [K]$, stage $h \leq h^\prime \leq H$, 
\begin{align}
\label{eq:covering_concentration}
    \bigg\Vert \sum_{\tau=1}^{k-1}\bar{\sigma}^{-2}_{\tau,h^\prime}\bphi_{h^\prime}^{\tau}\Big[\big[\hat{V}_{k,h^\prime+1}^{\rho}(s_{h^\prime+1}^{\tau})\big]_{\alpha^\prime}-\big[\PP_{h^\prime}^0 \big[\hat{V}_{k, h^\prime+1}^{\rho}\big]_{\alpha^\prime}\big](s_{h^\prime}^{\tau}, a_{h^\prime}^{\tau}) \Big]\bigg\Vert_{\bSigma_{k, h^\prime}^{-1}} \leq \gamma,
\end{align}
where $\gamma =\widetilde{\cO}\big(\sqrt{d}\big)$. 
\end{definition}

\begin{lemma}
\label{lem:coarse_event}
We define $\bar{\mathcal{E}}$ as the event that the following inequalities hold for all $(s, a)\in \mathcal{S} \times \mathcal{A}$, $k \in [K]$, $ h \in [H]$, %
\begin{align*}
    \big|\bphi(s, a)^\top \hat{\bz}_{h,1}^k - \big[\mathbb{P}_h^0 \hat{V}_{k, h+1}^\rho \big](s, a)\big| &\leq \bar{\beta} \sqrt{\bphi(s, a)^{\top} \bLambda_{k, h}^{-1} \bphi(s, a)}, \\
    \big|\bphi(s, a)^\top \check{\bz}_{h,1}^k - \big[\mathbb{P}_h^0 \check{V}_{k, h+1}^\rho \big](s, a)\big| &\leq \bar{\beta} \sqrt{\bphi(s, a)^{\top} \bLambda_{k, h}^{-1} \bphi(s, a)}, \\
    \big|\bphi(s, a)^\top \widetilde{\bz}^k_{h,2} - \big[\mathbb{P}_h^0 \big(\hat{V}_{k, h+1}^\rho\big)^2 \big](s, a)\big| &\leq \widetilde{\beta} \sqrt{\bphi(s, a)^{\top} \bLambda_{k, h}^{-1} \bphi(s, a)},
\end{align*}
where $\bar{\beta} = \widetilde{\cO}\big(H\sqrt{d\lambda} + \sqrt{d^3 H^3}\big)$ and $\widetilde{\beta} = \widetilde{\cO}\big(H^2\sqrt{d\lambda} + \sqrt{d^3 H^6}\big)$.
Then event $\bar{\mathcal{E}}$ holds with probability at least $1-\delta$.
\end{lemma}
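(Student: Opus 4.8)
The plan is to prove all three inequalities through a single template, since they differ only in the regression target (hence in the range of the response and in the norm of the target parameter). I would start from the closed form of each vanilla ridge regression, e.g. $\hat{\bz}_{h,1}^k = \bLambda_{k,h}^{-1}\sum_{\tau=1}^{k-1}\bphi_h^\tau \hat{V}_{k,h+1}^\rho(s_{h+1}^\tau)$, and invoke the simplex linear-MDP structure of \Cref{assumption:linear_mdp} to write the conditional mean as a linear function of the features: by \citet[Proposition 2.3]{jin2020provably}, $[\PP_h^0\hat{V}_{k,h+1}^\rho](s,a)=\bphi(s,a)^\top\bz^*$ with $\bz^*=\int \hat{V}_{k,h+1}^\rho(s')\,d\bmu_h^0(s')$ and $\|\bz^*\|_2\le \sqrt{d}\,\|\hat{V}_{k,h+1}^\rho\|_\infty\le \sqrt{d}\,H$, and analogously $\|\bz^*\|_2\le \sqrt{d}\,H^2$ for the second-moment target $(\hat{V}_{k,h+1}^\rho)^2$. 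Substituting $\sum_\tau\bphi_h^\tau(\bphi_h^\tau)^\top\bz^*=(\bLambda_{k,h}-\lambda\Ib)\bz^*$ into the closed form yields the standard decomposition
\begin{align*}
\bphi(s,a)^\top\hat{\bz}_{h,1}^k-[\PP_h^0\hat{V}_{k,h+1}^\rho](s,a)=\bphi(s,a)^\top\bLambda_{k,h}^{-1}\sum_{\tau=1}^{k-1}\bphi_h^\tau\eta_\tau-\lambda\,\bphi(s,a)^\top\bLambda_{k,h}^{-1}\bz^*,
\end{align*}
where $\eta_\tau=\hat{V}_{k,h+1}^\rho(s_{h+1}^\tau)-[\PP_h^0\hat{V}_{k,h+1}^\rho](s_h^\tau,a_h^\tau)$. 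A Cauchy--Schwarz step in the $\bLambda_{k,h}^{-1}$-norm then bounds the left-hand side by $\|\bphi(s,a)\|_{\bLambda_{k,h}^{-1}}$ times the sum of a stochastic term $\|\sum_\tau\bphi_h^\tau\eta_\tau\|_{\bLambda_{k,h}^{-1}}$ and a bias term $\lambda\|\bz^*\|_{\bLambda_{k,h}^{-1}}$.

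The bias term is immediate: since $\lambda\Ib\preceq\bLambda_{k,h}$, we have $\lambda\|\bz^*\|_{\bLambda_{k,h}^{-1}}\le\sqrt{\lambda}\,\|\bz^*\|_2$, which is at most $H\sqrt{d\lambda}$ for the two first-moment regressions and $H^2\sqrt{d\lambda}$ for the second-moment one; these account for the $H\sqrt{d\lambda}$ and $H^2\sqrt{d\lambda}$ summands of $\bar{\beta}$ and $\widetilde{\beta}$.

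The crux is the stochastic term, which cannot be handled by a self-normalized martingale inequality directly, because $\hat{V}_{k,h+1}^\rho$ (and $\check{V}_{k,h+1}^\rho$) depend on the whole trajectory data through the regressions, the dual optimizations and the bonus, so $\eta_\tau$ is not a martingale difference. I would resolve this by uniform concentration over a covering net $\mathcal{V}$ of the class of attainable robust value functions: for each \emph{fixed} $\bar V\in\mathcal{V}$ the increments $\bphi_h^\tau\big(\bar V(s_{h+1}^\tau)-[\PP_h^0\bar V](s_h^\tau,a_h^\tau)\big)$ do form a bounded martingale-difference sequence, so the self-normalized bound of \citet{abbasi2011improved} gives $\|\sum_\tau\bphi_h^\tau(\cdots)\|_{\bLambda_{k,h}^{-1}}\lesssim R\sqrt{d\log(\cdots)+\log(|\mathcal{V}|/\delta)}$ with range $R=H$ (resp.\ $R=H^2$). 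A union bound over $\mathcal{V}$ and over $(k,h)\in[K]\times[H]$, together with a final approximation step replacing the data-dependent $\hat{V}_{k,h+1}^\rho$ by its nearest net point, completes the estimate. For the second-moment target I would reuse the \emph{same} net for $\hat{V}$, since $|\bar V^2-(\bar V')^2|\le 2H|\bar V-\bar V'|$ shows that an $\epsilon$-net for $\hat{V}$ is a $2H\epsilon$-net for $(\hat{V})^2$.

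The main obstacle is bounding the covering number $\log|\mathcal{V}|$ of the robust value-function class, which is more intricate than in non-robust linear MDPs: each attainable $\hat{V}_{k,h+1}^\rho$ is a maximum over actions of a clipped linear form whose parameter $\hat{\bnu}_h^{\rho,k}$ is built from $d$ element-wise dual optimizations $\max_{\alpha\in[0,H]}\{\hat{z}_{h,i}^k(\alpha)-\rho\alpha\}$ plus a bonus governed by the $d\times d$ matrix $\bSigma_{k,h}^{-1}$. I would parametrize the class by $\hat{\bnu}_h^{\rho,k}$, the scalar $\beta$, and $\bSigma_{k,h}^{-1}$, exploit Lipschitzness in the dual variable $\alpha$ to discretize it at only logarithmic cost, and obtain $\log|\mathcal{V}|=\widetilde{\cO}(d^2)$ (dominated by the matrix parameter). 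It is worth emphasizing that, unlike \citet{he2023nearly}, the regressions here use \emph{vanilla} (unweighted) least squares, which keeps the net and the martingale increments decoupled from the estimated weights $\bar{\sigma}_{\tau,h}$ and thereby simplifies the covering analysis. Careful bookkeeping of the range $R$, the covering number $\log|\mathcal{V}|$, and the $\det$-ratio terms then yields the stated $\bar{\beta}=\widetilde{\cO}(H\sqrt{d\lambda}+\sqrt{d^3H^3})$ and $\widetilde{\beta}=\widetilde{\cO}(H^2\sqrt{d\lambda}+\sqrt{d^3H^6})$, and all three statements hold simultaneously with probability at least $1-\delta$ after the union bound.
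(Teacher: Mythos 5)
Your overall route matches the paper's: write the ridge estimator in closed form, use the linear-MDP structure to represent $[\PP_h^0\hat{V}_{k,h+1}^\rho](s,a)=\bphi(s,a)^\top\bz^*$ with $\|\bz^*\|_2\le \sqrt{d}H$ (the paper's \Cref{lem:linear_form_and_bound}), split into a $\lambda$-bias term bounded by $\sqrt{\lambda}\|\bz^*\|_2$ and a self-normalized stochastic term, and handle the data dependence of $\hat{V}_{k,h+1}^\rho$ by uniform concentration over an $\epsilon$-net of the attainable value-function class with a union bound (the paper invokes \Cref{lem:jin_D.4}, which packages exactly the fixed-function-plus-union-bound argument you describe); your trick $|\bar{V}^2-(\bar{V}')^2|\le 2H|\bar{V}-\bar{V}'|$ for the second-moment regression is likewise consistent with the paper's use of the squared class.

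The genuine gap is your covering-number estimate $\log|\mathcal{V}|=\widetilde{\cO}(d^2)$. You parametrize each attainable value function by a single triple $(\hat{\bnu}_h^{\rho,k},\beta,\bSigma_{k,h}^{-1})$, but the algorithm's monotone update on Line \ref{line:Q_hat} sets $\hat{Q}_{k,h}^\rho=\min\big\{r_h+\bphi^\top\hat{\bnu}_h^{\rho,k}+\hat{\Gamma}_{k,h},\,\hat{Q}_{k-1,h}^\rho,\,H-h+1\big\}\ind\{s\neq s_f\}$, so each attainable $\hat{V}_{k,h+1}^\rho$ is a maximum over actions of a combination of clipped linear-plus-bonus forms from \emph{all} past update episodes, not from a single one. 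Covering this class requires one pair $(\wb_j,\bGamma_j)$ per update episode; by the rare-switching bound (\Cref{lem:number_of_value_function_updates}) the number of updates is $\ell\le dH\log(1+K/\lambda)$, so the correct entropy is $\widetilde{\cO}(d^2\ell)=\widetilde{\cO}(d^3H)$ (the paper's \Cref{lem:function_class_covering_number}), not $\widetilde{\cO}(d^2)$. This is not cosmetic: a net built from single-triple functions does not contain the actual data-dependent $\hat{V}_{k,h+1}^\rho$, so the union-bounded concentration never applies to it. Your bookkeeping is also internally inconsistent: with range $R=H$ and entropy $\widetilde{\cO}(d^2)$ the stochastic term would be $\widetilde{\cO}(H\sqrt{d^2})=\widetilde{\cO}(dH)$, whereas you claim (and the paper proves) the term $\sqrt{d^3H^3}$ in $\bar{\beta}$ --- it is precisely the missing factor $\ell=\widetilde{\cO}(dH)$ under the square root that produces $\sqrt{d^3H^3}$. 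A smaller remark: no discretization of the dual variable $\alpha$ is needed in this lemma, since the regression targets $\hat{V}_{k,h+1}^\rho$, $\check{V}_{k,h+1}^\rho$, $(\hat{V}_{k,h+1}^\rho)^2$ are not $\alpha$-truncated; $\alpha$-covering only enters the proof of the event $\cE_h$ in \Cref{lem:good_event_probability}.
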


\begin{lemma}[Variance error] 
\label{lem:variance_error}
On the event $\mathcal{E}_{h+1}$ and $\bar{\mathcal{E}}$, for all episode $k \in[K]$, the estimated variance satisfies
\begin{align*}
    \big|\big[\bar{\mathbb{V}}_h \hat{V}_{k, h+1}^\rho\big]\big(s_h^k, a_h^k\big)-[\mathbb{V}_h \hat{V}_{k, h+1}^\rho]\big(s_h^k, a_h^k\big)\big| &\leq E_{k, h}, \\
    \big|\big[\bar{\mathbb{V}}_h \hat{V}_{k, h+1}^\rho\big]\big(s_h^k, a_h^k\big)-[\mathbb{V}_h V_{h+1}^{*,\rho}]\big(s_h^k, a_h^k\big)\big| &\leq E_{k, h}+D_{k, h}.
\end{align*}
Thus we also have
\begin{align*}
    \bar{\sigma}_{k, h}^2 \geq\big[\bar{\mathbb{V}}_{h} \hat{V}_{k, h+1}^\rho\big]\big(s_h^k, a_h^k\big) + E_{k, h} + D_{k, h} \geq \big[\mathbb{V}_h V_{h+1}^{*,\rho}\big]\big(s_h^k, a_h^k\big).
\end{align*}
\end{lemma}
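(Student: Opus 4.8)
The plan is to establish the two displayed bounds separately and then chain them with the definition of $\bar{\sigma}_{k,h}$. Throughout I abbreviate $\bphi = \bphi(s_h^k,a_h^k)$ and write $\|\bphi\|_{\bLambda_{k,h}^{-1}} = \sqrt{\bphi^\top \bLambda_{k,h}^{-1}\bphi}$. The three building blocks I rely on are: (i) the clipping map $[\cdot]_{[0,H]}$ (resp. $[\cdot]_{[0,H^2]}$) is $1$-Lipschitz and fixes any point already lying in $[0,H]$ (resp. $[0,H^2]$), so clipping can only shrink the distance to a target that is already in the interval; (ii) the concentration bounds of event $\bar{\mathcal{E}}$, which control $|\bphi^\top\hat{\bz}^k_{h,1}-[\mathbb{P}_h^0\hat{V}^\rho_{k,h+1}]|$, $|\bphi^\top\check{\bz}^k_{h,1}-[\mathbb{P}_h^0\check{V}^\rho_{k,h+1}]|$ and the second-moment analogue by $\bar{\beta}\|\bphi\|_{\bLambda_{k,h}^{-1}}$, $\widetilde{\beta}\|\bphi\|_{\bLambda_{k,h}^{-1}}$; and (iii) the sandwiching $\check{V}^\rho_{k,h+1}(\cdot)\le V^{*,\rho}_{h+1}(\cdot)\le \hat{V}^\rho_{k,h+1}(\cdot)$, which holds on $\mathcal{E}_{h+1}$ from the optimism/pessimism of the backward-induction estimators.

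For the first inequality, I write $[\bar{\mathbb{V}}_h\hat{V}^\rho_{k,h+1}]-[\mathbb{V}_h\hat{V}^\rho_{k,h+1}]$ as the difference of a second-moment term and a squared first-moment term and bound each by the triangle inequality. Using $[\mathbb{P}_h^0(\hat{V}^\rho_{k,h+1})^2]\in[0,H^2]$, the clipping contraction (i) together with the second-moment bound in (ii) give $|[\bphi^\top\widetilde{\bz}^k_{h,2}]_{[0,H^2]}-[\mathbb{P}_h^0(\hat{V}^\rho_{k,h+1})^2]|\le \min\{\widetilde{\beta}\|\bphi\|_{\bLambda_{k,h}^{-1}},H^2\}$, where the $H^2$ alternative is the trivial range bound. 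For the squared first moment I factor $x^2-y^2=(x-y)(x+y)$ with $x=[\bphi^\top\hat{\bz}^k_{h,1}]_{[0,H]}$ and $y=[\mathbb{P}_h^0\hat{V}^\rho_{k,h+1}]\in[0,H]$; then $x+y\le 2H$ while (i) and (ii) give $|x-y|\le \bar{\beta}\|\bphi\|_{\bLambda_{k,h}^{-1}}$, so this term is at most $\min\{2H\bar{\beta}\|\bphi\|_{\bLambda_{k,h}^{-1}},H^2\}$. Summing the two pieces is exactly $E_{k,h}$.

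For the second inequality, I insert $\mathbb{V}_h\hat{V}^\rho_{k,h+1}$ and use the triangle inequality, so the new work is to show $|[\mathbb{V}_h\hat{V}^\rho_{k,h+1}]-[\mathbb{V}_h V^{*,\rho}_{h+1}]|\le D_{k,h}$; this is the main obstacle. Expanding each variance as second moment minus squared mean and invoking the sandwiching (iii), both the second-moment difference $[\mathbb{P}_h^0((\hat{V}^\rho_{k,h+1})^2-(V^{*,\rho}_{h+1})^2)]$ and the squared-mean difference $([\mathbb{P}_h^0\hat{V}^\rho_{k,h+1}])^2-([\mathbb{P}_h^0 V^{*,\rho}_{h+1}])^2$ are nonnegative and, via $a^2-b^2=(a-b)(a+b)$ with the common range bound $a+b\le 2H$, each is at most $2H\,[\mathbb{P}_h^0(\hat{V}^\rho_{k,h+1}-V^{*,\rho}_{h+1})]$. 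Applying the triangle inequality to these two pieces produces the factor $4H$. I then replace $V^{*,\rho}_{h+1}$ by the pessimistic $\check{V}^\rho_{k,h+1}$ (again by (iii), $\hat{V}^\rho_{k,h+1}-V^{*,\rho}_{h+1}\le \hat{V}^\rho_{k,h+1}-\check{V}^\rho_{k,h+1}$) and convert $[\mathbb{P}_h^0(\hat{V}^\rho_{k,h+1}-\check{V}^\rho_{k,h+1})]$ into the linear form $\bphi^\top\hat{\bz}^k_{h,1}-\bphi^\top\check{\bz}^k_{h,1}+2\bar{\beta}\|\bphi\|_{\bLambda_{k,h}^{-1}}$ using the two first-moment bounds in (ii). Together with the trivial range bound $H^2$ this is precisely $D_{k,h}$; adding the $E_{k,h}$ term from the first inequality gives the claim.

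Finally, since $\bar{\sigma}_{k,h}\ge\sigma_{k,h}$ by \eqref{eq:weight}, and $d^3H\ge1$, $D_{k,h}\ge0$, $1/2>0$, definition \eqref{eq:estimated_variance_optimal_v_function} yields $\bar{\sigma}_{k,h}^2\ge\sigma_{k,h}^2\ge[\bar{\mathbb{V}}_h\hat{V}^\rho_{k,h+1}]+E_{k,h}+D_{k,h}$; plugging in the just-proved second inequality via $|x-y|\ge y-x$ gives $[\bar{\mathbb{V}}_h\hat{V}^\rho_{k,h+1}]+E_{k,h}+D_{k,h}\ge [\mathbb{V}_h V^{*,\rho}_{h+1}]$, closing the chain. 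I expect the sandwiching step (iii) and the careful bookkeeping of the $4H$ factor in $D_{k,h}$ to be the only delicate points; everything else reduces to range bounds, the clipping contraction, and the concentration inequalities of $\bar{\mathcal{E}}$.
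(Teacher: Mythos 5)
Your proof is correct and follows essentially the same route as the paper's: the same triangle-inequality decomposition with the clipping contraction and the concentration bounds of $\bar{\mathcal{E}}$ for the first bound; the same $4H$-factor argument using the sandwich $\check{V}^\rho_{k,h+1}\le V^{*,\rho}_{h+1}\le\hat{V}^\rho_{k,h+1}$ (valid on $\mathcal{E}_{h+1}$ and $\bar{\mathcal{E}}$), the replacement of $V^{*,\rho}_{h+1}$ by $\check{V}^\rho_{k,h+1}$, and the conversion to the linear forms for the second bound; and the same use of the definitions of $\sigma_{k,h}$ and $\bar{\sigma}_{k,h}$ (with $d^3H\ge 1$ and $D_{k,h}\ge 0$) for the final chain. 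Your explicit remarks that clipping can only shrink the distance to a target already in range and that $D_{k,h}\ge 0$ are points the paper leaves implicit, but the argument is identical.
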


\begin{lemma}
\label{lem:error_bound}
For any fixed policy $\pi$, on the event $\mathcal{E}_h$ and $\bar{\mathcal{E}}$, for all $(s,a,k) \in \cS/\{s_f\} \times \cA \times [K]$, for stage $h \leq h^\prime \leq H$, we have
\begin{align*}
    \big(r_{h^\prime}(s,a) + \bphi(s,a)^\top \hat{\bnu}_{h^\prime}^{\rho,k}\big) - Q_{h^\prime}^{\pi, \rho}(s,a) &= \inf_{P_{h^\prime}(\cdot|s,a) \in \cU_{h^\prime}^{\rho}(s,a;\bmu_{h^\prime}^0)}\big[\PP_{h^\prime} \hat{V}_{k, h^\prime+1}^\rho\big](s,a)  \\
    &\qquad- \inf_{P_{h^\prime}(\cdot|s,a) \in \cU_{h^\prime}^{\rho}(s,a;\bmu_{h^\prime}^0)} \big[\PP_{h^\prime} V_{h^\prime+1}^{\pi, \rho}\big](s,a) + \Delta_{h^\prime}^k(s,a),
\end{align*}
where $\Delta_{h^\prime}^k(s,a)$ that satisfies $|\Delta_{h^\prime}^k(s,a)| \leq \hat{\Gamma}_{k,h^\prime}(s,a) = \beta \sum_{i=1}^d \phi_i(s,a) \sqrt{\mathbf{1}_i^{\top}\bSigma_{k,h^\prime}^{-1}\mathbf{1}_i}$ where $\beta = \widetilde{\cO} \Big(\sqrt{\lambda d}H +\sqrt{d}\Big)$.
\end{lemma}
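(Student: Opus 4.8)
The plan is to peel off the estimation error as a single coordinate-wise quantity and then control it through the variance-weighted ridge-regression analysis on the good event. First I would invoke the robust Bellman equation \eqref{eq:robust bellman equation}, which gives $Q_{h'}^{\pi,\rho}(s,a)=r_{h'}(s,a)+\inf_{P_{h'}(\cdot|s,a)\in\cU_{h'}^{\rho}(s,a;\bmu_{h'}^0)}[\PP_{h'}V_{h'+1}^{\pi,\rho}](s,a)$. Cancelling $r_{h'}(s,a)$, the left-hand side becomes $\bphi(s,a)^\top\hat\bnu_{h'}^{\rho,k}-\inf_{P_{h'}}[\PP_{h'}V_{h'+1}^{\pi,\rho}](s,a)$, and adding and subtracting $\inf_{P_{h'}}[\PP_{h'}\hat V_{k,h'+1}^{\rho}](s,a)$ produces precisely the two infimum terms on the right-hand side plus the residual $\Delta_{h'}^k(s,a):=\bphi(s,a)^\top\hat\bnu_{h'}^{\rho,k}-\inf_{P_{h'}(\cdot|s,a)\in\cU_{h'}^{\rho}(s,a;\bmu_{h'}^0)}[\PP_{h'}\hat V_{k,h'+1}^{\rho}](s,a)$. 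The claimed identity then holds by construction, so the entire task reduces to proving $|\Delta_{h'}^k(s,a)|\le\hat\Gamma_{k,h'}(s,a)$.

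To bound $\Delta_{h'}^k$, I would use the linear representation \eqref{equ:backward_induction_linear_form}. The indicator $\ind\{s\neq s_f\}$ and the monotone update in Line \ref{line:Q_hat} force $\hat V_{k,h'+1}^{\rho}(s_f)=0$, and since $\hat V_{k,h'+1}^\rho\ge 0$ this is the minimum, so \Cref{prop:dual with fail state} applies and yields $r_{h'}(s,a)+\inf_{P_{h'}}[\PP_{h'}\hat V_{k,h'+1}^{\rho}](s,a)=\bphi(s,a)^\top(\btheta_{h'}+\bnu_{h'}^{\rho,k})$ for $s\neq s_f$, where $\nu_{h',i}^{\rho,k}=\max_{\alpha\in[0,H]}\{z_{h',i}^k(\alpha)-\rho\alpha\}$ and $z_{h',i}^k(\alpha)=\mathbf{1}_i^\top\bz_h^k(\alpha)$ is the \emph{true} (unestimated) coefficient. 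Subtracting $r_{h'}=\bphi^\top\btheta_{h'}$ gives $\Delta_{h'}^k(s,a)=\bphi(s,a)^\top(\hat\bnu_{h'}^{\rho,k}-\bnu_{h'}^{\rho,k})$. Because $\bphi(s,a)=\sum_{i=1}^d\phi_i(s,a)\mathbf{1}_i$ lies in the simplex, this splits coordinatewise into $\sum_{i=1}^d\phi_i(s,a)(\hat\nu_{h',i}^{\rho,k}-\nu_{h',i}^{\rho,k})$; and since both $\hat\nu_{h',i}^{\rho,k}$ and $\nu_{h',i}^{\rho,k}$ are maxima over the same interval $[0,H]$ of functions that differ by $\hat z_{h',i}^k(\alpha)-z_{h',i}^k(\alpha)$, the elementary inequality $|\max_\alpha f-\max_\alpha g|\le\max_\alpha|f-g|$ reduces everything to $|\Delta_{h'}^k(s,a)|\le\sum_{i=1}^d\phi_i(s,a)\max_{\alpha\in[0,H]}|\hat z_{h',i}^k(\alpha)-z_{h',i}^k(\alpha)|$.

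It remains to bound $|\hat z_{h',i}^k(\alpha)-z_{h',i}^k(\alpha)|=|\mathbf{1}_i^\top(\hat\bz_{h'}^k(\alpha)-\bz_{h'}^k(\alpha))|$. Using the closed form \eqref{eq:optimistic_parameter_closed_form} together with the identity $\sum_{\tau}\bar\sigma_{\tau,h'}^{-2}\bphi_{h'}^\tau(\bphi_{h'}^\tau)^\top\bz_{h'}^k(\alpha)=(\bSigma_{k,h'}-\lambda\Ib)\bz_{h'}^k(\alpha)$, I would decompose $\hat\bz_{h'}^k(\alpha)-\bz_{h'}^k(\alpha)=\bSigma_{k,h'}^{-1}\sum_{\tau=1}^{k-1}\bar\sigma_{\tau,h'}^{-2}\bphi_{h'}^\tau\big([\hat V_{k,h'+1}^{\rho}(s_{h'+1}^\tau)]_\alpha-[\PP_{h'}^0[\hat V_{k,h'+1}^{\rho}]_\alpha](s_{h'}^\tau,a_{h'}^\tau)\big)-\lambda\bSigma_{k,h'}^{-1}\bz_{h'}^k(\alpha)$ into a stochastic term and a regularization term. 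Contracting each against $\mathbf{1}_i$ and applying Cauchy--Schwarz in the $\bSigma_{k,h'}^{-1}$-norm bounds them by $\sqrt{\mathbf{1}_i^\top\bSigma_{k,h'}^{-1}\mathbf{1}_i}$ times, respectively, $\gamma$ (via the good event $\cE_h$ of \Cref{lem:covering_concentration}, which holds at every stage $h'\ge h$) and $\lambda\|\bz_{h'}^k(\alpha)\|_{\bSigma_{k,h'}^{-1}}\le\sqrt{\lambda}\,\|\bz_{h'}^k(\alpha)\|_2\le\sqrt{\lambda d}\,H$, where the last step uses $\bSigma_{k,h'}\succeq\lambda\Ib$ together with $\|\bz_{h'}^k(\alpha)\|_2\le\sqrt d H$ (each entry is $\EE^{\mu_{h',i}^0}[\hat V_{k,h'+1}^\rho]_\alpha\in[0,H]$). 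Summing over $i$ with weights $\phi_i(s,a)$ yields $|\Delta_{h'}^k(s,a)|\le\beta\sum_{i=1}^d\phi_i(s,a)\sqrt{\mathbf{1}_i^\top\bSigma_{k,h'}^{-1}\mathbf{1}_i}=\hat\Gamma_{k,h'}(s,a)$ with $\beta=\widetilde{\cO}(\sqrt{\lambda d}H+\sqrt d)$, as claimed. The one genuinely delicate point, and the main obstacle, is making the stochastic-term bound hold \emph{uniformly} over the continuum $\alpha\in[0,H]$ and over the data-dependent function $\hat V_{k,h'+1}^{\rho}$: the self-normalized sum is controlled only at a fixed argument, so I would discretize $[0,H]$ into a fine net, use the $1$-Lipschitzness of $\alpha\mapsto[\,\cdot\,]_\alpha$ to transfer the bound from net points to all $\alpha$ at negligible cost, and rely on the covering argument embedded in $\cE_h$ (which is why $\gamma=\widetilde{\cO}(\sqrt d)$ absorbs the logarithmic covering factors). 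This uniform-in-$\alpha$ control, forced by the dual nonlinearity absent in standard linear MDPs, is the crux; the remaining steps are routine.
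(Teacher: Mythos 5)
Your proposal is correct and follows essentially the same route as the paper's proof: both reduce the identity to algebra via the robust Bellman equation and the linear dual form (your "true" parameter $\bnu_{h'}^{\rho,k}$ is exactly the paper's intermediate $\tilde{\bnu}_{h'}^{\rho,k}$, whose inner product with $\bphi$ the paper shows equals the difference of infima), and both bound the residual coordinatewise by the max-difference inequality, splitting $\hat{\bz}_{h'}^k(\alpha)-\bz_{h'}^k(\alpha)$ into a regularization term (bounded by $\sqrt{\lambda d}\,H$) and a self-normalized stochastic term (bounded by $\gamma$ on the event $\cE_h$), then applying Cauchy--Schwarz in the $\bSigma_{k,h'}^{-1}$-norm and summing with weights $\phi_i(s,a)$. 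Your closing remark on uniformity over $\alpha$ is also consistent with the paper, which handles it through the covering of the truncated value-function class and of the interval $[0,H]$ in the proof that $\cE_h$ holds with high probability.
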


\begin{lemma}[Optimism and pessimism] \label{lem:optimism_pessimism}
On the event $\mathcal{E}_h$ and $\bar{\mathcal{E}}$, for all episode $k \in[K]$ and stage $h \leq h^{\prime} \leq H$, for all $(s,a) \in \mathcal{S} \times \mathcal{A}$, we have $\hat{Q}_{k, h^\prime}^\rho(s, a) \geq Q_{h^{\prime}}^{*,\rho}(s, a) \geq \check{Q}_{k, h^\prime}^\rho(s, a)$. In addition, we have $\hat{V}_{k, h^\prime}^\rho(s) \geq V_{h^{\prime}}^{*,\rho}(s) \geq \check{V}_{k, h^\prime}^{\rho}(s)$.
\end{lemma}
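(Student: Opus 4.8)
The plan is to establish both chains of inequalities by a \emph{double induction}: an outer induction on the episode index $k$, which handles the monotone truncation against $\hat{Q}_{k-1,h'}^\rho$ and $\check{Q}_{k-1,h'}^\rho$ in Lines~\ref{line:Q_hat} and \ref{line:Q_check}, and, inside each episode, a backward induction on the stage $h'$ from $H+1$ down to $h$. The engine of the argument is \Cref{lem:error_bound}, which rewrites the linear estimate $r_{h'}+\bphi^\top\hat{\bnu}_{h'}^{\rho,k}$ as the robust Bellman backup of $\hat{V}_{k,h'+1}^\rho$ up to an error controlled by the bonus $\hat{\Gamma}_{k,h'}$, together with the elementary monotonicity of the robust backup: if $V\ge W$ pointwise then $\inf_{P\in\cU_{h'}^\rho}[\PP_{h'}V]\ge\inf_{P\in\cU_{h'}^\rho}[\PP_{h'}W]$, since expectation is monotone for each fixed $P$ and the infimum of a pointwise-larger family is larger. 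I would instantiate \Cref{lem:error_bound} at the optimal robust policy $\pi^\star$, so that $Q^{\pi^\star,\rho}=Q^{\star,\rho}$ and $V^{\pi^\star,\rho}=V^{\star,\rho}$.

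\textbf{Base cases and fail state.} For every $k$ one has $\hat{V}_{k,H+1}^\rho\equiv\check{V}_{k,H+1}^\rho\equiv 0\equiv V_{H+1}^{\star,\rho}$, which seeds the inner backward induction at $h'=H+1$; the outer induction is seeded by the initialization $\hat{Q}_{0,h'}^\rho\equiv H\ge Q_{h'}^{\star,\rho}$ and $\check{Q}_{0,h'}^\rho\equiv 0\le Q_{h'}^{\star,\rho}$, using $0\le Q_{h'}^{\star,\rho}\le H-h'+1$. The fail state is trivial throughout: the factor $\ind\{s\neq s_f\}$ forces $\hat{Q}_{k,h'}^\rho(s_f,a)=\check{Q}_{k,h'}^\rho(s_f,a)=0$, while $Q_{h'}^{\star,\rho}(s_f,a)=0$ by \Cref{prop:dual with fail state}, so both inequalities hold with equality; for $s\neq s_f$ the indicator equals $1$ and the step below applies.

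\textbf{Optimism step.} Fix $s\neq s_f$. Since $\hat{Q}_{k,h'}^\rho(s,a)$ is a minimum of three quantities, it suffices to show each dominates $Q_{h'}^{\star,\rho}(s,a)$. For the main term, \Cref{lem:error_bound} with $\pi=\pi^\star$ gives
\[
\big(r_{h'}+\bphi^\top\hat{\bnu}_{h'}^{\rho,k}\big)-Q_{h'}^{\star,\rho}=\inf_{P}\big[\PP_{h'}\hat{V}_{k,h'+1}^\rho\big]-\inf_{P}\big[\PP_{h'}V_{h'+1}^{\star,\rho}\big]+\Delta_{h'}^k,
\]
where the infima run over $\cU_{h'}^\rho(s,a;\bmu_{h'}^0)$. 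By the inner hypothesis $\hat{V}_{k,h'+1}^\rho\ge V_{h'+1}^{\star,\rho}$, monotonicity of the backup makes the difference of infima nonnegative, and $\Delta_{h'}^k\ge-\hat{\Gamma}_{k,h'}$; adding $\hat{\Gamma}_{k,h'}$ yields $r_{h'}+\bphi^\top\hat{\bnu}_{h'}^{\rho,k}+\hat{\Gamma}_{k,h'}\ge Q_{h'}^{\star,\rho}$. The term $\hat{Q}_{k-1,h'}^\rho\ge Q_{h'}^{\star,\rho}$ is the outer hypothesis, and $H-h'+1\ge Q_{h'}^{\star,\rho}$ follows from $0\le r_{h'}\le 1$. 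Taking $\max_a$ propagates the bound to value functions: $\hat{V}_{k,h'}^\rho(s)=\max_a\hat{Q}_{k,h'}^\rho(s,a)\ge\max_a Q_{h'}^{\star,\rho}(s,a)=V_{h'}^{\star,\rho}(s)$.

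\textbf{Pessimism step and main obstacle.} The pessimistic chain is symmetric: $\check{Q}_{k,h'}^\rho$ is a maximum of three quantities, so it suffices that each is dominated by $Q_{h'}^{\star,\rho}$. The terms $\check{Q}_{k-1,h'}^\rho\le Q_{h'}^{\star,\rho}$ and $0\le Q_{h'}^{\star,\rho}$ are immediate, while for $r_{h'}+\bphi^\top\check{\bnu}_{h'}^{\rho,k}-\check{\Gamma}_{k,h'}$ I would invoke the pessimistic analogue of \Cref{lem:error_bound}, combine it with $\check{V}_{k,h'+1}^\rho\le V_{h'+1}^{\star,\rho}$ (so the difference of infima is now nonpositive) and $\check{\Delta}_{h'}^k\le\check{\Gamma}_{k,h'}$; then $\max_a$ gives $\check{V}_{k,h'}^\rho\le V_{h'}^{\star,\rho}$. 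I expect the main obstacle to be precisely this pessimistic decomposition: \Cref{lem:error_bound} is stated only for the optimistic estimator $\hat{\bnu}$, so one must re-run its proof for $\check{\bnu}$, controlling the coordinatewise dual optimization $\check{\nu}_{h,i}^{\rho,k}=\max_{\alpha}\{\check{z}_{h,i}^k(\alpha)-\rho\alpha\}$ through the concentration event and the inequality $|\max_\alpha f(\alpha)-\max_\alpha g(\alpha)|\le\max_\alpha|f(\alpha)-g(\alpha)|$, which is what produces the $d$-rectangular penalty $\check{\Gamma}_{k,h'}$ with constant $\bar{\beta}$. A secondary subtlety is the rare-switching mechanism: in episodes where the criterion on Line~\ref{line:update_rule} fails, $\hat{V}_{k,h'}^\rho$ and $\check{V}_{k,h'}^\rho$ are copied from episode $k-1$, so one must note that optimism and pessimism are preserved by this copy, which is exactly what the outer induction on $k$ delivers.
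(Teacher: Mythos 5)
Your proposal is correct and follows essentially the same route as the paper's proof: an induction whose engine is \Cref{lem:error_bound} instantiated at $\pi^\star$, monotonicity of the robust Bellman backup (the paper phrases it as $\inf A - \inf B \geq \inf(A-B) \geq 0$), and domination of each term inside the min/max by $Q^{\star,\rho}$. The only differences are organizational: the paper runs a single backward induction on the stage and absorbs the monotone truncation by unrolling $\hat{Q}_{k,h}^\rho$ as a minimum over all past episodes, whereas you add an outer induction on $k$; your explicit handling of the pessimistic analogue of \Cref{lem:error_bound} (which the paper dispatches with ``we can do the similar analysis''), of the fail state, and of the rare-switching copy step is sound and, if anything, more careful than the paper's write-up.
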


\begin{lemma}
\label{lem:good_event_probability}
On the event $\bar{\mathcal{E}}$, event $\mathcal{E}=\cE_1$ holds with probability at least $1-\delta$.
\end{lemma}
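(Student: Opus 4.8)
The goal is to show that, conditioned on $\bar{\cE}$ (the variance-regression concentration event of \Cref{lem:coarse_event}), the weighted self-normalized inequality \eqref{eq:covering_concentration} holds simultaneously for every episode $k\in[K]$, every stage $h'\in[H]$, every dual level $\alpha'\in[0,H]$, and for the realized value functions $\hat V_{k,h'+1}^\rho$. The difficulty is twofold: the vector inside the norm is a \emph{weighted} martingale sum whose weights $\bar\sigma_{\tau,h'}^{-2}$ and normalizing matrix $\bSigma_{k,h'}$ are data-dependent, and its summand $\hat V_{k,h'+1}^\rho$ is a function estimated from the same trajectories, so \eqref{eq:covering_concentration} is not a statement about a single fixed martingale. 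The plan is a uniform-concentration argument combining three ingredients: (i) a Bernstein-type self-normalized martingale bound; (ii) an $\epsilon$-net over the class of possible value functions to absorb the data-dependence of $\hat V$; and (iii) a backward induction over $h'$ that lets the variance weights be shown to dominate the conditional variance of the increments.

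\textbf{Function class, covering, and predictability.} Fix a stage $h'$. Reading off Line~\ref{line:Q_hat}, every realized $\hat V_{k,h'+1}^\rho$ lies in a class $\cV_{h'}$ of functions obtained by maximizing over actions a truncated linear form $r+\bphi^\top\hat\bnu+\beta\sum_i\phi_i\sqrt{\mathbf{1}_i^\top\bSigma^{-1}\mathbf{1}_i}$, clipped at $H-h'+1$ and composed with the monotone minimum. On $\bar{\cE}$ the free parameters $\hat\bnu\in\RR^d$ and the $d$ diagonal bonus quantities $(\bSigma^{-1})_{ii}$ lie in bounded ranges, and crucially the $d$-rectangular structure makes the bonus depend on $\bSigma$ only through its $d$ diagonal entries, so the effective parameter count---and hence the covering overhead---stays at the level that keeps the final width at $\gamma=\widetilde{\cO}(\sqrt d)$. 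I would place a companion grid on $\alpha'\in[0,H]$. For a fixed net element $V$ and grid point $\alpha'$, the increments $\epsilon_\tau=[V(s_{h'+1}^\tau)]_{\alpha'}-[\PP^0_{h'}[V]_{\alpha'}](s_{h'}^\tau,a_{h'}^\tau)$ are martingale differences, and the weights $\bar\sigma_{\tau,h'}^{-2}$ are predictable: by the algorithm $\bar\sigma_{\tau,h'}$ is computed from $(s_{h'}^\tau,a_{h'}^\tau)$ and the covariance matrices \emph{before} the next state $s_{h'+1}^\tau$ is drawn, so the weighted sum retains its martingale structure with weighted Gram matrix $\bSigma_{k,h'}$.

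\textbf{Bernstein bound, variance domination, and union bound.} For each fixed $(V,\alpha')$ I would invoke a Bernstein-type self-normalized concentration bound for vector-valued martingales, which holds uniformly over all $k$ at once. The key point for tightness is that the conditional variance of the whitened increment $\bar\sigma_{\tau,h'}^{-1}\epsilon_\tau$ equals $\bar\sigma_{\tau,h'}^{-2}[\mathbb V_{h'}[V]_{\alpha'}](s_{h'}^\tau,a_{h'}^\tau)$; on $\cE_{h'+1}\cap\bar{\cE}$, \Cref{lem:variance_error} together with the monotone construction of \Cref{remark:monotonic update} guarantees that $\bar\sigma_{\tau,h'}^2$ dominates the conditional variance of $[\hat V_{k,h'+1}^\rho]_{\alpha'}$ uniformly in $k\ge\tau$, making this quantity $\cO(1)$ and removing the stray factors of $H$. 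A union bound over the net, the $\alpha'$-grid, and the $H$ stages contributes only controlled overhead, so the bound stays $\gamma=\widetilde{\cO}(\sqrt d)$. Transferring from the net to the realized $(\hat V_{k,h'+1}^\rho,\alpha')$ incurs a discretization remainder $\|\sum_{\tau<k}\bar\sigma_{\tau,h'}^{-2}\bphi_{h'}^\tau\delta_\tau\|_{\bSigma_{k,h'}^{-1}}$ with $|\delta_\tau|\le\epsilon$, which is controlled using $\sum_{\tau<k}\bar\sigma_{\tau,h'}^{-2}\bphi_{h'}^\tau(\bphi_{h'}^\tau)^\top\preceq\bSigma_{k,h'}$ and $\bar\sigma_{\tau,h'}^{-2}\le1$, and absorbed into $\gamma$ by taking $\epsilon$ polynomially small.

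\textbf{Assembly and the main obstacle.} Since the variance domination at stage $h'$ requires $\cE_{h'+1}$ to already hold, I would assemble the stages by backward induction: writing $B_{h'}$ for the stage-$h'$ concentration event and noting $\cE_{h'}=B_{h'}\cap\cE_{h'+1}$ with $\cE_{H+1}$ vacuous, the event $\cE_1^c$ is contained in $\bigcup_{h'}(B_{h'}^c\cap\cE_{h'+1})$, so bounding each term by $\delta/H$ via the above yields $\PP(\cE_1^c\mid\bar{\cE})\le\delta$. The main obstacle I anticipate is exactly this interlocking: the tight $\widetilde{\cO}(\sqrt d)$ width relies on the weights dominating the conditional variance of the \emph{current} truncated value function, which holds only on the higher-stage events through \Cref{lem:variance_error} and the monotonicity, so the covering (which must be uniform over the data-dependent $\hat V$) and the variance-weighting (which is tailored to the realized $\hat V$) have to be reconciled carefully. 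A secondary but essential point is exploiting the $d$-rectangular structure---the bonus seeing $\bSigma$ only through its diagonal---to keep both the covering overhead and the self-normalized confidence width at the $\widetilde{\cO}(\sqrt d)$ level, since a looser accounting would inflate $\gamma$ and break the downstream suboptimality rate.
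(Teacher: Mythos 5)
There is a genuine gap, and it sits exactly at the step you wave through with ``a union bound over the net \dots contributes only controlled overhead, so the bound stays $\gamma=\widetilde{\cO}(\sqrt d)$.'' The $\epsilon$-net you propose is over the class of realized optimistic value functions $\hat{V}_{k,h'+1}^\rho$, and by \Cref{lem:function_class_covering_number} this class has covering entropy $\log\cN_\epsilon = \widetilde{\cO}(d^3H)$ (even if you exploit the fact that the bonus depends on $\bSigma^{-1}$ only through its $d$ diagonal entries, the class is built from $\ell=\widetilde{\cO}(dH)$ updates each carrying a $d$-dimensional dual parameter, so the entropy is still at least $\widetilde{\cO}(d^2H)$). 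In your scheme the increments fed to the Bernstein bound are those of the \emph{full} truncated value function $[\hat{V}_{k,h'+1}^\rho]_{\alpha'}$, whose conditional variance the weights only dominate at the $\cO(1)$ level (\Cref{lem:variance_error} gives $\bar{\sigma}_{\tau,h'}^2 \geq [\mathbb{V}_{h'}\hat{V}_{k,h'+1}^\rho]$, nothing smaller). A self-normalized Bernstein bound with unit variance proxy, union-bounded over a net of entropy $\mathrm{poly}(d,H)$, yields a width of order $\sqrt{\log\cN_\epsilon}=\widetilde{\cO}(d\sqrt{H})$ or worse, not $\widetilde{\cO}(\sqrt d)$. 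The overhead is polynomial, not logarithmic, and the inflated $\gamma$ would propagate into $\beta$ and the bonus $\hat{\Gamma}_{k,h}$, destroying the $dH\min\{1/\rho,H\}/\sqrt{K}$ rate the lemma is designed to support. Your closing paragraph correctly senses this tension (covering must be uniform over the data-dependent $\hat{V}$, while the tight variance control is tailored to the realized $\hat{V}$), but the proposal never resolves it.

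The paper resolves it with a decomposition you are missing: inside the self-normalized norm, write $[\hat{V}_{k,h'+1}^\rho]_{\alpha'} = [V_{h'+1}^{*,\rho}]_{\alpha'} + \Delta_{\alpha'}\hat{V}_{k,h'+1}^\rho$ and bound the two resulting sums $J_1$ and $J_2$ separately. The term $J_1$ involves only the \emph{fixed, deterministic} function $[V_{h'+1}^{*,\rho}]_{\alpha'}$, so no covering is needed at all; since the weights dominate $[\mathbb{V}_{h'}V_{h'+1}^{*,\rho}]$ by \Cref{lem:variance_error}, the Bernstein bound (\Cref{lem:bernstein_concentration}) gives $J_1=\widetilde{\cO}(\sqrt d)$ cleanly. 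The term $J_2$ involves the data-dependent difference and does require the $\widetilde{\cO}(d^3H)$-entropy net, but on the induction event $\cE_{h'+1}$ the optimism/pessimism sandwich (\Cref{lem:optimism_pessimism}) gives
\begin{align*}
    \big[\mathbb{V}_{h'}\big(\Delta_{\alpha'}\hat{V}_{k,h'+1}^\rho\big)\big]\big(s_{h'}^k,a_{h'}^k\big) \leq 2H\big[\PP_{h'}^0\big(\hat{V}_{k,h'+1}^\rho-\check{V}_{k,h'+1}^\rho\big)\big]\big(s_{h'}^k,a_{h'}^k\big) \leq D_{k,h'} + \tfrac{1}{2d^3H} \leq \bar{\sigma}_{k,h'}^2/(d^3H),
\end{align*}
precisely because the variance estimator \eqref{eq:estimated_variance_optimal_v_function} deliberately carries the inflated term $d^3H\cdot D_{k,h}$ and the weight \eqref{eq:weight} carries the term $\sqrt{2d^3H^2}\,\|\bphi(s_h^k,a_h^k)\|_{\bSigma_{k,h}^{-1}}^{1/2}$. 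The whitened increments of $J_2$ thus have conditional variance $\cO(1/(d^3H))$, which exactly cancels the $\widetilde{\cO}(d^3H)$ covering entropy in the union bound, giving $J_2=\widetilde{\cO}(\sqrt d)$ as well. In short: the fixed-function part needs no net, and the part that needs a net has variance inversely proportional to the net's entropy; without this splitting, the $\widetilde{\cO}(\sqrt d)$ width is unattainable by your argument.
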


\begin{lemma}
\label{lem:est_var_upper_bound}
Under the assumption \eqref{assumption:lambda_lower_bound} and events $\mathcal{E}$ and $\bar{\mathcal{E}}$, for any $h \in [H]$, set $\lambda = 1/H^2$ and $\rho \in (0,1]$. Then when $k \geq \widetilde{K}$ where $\widetilde{K}=\widetilde{\cO} \big( d^{15} H^{12} \big)$, with probability at least $1-\delta$, then we have
\begin{align*}
    \bar{\sigma}_{k,h}^2 \leq \cO\Big(\min \Big\{\frac{1}{\rho^2}, H^2\Big\}\Big).
\end{align*}
\end{lemma}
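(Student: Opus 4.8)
The plan is to bound each of the three quantities inside the maximum defining the weight, $\bar{\sigma}_{k,h}^2=\max\big\{\sigma_{k,h}^2,\,1,\,2d^3H^2\|\bphi_h^k\|_{\bSigma_{k,h}^{-1}}\big\}$, and to show that once $k\geq\widetilde{K}$ each one is $\cO(\min\{1/\rho^2,H^2\})$. The constant $1$ is immediate, since $\rho\in(0,1]$ and $H\geq1$ force $\min\{1/\rho^2,H^2\}\geq1$. The engine behind the other two bounds is a lower bound on the minimum eigenvalue of the \emph{unweighted} Gram matrix: under assumption~\eqref{assumption:lambda_lower_bound} the conditional mean of $\bphi_h^\tau(\bphi_h^\tau)^\top$ is $\EE_{\pi^\tau}^{P^0}[\bphi\bphi^\top]\succeq (c/d)\Ib$, so a matrix Freedman/Azuma concentration yields $\lambda_{\min}(\bLambda_{k,h})\geq\Omega(k/d)$ with probability at least $1-\delta$ for large $k$ (this is the source of the extra high-probability event in the statement). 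Since $\|\bphi\|_2\leq\|\bphi\|_1=1$ on the simplex, this gives $\|\bphi_h^k\|_{\bLambda_{k,h}^{-1}}\leq\cO(\sqrt{d/k})$ and likewise $\|\mathbf{1}_i\|_{\bLambda_{k,h}^{-1}}\leq\cO(\sqrt{d/k})$.

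To transfer these bounds to $\bSigma_{k,h}$ I first record a crude uniform bound $\bar{\sigma}_{\tau,h}^2\leq\cO(d^3H^3)$ valid for \emph{every} $\tau$: the clipped estimator obeys $\bar{\mathbb{V}}_h\hat{V}_{k,h+1}^\rho\leq H^2$, $E_{k,h}\leq 2H^2$, and $d^3H\cdot D_{k,h}\leq d^3H^3$, while the third term is at most $2d^3H^2\cdot H$ because $\bSigma\succeq\lambda\Ib$ forces $\|\bphi\|_{\bSigma^{-1}}^2\leq 1/\lambda=H^2$. Hence $\bar{\sigma}_{\tau,h}^{-2}\geq\Omega(1/(d^3H^3))$, which implies $\bSigma_{k,h}\succeq\Omega(1/(d^3H^3))\,\bLambda_{k,h}$ and therefore $\|\bphi_h^k\|_{\bSigma_{k,h}^{-1}}^2\leq\cO(d^3H^3)\|\bphi_h^k\|_{\bLambda_{k,h}^{-1}}^2\leq\cO(d^4H^3/k)$. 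Substituting shows the third term is $\cO(d^5H^{7/2}/\sqrt{k})$, which is $\cO(1)$ once $k$ is polynomially large; and since $\widetilde{\beta},\bar{\beta}=\widetilde{\cO}(\mathrm{poly}(d,H))$ under $\lambda=1/H^2$, the same eigenvalue bound gives $E_{k,h}\leq(\widetilde{\beta}+2H\bar{\beta})\|\bphi_h^k\|_{\bLambda_{k,h}^{-1}}=\cO(\mathrm{poly}(d,H)/\sqrt{k})$.

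For the main term I combine the variance-error bound (\Cref{lem:variance_error}), $\bar{\mathbb{V}}_h\hat{V}_{k,h+1}^\rho(s_h^k,a_h^k)\leq[\mathbb{V}_hV_{h+1}^{*,\rho}](s_h^k,a_h^k)+E_{k,h}+D_{k,h}$, with `Range Shrinkage' (\Cref{lem:range_shrinkage}) and Popoviciu's inequality, which give $\mathbb{V}_hV_{h+1}^{*,\rho}\leq\tfrac14\min\{1/\rho^2,H^2\}$. From the definition of $\sigma_{k,h}^2$ this yields
\[
\sigma_{k,h}^2\leq\tfrac14\min\{1/\rho^2,H^2\}+2E_{k,h}+(1+d^3H)D_{k,h}+\tfrac12 .
\]
As $E_{k,h}$ already decays, it remains to control $(1+d^3H)D_{k,h}$. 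Using event $\bar{\cE}$ (\Cref{lem:coarse_event}) on the two vanilla regressions $\hat{\bz}^k_{h,1},\check{\bz}^k_{h,1}$ gives $D_{k,h}\leq 4H\big([\PP_h^0(\hat{V}_{k,h+1}^\rho-\check{V}_{k,h+1}^\rho)](s_h^k,a_h^k)+4\bar{\beta}\|\bphi_h^k\|_{\bLambda_{k,h}^{-1}}\big)$, so everything reduces to bounding the propagated optimism--pessimism gap $[\PP_h^0(\hat{V}_{k,h+1}^\rho-\check{V}_{k,h+1}^\rho)]$.

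I expect this last step to be the main obstacle. The idea is to bound $\omega_{k,h}:=\sup_s(\hat{V}_{k,h}^\rho(s)-\check{V}_{k,h}^\rho(s))$, which is nonnegative by \Cref{lem:optimism_pessimism}, through a backward recursion in $h$. Dropping truncations, $\hat{V}_{k,h}^\rho-\check{V}_{k,h}^\rho\leq\max_a[\bphi^\top(\hat{\bnu}_{h}^{\rho,k}-\check{\bnu}_{h}^{\rho,k})+\hat{\Gamma}_{k,h}+\check{\Gamma}_{k,h}]$, and $\bphi^\top(\hat{\bnu}_{h}^{\rho,k}-\check{\bnu}_{h}^{\rho,k})\leq\sum_i\phi_i\max_\alpha(\hat{z}_{h,i}^k(\alpha)-\check{z}_{h,i}^k(\alpha))$ by $\max f-\max g\leq\max(f-g)$. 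Writing $\hat{z}_{h,i}^k(\alpha)-\check{z}_{h,i}^k(\alpha)=\mathbf{1}_i^\top\bSigma_{k,h}^{-1}\sum_\tau\bar{\sigma}_{\tau,h}^{-2}\bphi_h^\tau\big([\hat{V}_{k,h+1}^\rho(s_{h+1}^\tau)]_\alpha-[\check{V}_{k,h+1}^\rho(s_{h+1}^\tau)]_\alpha\big)$, whose per-sample differences lie in $[0,W_{k,h+1}(s_{h+1}^\tau)]$, and decomposing into conditional mean plus martingale noise, a covering argument over $\alpha\in[0,H]$ together with the bound $\|\mathbf{1}_i\|_{\bSigma_{k,h}^{-1}}=\cO(\sqrt{d^4H^3/k})$ from the second paragraph gives $\max_\alpha(\hat{z}_{h,i}^k(\alpha)-\check{z}_{h,i}^k(\alpha))\leq\omega_{k,h+1}+\cO(\mathrm{poly}(d,H)/\sqrt{k})$. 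Summing against $\sum_i\phi_i=1$ and adding the bonuses yields $\omega_{k,h}\leq\omega_{k,h+1}+\cO(\mathrm{poly}(d,H)/\sqrt{k})$, which unrolled from $\omega_{k,H+1}=0$ over $H$ stages shows $\omega_{k,h}\leq\cO(\mathrm{poly}(d,H)/\sqrt{k})$. Then $(1+d^3H)D_{k,h}\leq\cO(d^3H^2)\,\omega_{k,h+1}+\cO(\mathrm{poly}(d,H)/\sqrt{k})=\cO(\mathrm{poly}(d,H)/\sqrt{k})$, and collecting all $1/\sqrt{k}$ contributions, every term is $\cO(\min\{1/\rho^2,H^2\})$ as soon as $k\geq\widetilde{K}=\widetilde{\cO}(d^{15}H^{12})$ — the threshold being dictated precisely by driving the $d^3H^2\cdot\omega_{k,h+1}$ term down to constant order. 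The delicate points are obtaining a \emph{uniform-in-$\alpha$} concentration for the difference of two variance-weighted ridge regressions with clipped targets (the good event $\cE_h$ of \Cref{lem:covering_concentration} controls only the optimistic value function, so a union bound over the dual variable and a matching control of the pessimistic side are required) and tracking the polynomial prefactors through the $H$-step unrolling so as to certify exactly the stated $\widetilde{K}$.
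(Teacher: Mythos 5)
Your outline reproduces the paper's skeleton for most of the argument, and those parts are sound: the crude uniform bound $\bar{\sigma}_{\tau,h}\leq \cO(\sqrt{d^3H^3})$, the eigenvalue lower bound $\lambda_{\min}(\bLambda_{k,h})\geq \Omega(k/d)$ obtained from assumption \eqref{assumption:lambda_lower_bound} via matrix concentration (this is indeed the source of the extra probability-$\delta$ event), the transfer $\bSigma_{k,h}\succeq \bar{\alpha}^{-2}\bLambda_{k,h}$ used to kill the third term in the maximum, the $\cO(\mathrm{poly}(d,H)/\sqrt{k})$ bound on $E_{k,h}$, the decomposition $\sigma_{k,h}^2\leq [\mathbb{V}_hV_{h+1}^{*,\rho}]+2E_{k,h}+2d^3H\cdot D_{k,h}+1/2$ via \Cref{lem:variance_error}, and the final invocation of \Cref{lem:range_shrinkage} to bound the true variance by $\cO(\min\{1/\rho^2,H^2\})$ all match the paper's proof.

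The genuine gap is in your treatment of $D_{k,h}$. You bound it by the propagated optimism--pessimism gap $\omega_{k,h+1}=\sup_s(\hat{V}_{k,h+1}^\rho(s)-\check{V}_{k,h+1}^\rho(s))$ and then control $\omega_{k,h}$ by a backward recursion whose per-stage error requires a \emph{uniform-in-$\alpha$} self-normalized concentration bound for the variance-weighted regression applied to the clipped difference $[\hat{V}_{k,h+1}^\rho]_\alpha-[\check{V}_{k,h+1}^\rho]_\alpha$. As you acknowledge, no such event exists in the paper: $\cE_h$ (\Cref{lem:covering_concentration}) controls only the clipped \emph{optimistic} value functions under the weighted regression, and $\bar{\cE}$ (\Cref{lem:coarse_event}) controls only unweighted, unclipped regressions. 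Establishing the missing event would require a new covering argument over the pessimistic function class jointly with the dual variable, plus a weighted self-normalized inequality; this is the technical heart of your plan, and it is left unproven, so the proof is incomplete precisely at its decisive step. The paper avoids this entirely: it writes $\bphi^\top(\hat{\bz}^k_{h,1}-\check{\bz}^k_{h,1})$ in closed form and applies Cauchy--Schwarz with the eigenvalue bound to get $\bphi^\top(\hat{\bz}^k_{h,1}-\check{\bz}^k_{h,1})\leq (2d/c)\sup_s\big(\hat{V}_{k,h+1}^\rho(s)-\check{V}_{k,h+1}^\rho(s)\big)$, then splits the supremum as $(\hat{V}^\rho_{k,h+1}-V_{h+1}^{*,\rho})+(V_{h+1}^{*,\rho}-\check{V}^\rho_{k,h+1})$ and bounds each piece against the \emph{fixed} function $V^{*,\rho}$ by a robust Bellman recursion: on the existing events, \Cref{lem:error_bound} and \Cref{lem:optimism_pessimism} give $\hat{V}_{k,h}^\rho(s)-V_h^{*,\rho}(s)\leq \big[\hat{\PP}_h\big(\hat{V}_{k,h+1}^\rho-V_{h+1}^{*,\rho}\big)\big]\big(s,\pi_h^k(s)\big)+2\hat{\Gamma}_{k,h}\big(s,\pi_h^k(s)\big)$, where $\hat{P}$ is the worst-case kernel for $V^{*,\rho}$, and unrolling with the pointwise bonus bound $\hat{\Gamma}_{k,h}\leq \beta\bar{\alpha}\sqrt{2d/(kc)}$ yields $\cO(\mathrm{poly}(d,H)/\sqrt{k})$ using no new concentration. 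Comparing to $V^{*,\rho}$ rather than recursing on two random estimators is exactly what lets the argument reuse $\cE$ and $\bar{\cE}$; if you replace your $\omega$-recursion with this step, the rest of your outline goes through and recovers the stated $\widetilde{K}=\widetilde{\cO}(d^{15}H^{12})$.
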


\subsection{Proof of \Cref{thm:AveSubopt_origin}}
\label{sec:proof_of_thm_1}

\begin{proof}[Proof of \Cref{thm:AveSubopt_origin}]
Conditioned on the event $\mathcal{E}$ and $\bar{\mathcal{E}}$, we first do the following decomposition
{\small\begin{align*}
    &\hat{V}_{k,h}^\rho\big(s^k_h\big) -  V_{h}^{\pi^k,\rho}\big(s^k_h\big) \notag \\ 
    &= \hat{Q}_{k,h}^\rho\big(s^k_h, a^k_h\big) -  Q_{h}^{\pi^k,\rho}\big(s^k_h, a^k_h\big) \notag \\
    &\leq r_h\big(s^k_h, a^k_h\big) + \bphi\big(s^k_h,a^k_h\big)^\top \hat{\bnu}_h^{\rho,k_{\text{last}}} + \hat{\Gamma}_{k_{\text{last}},h}\big(s^k_h,a^k_h\big) -  Q_{h}^{\pi^k,\rho}\big(s^k_h, a^k_h\big) \notag\\
    &\leq \inf_{P_h(\cdot|s,a)\in\cU_h^{\rho}(s,a;\bmu_h^0)}\big[\PP_h \hat{V}_{k,h+1}^\rho\big]\big(s^k_h,a^k_h\big) - \inf_{P_h(\cdot|s,a)\in\cU_h^{\rho}(s,a;\bmu_h^0)}\big[\PP_h V_{h+1}^{\pi^k,\rho}\big]\big(s^k_h,a^k_h\big) + 2\hat{\Gamma}_{k_{\text{last}},h}\big(s^k_h,a^k_h\big) \\
    &\leq \inf_{P_h(\cdot|s,a)\in\cU_h^{\rho}(s,a;\bmu_h^0)}\big[\PP_h \hat{V}_{k,h+1}^\rho\big]\big(s^k_h,a^k_h\big) - \inf_{P_h(\cdot|s,a)\in\cU_h^{\rho}(s,a;\bmu_h^0)}\big[\PP_h V_{h+1}^{\pi^k,\rho}\big]\big(s^k_h,a^k_h\big) + 4\hat{\Gamma}_{k,h}\big(s^k_h,a^k_h\big),
\end{align*}}%
where the first equality holds due to the selection of $\pi_{h}^{k}$, the first inequality holds due to the definition of $\hat{Q}_{k,h}^\rho$, the second inequality hold from \Cref{lem:error_bound}, the third inequality holds from \Cref{lem12_abbasi}. Note that  
{\small\begin{align*}
    &\inf_{P_h(\cdot|s_h^k,a_h^k) \in \cU_{h}^{\rho}(s_h^k,a_h^k;\bmu_h^0)}\big[\PP_h\hat{V}_{k,h+1}^{\rho}\big]\big(s_h^k,a_h^k\big)\notag  - \inf_{P_h(\cdot|s_h^k,a_h^k) \in \cU_{h}^{\rho}(s_h^k,a_h^k;\bmu_h^0)}\big[\PP_hV_{h+1}^{\pi^k, \rho}\big]\big(s_h^k,a_h^k\big)\\
    &= \bigg\la \bphi(s_h^k, a_h^k), \bigg[\max_{\alpha_i\in[0,H]}\Big\{\EE^{\mu_{h,i}^0}\big[\hat{V}_{k,h+1}^{\rho}(s) \big]_{\alpha_i} - \rho\alpha_i \Big\} \bigg]_{i\in [d]} \bigg\ra \notag\\
    &\qquad- \bigg\la \bphi(s_h^k, a_h^k), \bigg[\max_{\alpha_i\in[0,H]}\Big\{\EE^{\mu_{h,i}^0}\big[V_{h+1}^{\pi^k, \rho}(s) \big]_{\alpha_i} - \rho\alpha_i \Big\} \bigg]_{i \in [d]}\bigg\ra\\
    &\leq \bigg\la \bphi(s_h^k, a_h^k), \bigg[\max_{\alpha_i\in[0,H]}\Big\{\EE^{\mu_{h,i}^0}\big[\hat{V}_{k,h+1}^{\rho}(s) \big]_{\alpha_i} - \EE^{\mu_{h,i}^0}\big[V_{h+1}^{\pi^k, \rho}(s) \big]_{\alpha_i} \Big\} \bigg]_{i\in [d]} \bigg\ra \\
    &\leq \big\la \bphi(s_h^k, a_h^k), \EE^{\bmu_{h}^0}\big[\hat{V}_{k,h+1}^{\rho}(s) - V_{h+1}^{\pi^k, \rho}(s)\big]\big\ra \\ 
    &= \PP_h^0\big[\hat{V}_{k,h+1}^\rho - V_{h+1}^{\pi^k, \rho}\big]\big]\big(s_h^k,a_h^k\big) \\
    &= \big[\PP^0_h\big[\hat{V}_{k,h+1}^\rho - V_{h+1}^{\pi^k, \rho}\big]\big]\big(s_h^k,a_h^k\big) - \big[\hat{V}_{k,h+1}^\rho(s_{h+1}^k) - V_{h+1}^{\pi^k, \rho}(s_{h+1}^k)\big] + \big[\hat{V}_{k,h+1}^\rho(s_{h+1}^k) - V_{h+1}^{\pi^k, \rho}(s_{h+1}^k)\big],
 \end{align*}}%
where the second inequality holds from \Cref{lem:optimism_pessimism}. Then we have
{\small\begin{align}
\label{equ:regret_decomposition}
    &\hat{V}_{k,h}^\rho\big(s^k_h\big) -  V_{h}^{\pi^k,\rho}\big(s^k_h\big) \notag \\
    &\leq \big[\hat{V}_{k,h+1}^\rho(s_{h+1}^k) - V_{h+1}^{\pi^k, \rho}(s_{h+1}^k)\big] + \big[\PP^0_h\big[\hat{V}_{k,h+1}^\rho - V_{h+1}^{\pi^k, \rho}\big]\big]\big(s_h^k,a_h^k\big) - \big[\hat{V}_{k,h+1}^\rho(s_{h+1}^k) - V_{h+1}^{\pi^k, \rho}(s_{h+1}^k)\big] \notag\\
    &\qquad+ 4\hat{\Gamma}_{k,h}\big(s^k_h,a^k_h\big),
\end{align}}%
Then by applying \eqref{equ:regret_decomposition} iteratively and applying Azuma-Hoeffding inequality, with probability at least $1-\delta/3$, we have
\begin{align*}
    K \times \text{AveSubopt}(K) &= \sum_{k=1}^K \Big(V_1^{*,\rho}\big(s_1^k\big) - V_1^{\pi^k,\rho}\big(s_1^k\big)\Big) \\   
    &\leq \sum_{k=1}^K \Big(\hat{V}_{k,1}^\rho\big(s^k_1\big) -  V_1^{\pi^k,\rho}\big(s^k_1\big) \Big)\\  
    &\leq \sum_{k=1}^K \sum_{h=1}^H \Big(\big[\PP_h\big[\hat{V}_{k,h+1}^\rho - V_{h+1}^{\pi^k, \rho}\big]\big]\big(s_h^k,a_h^k\big) - \big[\hat{V}_{k,h+1}^\rho(s_{h+1}^k) - V_{h+1}^{\pi^k, \rho}(s_{h+1}^k)\big]\Big) \\
    &\qquad+ \sum_{k=1}^K \sum_{h=1}^H 4\hat{\Gamma}_{k,h}\big(s^k_h,a^k_h\big) \\
    &\leq 2 \sqrt{2 H^3 K \log (6 / \delta)} + 4\beta \sum_{k=1}^K\sum_{h=1}^H\sum_{i=1}^d \phi_{i}\big(s^k_h,a^k_h\big) \sqrt{\mathbf{1}_i^\top \bSigma_{k,h}^{-1}\mathbf{1}_i},
\end{align*}
where the first inequality holds from \Cref{lem:optimism_pessimism}, the second inequality holds from \eqref{equ:regret_decomposition}, the third inequality holds from Azuma-Hoeffding inequality and the definition of $\hat{\Gamma}_{k,h}\big(s^k_h,a^k_h\big)$. Finally, by taking probability union bound over $\mathcal{E}$ and $\bar{\mathcal{E}}$, with probability at least $1-\delta$, we can get the result of \Cref{thm:DRLSVIUCB},
\begin{align*}
   \text{AveSubopt}(K) \leq 2 \sqrt{2 H^3 \log (6 / \delta)/K} +{4\beta}/{K} \sum_{k=1}^K\sum_{h=1}^H\sum_{i=1}^d\phi_{h,i}^k\sqrt{\mathbf{1}_i^\top \bSigma_{k,h}^{-1}\mathbf{1}_i}.
\end{align*}
This completes the proof.
\end{proof}

\subsection{Proof of \Cref{thm:DRLSVIUCB}}
\label{sec:proof_of_thm_2}

\begin{proof}[Proof of \Cref{thm:DRLSVIUCB}]
Conditioned on the event $\mathcal{E}$ and $\bar{\mathcal{E}}$, we first do the decomposition as follows
\begin{align*}
    K \times \text{AveSubopt}(K) &= \sum_{k=1}^K \big(V_1^{*,\rho}\big(s_1^k\big) - V_1^{\pi^k,\rho}\big(s_1^k\big)\big) \\
    &= \sum_{k=1}^{\widetilde{K}} \big(V_1^{*,\rho}\big(s_1^k\big) - V_1^{\pi^k,\rho}\big(s_1^k\big)\big) + \sum_{k=\widetilde{K}+1}^{K} \big(V_1^{*,\rho}\big(s_1^k\big) - V_1^{\pi^k,\rho}\big(s_1^k\big)\big) \\ 
    &\leq H\widetilde{K} + \sum_{k=\widetilde{K}+1}^{K} \big(V_1^{*,\rho}\big(s_1^k\big) - V_1^{\pi^k,\rho}\big(s_1^k\big)\big). 
\end{align*}
Recall from \eqref{equ:regret_decomposition} in the proof of \Cref{thm:AveSubopt_origin}, we have
{\small\begin{align*}
    &\hat{V}_{k,h}^\rho\big(s^k_h\big) -  V_{h}^{\pi^k,\rho}\big(s^k_h\big)  \\
    &\leq \big[\hat{V}_{k,h+1}^\rho(s_{h+1}^k) - V_{h+1}^{\pi^k, \rho}(s_{h+1}^k)\big] + \big[\PP_h\big[\hat{V}_{k,h+1}^\rho - V_{h+1}^{\pi^k, \rho}\big]\big]\big(s_h^k,a_h^k\big) - \big[\hat{V}_{k,h+1}^\rho(s_{h+1}^k) - V_{h+1}^{\pi^k, \rho}(s_{h+1}^k)\big] \\
    &\qquad+ 4\hat{\Gamma}_{k,h}\big(s^k_h,a^k_h\big).
\end{align*}}%
Then by applying \eqref{equ:regret_decomposition} iteratively and applying Azuma-Hoeffding inequality, with probability at least $1-\delta/4$, we have
\begin{align*}
    K \times \text{AveSubopt}(K) & \leq H\widetilde{K} + \sum_{k=\widetilde{K}+1}^{K} \big(V_1^{*,\rho}\big(s_1^k\big) - V_1^{\pi^k,\rho}\big(s_1^k\big)\big) \\  
    &\leq H\widetilde{K} + \sum_{k=\widetilde{K}+1}^K \big(\hat{V}_{k,1}^\rho\big(s^k_1\big) -  V_1^{\pi^k,\rho}\big(s^k_1\big)\big) \\  
    &\leq H\widetilde{K} + \sum_{k=\widetilde{K}+1}^K \sum_{h=1}^H \Big(\big[\PP^0_h\big[\hat{V}_{k,h+1}^\rho - V_{h+1}^{\pi^k, \rho}\big]\big]\big(s_h^k,a_h^k\big) \\
    &\qquad - \big[\hat{V}_{k,h+1}^\rho(s_{h+1}^k) - V_{h+1}^{\pi^k, \rho}(s_{h+1}^k)\big]\Big)+ \sum_{k=\widetilde{K}+1}^K \sum_{h=1}^H 4\hat{\Gamma}_{k,h}\big(s^k_h,a^k_h\big) \\
    &\leq  H\widetilde{K} + 2 \sqrt{2 H^3 K \log (8 / \delta)} + 4\beta \sum_{k=\widetilde{K}+1}^K\sum_{h=1}^H\sum_{i=1}^d \phi_{i}\big(s^k_h,a^k_h\big) \sqrt{\mathbf{1}_i^\top \bSigma_{k,h}^{-1}\mathbf{1}_i},
\end{align*}
where the second inequality holds from \Cref{lem:optimism_pessimism}, the third inequality holds from \eqref{equ:regret_decomposition} and the last inequality holds from Azuma-Hoeffding inequality and the definition of $\hat{\Gamma}_{k,h}\big(s^k_h,a^k_h\big)$. Based on \eqref{assumption:lambda_lower_bound} and \Cref{lem:est_var_upper_bound}, with probability at least $1-\delta/4$, we can further have
\begin{align*}
    &4\beta \sum_{k=\widetilde{K}+1}^K\sum_{h=1}^H\sum_{i=1}^d \phi_{i}\big(s^k_h,a^k_h\big) \sqrt{\mathbf{1}_i^\top \bSigma_{k,h}^{-1}\mathbf{1}_i} \\
    &\leq 4c_1\beta \min \Big\{\frac{1}{\rho}, H\Big\} \cdot \sum_{k=\widetilde{K}+1}^K\sum_{h=1}^H\sum_{i=1}^d \phi_{i}\big(s^k_h,a^k_h\big) \sqrt{\mathbf{1}_i^\top \bLambda_{k,h}^{-1}\mathbf{1}_i} \\
    &\leq 4c_1\beta \min \Big\{\frac{1}{\rho}, H\Big\} \cdot \sum_{k=\widetilde{K}+1}^K\sum_{h=1}^H\sum_{i=1}^d \phi_{i}\big(s^k_h,a^k_h\big) \sqrt{\lambda_{\text{max}}\big(\bLambda_{k, h}^{-1}\big)} \\
    &\leq 4c_1\beta \min \Big\{\frac{1}{\rho}, H\Big\} \cdot \sum_{k=\widetilde{K}+1}^K\sum_{h=1}^H\sqrt{\frac{1}{\lambda_{\text{min}}\big(\bLambda_{k, h}\big)}} \\
    &\leq 4c_1\beta \min \Big\{\frac{1}{\rho}, H\Big\} \cdot \sum_{k=\widetilde{K}+1}^K\sum_{h=1}^H \sqrt{\frac{2d}{k\cdot c}} \\
    &\leq 4c_1\sqrt{2d}\beta \frac{H}{\sqrt{c}} \cdot \min \Big\{\frac{1}{\rho}, H\Big\} \cdot \int_{\widetilde{K}+1}^K \frac{1}{\sqrt{k-1}} dk \\
    &\leq 4c_1\sqrt{2d}\beta \frac{H}{\sqrt{c}} \cdot \min \Big\{\frac{1}{\rho}, H\Big\} \cdot 2\sqrt{K} \\
    &\leq \widetilde{\cO}\Big(dH\sqrt{K}  \cdot \min \Big\{\frac{1}{\rho}, H\Big\} \Big),
\end{align*}
where $c_1>0$ is an absolute constant.
The first inequality holds from \Cref{lem:est_var_upper_bound}, the third inequality holds because $\sum_{i=1}^d\phi_i(s,a)=1$ and the fourth inequality holds due to \eqref{eq:lambda_min_lower_bound} with $\widetilde{K} > 512/\eta^2 \log(dKH/\delta)$. Therefore, we can further bound the regret that
\begin{align*}
    K \times \text{AveSubopt}(K) & \leq H\widetilde{K} + 2 \sqrt{2 H^3 K \log (8 / \delta)} + 4\beta \sum_{k=\widetilde{K}+1}^K\sum_{h=1}^H\sum_{i=1}^d \phi_{i}\big(s^k_h,a^k_h\big) \sqrt{\mathbf{1}_i^\top \bSigma_{k,h}^{-1}\mathbf{1}_i} \\
   &\leq \widetilde{\cO}\Big( {dH\sqrt{K}}\cdot \min \Big\{\frac{1}{\rho}, H\Big\} + H^\frac{3}{2} \sqrt{K} + d^{15} H^{13} \Big).
\end{align*}
Finally, by taking probability union bound over $\mathcal{E}$ and $\bar{\mathcal{E}}$, with probability at least $1-\delta$, we can bound the average suboptimality of \algname\ as follows
\begin{align}
\label{eq:AveSubopt-final}
    \text{AveSubopt}(K) \leq \widetilde{\cO}\Bigg( \frac{d H \cdot \min \big\{\frac{1}{\rho}, H\big\} + H^\frac{3}{2} }{\sqrt{ K}} + \frac{d^{15} H^{13}}{K}\Bigg).
\end{align}
We complete the proof by substituting $\eta=O(1/d)$ into \eqref{eq:AveSubopt-final}. 
\end{proof}

\section{Proof of the Technical Lemmas}

\subsection{Proof of \Cref{lem:coarse_event}}

Before the proof of \Cref{lem:coarse_event}, we first present a lemma that defines the optimistic value function class and gives a upper bound for its covering number.

\begin{lemma}[Function class covering number]
\label{lem:function_class_covering_number} In \Cref{alg:DR-LSVI-UCB+}, for each episode $k \in [K]$ and $h \in [H]$, the optimistic value function $\hat{V}_{k,h}^\rho$ belongs to the following function class
{\small
\begin{align*}
    \cV_h &= \bigg\{V \Big| V(\cdot) = \max_a \max_{1 \leq j \leq \ell} \min \Big\{ r_h(\cdot,a) + \bphi(\cdot,a)^\top \wb_j + \beta \sum_{i=1}^d\phi_i(\cdot,a)\sqrt{\mathbf{1}_i^\top \bGamma_j\mathbf{1}_i}, H \Big\},\\
   &\qquad \quad \|\wb_j\| \leq L, \|\bGamma_j\|_F \leq \lambda^{-1} \sqrt{d} \bigg\},
\end{align*}}%
where $\ell \leq dH\log(1+K/\lambda)$ is the number of value function updates from \Cref{lem:number_of_value_function_updates} and $L = 2H\sqrt{dK/\lambda}$ from \Cref{lem:dual_parameter_bound}. Define $\cN_{\epsilon}$ be the $\epsilon$-covering number of $\cV_h$ with respect to the distance $\text{dist}(V_1, V_2)=\sup_s|V_1(s)-V_2(s)|$. Then the covering entropy can be bounded by
\begin{align*}
    \log \cN_\epsilon \leq d \ell \log (1 + 4L/\epsilon) + d^2 \ell \log \big(1+8 \sqrt{d} \beta^2 / \lambda \epsilon^2\big).
\end{align*}
\end{lemma}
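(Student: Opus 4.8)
The plan is to prove the lemma in two stages: first, show that every optimistic value function $\hat{V}_{k,h}^\rho$ produced by \Cref{alg:DR-LSVI-UCB+} admits the claimed parametric form with the stated parameter ranges; second, bound the $\epsilon$-covering number of the class $\cV_h$ by a parameter-net argument tailored to the $d$-rectangular bonus.

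For membership, I would unfold the monotone update on Line~\ref{line:Q_hat}. Since $\hat{Q}_{k,h}^\rho(s,a)=\min\{r_h(s,a)+\bphi(s,a)^\top\hat{\bnu}_h^{\rho,k}+\hat{\Gamma}_{k,h}(s,a),\,\hat{Q}_{k-1,h}^\rho(s,a),\,H-h+1\}\ind\{s\neq s_f\}$, iterating the recursion expresses $\hat{Q}_{k,h}^\rho$ as a pointwise minimum (clipped by $H-h+1\le H$) over the one-step estimators from all previous episodes. By the rare-switching rule on Line~\ref{line:update_rule}, these estimators change only at update episodes, and \Cref{lem:number_of_value_function_updates} bounds the number of updates by $\ell\le dH\log(1+K/\lambda)$; hence at most $\ell$ distinct pairs $(\wb_j,\bGamma_j):=(\hat{\bnu}_h^{\rho,k_j},\bSigma_{k_j,h}^{-1})$ appear, and $\hat{V}_{k,h}^\rho=\max_a\hat{Q}_{k,h}^\rho$ has the form in $\cV_h$ (the precise max/min composition over $j$ being immaterial to the rest of the argument). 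The norm bounds then follow from \Cref{lem:dual_parameter_bound}, which gives $\|\hat{\bnu}_h^{\rho,k}\|\le L=2H\sqrt{dK/\lambda}$, and from $\bSigma_{k,h}\succeq\lambda\Ib$, which forces every eigenvalue of $\bGamma_j=\bSigma_{k_j,h}^{-1}$ into $(0,1/\lambda]$ and thus $\|\bGamma_j\|_F\le\sqrt{d}/\lambda$.

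For the covering number, I would build a net on the parameter tuple $(\wb_j,\bGamma_j)_{j=1}^\ell$: cover each $\wb_j$ in its $\ell_2$-ball of radius $L$ to resolution $\epsilon_1$ (size $\le(1+2L/\epsilon_1)^d$), and each $\bGamma_j$ in its Frobenius ball of radius $\sqrt{d}/\lambda$ to resolution $\epsilon_2$ (size $\le(1+2\sqrt{d}/(\lambda\epsilon_2))^{d^2}$). The reduction to a parameter net hinges on a Lipschitz estimate: using $\|\bphi\|_2\le\|\bphi\|_1=1$ gives $|\bphi^\top(\wb_j-\wb_j')|\le\|\wb_j-\wb_j'\|_2$, while for the $d$-rectangular bonus I would combine $|\sqrt{x}-\sqrt{y}|\le\sqrt{|x-y|}$ with $|\mathbf{1}_i^\top(\bGamma_j-\bGamma_j')\mathbf{1}_i|=|(\bGamma_j-\bGamma_j')_{ii}|\le\|\bGamma_j-\bGamma_j'\|_F$ and $\sum_i\phi_i=1$ to obtain a bound of $\beta\sqrt{\|\bGamma_j-\bGamma_j'\|_F}$. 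Since $\min\{\cdot,H\}$, $\max_a$, and the composition over $j$ are all $1$-Lipschitz, $\text{dist}(V,V')\le\max_j\big(\|\wb_j-\wb_j'\|_2+\beta\sqrt{\|\bGamma_j-\bGamma_j'\|_F}\big)$. Choosing $\epsilon_1=\epsilon/2$ and $\epsilon_2=\epsilon^2/(4\beta^2)$ makes this at most $\epsilon$; multiplying the per-block net sizes over the $\ell$ blocks and taking logarithms then yields exactly $\log\cN_\epsilon\le d\ell\log(1+4L/\epsilon)+d^2\ell\log(1+8\sqrt{d}\beta^2/(\lambda\epsilon^2))$.

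The main obstacle I anticipate is the membership step rather than the counting: correctly extracting the parametric representation with the \emph{tight} count $\ell$ requires the interplay of the monotone $Q$-update and the rare-switching criterion, and one must verify that the fail-state indicator and the clipping level $H-h+1$ can be absorbed into (or harmlessly relaxed within) the class without enlarging the parameter ranges. The covering computation itself is routine once the square-root/Frobenius Lipschitz bound for the $d$-rectangular bonus $\beta\sum_i\phi_i\sqrt{\mathbf{1}_i^\top\bGamma\mathbf{1}_i}$ — which differs from the usual $\beta\sqrt{\bphi^\top\bGamma\bphi}$ of standard linear MDPs, and therefore drives the extra $\sqrt{d}$ inside the second logarithm — is in place.
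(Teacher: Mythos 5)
Your proposal is correct and follows essentially the same route as the paper's proof: the paper likewise reduces to a parameter net (an $\epsilon/2$-cover of the $\wb_j$ ball and an $\epsilon^2/4\beta^2$-cover of the Frobenius ball for $\bGamma_j$), uses the same Lipschitz estimate $\text{dist}(V_1,V_2)\leq \sup_j\big(\|\wb_{1,j}-\wb_{2,j}\|_2+\beta\sqrt{\|\bGamma_{1,j}-\bGamma_{2,j}\|_F}\big)$ via $|\sqrt{x}-\sqrt{y}|\leq\sqrt{|x-y|}$ and $\sum_i\phi_i=1$, and multiplies the per-block net sizes over the $\ell$ blocks to reach the identical bound. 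Your additional membership discussion (unrolling the monotone update under rare switching, with the norm bounds from the cited lemmas) is consistent with what the paper asserts but leaves implicit, and the loose ends you flag (fail-state indicator, clipping at $H-h+1$ versus $H$, and the max-versus-min composition over $j$) are imprecisions present in the paper's own statement rather than gaps in your argument.
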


\begin{proof}[Proof of \Cref{lem:function_class_covering_number}]
For any two function $V_1, V_2 \in \cV_h$, we can write $V_1, V_2$ as follows
\begin{align*}
    V_1(\cdot) &= \max_a \max_{1 \leq j \leq \ell} \min \Big\{ r_h(\cdot,a) + \bphi(\cdot,a)^\top \wb_{1,j} + \beta \sum_{i=1}^d\phi_i(\cdot,a)\sqrt{\mathbf{1}_i^\top \bGamma_{1,j}\mathbf{1}_i}, H \Big\}, \\
    V_2(\cdot) &= \max_a \max_{1 \leq j \leq \ell} \min \Big\{ r_h(\cdot,a) + \bphi(\cdot,a)^\top \wb_{2,j} + \beta \sum_{i=1}^d\phi_i(\cdot,a)\sqrt{\mathbf{1}_i^\top \bGamma_{2,j}\mathbf{1}_i}, H \Big\},
\end{align*}
where $\|\wb_{1,j}\|, \|\wb_{2,j}\| \leq L$, $\bGamma_{1,j}, \bGamma_{2,j} \preccurlyeq \lambda^{-1} \Ib$ and $\|\bGamma_{1,j}\|_F, \|\bGamma_{2,j}\|_F \leq \lambda^{-1} \sqrt{d}$. Then we have
\begin{align}
\label{equ:optimistic_function_distance}
    \text{dist}(V_1, V_2) &= \sup_s|V_1(s)-V_2(s)| \notag \\
    &\leq \sup_{1\leq j\leq \ell, s\in \cS, a\in \cA} \bigg| \bphi(s,a)^\top \wb_{1,j} + \beta \sum_{i=1}^d\phi_i(s,a)\sqrt{\mathbf{1}_i^\top \bGamma_{1,j}\mathbf{1}_i} \notag \\
    &\qquad - \bphi(s,a)^\top \wb_{2,j} - \beta \sum_{i=1}^d\phi_i(s,a)\sqrt{\mathbf{1}_i^\top \bGamma_{2,j}\mathbf{1}_i}\bigg| \notag \\
    &\leq \beta \sup_{1\leq j\leq \ell, s\in \cS, a\in \cA} \bigg|\sum_{i=1}^d\phi_i(s,a)\Big(\sqrt{\mathbf{1}_i^\top \bGamma_{1,j}\mathbf{1}_i} - \sqrt{\mathbf{1}_i^\top \bGamma_{2,j}\mathbf{1}_i}\Big)\bigg| \notag \\
    &\qquad+ \sup_{1\leq j\leq \ell, s\in \cS, a\in \cA} \big|\bphi(s,a)^\top (\wb_{1,j} - \wb_{2,j})\big| \notag \\
    &\leq \beta \sup_{1\leq j\leq \ell, s\in \cS, a\in \cA} \bigg|\sum_{i=1}^d \sqrt{\phi_i(s,a)\mathbf{1}_i^\top \big(\bGamma_{1,j}-\bGamma_{2,j}\big)\phi_i(s,a)\mathbf{1}_i} \bigg| \notag \\
    &\qquad+ \sup_{1\leq j\leq \ell, s\in \cS, a\in \cA} \big|\bphi(s,a)^\top (\wb_{1,j} - \wb_{2,j})\big| \notag \\
    &\leq \beta \sup_{1\leq j\leq \ell}\sqrt{\big\|\bGamma_{1,j}-\bGamma_{2,j}\big\|_F} + \sup_{1\leq j\leq \ell} \big\|\wb_{1,j} - \wb_{2,j}\big\|_2
\end{align}
where the third inequality holds because $|\sqrt{x-y}|\geq |\sqrt{x} - \sqrt{y}|$, the fourth inequality holds because Cauchy-Schwarz inequality, $\|\bphi(s,a)\|_2 \leq 1$ and $\sum_{i=1}^d \phi_i =1$. Moreover, $\|\cdot\|_F$ is the Frobenius norm.

Now, we denote $\mathcal{C}_{\mathbf{w}}$ as a $\epsilon / 2$-cover of the set $\big\{\mathbf{w} \in \mathbb{R}^d | \| \mathbf{w} \|_2 \leq L\big\}$ and $\mathcal{C}_{\bGamma}$ as a $\epsilon^2 /4 \beta^2$-cover of the set $\{\bGamma \in \mathbb{R}^{d \times d} \mid\|\bGamma\|_F \leq \lambda^{-1} \sqrt{d}\}$ with respect to the Frobenius norm. Then according to \Cref{lem:vershynin2018high}, we have 
\begin{align*}
    |\mathcal{C}_{\mathbf{w}}| \leq(1+4 L / \epsilon)^d,|\mathcal{C}_{\bGamma}| \leq \big(1+8 \sqrt{d} \beta^2 / \lambda \epsilon^2\big)^{d^2}.
\end{align*}
Then for any function $V_1 \in \mathcal{V}_h$ with parameters $\wb_{1, j}, \bGamma_{1, j}, 1 \leq j \leq \ell$, we can find parameters $\wb_{2, j} \in \mathcal{C}_{\wb}, \bGamma_{2, j}\in \mathcal{C}_{\bGamma}, 1 \leq j \leq \ell$, such that $\|\wb_{2, j}-\wb_{1, j}\|_2 \leq \epsilon / 2,\|\bGamma_{2, j}-\bGamma_{1, j}\|_F \leq \epsilon^2 /4 \beta^2$. Thus we have
\begin{align*}
    \text{dist}(V_1, V_2) \leq \beta \sup_{1 \leq j \leq \ell} \sqrt{\|\bGamma_{1, j}-\bGamma_{2, j}\|_F} + \sup_{1 \leq j \leq \ell}\|\wb_{1, j}-\wb_{2, j}\|_2 \leq \epsilon,
\end{align*}
where the inequality holds from \eqref{equ:optimistic_function_distance}. Therefore, the $\epsilon$-covering number of optimistic function class $\mathcal{V}_h$ is bounded by $\mathcal{N}_\epsilon \leq|\mathcal{C}_{\wb}|^\ell \cdot|\mathcal{C}_{\bGamma}|^\ell$, thus we have
\begin{align*}
    \log \mathcal{N}_\epsilon \leq d \ell \log (1+4 L / \epsilon) + d^2 \ell \log \big(1 + 8 \sqrt{d} \beta^2 / \lambda\epsilon^2\big),
\end{align*}
which completes the proof.
\end{proof}

Now we are ready to prove \Cref{lem:coarse_event}.

\begin{proof}[Proof of \Cref{lem:coarse_event}]
For any stage $h \in[H]$ and the optimistic value function $\hat{V}_{k, h+1}^\rho$, according to \Cref{lem:linear_form_and_bound}, there exists a vector $\bz_h^k$ such that $\mathbb{P}_h^0 \hat{V}_{k, h+1}^\rho(s, a)$ can be represented by $\bphi(s,a)^\top \bz_h^k$ and $\|\bz_h^k\|_2 \leq H \sqrt{d}$. Therefore, the parameter estimation error can be decomposed as
\begin{align*}
    &\big\|\hat{\bz}_{h,1}^k - \bz_h^k\big\|_{\bLambda_{k, h}} \\
    &\leq \bigg\|\bLambda_{k, h}^{-1}\sum_{\tau=1}^{k-1}  \bphi\big(s_h^\tau, a_h^\tau\big) \hat{V}_{k, h+1}^\rho\big(s_{h+1}^\tau\big) - \bLambda_{k, h}^{-1}\bigg(\lambda\Ib + \sum_{\tau=1}^{k-1}  \bphi\big(s_h^\tau, a_h^\tau\big)\bphi\big(s_h^\tau, a_h^\tau\big)^\top \bigg)\bz_h^k\bigg\|_{\bLambda_{k, h}} \\
    &\leq \bigg\|\bLambda_{k, h}^{-1}\sum_{\tau=1}^{k-1}  \bphi\big(s_h^\tau, a_h^\tau\big)\big( \hat{V}_{k, h+1}^\rho\big(s_{h+1}^\tau\big)-\mathbb{P}_h^0 \hat{V}_{k, h+1}^\rho(s^\tau_h, a^\tau_h)\big) - \lambda \bLambda_{k, h}^{-1}\bz_h^k\bigg\|_{\bLambda_{k, h}} \\
    &\leq \underbrace{\big\|\lambda \bLambda_{k, h}^{-1}\bz_h^k\big\|_{\bLambda_{k, h}}}_{I_1} + \underbrace{\bigg\|\bLambda_{k, h}^{-1}\sum_{\tau=1}^{k-1}  \bphi\big(s_h^\tau, a_h^\tau\big)\big( \hat{V}_{k, h+1}^\rho\big(s_{h+1}^\tau\big)-\mathbb{P}_h^0 \hat{V}_{k, h+1}^\rho(s^\tau_h, a^\tau_h)\big)\bigg\|_{\bLambda_{k, h}}}_{I_2}.
\end{align*}
\textbf{Bound term $I_1$:}
\begin{align*}
    I_1 = \big\|\lambda \bLambda_{k, h}^{-1}\bz_h^k\big\|_{\bLambda_{k, h}} = \lambda\big\|\bz_h^k\big\|_{\bLambda_{k, h}^{-1}} \leq \sqrt{\lambda}\big\|\bz_h^k\big\|_2 \leq H\sqrt{d\lambda},  
\end{align*}
where we have $\bLambda_{k, h} \succcurlyeq \lambda \Ib$ and $\|\bz_h^k\|_2 \leq H\sqrt{d}$. \\
\textbf{Bound term $I_2$:} we apply \Cref{lem:jin_D.4} with the optimistic value function class $\cV_h$ and $\epsilon = H\sqrt{\lambda}/K$, then for any fixed $h \in [H]$, with probability at least $1 - \delta/3H$, for all episode $k \in [K]$, we have
\begin{align*}
    I_2 &= \bigg\|\sum_{\tau=1}^{k-1}  \bphi\big(s_h^\tau, a_h^\tau\big)\big( \hat{V}_{k, h+1}^\rho\big(s_{h+1}^\tau\big)-\mathbb{P}_h^0 \hat{V}_{k, h+1}^\rho(s^\tau_h, a^\tau_h)\big)\bigg\|_{\bLambda_{k, h}^{-1}} \\
    &\leq \sqrt{4 H^2\bigg[\frac{d}{2} \log \bigg(\frac{k+\lambda}{\lambda}\bigg)+\log \frac{\mathcal{N}_{\varepsilon}}{\delta}\bigg]+\frac{8 k^2 \varepsilon^2}{\lambda}} \\
    &\leq \widetilde{\cO}\big(\sqrt{d^3 H^3}\big),
\end{align*}
where the first inequality holds because of \Cref{lem:jin_D.4}, the second inequality holds from 
\Cref{lem:function_class_covering_number}. Thus we have
\begin{align*}
    \big\|\hat{\bz}_{h,1}^k - \bz_h^k\big\|_{\bLambda_{k, h}} \leq I_1 + I_2 = \widetilde{\cO}\big(H\sqrt{d\lambda} + \sqrt{d^3 H^3} \big) = \bar{\beta}.
\end{align*}
Therefore, the estimation error can be bounded by
\begin{align*}
    \big|\bphi(s, a)^\top \hat{\bz}_{h,1}^k - \big[\mathbb{P}_h^0 \hat{V}_{k, h+1}^\rho \big](s, a)\big| &= \big|\bphi(s, a)^\top\hat{\bz}_{h,1}^k - \bphi(s, a)^\top\bz_h^k\big|\\ 
    &\leq \big\|\hat{\bz}_{h,1}^k - \bz_h^k\big\|_{\bLambda_{k, h}} \cdot \|\bphi(s,a)\|_{\bLambda_{k, h}^{-1}} \\
    &\leq \bar{\beta} \sqrt{\bphi(s, a)^{\top} \bLambda_{k, h}^{-1} \bphi(s, a)},
\end{align*}
where the first inequality holds from Cauchy-Schwarz inequality. Similarly, for the pessimistic function class $\check{\cV}_h$ (or squared value function class $\cV^2_h$), we have the similar result as follows
\begin{align*}
    & \big|\bphi(s, a)^\top \check{\bz}_{h,1}^k - \big[\mathbb{P}_h^0 \check{V}_{k, h+1}^\rho \big](s, a)\big| \leq \bar{\beta} \sqrt{\bphi(s, a)^{\top} \bLambda_{k, h}^{-1} \bphi(s, a)}, \\
    & \big|\bphi(s, a)^\top \widetilde{\bz}^k_{h,2} - \big[\mathbb{P}_h^0 \big(\hat{V}_{k, h+1}^\rho\big)^2 \big](s, a)\big| \leq \widetilde{\beta} \sqrt{\bphi(s, a)^{\top} \bLambda_{k, h}^{-1} \bphi(s, a)},
\end{align*}
where $\bar{\beta} = \widetilde{\cO}\big(H\sqrt{d\lambda} + \sqrt{d^3 H^3}\big)$ and $\widetilde{\beta} = \widetilde{\cO}\big(H^2\sqrt{d\lambda} + \sqrt{d^3 H^6}\big)$. By taking union bound over $h \in [H]$ and three function classes, we have that the event $\bar{\mathcal{E}}$ holds with probability at least $1-\delta$. This completes the proof.
\end{proof}

\subsection{Proof of \Cref{lem:variance_error}}

\begin{proof}[Proof of \Cref{lem:variance_error}]
First, recall from \eqref{eq:estimated_variance_optimal_v_function}, we have
\begin{align*}
    \big[\mathbb{V}_h \hat{V}_{k,h+1}^\rho\big](s, a) \approx \big[\bar{\mathbb{V}}_h \hat{V}_{k,h+1}^\rho\big](s, a) = \big[\bphi(s,a)^\top \widetilde{\bz}^k_{h,2} \big]_{[0,H^2]} -\big[\bphi(s,a)^\top \hat{\bz}^k_{h,1}\big]_{[0,H]}^2,
\end{align*}
where $\hat{\bz}^k_{h,1}$ and $\widetilde{\bz}^k_{h,2}$ is the solution of the following ridge regression problems
\begin{align*}
    \widetilde{\bz}^k_{h,2} &= \argmin_{\bz \in \RR^d}\sum_{\tau=1}^{k-1} \Big(\bz^\top \bphi\big(s_h^\tau, a_h^\tau\big) - \big(\hat{V}_{k, h+1}^\rho\big(s_{h+1}^\tau\big)\big)^2\Big)^2 + \lambda\Vert \bz \Vert_2^2, \\
    \hat{\bz}^k_{h,1} &= \argmin_{\bz \in \RR^d}\sum_{\tau=1}^{k-1} \Big(\bz^\top \bphi\big(s_h^\tau, a_h^\tau\big) - \hat{V}_{k, h+1}^\rho\big(s_{h+1}^\tau\big)\Big)^2 + \lambda\Vert \bz \Vert_2^2.
\end{align*}
Then we have
{\small
\begin{align*}
    &\big|\big[\bar{\mathbb{V}}_h \hat{V}_{k, h+1}^\rho\big]\big(s_h^k, a_h^k\big)-[\mathbb{V}_h \hat{V}_{k, h+1}^\rho]\big(s_h^k, a_h^k\big)\big| \\
    &\leq \Big| \big[\bphi\big(s_h^k, a_h^k\big)^\top \widetilde{\bz}^k_{h,2} \big]_{[0,H^2]} -\big[\bphi\big(s_h^k, a_h^k\big)^\top \hat{\bz}^k_{h,1}\big]_{[0,H]}^2 - \big[\PP_h^0 \big(\hat{V}_{k, h+1}^\rho\big)^2 \big]\big(s_h^k, a_h^k\big) + \big(\big[\PP_h^0 \hat{V}_{k, h+1}^\rho \big]\big(s_h^k, a_h^k\big)\big)^2 \Big| \\
    &\leq \Big| \big[\bphi\big(s_h^k, a_h^k\big)^\top \widetilde{\bz}^k_{h,2} \big]_{[0,H^2]} - \big[\PP_h^0 \big(\hat{V}_{k, h+1}^\rho\big)^2 \big]\big(s_h^k, a_h^k\big)\Big| + \Big|\big[\bphi\big(s_h^k, a_h^k\big)^\top \hat{\bz}^k_{h,1}\big]_{[0,H]}^2 - \big(\big[\PP_h^0 \hat{V}_{k, h+1}^\rho \big]\big(s_h^k, a_h^k\big)\big)^2 \Big| \\
    &\leq \Big| \big[\bphi\big(s_h^k, a_h^k\big)^\top \widetilde{\bz}^k_{h,2} \big]_{[0,H^2]} - \big[\PP_h^0 \big(\hat{V}_{k, h+1}^\rho\big)^2 \big]\big(s_h^k, a_h^k\big)\Big| + 2H \Big|\big[\bphi\big(s_h^k, a_h^k\big)^\top \hat{\bz}^k_{h,1}\big]_{[0,H]} - \big[\PP_h^0 \hat{V}_{k, h+1}^\rho \big]\big(s_h^k, a_h^k\big) \Big| \\
    &\leq \min \Big\{\widetilde{\beta} \big\|\bphi\big(s_h^k, a_h^k\big)\big\|_{\bLambda_{k, h}^{-1}}, H^2\Big\} + \min \Big\{2 H \bar{\beta}\big\|\bphi\big(s_h^k, a_h^k\big)\big\|_{\bLambda_{k, h}^{-1}}, H^2\Big\} \\
    &= E_{k,h},
\end{align*}}%
where the last inequality holds from \Cref{lem:coarse_event}. For the second result, we have
{\small\begin{align*}
    &\big|\big[\mathbb{V}_h \hat{V}_{k, h+1}^\rho\big]\big(s_h^k, a_h^k\big)-[\mathbb{V}_h V_{h+1}^{*,\rho}]\big(s_h^k, a_h^k\big)\big| \\
    &= \Big|\big[\PP_h^0 \big(\hat{V}_{k, h+1}^\rho\big)^2 \big]\big(s_h^k, a_h^k\big) - \big(\big[\PP_h^0 \hat{V}_{k, h+1}^\rho \big]\big(s_h^k, a_h^k\big)\big)^2 - \big[\PP_h^0 \big(V_{h+1}^{*,\rho}\big)^2 \big]\big(s_h^k, a_h^k\big) +\big(\big[\PP_h^0 V_{h+1}^{*,\rho} \big]\big(s_h^k, a_h^k\big)\big)^2\Big| \\
    &\leq \Big|\big[\PP_h^0 \big(\hat{V}_{k, h+1}^\rho\big)^2 \big]\big(s_h^k, a_h^k\big) - \big[\PP_h^0 \big(V_{h+1}^{*,\rho}\big)^2 \big]\big(s_h^k, a_h^k\big)\Big| + \Big| \big(\big[\PP_h^0 \hat{V}_{k, h+1}^\rho \big]\big(s_h^k, a_h^k\big)\big)^2 - \big(\big[\PP_h^0 V_{h+1}^{*,\rho} \big]\big(s_h^k, a_h^k\big)\big)^2\Big| \\
    &\leq 4H \Big|\big[\PP_h^0 \hat{V}_{k, h+1}^\rho \big]\big(s_h^k, a_h^k\big) - \big[\PP_h^0 V_{h+1}^{*,\rho} \big]\big(s_h^k, a_h^k\big)\Big| \\
    &\leq 4H \Big(\big[\PP_h^0 \hat{V}_{k, h+1}^\rho \big]\big(s_h^k, a_h^k\big) - \big[\PP_h^0 V_{h+1}^{*,\rho} \big]\big(s_h^k, a_h^k\big)\Big) \\
    &\leq 4H \Big(\big[\PP_h^0 \hat{V}_{k, h+1}^\rho \big]\big(s_h^k, a_h^k\big) - \big[\PP_h^0 \check{V}_{k, h+1}^\rho \big]\big(s_h^k, a_h^k\big)\Big) \\
    &\leq \min \Big\{4 H \Big(\bphi\big(s_h^k, a_h^k\big)^\top \hat{\bz}^k_{h,1} - \bphi\big(s_h^k, a_h^k\big)^\top \check{\bz}^k_{h,1}
    + 2 \bar{\beta}\big\|\bphi\big(s_h^k, a_h^k\big)\big\|_{\bLambda_{k, h}^{-1}}\Big), H^2\Big\} \\
    &= D_{k,h}.
\end{align*}}%
where the second inequality holds because $0 \leq V_{h+1}^{*,\rho}, \hat{V}_{k, h+1}^\rho \leq H$, the third and fourth inequality holds because of \Cref{lem:optimism_pessimism}, the fifth inequality holds due to 
\Cref{lem:coarse_event} and the last inequality holds because the trivial result $0 \leq \big[\bar{\mathbb{V}}_h \hat{V}_{k, h+1}^\rho\big]\big(s_h^k, a_h^k\big),[\mathbb{V}_h V_{h+1}^{*,\rho}]\big(s_h^k, a_h^k\big) \leq H^2$. Thus we have
\begin{align*}
    \big|\big[\bar{\mathbb{V}}_h \hat{V}_{k, h+1}^\rho\big]\big(s_h^k, a_h^k\big)-[\mathbb{V}_h V_{h+1}^{*,\rho}]\big(s_h^k, a_h^k\big)\big| \leq E_{k, h}+D_{k, h}.
\end{align*}
Then we also have
\begin{align*}
    \bar{\sigma}_{k, h}^2 \geq\big[\bar{\mathbb{V}}_{h} \hat{V}_{k, h+1}^\rho\big]\big(s_h^k, a_h^k\big) + E_{k, h} + D_{k, h} \geq \big[\mathbb{V}_h V_{h+1}^{*,\rho}\big]\big(s_h^k, a_h^k\big),
\end{align*}
where we use the definition of $\bar{\sigma}_{k, h}$ in \eqref{eq:estimated_variance_optimal_v_function}. This completes the proof.
\end{proof}

\subsection{Proof of \Cref{lem:error_bound}}

\begin{proof}[Proof of \Cref{lem:error_bound}]
For all $(s,a)\in\cS/\{s_f\}\times\cA$, for stage $h \leq h^\prime \leq H$ (we use $h$ to replace $h^\prime$ in this part for simplicity), we have
\begin{align*}
    Q_h^{\pi, \rho}(s,a) = r_h(s,a) + \bphi(s,a)^\top \bnu_{h}^{\pi,\rho} = r_h(s,a) + \inf_{P_h(\cdot|s,a) \in \cU_{h}^{\rho}(s,a;\bmu_h^0)} \big[\PP_hV_{h+1}^{\pi,\rho}\big](s,a).
\end{align*}
We first decompose the gap $\hat{\bnu}_{h}^{\rho, k} - \bnu_{h}^{\pi, \rho}$ into two terms
\begin{align}
\label{eq:decomp_model_error}
    \hat{\bnu}_{h}^{\rho, k} - \bnu_{h}^{\pi, \rho} = \underbrace{\hat{\bnu}_{h}^{\rho, k} - \tilde{\bnu}_h^{\rho, k}}_{\text{I}}+ \underbrace{\tilde{\bnu}_h^{\rho, k} - \bnu_{h}^{\pi, \rho}}_{\text{II}},
\end{align}
where $\tilde{\bnu}_h^{\rho, k} = \big[\tilde{\nu}_{h,i}^{\rho, k}\big]_{i\in[d]}$, and $\tilde{\nu}_{h,i}^{\rho, k} = \max_{\alpha \in [0,H]}\big\{\EE^{\mu_{h,i}^0} \big[\hat{V}_{k, h+1}^{\rho}(s)\big]_{\alpha}-\rho\alpha\big\}$. Then we will bound these two terms separately.
\paragraph{Bound term I in \eqref{eq:decomp_model_error}:} we have
\begin{align*}
    &\hat{\bnu}_{h}^{\rho, k} - \tilde{\bnu}_h^{\rho, k} \leq \Big[\max_{\alpha \in [0,H]} \Big\{\hat{z}_{h,i}^k (\alpha) - \EE^{\mu_{h,i}^0} \Big[\hat{V}_{k, h+1}^{\rho}(s)\Big]_{\alpha} \Big\}\Big]_{i\in[d]}.
\end{align*}
Denote $\alpha_i^k = \argmax_{\alpha \in [0,H]}\big\{ \hat{z}_{h,i}^k (\alpha) - \EE^{\mu_{h,i}^0} \big[\hat{V}_{k, h+1}^{\rho}(s)\big]_{\alpha}\big\},~i=1,\cdots,d$. Then we have
{\small
\begin{align}
    \hat{\bnu}_{h}^{\rho, k} - \tilde{\bnu}_h^{\rho, k} &\leq \bigg[\Big (\bSigma_{k,h}^{-1} \sum_{\tau=1}^{k-1} \bar{\sigma}^{-2}_{\tau,h} \bphi\big(s_h^\tau, a_h^\tau\big) \big[\hat{V}_{k, h+1}^\rho\big(s_{h+1}^\tau\big)\big]_{\alpha_i^k} \Big)_i - \Big(\EE^{\bmu_{h}^0} \big[\hat{V}_{k, h+1}^{\rho}(s)\big]_{\alpha_i^k} \Big)_i \bigg]_{i \in [d]} \notag \\
    &=\bigg[\Big(-\lambda\bSigma_{k, h}^{-1} \EE^{\bmu_{h}^0} \big[\hat{V}_{k, h+1}^{\rho}(s)\big]_{\alpha_i^k}\Big)_i + \Big(\bSigma_{k, h}^{-1}\sum_{\tau=1}^{k-1}\bar{\sigma}^{-2}_{\tau,h}\bphi_h^{\tau}\Big[\big[\hat{V}_{k, h+1}^{\rho}(s_{h+1}^{\tau})\big]_{\alpha_i^k} \notag \\
    &\qquad -\big[\PP_h^0\big[\hat{V}_{k, h+1}^{\rho}\big]_{\alpha_i^k}\big](s_h^{\tau}, a_h^{\tau}) \Big] \Big)_i\bigg]_{i\in[d]}.
\label{eq:difference decomposition}
\end{align}
}
For the first term on the RHS of \eqref{eq:difference decomposition}, 
\begin{align}
    &\bigg|\bigg\la \bphi(s,a), \bigg[\bigg(-\lambda\bSigma_{k, h}^{-1} \EE^{\bmu_{h}^0} \Big[\hat{V}_{k, h+1}^{\rho}(s)\Big]_{\alpha_i^k}\bigg)_i \bigg]_{i\in[d]} \bigg\ra \bigg|\notag \\
    &=\bigg|\sum_{i=1}^d\phi_i(s,a)\mathbf{1}_i^{\top}(-\lambda)\bSigma_{k, h}^{-1}\EE^{\bmu_h^0}\Big[\hat{V}_{k, h+1}^{\rho}(s)\Big]_{\alpha_i^k}  \bigg|\notag \\
    &\leq \lambda \sum_{i=1}^d\sqrt{\phi_i(s,a)\mathbf{1}_i^{\top}\bSigma_{k, h}^{-1}\phi_i(s,a)\mathbf{1}_i} \cdot \bigg\Vert \EE^{\bmu_h^0}\Big[\hat{V}_{k, h+1}^{\rho}(s) \Big]_{\alpha_i^k}\bigg\Vert_{\bSigma_{k, h}^{-1}}\notag\\
    & \leq \sqrt{\lambda d}H \sum_{i=1}^d\sqrt{\phi_i(s,a)\mathbf{1}_i^{\top}\bSigma_{k, h}^{-1}\phi_i(s,a)\mathbf{1}_i},
\label{eq:first term bound}
\end{align}
where $\mathbf{1}_i$ is the vector with the $i$-th entry being 1 and else being 0. The first inequality holds due to the Cauchy-Schwarz inequality.\\
For the second term on the RHS of \eqref{eq:difference decomposition}, given the event $\cE_h$ defined in \Cref{lem:covering_concentration}, we have
{\small\begin{align}
    &\bigg|\bigg\la \bphi(s,a), \bigg[ \bigg(\bSigma_{k, h}^{-1}\sum_{\tau=1}^{k-1}\bar{\sigma}^{-2}_{\tau,h}\bphi_h^{\tau}\bigg[\Big[\hat{V}_{k, h+1}^{\rho}(s_{h+1}^{\tau})\Big]_{\alpha_i^k} - \Big[\PP_h^0 \Big[\hat{V}_{k, h+1}^{\rho}\Big]_{\alpha_i^k}\Big](s_h^{\tau}, a_h^{\tau}) \bigg] \bigg)_i \bigg]_{i\in[d]} \bigg\ra \bigg|\notag\\
    & = \bigg|\sum_{i=1}^d\phi_i(s,a) \mathbf{1}_i^{\top}\bSigma_{k, h}^{-1}\sum_{\tau=1}^{k-1}\bar{\sigma}^{-2}_{\tau,h} \bphi_h^{\tau}\bigg[\Big[\hat{V}_{k, h+1}^{\rho}(s_{h+1}^{\tau})\Big]_{\alpha_i^k} - \Big[\PP_h^0 \Big[\hat{V}_{k, h+1}^{\rho}\Big]_{\alpha_i^k}\Big](s_h^{\tau}, a_h^{\tau})\bigg]\bigg| \notag\\
    & \leq \sum_{i=1}^d\sqrt{\phi_i(s,a)\mathbf{1}_i^{\top}\bSigma_{k, h}^{-1}\phi_i(s,a)\mathbf{1}_i} \cdot \bigg\Vert \sum_{\tau=1}^{k-1}\bar{\sigma}^{-2}_{\tau,h}\bphi_h^{\tau}\Big[\big[\hat{V}_{k,h+1}^{\rho}(s_{h+1}^{\tau})\big]_{\alpha_i^k}-\big[\PP_h^0 \big[\hat{V}_{k, h+1}^{\rho}\big]_{\alpha_i^k}\big](s_h^{\tau}, a_h^{\tau}) \Big]\bigg\Vert_{\bSigma_{k, h}^{-1}}\notag\\
    & \leq \gamma \sum_{i=1}^d\sqrt{\phi_i(s,a)\mathbf{1}_i^{\top}\bSigma_{k, h}^{-1}\phi_i(s,a)\mathbf{1}_i},
\label{eq:second term bound}
\end{align}}%
where the first inequality follows from Cauchy-Schwarz inequality and the second inequality holds from the event $\cE_h$. Combining \eqref{eq:difference decomposition}, \eqref{eq:first term bound} and \eqref{eq:second term bound}, we have
\begin{align*}
    \big\la\bphi(s,a), \hat{\bnu}_{h}^{\rho, k} - \tilde{\bnu}_h^{\rho, k} \big\ra \leq  \Big(\sqrt{\lambda d}H + \gamma \Big) \sum_{i=1}^d \phi_i(s,a) \sqrt{\mathbf{1}_i^{\top}\bSigma_{k, h}^{-1}\mathbf{1}_i},
\end{align*}
On the other hand, we can similarly do analysis for
$\la\bphi(s,a),  \tilde{\bnu}_h^{\rho, k}-\hat{\bnu}_{h}^{\rho, k}  \ra$. Then we have
\begin{align}
\label{eq:bound on estimation error}
    \big|\la\bphi(s,a), \hat{\bnu}_{h}^{\rho, k} - \tilde{\bnu}_h^{\rho, k} \ra\big| \leq  \beta \sum_{i=1}^d \phi_i(s,a) \sqrt{\mathbf{1}_i^{\top}\bSigma_{k, h}^{-1}\mathbf{1}_i},
\end{align}
where $\beta = \Big(\sqrt{\lambda d}H + \gamma \Big) = \widetilde{\cO} \Big(\sqrt{\lambda d}H +\sqrt{d}\Big) $. 

\paragraph{Bound term II in \eqref{eq:decomp_model_error}:} we have
{\small\begin{align*}
    \big \la \bphi(s,a), \tilde{\bnu}_h^{\rho, k} - \bnu_h^{\pi, \rho}\big \ra = \inf_{P_h(\cdot|s,a) \in \cU_{h}^{\rho}(s,a;\bmu_h^0)}\Big[\PP_h \hat{V}_{k, h+1}^{\rho}\Big](s,a) - \inf_{P_h(\cdot|s,a) \in \cU_{h}^{\rho}(s,a;\bmu_h^0)}\Big[\PP_hV_{h+1}^{\pi, \rho}\Big](s,a).
\end{align*}}%
Finally we have 
\begin{align*}
    &\big(r_h(s,a) + \bphi(s,a)^\top \hat{\bnu}_h^{\rho,k}\big) - Q_h^{\pi, \rho}(s,a) \\
    &= \la\bphi(s,a),\hat{\bnu}_{h}^{\rho, k} - \tilde{\bnu}_{h}^{k, \rho} + \tilde{\bnu}_{h}^{k, \rho} -\bnu_{h}^{\pi, \rho}\ra \\
    &= \inf_{P_h(\cdot|s,a) \in \cU_{h}^{\rho}(s,a;\bmu_h^0)}\big[\PP_h \hat{V}_{k, h+1}^\rho\big](s,a) - \inf_{P_h(\cdot|s,a) \in \cU_{h}^{\rho}(s,a;\bmu_h^0)} \big[\PP_h V_{h+1}^{\pi, \rho}\big](s,a) + \Delta_h^k(s,a),
\end{align*}
where $|\Delta_h^k(s,a)|\leq \beta \sum_{i=1}^d \phi_i(s,a) \sqrt{\mathbf{1}_i^{\top}\bSigma_{k,h}^{-1}\mathbf{1}_i}$. This completes the proof.  
\end{proof}

\subsection{Proof of \Cref{lem:optimism_pessimism}}

\begin{proof}[Proof of \Cref{lem:optimism_pessimism}]
We prove this lemma by induction. For last stage $H+1$, it is trivial because for all $(s,a) \in \cS \times \cA$, we have $\hat{Q}_{k, H+1}^\rho(s, a) = Q_{H+1}^{*,\rho}(s, a) = \check{Q}_{k, H+1}^\rho(s, a) = 0$.

Assume that the lemma holds at stage $h^\prime+1$, now consider the situation at stage $h^\prime$ (we use $h$ to replace $h^\prime$ in this part for simplicity). For all episode $k \in [K]$, we have 
{\small\begin{align*}
    &r_h(s,a) + \bphi(s,a)^\top \hat{\bnu}_h^{\rho,k} + \hat{\Gamma}_{k,h}(s,a) - Q_h^{*,\rho}(s, a) \\
    &\geq \inf_{P_h(\cdot|s,a) \in \cU_{h}^{\rho}(s,a;\bmu_h^0)}\big[\PP_h \hat{V}_{k, h+1}^\rho\big](s,a) - \inf_{P_h(\cdot|s,a) \in \cU_{h}^{\rho}(s,a;\bmu_h^0)} \big[\PP_h V_{h+1}^{*, \rho}\big](s,a) + \Delta_h^k(s,a) + \hat{\Gamma}_{k,h}(s,a) \\
    &\geq \inf_{P_h(\cdot|s,a) \in \cU_{h}^{\rho}(s,a;\bmu_h^0)}\big[\PP_h \big(\hat{V}_{k, h+1}^\rho - V_{h+1}^{*, \rho}\big)\big](s,a) \\
    &\geq 0,
\end{align*}}%
where the first inequality holds from \Cref{lem:error_bound}, the second inequality holds because $|\Delta_h^k(s,a)| \leq \hat{\Gamma}_{k,h}(s,a)$, the third inequality holds from induction assumption. Thus we have
\begin{align*}
    Q_h^{*,\rho}(s, a) \leq \min \Big\{ \min_{i\in[k]}  r_h(s,a) + \bphi(s,a)^\top \hat{\bnu}_h^{\rho,i} + \hat{\Gamma}_{i,h}(s,a), H-h+1 \Big\} \leq \hat{Q}_{k, h}^\rho(s, a).
\end{align*}
Thus for value function $V$, we have 
\begin{align*}
    \hat{V}_{k, h}^\rho(s) = \max_a \hat{Q}_{k, h}^\rho(s, a) \geq \max_a Q_h^{*,\rho}(s, a) = V_h^{*,\rho}(s).
\end{align*}
For the pessimistic value function $\check{Q}_{k, h}^\rho(s, a)$, we can do the similar analysis. Finally, by induction, we finish the proof.
\end{proof}

\subsection{Proof of \Cref{lem:good_event_probability}}

\begin{proof}[Proof of \Cref{lem:good_event_probability}]
We use backward induction to prove this lemma. For the base case, the stage $H$, it is trivial to obtain \eqref{eq:covering_concentration} because $\hat{V}_{k,H+1}^{\rho}$ = 0. Assume \eqref{eq:covering_concentration} hold for the stage $h+1$, then we consider the stage $h$.

For all episode $k \in [K]$, we first do the following decomposition
\begin{align}
\label{equ:covering_concentration_decomposition}
    &\bigg\Vert \sum_{\tau=1}^{k-1}\bar{\sigma}^{-2}_{\tau,h}\bphi_h^{\tau}\Big[\big[\hat{V}_{k,h+1}^{\rho}(s_{h+1}^{\tau})\big]_{\alpha^\prime}-\big[\PP_h^0 \big[\hat{V}_{k, h+1}^{\rho}\big]_{\alpha^\prime}\big](s_h^{\tau}, a_h^{\tau}) \Big]\bigg\Vert_{\bSigma_{k, h}^{-1}} \notag\\
    &\leq \underbrace{\bigg\Vert \sum_{\tau=1}^{k-1}\bar{\sigma}^{-2}_{\tau,h}\bphi_h^{\tau}\Big[\big[V_{h+1}^{*,\rho}(s_{h+1}^{\tau})\big]_{\alpha^\prime}-\big[\PP_h^0 \big[V_{h+1}^{*,\rho}\big]_{\alpha^\prime}\big](s_h^{\tau}, a_h^{\tau}) \Big]\bigg\Vert_{\bSigma_{k, h}^{-1}}}_{J_1} \notag\\
    &\qquad+ \underbrace{\bigg\Vert \sum_{\tau=1}^{k-1}\bar{\sigma}^{-2}_{\tau,h}\bphi_h^{\tau}\Big[\Delta_{\alpha^\prime} \hat{V}_{k, h+1}^{\rho} (s_{h+1}^{\tau}) - \big[\PP_h^0 \big(\Delta_{\alpha^\prime} \hat{V}_{k, h+1}^{\rho}\big)\big](s_h^{\tau}, a_h^{\tau}) \Big]\bigg\Vert_{\bSigma_{k, h}^{-1}}}_{J_2},
\end{align}
where $\Delta_{\alpha^\prime} \hat{V}_{k, h+1}^{\rho} (s_{h+1}^{\tau}) = \big[\hat{V}_{k,h+1}^{\rho}(s_{h+1}^{\tau})\big]_{\alpha^\prime} - \big[V_{h+1}^{*,\rho} (s_{h+1}^{\tau})\big]_{\alpha^\prime}$. \\
\paragraph{Bound term $J_1$ in \eqref{equ:covering_concentration_decomposition}:} For term $J_1$, we apply \Cref{lem:bernstein_concentration} with $\xb_i = \bar{\sigma}_{i,h}^{-1}\bphi\big(s^i_h,a^i_h\big)$ and $\eta_i = \bar{\sigma}_{i,h}^{-1}\big( \big[V_{h+1}^{*,\rho}(s_{h+1}^i)\big]_{\alpha^\prime}-\big[\PP_h^0 \big[V_{h+1}^{*,\rho}\big]_{\alpha^\prime}\big]\big(s_h^i, a_h^i\big) \big)$. Note that based on \Cref{lem:variance_error}, we have $\bar{\sigma}_{k, h}^2 \geq \big[\mathbb{V}_h V_{h+1}^{*,\rho}\big]\big(s_h^k, a_h^k\big) \geq \big[\mathbb{V}_h \big[V_{h+1}^{*,\rho}\big]_{\alpha^\prime}\big]\big(s_h^k, a_h^k\big)$. Then for $\xb_i$ and $\eta_i$, we have
\begin{align*}
    &\|\xb_i\|_2 = \big\|\bphi\big(s^i_h,a^i_h\big)\big\|_2 / \bar{\sigma}_{i,h} \leq 1, \\
    &\EE[\eta_i|\cF_i] = 0, |\eta_i| \leq \big|\bar{\sigma}_{i,h}^{-1}\big( \big[V_{h+1}^{*,\rho}(s_{h+1}^i)\big]_{\alpha^\prime}-\big[\PP_h^0 \big[V_{h+1}^{*,\rho}\big]_{\alpha^\prime}\big]\big(s_h^i, a_h^i\big) \big)\big| \leq 2H,\\
    &\EE[\eta_i^2|\cF_i] = \EE \big[ \bar{\sigma}_{i,h}^{-2}\big( \big[V_{h+1}^{*,\rho}(s_{h+1}^i)\big]_{\alpha^\prime}-\big[\PP_h^0 \big[V_{h+1}^{*,\rho}\big]_{\alpha^\prime}\big]\big(s_h^i, a_h^i\big) \big)^2\big] \leq 1 = \sigma^2, \\
    &\max_{1 \leq i \leq k} \Big\{|\eta_i|\cdot \min \big\{1, \|\xb_i\|_{\bSigma^{-1}_{i,h}}\big\} \Big\} \leq \max_{1 \leq i \leq k} \Big\{ 2H \bar{\sigma}_{i,h}^{-1} \|\xb_i\|_{\bSigma^{-1}_{i,h}} \Big\} \leq \sqrt{d},
\end{align*}
where we use the definition of $\bar{\sigma}_{i,h}$ in \eqref{eq:weight}. Then for all $k \in [K]$, with probability at least $1-\delta/2H$, we have
\begin{align*}
    J_1 = \bigg\|\sum_{i=1}^{k-1} \xb_i\eta_i\bigg\|_{\bSigma_{k,h}^{-1}} \leq \widetilde{\cO}\Big(\sigma \sqrt{d}+\max _{1 \leq i \leq k}|\eta_i| \min \big\{1,\|\xb_i\|_{\bSigma^{-1}_{i,h}}\big\}\Big) = \widetilde{O}\big(\sqrt{d}\big).
\end{align*}
\paragraph{Bound term $J_2$ in \eqref{equ:covering_concentration_decomposition}:} To bound term $J_2$, we need to use $\epsilon$-covering for function class $\widetilde{\cV}_{h+1} - \big[V_{h+1}^{*,\rho}\big]_{\alpha^\prime}$ where $\widetilde{\cV}_{h+1} = \big\{[V]_\alpha | V \in \cV_{h+1}, \alpha \in [0,H]\big\}$ is the truncated optimistic value function class. For any two function $\widetilde{V}_1, \widetilde{V}_2 \in \widetilde{\cV}_{h+1}$, we can write that
\begin{align*}
    \widetilde{V}_1 = [V_1]_{\alpha_1}, \widetilde{V}_2 = [V_2]_{\alpha_2}, 
\end{align*}
where $V_1, V_2 \in \cV_{h+1}$, $\alpha_1, \alpha_2 \in [0,H]$. Then we have
\begin{align*}
    \text{dist}(\widetilde{V}_1, \widetilde{V}_2) &= \sup_s \big|\widetilde{V}_1(s)-\widetilde{V}_2(s)\big| \\
     &= \sup_s \big|[V_1]_{\alpha_1}(s)-[V_2]_{\alpha_2}(s)\big| \\
     &\leq \sup_s \big|[V_1]_{\alpha_1}(s)-[V_1]_{\alpha_2}(s)\big| + \sup_s \big|[V_1]_{\alpha_2}(s)-[V_2]_{\alpha_2}(s)\big| \\
     &\leq |\alpha_1 - \alpha_2| + \text{dist}(V_1, V_2).
\end{align*}
This indicates that the $\epsilon$-covering number $\widetilde{\cN}_\epsilon$ for function class $\widetilde{\cV}_{h+1}$ can be bounded by
\begin{align*}
    \widetilde{\cN}_\epsilon \leq \cN_{1,\frac{\epsilon}{2}} \cdot \cN_{2,\frac{\epsilon}{2}},
\end{align*}
where $\cN_{1,\frac{\epsilon}{2}}$ is the $\frac{\epsilon}{2}$-covering number for optimistic value function class $\cV_{h+1}$ and $\cN_{2,\frac{\epsilon}{2}}$ is the $\frac{\epsilon}{2}$-covering number for closed interval $[0,H]$. Then based on \Cref{lem:function_class_covering_number} and \Cref{lemma:Covering number of an interval}, we have
\begin{align*}
    \log \widetilde{\cN}_\epsilon \leq d \ell \log (1 + 8L/\epsilon) + d^2 \ell \log \big(1+32 \sqrt{d} \beta^2 / \lambda \epsilon^2\big) + \log(6H/\epsilon),
\end{align*}
where  $\ell = dH\log(1+K/\lambda)$ and $L = 2H\sqrt{dK/\lambda}$. Here we set $\epsilon = \sqrt{\lambda}/ 4 H^2 d^3 K$, then the covering entropy can be bounded by 
\begin{align*}
    \log \widetilde{\cN}_\epsilon \leq \widetilde{\cO}(d^3 H).
\end{align*}
For simplicity, we denote $\Delta_{\alpha^\prime} \hat{V}_{k, h+1}^{\rho}$ here as $\Delta V$, then for $\Delta V$, there exsit a function $\widetilde{V}$ in the $\epsilon$-net satisfies that
\begin{align*}
    \text{dist}\big(\Delta V, \widetilde{V}\big) \leq \epsilon.
\end{align*}
Then the difference of the variance of $\Delta V$ and $\widetilde{V}$ can be bounded by
\begin{align*}
    &\big[\mathbb{V}_h \widetilde{V} \big] \big(s^k_h,a^k_h\big) - \big[\mathbb{V}_h \Delta V \big] \big(s^k_h,a^k_h\big) \\
    &= \big[\mathbb{P}_h \widetilde{V}^2 \big] \big(s^k_h,a^k_h\big) - \big[\mathbb{P}_h {(\Delta V)}^2 \big] \big(s^k_h,a^k_h\big) - \big(\big[\mathbb{P}_h \widetilde{V} \big] \big(s^k_h,a^k_h\big)\big)^2 + \big(\big[\mathbb{P}_h (\Delta V) \big] \big(s^k_h,a^k_h\big)\big)^2 \\
    &\leq 2 \sup_s\big|\Delta V(s)- \widetilde{V}(s)\big| \cdot \sup_s\big|\Delta V(s)+ \widetilde{V}(s)\big| \\
    &\leq 4H \text{dist}\big(\Delta V, \widetilde{V}\big) \\
    &\leq \frac{1}{2d^3H}.
\end{align*}
This indicates that
\begin{align}
\label{equ:variance_difference_deltav_widetildev}
\big[\mathbb{V}_h \widetilde{V} \big] \big(s^k_h,a^k_h\big) &\leq \big[\mathbb{V}_h \Delta V \big] \big(s^k_h,a^k_h\big) + \frac{1}{2d^3H} \notag\\
&\leq \Big[\PP_h\big(\big[\hat{V}_{k,h+1}^{\rho}\big]_{\alpha^\prime} - \big[V_{h+1}^{*,\rho} \big]_{\alpha^\prime}\big)^2\Big]\big(s^k_h,a^k_h\big) + \frac{1}{2d^3H} \notag\\
&\leq \Big[\PP_h\big(\hat{V}_{k,h+1}^{\rho} -V_{h+1}^{*,\rho}\big)^2\Big]\big(s^k_h,a^k_h\big) + \frac{1}{2d^3H} \notag\\
&\leq 2H \Big[\PP_h\big(\hat{V}_{k,h+1}^{\rho} - \check{V}_{k,h+1}^{\rho}\big)\Big]\big(s^k_h,a^k_h\big) + \frac{1}{2d^3H} \notag\\
&\leq 2H \Big(\PP_h\hat{V}_{k,h+1}^{\rho}\big(s^k_h,a^k_h\big) - \PP_h\check{V}_{k,h+1}^{\rho} \big(s^k_h,a^k_h\big)\Big) + \frac{1}{2d^3H} \notag\\
&\leq D_{k,h} + \frac{1}{2d^3H} \notag\\
&\leq \bar{\sigma}^{2}_{k,h} / d^3 H,
\end{align}
where the fourth inequality holds due to \Cref{lem:optimism_pessimism} with induction assumption $\cE_{h+1}$, the sixth inequality holds due to the definition of $D_{k,h}$ and the last inequality holds because of the definition of $\sigma_{k,h}$. Then we apply we apply \Cref{lem:bernstein_concentration} with $\xb_i = \bar{\sigma}_{i,h}^{-1}\bphi\big(s^i_h,a^i_h\big)$ and $\eta_i = \bar{\sigma}_{i,h}^{-1}\big( \widetilde{V}(s_{h+1}^i)-\PP_h^0 \widetilde{V}\big(s_h^i, a_h^i\big) \big)$. For $\xb_i$ and $\eta_i$, we have
\begin{align*}
    &\|\xb_i\|_2 \leq \big\|\bphi\big(s^i_h,a^i_h\big)\big\|_2 / \bar{\sigma}_{i,h} \leq 1, \\
    &\EE[\eta_i|\cF_i] = 0, |\eta_i| \leq \big|\bar{\sigma}_{i,h}^{-1}\big( \widetilde{V}(s_{h+1}^i)-\PP_h^0 \widetilde{V}\big(s_h^i, a_h^i\big) \big)\big| \leq 2H,\\
    &\EE[\eta_i^2|\cF_i] = \bar{\sigma}_{i,h}^{-2} \big[\VV_h \widetilde{V}\big] \big(s_h^i, a_h^i\big) \leq 1/d^3H, \\
    &\max_{1 \leq i \leq k} \Big\{|\eta_i|\cdot \min \big\{1, \|\xb_i\|_{\bSigma^{-1}_{i,h}}\big\} \Big\} \leq \max_{1 \leq i \leq k} \Big\{ 2H \bar{\sigma}_{i,h}^{-1} \|\xb_i\|_{\bSigma^{-1}_{i,h}} \Big\} \leq 1/d^3H,
\end{align*}
where we use the construction of $\bar{\sigma}_{i,h}$ in \eqref{eq:weight} and \eqref{equ:variance_difference_deltav_widetildev}. After taking union probability bound over $\epsilon$-covering for function class $\widetilde{\cV}_{h+1} - \big[V_{h+1}^{*,\rho}\big]_{\alpha^\prime}$, we have 
\begin{align*}
    \bigg\Vert \sum_{\tau=1}^{k-1}\bar{\sigma}^{-2}_{\tau,h}\bphi_h^{\tau}\Big[ \widetilde{V}(s_{h+1}^{\tau}) - \big[\PP_h^0 \widetilde{V}\big](s_h^{\tau}, a_h^{\tau}) \Big]\bigg\Vert_{\bSigma_{k, h}^{-1}} \leq \widetilde{\cO}\big(\sqrt{d}\big).
\end{align*}
For simplicity, we denote that $\bar{V}= \Delta V -\widetilde{V} = \Delta_{\alpha^\prime} \hat{V}_{k, h+1}^{\rho} - \widetilde{V}$ and have $\sup_s|\bar{V}(s)| \leq \epsilon$. Then we obtain
\begin{align*}
    J_2 &= \bigg\Vert \sum_{\tau=1}^{k-1}\bar{\sigma}^{-2}_{\tau,h}\bphi_h^{\tau}\Big[\Delta_{\alpha^\prime} \hat{V}_{k, h+1}^{\rho} (s_{h+1}^{\tau}) - \big[\PP_h^0 \big(\Delta_{\alpha^\prime} \hat{V}_{k, h+1}^{\rho}\big)\big](s_h^{\tau}, a_h^{\tau}) \Big]\bigg\Vert_{\bSigma_{k, h}^{-1}} \\
    &\leq 2 \bigg\Vert \sum_{\tau=1}^{k-1}\bar{\sigma}^{-2}_{\tau,h}\bphi_h^{\tau}\Big[ \widetilde{V}(s_{h+1}^{\tau}) - \big[\PP_h^0 \widetilde{V}\big](s_h^{\tau}, a_h^{\tau}) \Big]\bigg\Vert_{\bSigma_{k, h}^{-1}} \\
    &\qquad+ 2\bigg\Vert \sum_{\tau=1}^{k-1}\bar{\sigma}^{-2}_{\tau,h}\bphi_h^{\tau}\Big[ \bar{V}(s_{h+1}^{\tau}) - \big[\PP_h^0 \bar{V}\big](s_h^{\tau}, a_h^{\tau}) \Big]\bigg\Vert_{\bSigma_{k, h}^{-1}} \\
    &\leq \widetilde{\cO}\big(\sqrt{d}\big) + 4\epsilon k /\sqrt{\lambda} \\
    &\leq \widetilde{\cO}\big(\sqrt{d}\big),
\end{align*}
where we use that $\epsilon = \sqrt{\lambda}/ 4 H^2 d^3 K$. Finally, we have
\begin{align*}
    \bigg\Vert \sum_{\tau=1}^{k-1}\bar{\sigma}^{-2}_{\tau,h}\bphi_h^{\tau}\Big[\big[\hat{V}_{k,h+1}^{\rho}(s_{h+1}^{\tau})\big]_{\alpha^\prime}-\big[\PP_h^0 \big[\hat{V}_{k, h+1}^{\rho}\big]_{\alpha^\prime}\big](s_h^{\tau}, a_h^{\tau}) \Big]\bigg\Vert_{\bSigma_{k, h}^{-1}} = J_1+J_2 \leq \gamma,
\end{align*}
where $\gamma = \widetilde{\cO}\big(\sqrt{d}\big)$. Thus, by induction we complete the proof.
\end{proof}

\subsection{Proof of \Cref{lem:est_var_upper_bound}}

\begin{proof}[Proof of \Cref{lem:est_var_upper_bound}]
Conditioned on the event $\mathcal{E}$ and $\bar{\mathcal{E}}$, to bound the weight $\bar{\sigma}_{k,h}^2$, recall from the definition \eqref{eq:weight}, we have
\begin{align*}
    \bar{\sigma}_{k, h} = \max \Big\{\sigma_{k, h}, 1, \sqrt{2d^3H^2} \big\|\bphi\big(s_h^k, a_h^k\big)\big\|_{\bSigma_{k, h}^{-1}}^{\frac{1}{2}} \Big\},
\end{align*}
According to \eqref{eq:estimated_variance_optimal_v_function}, we have
\begin{align*} 
    \sigma_{k,h}^2 = \big[\bar{\mathbb{V}}_h \hat{V}_{k, h+1}^\rho\big]\big(s_h^k, a_h^k\big) + E_{k,h} + d^3 H \cdot D_{k,h} + \frac{1}{2},
\end{align*}
where $E_{k,h}, D_{k,h}$ are defined as follows
\begin{align*}
    E_{k,h} &= \min \Big\{\widetilde{\beta} \big\|\bphi\big(s_h^k, a_h^k\big)\big\|_{\bLambda_{k, h}^{-1}}, H^2\Big\} + \min \Big\{2 H \bar{\beta}\big\|\bphi\big(s_h^k, a_h^k\big)\big\|_{\bLambda_{k, h}^{-1}}, H^2\Big\}, \\
    D_{k,h} &= \min \Big\{4 H \Big(\bphi\big(s_h^k, a_h^k\big)^\top \hat{\bz}^k_{h,1} - \bphi\big(s_h^k, a_h^k\big)^\top \check{\bz}^k_{h,1}
    + 2 \bar{\beta}\big\|\bphi\big(s_h^k, a_h^k\big)\big\|_{\bLambda_{k, h}^{-1}}\Big), H^2\Big\},
\end{align*}
where $\bar{\beta} = \widetilde{\cO}\big(d^{\frac{3}{2}} H^{\frac{3}{2}}\big)$, $\widetilde{\beta} = \widetilde{\cO}\big(d^{\frac{3}{2}} H^3\big)$ when we set $\lambda = 1/H^2$. 
Note that 
\begin{align*}
    \sqrt{2d^3H^2} \big\|\bphi\big(s_h^k, a_h^k\big)\big\|_{\bLambda_{k, h}^{-1}}^{\frac{1}{2}} \leq \sqrt{2d^3H^2} \big\|\bphi\big(s_h^k, a_h^k\big)\big\|_2^{\frac{1}{2}} / {\lambda}^{\frac{1}{4}} \leq \sqrt{2d^3H^3}.
\end{align*}
Also note that 
\begin{align*}
    \sigma_{k,h}^2 &= \big[\bar{\mathbb{V}}_h \hat{V}_{k, h+1}^\rho\big]\big(s_h^k, a_h^k\big) + E_{k,h} + d^3 H \cdot D_{k,h} + \frac{1}{2} \\
    &\leq H^2 + 2H^2 + d^3H \cdot H^2 + \frac{1}{2} \\
    &\leq 2 d^3H^3.
\end{align*}
Then we obtain the trivial upper bound $\bar{\alpha}$ for $\bar{\sigma}_{k, h}$ 
\begin{align}
\label{eq:upper bound for bar sigma}
    \bar{\sigma}_{k, h} \leq 2\sqrt{d^3H^3} = \bar{\alpha},
\end{align}
Based on \Cref{lem:variance_error}, we have
\begin{align*}
    \big[\bar{\mathbb{V}}_h \hat{V}_{k, h+1}^\rho\big]\big(s_h^k, a_h^k\big) \leq E_{k, h}+D_{k, h} + [\mathbb{V}_h V_{h+1}^{*,\rho}]\big(s_h^k, a_h^k\big).
\end{align*}
Then we have 
\begin{align*} 
    \sigma_{k,h}^2 \leq [\mathbb{V}_h V_{h+1}^{*,\rho}]\big(s_h^k, a_h^k\big) + 2E_{k, h} + 2d^3 H \cdot D_{k, h} + \frac{1}{2}.
\end{align*}
Next, we carefully bound $\sigma_{k,h}^2$ and $\bar{\sigma}_{k,h}^2$. 
To this end, we bound term $E_{k,h}, D_{k,h}$ when $k$ is large enough.
The intuition is that, when the episode $k$ is large enough, all the error terms should be small under the assumption \eqref{eq:lambda_min_lower_bound}. 

\paragraph{Bound term $E_{k,h}$:} 
Note that based on \eqref{assumption:lambda_lower_bound}, with the same analysis as the proof of Corollary 5.3 in \citet{liu2024distributionally}, with probability at least $1-\delta$, we have
\begin{align*}
    \lambda_{\min}(\bLambda_{k,h}) \geq \max\big\{c(k-1)/d+\lambda-\sqrt{32k\log(dKH/\delta)}, \lambda \big\}.
\end{align*}
Then when we choose $k > 512d^2 \log(dKH/\delta)/c^2$ and note that $\lambda=1/H^2$, we have
\begin{align*}
    c(k-1)/d+\lambda-\sqrt{32k\log(dKH/\delta)} \geq \frac{c}{2d}  k,
\end{align*}
which indicates that
\begin{align}
\label{eq:lambda_min_lower_bound}
    \lambda_{\min}(\bLambda_{k,h}) \geq \frac{c}{2d} k.
\end{align}
Then when $k > 512d^2 \log(dKH/\delta)/c^2$, we can calculate that
\begin{align}
\label{eq:phi_norm_bound}
    \big\|\bphi\big(s_h^k, a_h^k\big)\big\|_{\bLambda_{k, h}^{-1}} &=  \Big\|\bLambda_{k, h}^{-\frac{1}{2}}\bphi\big(s_h^k, a_h^k\big)\Big\|_2 \leq \sqrt{\lambda_{\text{max}}\big(\bLambda_{k, h}^{-1}\big)}\leq \sqrt{\frac{2d}{kc}},
\end{align}
where in the first inequality we use the fact that $\|\bphi(s,a)\|_2\leq 1$ for all $(s,a)\in\cS\times\cA$.
Then when $k$ is large enough and also at least $k > 512d^2 \log(dKH/\delta)/c^2$, we can have that
\begin{align*}
    E_{k,h} \leq \widetilde{\cO} \Big(\frac{1}{\sqrt{kc}} d^{2} H^3\Big).
\end{align*}
This indicates that there exists an absolute constant $c_E>0$ such that
\begin{align*}
    E_{k,h} \leq c_E\frac{d^{2} H^3}{\sqrt{k}} .
\end{align*}
\textbf{Bound term $D_{k,h}$:} note that $\hat{\bz}^k_{h,1}$ and $\check{\bz}^k_{h,1}$ have the closed-form expression as follows
\begin{align*}
    \hat{\bz}_{h,1}^k &= \bLambda_{k,h}^{-1} \sum_{\tau=1}^{k-1} \bphi\big(s_h^\tau, a_h^\tau\big) \hat{V}_{k, h+1}^\rho\big(s_{h+1}^\tau\big), \\
    \check{\bz}_{h,1}^k &= \bLambda_{k,h}^{-1} \sum_{\tau=1}^{k-1} \bphi\big(s_h^\tau, a_h^\tau\big) \check{V}_{k, h+1}^\rho\big(s_{h+1}^\tau\big).
\end{align*}
Thus we can calculate that
\begin{align}
\label{eq:est_para_diff}
    &\bphi\big(s_h^k, a_h^k\big)^\top \hat{\bz}^k_{h,1} - \bphi\big(s_h^k, a_h^k\big)^\top \check{\bz}^k_{h,1} \notag\\
    &= \bphi\big(s_h^k, a_h^k\big)^\top \bLambda_{k,h}^{-1} \sum_{\tau=1}^{k-1}  \bphi\big(s_h^\tau, a_h^\tau\big) \big(\hat{V}_{k, h+1}^\rho\big(s_{h+1}^\tau\big) - \check{V}_{k, h+1}^\rho\big(s_{h+1}^\tau\big)\big), \notag\\
    & \leq \big\|\bphi\big(s_h^k, a_h^k\big)\big\|_{\bLambda_{k, h}^{-1}} \cdot \bigg\|\sum_{\tau=1}^{k-1}  \bphi\big(s_h^\tau, a_h^\tau\big)\big(\hat{V}_{k, h+1}^\rho\big(s_{h+1}^\tau\big) - \check{V}_{k, h+1}^\rho\big(s_{h+1}^\tau\big)\big)\bigg\|_{\bLambda_{k, h}^{-1}} \notag\\
    & \leq \big\|\bphi\big(s_h^k, a_h^k\big)\big\|_{\bLambda_{k, h}^{-1}} \cdot \sum_{\tau=1}^{k-1} \big\| \bphi\big(s_h^\tau, a_h^\tau\big)\big\|_{\bLambda_{k, h}^{-1}} \cdot \big(\hat{V}_{k, h+1}^\rho(\tilde{s}_{h+1}^k) - \check{V}_{k, h+1}^\rho(\tilde{s}_{h+1}^k)\big) \notag\\
    &\leq \sqrt{\frac{2d}{kc}} \sum_{\tau=1}^{k-1} \sqrt{\frac{2d}{kc}} \cdot \big(\hat{V}_{k, h+1}^\rho(\tilde{s}_{h+1}^k) - \check{V}_{k, h+1}^\rho(\tilde{s}_{h+1}^k)\big), \notag\\
    &\leq 2d/c \cdot \big(\hat{V}_{k, h+1}^\rho(\tilde{s}_{h+1}^k) - \check{V}_{k, h+1}^\rho(\tilde{s}_{h+1}^k)\big),
\end{align}
where $\tilde{s}_{h+1}^k = \argmax_ {s \in \cS} \Big\{\hat{V}_{k, h+1}^\rho(s) - \check{V}_{k, h+1}^\rho(s)\Big\}$, the first inequality holds because of Cauchy-Schwarz inequality, the third inequality holds due to \eqref{eq:phi_norm_bound}.
Next, we bound $\hat{V}_{k, h+1}^\rho(\tilde{s}_{h+1}^k) - \check{V}_{k, h+1}^\rho(\tilde{s}_{h+1}^k)$. The intuition is that when $k$ is large, both $\hat{V}_{k, h+1}^\rho$ and $\check{V}_{k, h+1}^\rho$ should be close to the robust optimal value function. Thus, $\hat{V}_{k, h+1}^\rho$ should be close to $\check{V}_{k, h+1}^\rho$, and the closeness could be quantified by the bonus terms, which is of order $\widetilde{\cO}(d/\sqrt{k})$ under the assumption \eqref{eq:lambda_min_lower_bound}. In particular, we have
\begin{align}
\label{eq:value_diff}
    &\hat{V}_{k, h+1}^\rho(\tilde{s}_{h+1}^k) - \check{V}_{k, h+1}^\rho(\tilde{s}_{h+1}^k) \notag\\
    & = \underbrace{\hat{V}_{k, h+1}^\rho(\tilde{s}_{h+1}^k) - V_{h+1}^{*,\rho}(\tilde{s}_{h+1}^k)}_{\text{I}} + \underbrace{V_{h+1}^{*,\rho}(\tilde{s}_{h+1}^k) - \check{V}_{k, h+1}^\rho(\tilde{s}_{h+1}^k)}_{\text{II}}.
\end{align}
\paragraph{Bound term I in \eqref{eq:value_diff}:} note that
\begin{align*}
    &\hat{V}_{k, h}^\rho(s) - V_{h}^{*,\rho}(s) \\
    &= \hat{Q}_{k, h}^\rho\big(s, \pi_h^k(s)\big) - Q_{h}^{*,\rho}\big(s, \pi_h^*(s)\big) \\
    &\leq \hat{Q}_{k, h}^\rho\big(s, \pi_h^k(s)\big) - Q_{h}^{*,\rho}\big(s, \pi_h^k(s)\big) \\
    &\leq \inf_{P_h(\cdot|s,a) \in \cU_{h}^{\rho}(s,a;\bmu_h^0)}\big[\PP_h \hat{V}_{k, h+1}^\rho\big]\big(s, \pi_h^k(s)\big) - \inf_{P_h(\cdot|s,a) \in \cU_{h}^{\rho}(s,a;\bmu_h^0)} \big[\PP_h V_{h+1}^{*, \rho}\big]\big(s, \pi_h^k(s)\big) \\
    &\qquad+ \Delta_h^k\big(s, \pi_h^k(s)\big) + \hat{\Gamma}_{k,h}\big(s, \pi_h^k(s)\big) \\
    &\leq \big[\hat{\PP}_h\big(\hat{V}_{k,h+1}^{\rho}- V_{h+1}^{*, \rho}\big)\big]\big(s, \pi_h^k(s)\big) + 2\hat{\Gamma}_{k,h}\big(s, \pi_h^k(s)\big),
\end{align*}
where the second inequality holds due to the definition of $\hat{Q}_{k, h}^\rho$, robust Bellman equation and \Cref{lem:error_bound}, $\widehat{P}_h(\cdot|s,a)=\text{arginf}_{P_h(\cdot|s,a)\in\cU_h^{\rho}(s,a;\bmu_{h,i}^0)}\big[\PP_h V_{h+1}^{*,\rho}\big](s,a), \forall (s,a)\in\cS\times\cA$. By recursively applying it, then we have
\begin{align*}
    \hat{V}_{k, h}^\rho(s) - V_{h}^{*,\rho}(s) &\leq \big[\hat{\PP}_h\big(\hat{V}_{k,h+1}^{\rho}- V_{h+1}^{*, \rho}\big)\big]\big(s, \pi_h^k(s)\big) + 2\hat{\Gamma}_{k,h}\big(s, \pi_h^k(s)\big) \\
    &\leq 2\sum_{h^\prime=h}^H \EE^{\pi_{h^\prime}^k, \hat{P}}\big[\hat{\Gamma}_{k,h^\prime}(s,a)|s_{h^\prime}=s\big].
\end{align*}
Note that by \eqref{eq:upper bound for bar sigma} $\bar{\sigma}_{k,h}^2 \leq \bar{\alpha}^2$, we have $\bSigma_{k,h}  \succcurlyeq \bar{\alpha}^{-2} \bLambda_{k,h}$. Similar to the analysis of \eqref{eq:phi_norm_bound}, when $k > 512/c^2 \log(dKH/\delta)$, we have
\begin{align*}
    \hat{\Gamma}_{k,h}(s,a) &= \beta \sum_{i=1}^d \phi_i(s,a) \sqrt{\mathbf{1}_i^{\top}\bSigma_{k,h}^{-1}\mathbf{1}_i} \\
    &\leq \beta \bar{\alpha} \sum_{i=1}^d \phi_i\big(s_h^k, a_h^k\big) \sqrt{\mathbf{1}_i^\top \bLambda_{k, h}^{-1} \mathbf{1}_i} \notag \\
    &\leq \beta \bar{\alpha} \sqrt{\lambda_{\text{max}}\big(\bLambda_{k, h}^{-1}\big)} \notag \\
    &\leq \sqrt{\frac{2d}{kc}}\beta \bar{\alpha}.
\end{align*}
Then we have
\begin{align*}
    \hat{V}_{k, h}^\rho(s) - V_{h}^{*,\rho}(s) \leq 2H\sqrt{\frac{2d}{kc}}\beta \bar{\alpha} \leq \frac{4\beta\sqrt{d} \bar{\alpha} H}{\sqrt{kc}}.
\end{align*}
Therefore, we can bound $I$ as follows
\begin{align*}
    \text{I} = \hat{V}_{k, h+1}^\rho(\tilde{s}_{h+1}^k) - V_{h+1}^{*,\rho}(\tilde{s}_{h+1}^k) \leq \frac{4\beta\sqrt{d} \bar{\alpha} H}{\sqrt{kc}}.
\end{align*}
\paragraph{Bound term II in \eqref{eq:value_diff}:} Similar to the analysis above, we can derive the similar result as follows
\begin{align*}
\text{II} = V_{h+1}^{*,\rho}(\tilde{s}_{h+1}^k) - \check{V}_{k, h+1}^\rho(\tilde{s}_{h+1}^k) \leq \frac{4\bar{\beta}\sqrt{d} \bar{\alpha} H}{\sqrt{kc}}.
\end{align*}
Now we can bound that
\begin{align*}
    \bphi\big(s_h^k, a_h^k\big)^\top \hat{\bz}^k_{h,1} - \bphi\big(s_h^k, a_h^k\big)^\top \check{\bz}^k_{h,1} \leq \frac{2d}{c}\cdot\frac{4(\bar{\beta}+\beta)\sqrt{d} \bar{\alpha} H}{\sqrt{kc}} \leq \frac{16\bar{\beta} d^{3/2}\bar{\alpha} H}{\sqrt{kc^3}}.
\end{align*}
Then when $k$ is large enough, we can have that
\begin{align*}
    D_{k,h} \leq \widetilde{\cO} \Big(\frac{\bar{\alpha}}{\sqrt{k}} d^{3} H^{\frac{7}{2}}\Big).
\end{align*}
This indicates that there exists an absolute constant $c_D >0$ such that
\begin{align*}
    D_{k,h} \leq c_D \frac{\bar{\alpha}}{\sqrt{k}} d^{3} H^{\frac{7}{2}}.
\end{align*}
When $k$ is large enough, we have
\begin{align*}
    \sigma_{k,h}^2 &\leq [\mathbb{V}_h V_{h+1}^{*,\rho}]\big(s_h^k, a_h^k\big) + (2E_{k, h} + 2d^3 H \cdot D_{k, h}) + \frac{1}{2} \\
    &\leq [\mathbb{V}_h V_{h+1}^{*,\rho}]\big(s_h^k, a_h^k\big) + 2 c_E\frac{\bar{\alpha}}{\sqrt{k}} d^{2} H^3 + 2 c_D \frac{\bar{\alpha}}{\sqrt{k}} d^{6} H^{\frac{9}{2}} + \frac{1}{2}.
\end{align*}
When we choose $\widetilde{K} = \widetilde{c}\cdot \bar{\alpha}^2 d^{12} H^{9}$ where $\widetilde{c}=\widetilde{\cO}(1)$. When $k > \widetilde{K}$, then we have
\begin{align*}
    \bar{\sigma}_{k,h}^2 &= \max \Big\{\sigma_{k, h}^2, 1, 2d^3H^2 \big\|\bphi\big(s_h^k, a_h^k\big)\big\|_{\bSigma_{k, h}^{-1}} \Big\} \\
    &\leq \max \big\{\big[\mathbb{V}_h V_{h+1}^{*,\rho}\big]\big(s_h^k, a_h^k\big) + 1, 1 \big\} \\
    & \leq 2 \big[\mathbb{V}_h V_{h+1}^{*,\rho}\big(s_h^k, a_h^k\big)\big]_{[1,H^2]}.
\end{align*}
Based on \Cref{lem:range_shrinkage}, we have
\begin{align*}
    \big[\VV_h V_{h+1}^{*,\rho}\big](s,a)\leq \Big(\frac{1-(1-\rho)^{H-h+1}}{\rho}\Big)^2\leq \Big(\frac{1-(1-\rho)^{H}}{\rho}\Big)^2 = \bTheta\Big(\min \Big\{\frac{1}{\rho^2}, H^2\Big\}\Big).
\end{align*}
Then when $k > \widetilde{K}$, we have
\begin{align*}
    \bar{\sigma}_{k,h}^2 \leq \cO\Big(\min \Big\{\frac{1}{\rho^2}, H^2\Big\}\Big).
\end{align*}
Additionally, note that $\bar{\alpha}^2 =  \cO\big(d^3 H^3\big)$, we have 
\begin{align*}
    \widetilde{K} = \widetilde{\cO} \big( d^{15} H^{12} \big).
\end{align*}
This completes the proof.
\end{proof}

\section{Supporting Lemmas}

\begin{lemma}[Number of value function updates]
\label{lem:number_of_value_function_updates}
The number of episodes where the algorithm updates the value function in \Cref{alg:DR-LSVI-UCB+} is upper bounded by $d H \log (1+K / \lambda)$.
\end{lemma}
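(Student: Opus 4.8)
The plan is to establish this via the standard determinant-doubling (potential) argument, counting the value-function updates contributed by each stage separately and then summing over stages. First I would record two elementary facts about the weighted covariance matrices $\bSigma_{k,h}$. Since the regression weights satisfy $\bar{\sigma}_{k,h}\geq 1$ by construction in \eqref{eq:weight} and $\|\bphi(s,a)\|_2\leq\|\bphi(s,a)\|_1=1$ under the simplex feature assumption, each increment $\bar{\sigma}_{k,h}^{-2}\bphi_h^k(\bphi_h^k)^\top$ is positive semidefinite with trace at most $1$. Hence $\bSigma_{k,h}$ is monotonically nondecreasing in the PSD order in $k$, and $\mathrm{tr}(\bSigma_{K+1,h})\leq d\lambda+K$. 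Combining the trace bound with the AM--GM (determinant--trace) inequality yields the uniform upper bound $\det(\bSigma_{K+1,h})\leq(\lambda+K/d)^d$, while $\det(\bSigma_{1,h})=\lambda^d$ because $\bSigma_{1,h}=\lambda\Ib$.

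Next I would charge each update to a single witness stage. By the update rule on Line \ref{line:update_rule}, whenever the algorithm updates at episode $k$ there exists at least one stage $h$ with $\det(\bSigma_{k,h})\geq 2\det(\bSigma_{k_{\mathrm{last}},h})$, and I assign the update to one such $h$. Fix a stage $h$ and let $e_1<\dots<e_{N_h}$ be the episodes of the updates assigned to $h$, with $p_j$ denoting the value of $k_{\mathrm{last}}$ immediately before the update at $e_j$. Then $\det(\bSigma_{e_j,h})\geq 2\det(\bSigma_{p_j,h})$. Because the updates assigned to $h$ form a subsequence of all updates, we have $p_{j+1}\geq e_j$, so PSD-monotonicity of $\det(\bSigma_{\cdot,h})$ gives $\det(\bSigma_{e_{j+1},h})\geq 2\det(\bSigma_{e_j,h})$; iterating and using $\det(\bSigma_{e_1,h})\geq 2\det(\bSigma_{p_1,h})\geq 2\lambda^d$ yields $\det(\bSigma_{K+1,h})\geq\det(\bSigma_{e_{N_h},h})\geq 2^{N_h}\lambda^d$.

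Combining the lower and upper bounds on $\det(\bSigma_{K+1,h})$ gives $2^{N_h}\lambda^d\leq(\lambda+K/d)^d$, so that $N_h\leq d\log_2(1+K/(d\lambda))\leq d\log_2(1+K/\lambda)$. Summing over the $H$ stages, and noting that every update is charged to exactly one witness stage, the total number of value-function updates is $\sum_{h=1}^H N_h\leq dH\log_2(1+K/\lambda)$, which is the claimed bound (with $\log$ read as $\log_2$, as is standard for doubling criteria).

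I expect the only real subtlety to lie in the bookkeeping of the second step: since the update rule resets $k_{\mathrm{last}}$ simultaneously for all $H$ stages and not merely for the witness stage, the ratio test for stage $h$ is taken against a baseline that may have been set by an update triggered by a different stage. The observation $p_{j+1}\geq e_j$ together with the PSD-monotonicity of $\bSigma_{\cdot,h}$ is precisely what closes this gap, guaranteeing that each update assigned to $h$ still forces a genuine doubling of $\det(\bSigma_{\cdot,h})$ along the charged subsequence; everything else is routine.
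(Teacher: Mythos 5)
Your proof is correct and is precisely the approach the paper relies on: the paper's own ``proof'' is just a citation to Lemma F.1 of \citet{he2023nearly}, whose argument is exactly this determinant-doubling potential argument (trace/AM--GM upper bound on $\det(\bSigma_{K+1,h})$, one doubling per update charged to a witness stage, summed over $h$). Your bookkeeping of the cross-stage reset of $k_{\mathrm{last}}$ via $p_{j+1}\geq e_j$ and PSD-monotonicity is sound, and the resulting bound matches the stated one up to the base-$2$ versus natural-log convention inherited from the cited lemma.
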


\begin{proof}[Proof of \Cref{lem:number_of_value_function_updates}]
    This proof is the same as \citet[Lemma F.1]{he2023nearly} because of the same rare-switching condition (Line \ref{line:update_rule} in \Cref{alg:DR-LSVI-UCB+}).
\end{proof}

\begin{lemma}
\label{lem:dual_parameter_bound}
For any $(k,h) \in [K]\times[H]$, the weight $\hat{\bnu}_h^{\rho,k}$ satisfies
\begin{align*}
    \big\Vert \hat{\bnu}_h^{\rho,k} \big\Vert_2 \leq 2H\sqrt{dk/\lambda}.
\end{align*}
\end{lemma}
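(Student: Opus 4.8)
The plan is to bound $\hat\bnu_h^{\rho,k}$ coordinatewise and then reassemble the $\ell_2$ norm, being careful to retain a $\sqrt{d}$ (rather than $d$) dependence. Recall that $\hat\nu_{h,i}^{\rho,k}=\max_{\alpha\in[0,H]}\{\hat z_{h,i}^k(\alpha)-\rho\alpha\}$, where $\hat z_{h,i}^k(\alpha)=\mathbf 1_i^\top\hat\bz_h^k(\alpha)$ and $\hat\bz_h^k(\alpha)=\bSigma_{k,h}^{-1}\sum_{\tau=1}^{k-1}\bar\sigma_{\tau,h}^{-2}\bphi(s_h^\tau,a_h^\tau)\big[\hat V_{k,h+1}^\rho(s_{h+1}^\tau)\big]_\alpha$ by \eqref{eq:optimistic_parameter_closed_form}. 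First I would record two elementary facts. Evaluating the inner objective at $\alpha=0$ gives $\hat z_{h,i}^k(0)=0$, because $[\hat V]_0\equiv 0$ (as $\hat V_{k,h+1}^\rho\ge 0$ by optimism, \Cref{lem:optimism_pessimism}), so $\hat\nu_{h,i}^{\rho,k}\ge 0$; and dropping the nonnegative term $\rho\alpha$ at the maximizer $\alpha_i^\star$ yields $\hat\nu_{h,i}^{\rho,k}\le\hat z_{h,i}^k(\alpha_i^\star)$. Plugging in the closed form and using $0\le[\hat V]_\alpha\le H$ then gives the coordinate bound $0\le\hat\nu_{h,i}^{\rho,k}\le H\sum_{\tau=1}^{k-1}\bar\sigma_{\tau,h}^{-2}\,\big|\mathbf 1_i^\top\bSigma_{k,h}^{-1}\bphi(s_h^\tau,a_h^\tau)\big|$.

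Next I would square this, sum over $i\in[d]$, and collapse the result with a trace identity. Applying Cauchy--Schwarz over $\tau$ to extract a factor $(k-1)$ and exchanging the order of summation, the key reduction is
\[
\sum_{i=1}^d\big(\hat\nu_{h,i}^{\rho,k}\big)^2\le H^2(k-1)\sum_{\tau=1}^{k-1}\bar\sigma_{\tau,h}^{-4}\sum_{i=1}^d\big(\mathbf 1_i^\top\bSigma_{k,h}^{-1}\bphi(s_h^\tau,a_h^\tau)\big)^2 = H^2(k-1)\sum_{\tau=1}^{k-1}\bar\sigma_{\tau,h}^{-4}\,\bphi(s_h^\tau,a_h^\tau)^\top\bSigma_{k,h}^{-2}\bphi(s_h^\tau,a_h^\tau),
\]
where I used $\sum_i(\mathbf 1_i^\top M\bv)^2=\|M\bv\|_2^2=\bv^\top M^2\bv$. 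Since $\bar\sigma_{\tau,h}\ge 1$ (so $\bar\sigma_{\tau,h}^{-4}\le\bar\sigma_{\tau,h}^{-2}$) by \eqref{eq:weight}, and $\bSigma_{k,h}\succeq\lambda\Ib$ (so $\bSigma_{k,h}^{-2}\preceq\lambda^{-1}\bSigma_{k,h}^{-1}$), the last sum is at most $\lambda^{-1}\sum_{\tau=1}^{k-1}\bar\sigma_{\tau,h}^{-2}\,\bphi(s_h^\tau,a_h^\tau)^\top\bSigma_{k,h}^{-1}\bphi(s_h^\tau,a_h^\tau)=\lambda^{-1}\mathrm{tr}\big(\bSigma_{k,h}^{-1}(\bSigma_{k,h}-\lambda\Ib)\big)=\lambda^{-1}\big(d-\lambda\,\mathrm{tr}(\bSigma_{k,h}^{-1})\big)\le d/\lambda$. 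Combining the two displays gives $\|\hat\bnu_h^{\rho,k}\|_2^2\le H^2 d(k-1)/\lambda$, hence $\|\hat\bnu_h^{\rho,k}\|_2\le H\sqrt{d(k-1)/\lambda}\le 2H\sqrt{dk/\lambda}$, as claimed.

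I expect the main obstacle to be conceptual rather than computational: because the coordinatewise maximizers $\alpha_i^\star$ differ across $i$, the vector $\hat\bnu_h^{\rho,k}$ is \emph{not} equal to $\hat\bz_h^k(\alpha)$ for any single $\alpha$, so one cannot simply invoke a fixed-$\alpha$ bound on $\|\hat\bz_h^k(\alpha)\|_2$. The coordinatewise absolute-value estimate above sidesteps this. The only delicate point is keeping the dependence at $\sqrt d$: a naive per-coordinate estimate $\|\mathbf 1_i\|_{\bSigma_{k,h}^{-1}}\le\lambda^{-1/2}$ summed over the $d$ coordinates would cost an extra $\sqrt d$, whereas the identity $\sum_i(\mathbf 1_i^\top\bSigma_{k,h}^{-1}\bphi(s_h^\tau,a_h^\tau))^2=\bphi(s_h^\tau,a_h^\tau)^\top\bSigma_{k,h}^{-2}\bphi(s_h^\tau,a_h^\tau)$, combined with $\bSigma_{k,h}^{-2}\preceq\lambda^{-1}\bSigma_{k,h}^{-1}$ and the trace identity, is exactly what recovers the advertised rate.
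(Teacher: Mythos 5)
Your proposal is correct and reaches a slightly sharper bound ($H\sqrt{dk/\lambda}$) than required, but the route differs from the paper's in its details. The paper handles the dual variable by the triangle inequality, splitting off the $\rho\alpha_i$ term (bounded by $\rho\sqrt{d}\,\alpha\le H\sqrt d$ using $\rho\le 1$), then pulls the clipping $[\hat V_{k,h+1}^\rho]_{\alpha_i}\le H$ out of the signed sum to reduce to $H\,\|\bSigma_{k,h}^{-1}\sum_\tau\bar\sigma_{\tau,h}^{-2}\bphi_h^\tau\|_2$, and finishes by invoking \Cref{lem:inequal_eigen_matrix} (with $\Ab=\bSigma_{k,h}^{-1}$) and \Cref{lemma:self-normalize}. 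You instead prove a two-sided coordinate bound ($\hat\nu_{h,i}^{\rho,k}\ge 0$ by evaluating at $\alpha=0$, and an upper bound by dropping $-\rho\alpha$ and taking absolute values term by term), then do Cauchy--Schwarz over $\tau$, collapse the sum over $i$ via $\sum_i(\mathbf 1_i^\top\bSigma_{k,h}^{-1}\bphi)^2=\bphi^\top\bSigma_{k,h}^{-2}\bphi$, and use $\bSigma_{k,h}^{-2}\preceq\lambda^{-1}\bSigma_{k,h}^{-1}$ together with the trace identity (a self-contained version of \Cref{lemma:self-normalize}). Your version buys three things: it is self-contained (no appeal to the two auxiliary lemmas), it does not need $\rho\le 1$, and—most substantively—it correctly handles the fact that the maximizers $\alpha_i$ differ across coordinates and that the scalars $\mathbf 1_i^\top\bSigma_{k,h}^{-1}\bphi_h^\tau$ have mixed signs: scaling signed summands by different factors $[\hat V]_{\alpha_i}/H\in[0,1]$ can \emph{increase} the norm, so the paper's second inequality as written glosses over exactly the point your absolute-value estimate resolves. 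The paper's proof is shorter by leaning on the cited lemmas.

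One blemish you should fix: you justify $\hat V_{k,h+1}^\rho\ge 0$ (hence $[\hat V]_0\equiv 0$ and $0\le[\hat V]_\alpha\le H$) by citing \Cref{lem:optimism_pessimism}. That lemma holds only on the events $\cE_h$ and $\bar\cE$, and the proofs of those events (\Cref{lem:coarse_event}, \Cref{lem:good_event_probability}) rely on the covering-number bound of \Cref{lem:function_class_covering_number}, which in turn uses the very lemma you are proving for the radius $L$; citing optimism here is therefore circular, and it would also downgrade a deterministic statement to a high-probability one. The fix is elementary: nonnegativity of $\hat V_{k,h}^\rho$ holds deterministically by induction (backward over $h$, forward over $k$), since $\hat\bnu_H^{\rho,k}=0$, $r_h\ge 0$, $\hat\Gamma_{k,h}\ge 0$, $\hat Q_{0,h}^\rho=H$, and nonnegativity of $\hat V_{k,h+1}^\rho$ gives $\hat z_{h,i}^k(0)=0$, hence $\hat\nu_{h,i}^{\rho,k}\ge 0$, hence $\hat Q_{k,h}^\rho\ge 0$. (The paper's own proof asserts $0\le[\hat V_{k,h+1}^\rho]_{\alpha_i}$ with no justification at all, so this is a gap in both write-ups, but yours should cite the construction rather than the conditional optimism lemma.)
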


\begin{proof}[Proof of \Cref{lem:dual_parameter_bound}]
Denote $\alpha_i = \argmax_{\alpha\in[0,H]} \big\{\hat{z}^{k}_{h,i}(\alpha)-\rho\alpha\big\}, i\in[d]$. Then we have
\begin{align*}
    \big\Vert \hat{\bnu}_h^{\rho,k} \big\Vert_2 &= \Bigg\|\bigg[\max_{\alpha\in[0,H]}\{\hat{z}^{k}_{h,i}(\alpha)-\rho\alpha\} \bigg]_{i\in [d]} \Bigg\|_2 \\
    &\leq  \rho \sqrt{d} \alpha + \Bigg\|\bigg[\bigg(\bSigma_{k,h}^{-1}\sum_{\tau=1}^{k-1} \bar{\sigma}^{-2}_{\tau,h} \bphi\big(s_h^\tau, a_h^\tau\big) \big[\hat{V}_{k, h+1}^\rho\big(s_{h+1}^\tau\big)\big]_{\alpha_i}\bigg)_{i}\bigg]_{i \in [d]}\Bigg\|_2 \\
    &\leq H\sqrt{d} + H \cdot  \Bigg\|\bSigma_{k,h}^{-1}\sum_{\tau=1}^{k-1} \bar{\sigma}^{-2}_{\tau,h} \bphi\big(s_h^\tau, a_h^\tau\big)\Bigg\|_2 \\
    &\leq H\sqrt{d} + H \sqrt{k/\lambda} \cdot \Bigg(\sum_{\tau=1}^{k-1}\big(\bar{\sigma}^{-1}_{\tau,h}\bphi\big(s_h^\tau, a_h^\tau\big)\big)^{\top}\bSigma_{k,h}^{-1}\big(\bar{\sigma}^{-1}_{\tau,h}\bphi\big(s_h^\tau, a_h^\tau\big)\big)\Bigg)^{\frac{1}{2}} \\
    &\leq H\sqrt{d} + H\sqrt{dk/\lambda} \\
    &\leq 2H\sqrt{dk/\lambda},
\end{align*}
where the first inequality holds due to the triangle inequality, the second inequality holds from the fact that $\rho \leq 1$, $0\leq \alpha \leq H$ and $0 \leq \big[\hat{V}_{k, h+1}^\rho\big(s_{h+1}^\tau\big)\big]_{\alpha_i} \leq H$, the third inequality holds because of \Cref{lem:inequal_eigen_matrix} and the fourth inequality holds because $\bSigma_{k,h} \succcurlyeq \lambda \Ib$ and \Cref{lemma:self-normalize}. This completes the proof.
\end{proof}

\begin{lemma}
\label{lem:linear_form_and_bound}
Under a linear MDP, for any stage $h \in [H]$  and any bounded function $V: \mathcal{S} \rightarrow [0, H]$, there always exists a vector $\bz \in \mathbb{R}^d$ such that for all $(s, a) \in \mathcal{S} \times \mathcal{A}$, we have
\begin{align*}
    \big[\mathbb{P}_h^0 V\big](s, a)=\bz^{\top} \bphi(s, a),
\end{align*}
where $\bz$ satisfies that $\|\bz\|_2 \leq H \sqrt{d}$.
\end{lemma}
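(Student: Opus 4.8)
The plan is to exhibit the vector $\bz$ explicitly using the linear structure of the nominal kernel from \Cref{assumption:linear_mdp}, and then bound its norm by the boundedness of $V$ together with the fact that each coordinate measure $\mu_{h,i}^0$ is a probability distribution. This is the robust-setting analogue of Proposition 2.3 of \citet{jin2020provably}, so the argument is constructive and direct rather than requiring any clever estimate.

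First, recall that under \Cref{assumption:linear_mdp} the nominal transition kernel factorizes as $P_h^0(\cdot|s,a) = \langle \bphi(s,a), \bmu_h^0(\cdot)\rangle = \sum_{i=1}^d \phi_i(s,a)\mu_{h,i}^0(\cdot)$, where each $\mu_{h,i}^0$ is a probability measure over $\cS$. Substituting this into the definition $[\PP_h^0 V](s,a) = \EE_{s'\sim P_h^0(\cdot|s,a)}[V(s')]$ and using linearity of the integral together with the finiteness of the sum over $i$, I would pull the feature coefficients outside the expectation to obtain $[\PP_h^0 V](s,a) = \sum_{i=1}^d \phi_i(s,a)\int_{\cS} V(s')\,\mu_{h,i}^0(ds') = \sum_{i=1}^d \phi_i(s,a) z_i$, where I define $z_i := \EE^{\mu_{h,i}^0}[V]$. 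Setting $\bz = (z_1,\ldots,z_d)^\top$ yields exactly $[\PP_h^0 V](s,a) = \bz^\top \bphi(s,a)$ for every $(s,a)$; the key point is that $\bz$ is constructed independently of $(s,a)$, since the dependence on $(s,a)$ is entirely absorbed into the feature vector $\bphi(s,a)$.

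For the norm bound, I would observe that since each $\mu_{h,i}^0$ has total mass one and $0 \le V \le H$, each coordinate obeys $0 \le z_i = \int_{\cS} V\,d\mu_{h,i}^0 \le H$. Hence $\|\bz\|_2 = \big(\sum_{i=1}^d z_i^2\big)^{1/2} \le \big(d H^2\big)^{1/2} = H\sqrt{d}$, which is the claimed estimate. There is no genuine obstacle here; the only points requiring care are (i) correctly invoking the simplex factorization of $\bmu_h^0$ so that the sum over the $d$ feature coordinates can legitimately be taken outside the expectation, and (ii) applying the total-mass-one property to each individual factor measure $\mu_{h,i}^0$ when bounding $|z_i|$, rather than to the mixture $P_h^0(\cdot|s,a)$.
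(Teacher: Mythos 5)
Your proposal is correct and matches the paper's own proof: both construct $\bz$ explicitly as $\bz=\int V(s')\,d\bmu_h^0(s')$ (coordinate-wise, $z_i=\EE^{\mu_{h,i}^0}[V]$) using the linear factorization of $P_h^0$, and both derive the bound $\|\bz\|_2\leq H\sqrt{d}$ from $0\leq V\leq H$ and the unit total mass of each factor measure. The only difference is cosmetic — you bound each coordinate $|z_i|\leq H$ individually, while the paper bounds the vector norm as $\max_{s'}V(s')\cdot\|\bmu_h^0(\cS)\|_2$ — which amounts to the same estimate.
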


\begin{proof}[Proof of \Cref{lem:linear_form_and_bound}]
Based on \Cref{assumption:linear_mdp}, we have
\begin{align*}
    \big[\mathbb{P}_h^0 V\big](s,a) &= \int \mathbb{P}_h^0 (s^{\prime}|s,a) V(s^{\prime}) d s^{\prime} \\
    &= \int \bphi(s,a)^{\top} V(s^{\prime}) d \bmu_h^0(s^{\prime}) \\
    &= \bphi(s,a)^{\top} \int V(s^{\prime}) d \bmu_h^0(s^{\prime}) \\
    &= \bphi(s,a)^{\top} \bz,
\end{align*}
where $\bz=\int V(s^{\prime}) d \bmu_h^0(s^{\prime})$. Thus we have
\begin{align*}
    \|\bz\|_2 = \bigg\|\int V(s^{\prime}) d \bmu_h^0(s^{\prime})\bigg\|_2 \leq \max_{s^{\prime}} V(s^{\prime}) \cdot \big\|\bmu_h^0(\mathcal{S})\big\|_2 \leq H\sqrt{d}.
\end{align*}
This completes the proof.
\end{proof}

\section{Proof of the Minimax Lower Bound}
\label{sec:Proof of the Minimax Lower Bound }
In this section, we prove the minimax lower bound. To this end, we first introduce the construction of hard instances in \Cref{sec:Construction of Hard Instances},  and then we prove \Cref{thm:lower_bound} in \Cref{sec:proof of lower bound th}.

\subsection{Construction of Hard Instances}
\label{sec:Construction of Hard Instances}
We construct a family of $d$-rectangular linear DRMDPs based on the hard-to-learn linear MDP introduced in \citet{zhou2021nearly}. Let $\delta = 1/H$, $\Delta = \sqrt{\delta/K}/(4\sqrt{2})$. 
Each $d$-rectangular linear DRMDP in this family is parameterized by a Boolean vector $\bxi = \{\bxi_h\}_{h\in[H-1]}$, where $\bxi_h \in \{-\Delta,\Delta\}^d$. For a given $\bxi$ and uncertainty level $\rho\in(0,3/4]$, the corresponding $d$-rectangular linear DRMDP $M_{\bxi}^{\rho}$ has the following structure. The state space $\cS=\{x_1, x_2, \cdots, x_H, x_{H+1}\}$ and the action space $\cA=\{-1,1\}^d$. The first state is always $x_1$. The feature mapping $\bphi:\cS\times\cA \rightarrow \RR^{2d+2}$ is defined to depend on the state $x_h$ through $\bxi_h$ as follows: 
\begin{align*}
    &\phi(x_1,a) = \left(\begin{aligned}
    \frac{1}{2d} - &\frac{\delta}{d} - \xi_{11}a_1 \\ \frac{1}{2d} - &\frac{\delta}{d} - \xi_{12}a_2\\  & \vdots \\ \frac{1}{2d} - &\frac{\delta}{d} - \xi_{1d}a_d \\ &  \frac{1}{2} \\ & \frac{\delta}{d} + \xi_{11}a_1\\ & \frac{\delta}{d} + \xi_{12}a_2 \\ &\vdots \\ & \frac{\delta}{d} + \xi_{1d}a_d \\ &0
    \end{aligned}  \right), \phi(x_2,a) = \left(\begin{aligned}
    \frac{1}{2d} - &\frac{\delta}{d} - \xi_{21}a_1 \\ \frac{1}{2d} - &\frac{\delta}{d} - \xi_{22}a_2\\  & \vdots \\ \frac{1}{2d} - &\frac{\delta}{d} - \xi_{2d}a_d \\ &  \frac{1}{2} \\ & \frac{\delta}{d} + \xi_{21}a_1\\ & \frac{\delta}{d} + \xi_{22}a_2 \\ &\vdots \\ & \frac{\delta}{d} + \xi_{2d}a_d \\ &0 
    \end{aligned}  \right), \cdots,\\ & \bphi(x_{H-1},a) = \left(\begin{aligned}
    \frac{1}{2d} - &\frac{\delta}{d} - \xi_{H-1,1}a_1 \\ \frac{1}{2d} - &\frac{\delta}{d} - \xi_{H-1,2}a_2\\  & \vdots \\ \frac{1}{2d} - &\frac{\delta}{d} - \xi_{H-1,d}a_d \\ &  \frac{1}{2} \\ & \frac{\delta}{d} + \xi_{H-1,1}a_1\\ & \frac{\delta}{d} + \xi_{H-1,2}a_2 \\ &\vdots \\ & \frac{\delta}{d} + \xi_{H-1,d}a_d \\ &0 
    \end{aligned} \right), 
    \bphi(x_H,a) = \left(\begin{aligned}
    0 \\ 0\\ \vdots \\ 0 \\  0 \\ 0\\ 0 \\ \vdots \\ 0\\ 1 
    \end{aligned} \right), \bphi(x_{H+1},a) = \left(\begin{aligned}
    0 \\ 0\\ \vdots \\ 0 \\  0 \\ \frac{1}{d}\\ \frac{1}{d} \\ \vdots \\ \frac{1}{d}\\ 0 
    \end{aligned} \right).
\end{align*}
We assume that
\begin{align}
\label{eq: lower bound choice of K and H}
    K\geq 9d^2H/32~\text{and}~H\geq 6, 
\end{align}
such that  $ \frac{1}{2d} - \frac{1}{dH} - \delta \geq 0$.
Then it can be easily checked that for any $s\in\cS$, we have $\phi_i(s,a)\geq 0$ %
and $\sum_{i=1}^{2d+2}\phi_i(s,a)=1$. 
The factor distribution $\bmu_1: \cS\rightarrow \RR^{2d+2}$ is defined as follows.
\begin{align*}
    \bmu_1(\cdot) &= (\underbrace{\delta_{x_2}(\cdot),  \cdots, \delta_{x_2}(\cdot)}_{d \text{ terms}},\delta_{x_2}(\cdot), \underbrace{\delta_{x_{H+1}}(\cdot),  \cdots,\delta_{x_{H+1}}(\cdot)}_{d \text{ terms}}, \delta_{x_H}(\cdot))^\top.
\end{align*}
Similarly, for $h=2,\ldots,H$, we have
\begin{align*}
    \bmu_2(\cdot) &= (\delta_{x_3}(\cdot), \cdots, \delta_{x_3}(\cdot),\delta_{x_3}(\cdot), \delta_{x_{H+1}}(\cdot), \cdots,\delta_{x_{H+1}}(\cdot), \delta_{x_H}(\cdot))^\top, \\
    &\cdots \\
     \bmu_{H-1}(\cdot) = \bmu_H(\cdot) &= (\delta_{x_H}(\cdot), \cdots, \delta_{x_H}(\cdot),\delta_{x_H}(\cdot), \delta_{x_{H+1}}(\cdot), \cdots,\delta_{x_{H+1}}(\cdot), \delta_{x_H}(\cdot))^\top,
\end{align*}
Note that for each episode $k$, the initial state $s_1^k$ is always $x_1$. In the nominal environment, at step $h$, the state $s_h^k$ is either $x_h$ or $x_{H+1}$. State $x_{H}$ and $x_{H+1}$ are absorbing states. \Cref{fig:nominal env} illustrates the nominal MDP.

Now we construct the reward parameters $\{\btheta_h\}_{h\in[H]}$ as follows.
\begin{align*}
    \btheta_h = (1,1,\cdots, 1, -1,1, 1,\cdots, 1,0)^\top, ~\forall h\in[H].
\end{align*}
We have $\forall h\in[H]$,
\begin{align*}
    r_h(x_H,a) &= \bphi(x_H,\ba)^\top\btheta_h = 0, \\
    r_h(x_h,a) & = \bphi(x_h,\ba)^\top\btheta_h = 0, \\
    r_h(x_{H+1},a) & = \bphi(x_{H+1},\ba)^\top\btheta_h = 1.
\end{align*}
Thus, only the transition starting from $x_{H+1}$ generates a reward of 1, and transitions starting from any other state generate 0 reward. Next, we consider the model perturbation. An observation is that $x_H$ is the worst state since it is an absorbing state with zero reward. By the definition of the $d$-rectangular uncertainty set, the worst case kernel is the linear combination of worst case factor distributions. Further, by the definition of the factor uncertainty set, the worst case factor distribution is the one that leads to the highest probability $\rho$ to the worst state $x_H$.
Thus, the worst factor distributions are %
{\small\begin{align*}
    \check{\bmu}_1 &= ((1-\rho)\delta_{x_2} + \rho\delta_{x_H}, (1-\rho)\delta_{x_2} + \rho\delta_{x_H}, \cdots, (1-\rho)\delta_{x_2} + \rho\delta_{x_H},(1-\rho)\delta_{x_2} + \rho\delta_{x_H}, \\
    &\qquad (1-\rho)\delta_{x_{H+1}} + \rho\delta_{x_H}, (1-\rho)\delta_{x_{H+1}} + \rho\delta_{x_H}, \cdots,(1-\rho)\delta_{x_{H+1}} + \rho\delta_{x_H}, \delta_{x_H})^\top, \\ 
    \check{\bmu}_2 &= ((1-\rho)\delta_{x_3} + \rho\delta_{x_H}, (1-\rho)\delta_{x_3} + \rho\delta_{x_H}, \cdots, (1-\rho)\delta_{x_3} + \rho\delta_{x_H},(1-\rho)\delta_{x_3} + \rho\delta_{x_H}, \\
    &\qquad (1-\rho)\delta_{x_{H+1}} + \rho\delta_{x_H}, (1-\rho)\delta_{x_{H+1}} + \rho\delta_{x_H}, \cdots,(1-\rho)\delta_{x_{H+1}} + \rho\delta_{x_H}, \delta_{x_H})^\top,\\
    &\cdots \\
     \check{\bmu}_{H-1} &= ((1-\rho)\delta_{x_{H-1}} + \rho\delta_{x_H}, (1-\rho)\delta_{x_{H-1}} + \rho\delta_{x_H}, \cdots, (1-\rho)\delta_{x_{H-1}} + \rho\delta_{x_H},(1-\rho)\delta_{x_{H-1}} + \rho\delta_{x_H}, \\
    &\qquad (1-\rho)\delta_{x_{H+1}} + \rho\delta_{x_H}, (1-\rho)\delta_{x_{H+1}} + \rho\delta_{x_H}, \cdots,(1-\rho)\delta_{x_{H+1}} + \rho\delta_{x_H}, \delta_{x_H})^\top,\\
    \check{\bmu}_{H-1} &= \bmu_H. 
\end{align*}}%
\Cref{fig:worst env} illustrates the worst case MDP.

\begin{figure*}[t]
\centering
\tiny
\subfigure[The nominal MDP environment.]{
    \begin{tikzpicture}[->,>=stealth',shorten >=1pt,auto,node distance=3.4cm,thick,every node/.style={color=black}]
            \node[state] (S1) {$x_1$};
            \node[state] (S2) [right of=S1] {$x_2$};
            \node (ellipsis) [right of=S2] {$\cdots$};
            \node[state] (S3) [right=2cm of ellipsis] {$x_{H-1}$};
            \node[state] (S4) [right of=S3] {$x_H$};
            \node[state] (S5) [below=2cm of ellipsis] {$x_{H+1}$};
            
            \path   
            (S1) edge node[below] {\tiny$1-\delta-\la\bxi_1,\ba\ra$} (S2) edge node[left] {\tiny$\delta+\la\xi_1,a\ra$} (S5) 
            (S2) edge node[below] {\tiny$1-\delta-\la\bxi_2,\ba\ra$} (ellipsis) edge node[right] {\tiny$\delta+\la\bxi_2,\ba\ra$} (S5)
            (ellipsis) edge node[below] {} (S3)
            
            (S3) edge node[below] {\tiny$1-\delta-\la\bxi_{H-1},\ba\ra$} (S4)edge node[right] {\tiny$\delta+\la\bxi_{H-1},\ba\ra$} (S5)
            (S4) edge [loop right] node {\tiny 1} (S4)
            (S5) edge [loop below] node {\tiny 1} (S5)
            ;
\end{tikzpicture}
\label{fig:nominal env}
}

\subfigure[The worst-case MDP environment. %
]{
\begin{tikzpicture}[->,>=stealth',shorten >=1pt,auto,node distance=3.4cm,thick,every node/.style={color=black}]
            \node[state] (S1) {$x_1$};
            \node[state] (S2) [right=3cm of S1] {$x_2$};
            \node (ellipsis) [right of=S2] {$\cdots$};
            \node[state] (S3) [right=1cm of ellipsis] {$x_{H-1}$};
            \node[state] (S4) [right=4cm of S3] {$x_H$};
            \node[state] (S5) [below=2cm of ellipsis] {$x_{H+1}$};
            
            \path   
            (S1) edge node[above] {\tiny$(1-\rho)(1-\delta-\la\bxi_1,\ba\ra)$} (S2) edge node[left] {\tiny$(1-\rho)(\delta+\la\bxi_1,\ba\ra)$} (S5) 
            (S2) edge node[below] {\tiny$(1-\rho)(1-\delta-\la\bxi_2,\ba\ra)$} (ellipsis) edge node[right] {\tiny$(1-\rho)(\delta+\la\bxi_2,\ba\ra)$} (S5)
            (ellipsis) edge node[below] {} (S3)
            
            (S3) edge node[above] {\tiny$(1-\rho)(1-\delta-\la\bxi_{H-1},\ba\ra)+\rho$} (S4) edge node[pos=0.5, right] {\tiny$(1-\rho)(\delta+\la\bxi_{H-1},\ba\ra)$} (S5)
            (S4) edge [loop right] node {\tiny 1} (S4)
            (S5) edge [loop below] node {\tiny $1-\rho$} (S5)
            ;
            \draw[->, bend left] (S1) to node[above] {\tiny$\rho$} (S4);
            \draw[->, bend left] (S2) to node[above] {\tiny$\rho$} (S4);
            \draw[->, bend right] (S5) to node[above] {\tiny$\rho$} (S4);
\end{tikzpicture}
\label{fig:worst env}
} 
\caption{Constructions of the nominal MDP and the worst-case MDP environments.}
\end{figure*}
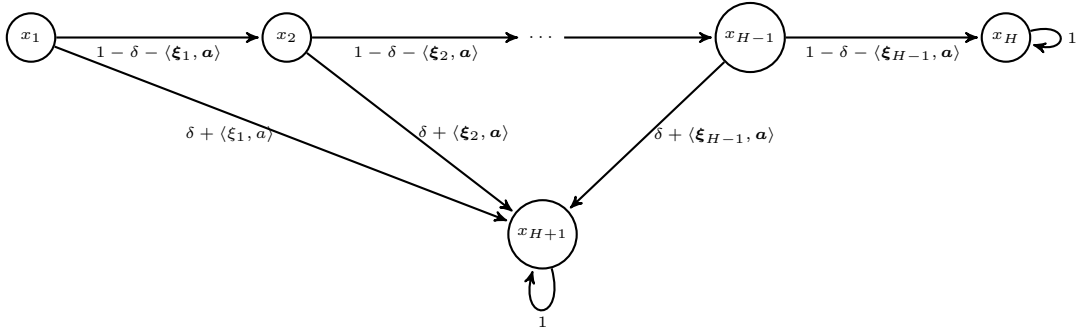
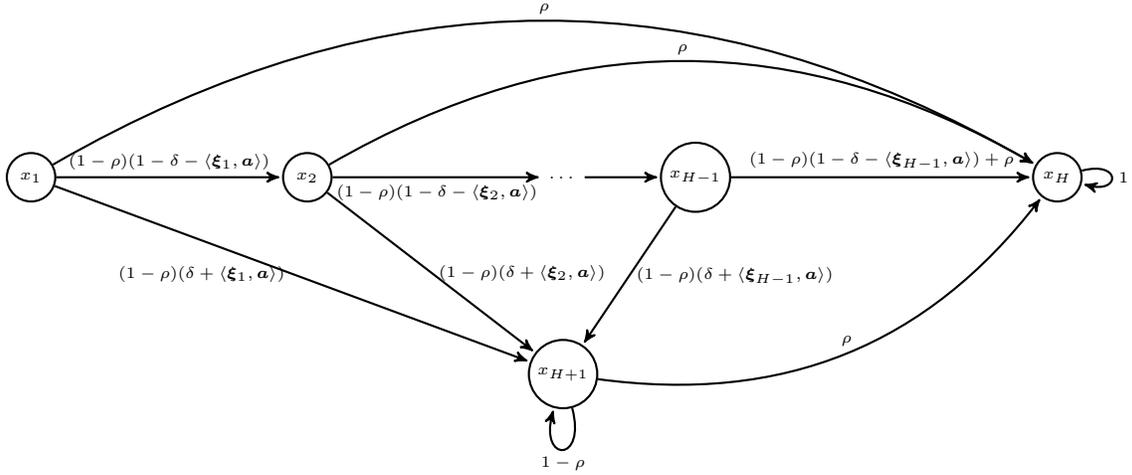

\subsection{Reduction from $d$-Rectangular DRMDP to Linear Bandits}
Note that by construction, at steps $h = 1,\cdots, H-2$, the probability of  transitioning to the worst case state $x_{H}$ is independent of the action $a$. Moreover, since $x_{H+1}$ is the only rewarding state, so the optimal action at step $h$ is the one the leads to the largest probability to $x_{H+1}$, i.e., $\ba_h^\star = \argmax_{\ba\in\cA}\la\bxi_h,\ba\ra$.   Further, in the nominal environment, state $x_h$ can only be reached through states $x_1, x_2, \cdots, x_{h-1}$. As discussed by \citet{zhou2021nearly}, knowing the state $x_h$ is equivalent to knowing the entire history starting from the initial state at current episode. Consequently, policies dictating what actions to take upon reaching a state at the beginning of an episode are equivalent to policies relying on the ``within episode'' history (we refer to the discussion in E.1 of \citet{zhou2021nearly} for more details).  In the following lemma, we shows that the average suboptimality of the $d$-rectangular DRMDP can be lower bounded by the regret of $H/2$ bandit instances. 

\begin{lemma}
\label{lemma:lower bound - lemma 1}
    With the choice of $d, K, H$ in \eqref{eq: lower bound choice of K and H}, we have $3d\Delta\leq \delta$. Fix $\bxi = \{\bxi_h\}_{h\in[H-1]}$. Fix a possibly history dependent policy $\pi$ and define $\bar{\ba}_h^\pi = \EE_{\bxi}[\ba_h|s_h=x_h]$ as the expected action taken by the policy when it visits state $x_h$ in stage $h$. Then, there exist a constant $c>0$ such that
    \begin{align*}
        V_1^{\star, \rho}(x_1) - V_1^{\pi, \rho}(x_1) \geq c\min\Big\{\frac{1}{\rho},H\Big\}\sum_{h=1}^{H/2}\Big(\max_{a\in\cA}\la \bmu_h, \ba\ra - \la\bmu_h,\bar{\ba}_h^\pi \ra\Big).
    \end{align*}
\end{lemma}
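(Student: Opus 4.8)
The plan is to collapse the robust value functions onto the three reachable states $x_h$, $x_H$, $x_{H+1}$ and then track the suboptimality through a single backward recursion. First I would invoke \Cref{assumption:fail_state} and the explicit worst-case factor distributions $\check{\bmu}_h$, which place mass $\rho$ on the absorbing zero-reward state $x_H$, to evaluate the robust Bellman equations \eqref{eq:robust bellman equation}. Since $x_H$ is absorbing with zero reward, $V_h^{\pi,\rho}(x_H)=0$ for every policy; since both the reward at $x_{H+1}$ and the worst-case transition out of it are action-independent, $u_h:=V_h^{\pi,\rho}(x_{H+1})$ is policy-independent and satisfies $u_h=1+(1-\rho)u_{h+1}$ with $u_{H+1}=0$, giving $u_h=\big(1-(1-\rho)^{H-h+1}\big)/\rho=\Theta(\min\{1/\rho,H\})$. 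This is exactly where the factor $\min\{1/\rho,H\}$ enters, via the geometric decay induced by the mass $\rho$ leaking to $x_H$ at every step.

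Writing $v_h^{\pi}:=V_h^{\pi,\rho}(x_h)$, the robust Bellman equation at $x_h$ reduces to $v_h^{\pi}=(1-\rho)(1-\delta-\la\bxi_h,\bar{\ba}_h^{\pi}\ra)v_{h+1}^{\pi}+(1-\rho)(\delta+\la\bxi_h,\bar{\ba}_h^{\pi}\ra)u_{h+1}$. I would first note that the right-hand side is affine in $\la\bxi_h,\ba\ra$, so the random action may be replaced by its conditional mean $\bar{\ba}_h^{\pi}$, and that the optimal action maximizes $\la\bxi_h,\ba\ra$ with optimal value $d\Delta$; hence the optimal chain value $v_h^{\star}$ obeys the same recursion with constant factor $\kappa:=(1-\rho)(1-\delta-d\Delta)$. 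Subtracting the two recursions and setting $g_h:=\max_{\ba\in\cA}\la\bxi_h,\ba\ra-\la\bxi_h,\bar{\ba}_h^{\pi}\ra\ge 0$ (the per-step gap appearing on the right-hand side of the lemma) yields the clean identity $\Delta_h=\kappa\,\Delta_{h+1}+(1-\rho)g_h(u_{h+1}-v_{h+1}^{\pi})$ for $\Delta_h:=v_h^{\star}-v_h^{\pi}\ge 0$. Unrolling from $h=1$ with $\Delta_H=0$ gives $\Delta_1=\sum_{h=1}^{H-1}\kappa^{h-1}(1-\rho)\,g_h(u_{h+1}-v_{h+1}^{\pi})$, and since every summand is nonnegative I may discard the terms with $h>H/2$.

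It then remains to bound each retained weight $\kappa^{h-1}(1-\rho)(u_{h+1}-v_{h+1}^{\pi})$ from below by $c\min\{1/\rho,H\}$. For the value gap I would pass to the optimal policy, $u_{h+1}-v_{h+1}^{\pi}\ge u_{h+1}-v_{h+1}^{\star}=:D_{h+1}$, and solve $D_h=1+\kappa D_{h+1}$, $D_H=1$, obtaining $D_{h+1}=(1-\kappa^{H-h})/(1-\kappa)$. The preliminary claim $3d\Delta\le\delta=1/H$ — which I would verify is exactly equivalent to the hypothesis $K\ge 9d^2H/32$ through $\Delta=\sqrt{\delta/K}/(4\sqrt2)$ — gives $1-\delta-d\Delta\ge 1-\tfrac{4}{3H}$, so $1-\kappa\in[\rho,\rho+\tfrac{4}{3H}]$ and therefore $1/(1-\kappa)=\Theta(\min\{1/\rho,H\})$; meanwhile $\kappa\le 1-\delta\le 1-1/H$ forces $\kappa^{H-h}\le\kappa^{H/2}\le e^{-1/2}$ for $h\le H/2$, making $1-\kappa^{H-h}$ a positive absolute constant. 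Combining, $D_{h+1}\ge c'\min\{1/\rho,H\}$ uniformly over $h\le H/2$.

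The main obstacle is controlling the survival–discount product $\kappa^{h-1}(1-\rho)$, and specifically its factor $(1-\rho)^{h-1}$: the geometric decay in $h$ means the late retained steps contribute weakly when $\rho$ is bounded away from $0$. For the discount portion the restriction $h\le H/2$ together with $3d\Delta\le\delta$ suffices, since $(1-\delta-d\Delta)^{h-1}\ge(1-\tfrac{4}{3H})^{H/2}\ge e^{-2/3}$; the delicate part is establishing that the combined weight stays of order $\min\{1/\rho,H\}$ across the full range $\rho\in(0,3/4]$, where the bound $\rho\le 3/4$ (hence $1-\rho\ge 1/4$) and the pairing of $(1-\rho)^{h-1}$ against the geometric factor $D_{h+1}$ must be exploited rather than bounding the survival probability in isolation. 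Once this weight estimate is secured, assembling the discarded-term inequality yields $\Delta_1\ge c\min\{1/\rho,H\}\sum_{h=1}^{H/2}g_h$ with an absolute constant $c$, which is the claim.
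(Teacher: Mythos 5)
Everything you derive is in fact correct --- the recursion $\Delta_h=\kappa\Delta_{h+1}+(1-\rho)g_h(u_{h+1}-v_{h+1}^{\pi})$, its unrolling, the closed form $D_{h+1}=(1-\kappa^{H-h})/(1-\kappa)$, and the estimates $1/(1-\kappa)=\Theta(\min\{1/\rho,H\})$, $1-\kappa^{H-h}\geq 1-e^{-1/2}$ for $h\leq H/2$. The genuine gap is the step you defer as ``delicate'': the claim that $\kappa^{h-1}(1-\rho)(u_{h+1}-v_{h+1}^{\pi})\geq c\min\{1/\rho,H\}$ uniformly over $h\leq H/2$. This is not a technicality to be secured; it is false once $\rho$ is bounded away from $0$. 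Since $u_{h+1}-v_{h+1}^{\pi}\leq u_{h+1}\leq 1/\rho$ and $\kappa\leq 1-\rho$, your weight is at most $(1-\rho)^{h}/\rho$, which decays geometrically in $h$. Concretely, take the deterministic policy playing $\argmax_{\ba}\la\bxi_1,\ba\ra$ at $h=1$ and $\argmin_{\ba}\la\bxi_h,\ba\ra$ at every $h\geq2$, so $g_1=0$ and $g_h=2d\Delta$ for $h\geq2$. Your identity then gives, at $\rho=3/4$, $V_1^{\star,\rho}(x_1)-V_1^{\pi,\rho}(x_1)\leq \frac{2d\Delta}{\rho}\sum_{h\geq2}(1-\rho)^h=2d\Delta\,(1-\rho)^2/\rho^2=\frac{2d\Delta}{9}$, a bound independent of $H$, while the lemma's right-hand side is $c\cdot\frac{4}{3}\cdot 2d\Delta(\lfloor H/2\rfloor-1)$, growing linearly in $H$. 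Hence no absolute constant $c$ exists: under the worst-case kernel a mass $\rho$ leaks to $x_H$ at every step, so a mistake at step $h$ can only affect $V_1$ through the survival factor $(1-\rho)^{h}$, and since $D_{h+1}\leq 1/\rho$ is bounded, no pairing of $(1-\rho)^{h-1}$ against $D_{h+1}$ can repair this. Your argument does close, and proves the lemma, in the regime $\rho=O(1/H)$, where $(1-\rho)^h$ is bounded below by an absolute constant for all $h\leq H/2$.

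It is worth understanding why the paper's own proof does not meet this obstacle: it rests on an algebraic slip that your cleaner bookkeeping avoids. From the definitions \eqref{eq:S_k}--\eqref{eq:T_k}, the correct one-step relation is $S_k=\big(\sum_{i=1}^{H-k}(1-\rho)^i\big)(o_k+\delta)+(1-\rho)(1-o_k-\delta)S_{k+1}$; the paper drops the factor $(1-\rho)$ in front of $S_{k+1}$, and this propagates through \eqref{eq: recursive formular of Tk-Sk} into the unrolled formula \eqref{eq:T_1-S_1}, which consequently omits exactly the factor $(1-\rho)^{h-1}$ that your unrolling retains. A three-step check makes the discrepancy concrete: for $H=3$ with $o_1=d\Delta$, $o_2=-d\Delta$, backward induction (or \eqref{eq:V1pi}) gives $T_1-S_1=2d\Delta(1-\rho)^2(1-\delta-d\Delta)$, whereas \eqref{eq:T_1-S_1} evaluates to $2d\Delta(1-\rho)(1-\delta-d\Delta)$. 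So your proposal reproduces the paper's strategy, executes the algebra correctly, and in doing so reveals that the uniform $\min\{1/\rho,H\}$ weighting over $h\leq H/2$ asserted by this lemma does not follow from this construction when $\rho$ is of constant order; without weakening the statement (e.g., weighting $g_h$ by $(1-\rho)^h$, or truncating the sum at $h\lesssim 1/\rho$), the gap you left open cannot be filled.
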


\begin{proof}[Proof of \Cref{lemma:lower bound - lemma 1}]
For the fixed policy $\pi$, we first get the ground truth robust value $V_1^{\pi, \rho}(x_1)$ by induction. Starting from the last step $H$, we have
\begin{align*}
    V_H^{\pi,\rho}(x_H) = 0,\quad V_H^{\pi,\rho}(x_{H+1}) = 1.
\end{align*}
For step $H-1$, we have
{\small\begin{align*}
    V_{H-1}^{\pi,\rho}(x_H) = 0, \quad V_{H-1}^{\pi,\rho}(x_{H-1}) = (1-\rho)(\delta+\la \bxi_{H-1},\ba_{H-1}^\pi\ra)\cdot 1, \quad V_{H-1}^{\pi,\rho}(x_{H+1})=1+(1-\rho)\cdot 1.
\end{align*}}%
For step $H-2$, we have $V_{H-2}^{\pi,\rho}(x_H) = 0$ and
\begin{align*}
     V_{H-2}^{\pi,\rho}(x_{H+1}) &= 1+(1-\rho)\cdot V_{H-1}(x_{H+1}) = 1 + (1-\rho) + (1-\rho)^2,\\
    V_{H-2}^{\pi, \rho}(x_{H-2}) & = (1-\rho)(\delta+\la\bxi_{H-2},\bar{\ba}_{H-2}^\pi \ra)\cdot V_{H-1}^{\pi,\rho}(x_{H+1}) \\
    &\qquad+ (1-\rho)(1-\delta-\la\bxi_{H-2},\bar{\ba}_{H-2}^\pi \ra)\cdot V_{H-1}^{\pi, \rho}(x_{H-1}) \\
    &= \big[(1-\rho)+(1-\rho)^2\big](\delta +
     \la \bxi_{H-2},\bar{\ba}_{H-2}^\pi \ra) \\
    &\qquad +(1-\rho)^2(1-\delta-\la\bxi_{H-2},\bar{\ba}_{H-2}^\pi \ra)(\delta+\la\bxi_{H-1},\bar{\ba}_{H-1}^\pi \ra).
\end{align*}
For step $H-3$, we have $V_{H-3}^{\pi,\rho}(x_H) = 0$ and
\begin{align*}
    V_{H-3}^{\pi,\rho}(x_{H+1}) &= 1 + (1-\rho)\cdot V_{H-2}(x_{H+1}) = 1+(1-\rho) + (1-\rho)^2+(1-\rho)^3,\\
    V_{H-3}^{\pi,\rho}(x_{H-3}) & = (1-\rho)(\delta+\la\bxi_{H-3},\bar{\ba}_{H-3}^\pi \ra)\cdot V_{H-2}^{\pi,\rho}(x_{H+1}) \\
    &\qquad+ (1-\rho)(1-\delta-\la\bxi_{H-3},\bar{\ba}_{H-3}^\pi \ra)\cdot V_{H-2}^{\pi, \rho}(x_{H-2})\\
    & = \big[(1-\rho) + (1-\rho)^2 + (1-\rho)^3 \big](\delta+\la\bxi_{H-3},\bar{\ba}_{H-3}^\pi \ra)  \\
    &\qquad +\big[(1-\rho)^2 + (1-\rho)^3 \big](1-\delta-\la\bxi_{H-3},\bar{\ba}_{H-3}^\pi \ra)(\delta+\la\bxi_{H-2},\bar{\ba}_{H-2}^\pi \ra)  \\
    &\qquad +(1-\rho)^3(1-\delta-\la\bxi_{H-3},\bar{\ba}_{H-3}^\pi \ra)(1-\delta-\la\bxi_{H-2},\bar{\ba}_{H-2}^\pi \ra)(\delta+\la\bxi_{H-1},\bar{\ba}_{H-1}^\pi \ra).
\end{align*}
Keep performing the backward induction until step $h=1$, we have%
{\small\begin{align}
    &V_1^{\pi, \rho}(x_1) \notag\\
    &= V_{H-(H-1)}^{\pi,\rho}(x_1)\notag\\
    &= \big[(1-\rho) + \cdots + (1-\rho)^{H-1} \big](\delta + \la \bxi_1, \bar{\ba}_1^\pi\ra) +\notag \\
    &\qquad \big[(1-\rho)^2 + \cdots + (1-\rho)^{H-1} \big](1-\delta-\la\bxi_1, \bar{\ba}_1^\pi \ra)(\delta+\la \bxi_2, \bar{\ba}_2^\pi \ra) + \notag\\
    & \qquad [(1-\rho)^3 + \cdots + (1-\rho)^{H-1}](1-\delta-\la\bxi_1, \bar{\ba}_1^\pi \ra)(1-\delta-\la\bxi_2, \bar{\ba}_2^\pi \ra)(\delta+\la \bxi_3,\bar{\ba}_3^\pi \ra) +\notag\\
    &\qquad +\cdots+\notag\\
    &\qquad (1-\rho)^{H-1}(1-\delta-\la\bxi_1, \bar{\ba}_1^\pi \ra)(1-\delta-\la\bxi_2, \bar{\ba}_2^\pi \ra)\cdots (1-\delta-\la \bxi_{H-2},\bar{\ba}_{H-2}^\pi \ra)(\delta+\la\bxi_{H-1},\bar{\ba}_{H-1}^\pi \ra) \notag\\
    &= \sum_{h=1}^{H-1}\Big(\sum_{i=h}^{H-1}(1-\rho)^i \Big)(o_h+\delta)\prod_{j=1}^{h-1}(1-o_j-\delta),\label{eq:V1pi}
\end{align}}%
where $o_h = \la \bxi_h, \bar{\ba}_h^\pi \ra, \forall h\in[H]$. Recall that the optimal robust action at step $h$ is $\ba_h^{\star} = \argmax_{\ba\in\cA} \la\bxi_h, \ba \ra$, and hence $\max_{\ba\in\cA} \la\bxi_h, \ba \ra = \Delta d$. Thus, we have
\begin{align}
\label{eq:V1star}
    V_1^{\star,\rho}(x_1) = \sum_{h=1}^{H-1}\Big(\sum_{i=h}^{H-1}(1-\rho)^i \Big)(d\Delta+\delta)\prod_{j=1}^{h-1}(1-d\Delta-\delta).
\end{align}
For $k\in[H-1]$, we define %
\begin{align}
    S_k &= \sum_{h=k}^{H-1}\Big(\sum_{i=h-k+1}^{H-k}(1-\rho)^i \Big)\prod_{j=k}^{h-1}(1-o_j-\delta)(o_h+\delta),\label{eq:S_k}\\
    T_k &= \sum_{h=k}^{H-1}\Big(\sum_{i=h-k+1}^{H-k}(1-\rho)^i \Big)\prod_{j=k}^{h-1}(1-d\Delta-\delta)(d\Delta+\delta).\label{eq:T_k}
\end{align}
Then by \eqref{eq:V1pi}, \eqref{eq:V1star}, \eqref{eq:S_k} and \eqref{eq:T_k}, we know $V_1^{\star,\rho}(x_1) - V_1^{\pi, \rho}(x_1) = T_1 - S_1$.
Next, we aim to lower bound $T_1 - S_1$.
Inspired by the backward induction process, we have
\begin{align*}
    S_k &= \Big(\sum_{i=1}^{H-k}(1-\rho)^i \Big)(o_k+\delta) + S_{k+1}(1-o_k-\delta),\\
    T_k &= \Big(\sum_{i=1}^{H-k}(1-\rho)^i \Big)(d\Delta+\delta) + T_{k+1}(1-d\Delta-\delta).
\end{align*}
Then, we have
\begin{align}
    T_k - S_k &= \Big(\sum_{i=1}^{H-k}(1-\rho)^i \Big)(d\Delta - o_k) - S_{k+1}(1-o_k-\delta) + T_{k+1}(1-d\Delta-\delta)\notag\\
    &= \Big(\sum_{i=1}^{H-k}(1-\rho)^i -T_{k+1} \Big)(d\Delta - o_k) + (1-o_k-\delta)(T_{k+1}-S_{k+1}).\label{eq: recursive formular of Tk-Sk}
\end{align}
Define $T_H =S_H=0$, then by the recursive formula \eqref{eq: recursive formular of Tk-Sk},  we have 
\begin{align}
\label{eq:T_1-S_1}
    T_1 - S_1 = \sum_{h=1}^{H-1}(d\Delta-o_h)\Big(\underbrace{\sum_{i=1}^{H-h}(1-\rho)^i - T_{h+1} }_{\text{I}}\Big)\prod_{j=1}^{h-1}(1-o_j-\delta).
\end{align}
To further bound \eqref{eq:T_1-S_1}, 
we first study the term I. 
Next we derive a close form expression of $T_k$. 
In specific, we have
\begin{align}
\label{eq:T_k expression}
    T_k &= \sum_{h=k}^{H-1}\Big(\sum_{i=h-k+1}^{H-k}(1-\rho)^i \Big)\prod_{j=k}^{h-1}(1-d\Delta-\delta)(d\Delta+\delta)\notag\\
    & = \Big(\sum_{i=1}^{H-k}(1-\rho)^i \Big)(d\Delta+\delta) + \Big(\sum_{i=2}^{H-k}(1-\rho)^i \Big)(1-d\Delta-\delta)(d\Delta+\delta) \notag\\
    &\qquad+ \Big(\sum_{i=3}^{H-k}(1-d\Delta-\delta)^2(d\Delta+\delta) \Big) +\cdots+(1-\rho)^{H-k}(1-d\Delta-\delta)^{H-k-1}(d\Delta+\delta).
\end{align}
Multiply $T_k$ by $(1-d\Delta-\delta)$, we have
\begin{align}
\label{eq:T_k multiply}
   &(1-d\Delta-\delta)T_k \notag\\
   &=\Big(\sum_{i=1}^{H-k}(1-\rho)^i \Big)(d\Delta+\delta)(1-d\Delta-\delta) + \Big(\sum_{i=2}^{H-k}(1-\rho)^i \Big)(1-d\Delta-\delta)^2(d\Delta+\delta) \notag \\ 
    &\qquad+ \Big(\sum_{i=3}^{H-k}(1-d\Delta-\delta)^2(d\Delta+\delta) \Big)  +\cdots+(1-\rho)^{H-k}(1-d\Delta-\delta)^{H-k}(d\Delta+\delta).
\end{align}
Then we have
\begin{align}
\label{eq:take difference}
    &\eqref{eq:T_k expression} - \eqref{eq:T_k multiply} \notag\\
    &= (d\Delta+\delta)T_k\notag \\
    & = \Big(\sum_{i=1}^{H-k}(1-\rho)^i \Big)(d\Delta+\delta) - (1-\rho)(1-d\Delta-\delta)(d\Delta+\delta)-(1-\rho)^2(1-d\Delta-\delta)^2(d\Delta+\delta)\notag\\
    & \qquad - \cdots - (1-\rho)^{H-k}(1-d\Delta-\delta)^{H-k}(d\Delta+\delta).
\end{align}
Divide both side of equation \eqref{eq:take difference} by $(d\Delta+\delta)$ and then apply the formula for the sum of a geometric series, we know  $T_k$ has the following closed form expression
\begin{align*}
    T_k = \Big(\sum_{i=1}^{H-k}(1-\rho)^i\Big) - \frac{(1-\rho)(1-d\Delta-\delta)(1-(1-\rho)^{H-k}(1-d\Delta-\delta)^{H-k})}{1-(1-\rho)(1-d\Delta-\delta)}.
\end{align*}
Then, for any $h\leq H/2$, we have the following bound on the term I of \eqref{eq:T_1-S_1},
\begin{align}
    &\sum_{i=1}^{H-h}(1-\rho)^i - T_{h+1} \notag\\
    &= \sum_{i=1}^{H-h}(1-\rho)^i - \sum_{i=1}^{H-h-1}(1-\rho)^i + \frac{(1-\rho)(1-d\Delta-\delta)(1-(1-\rho)^{H-h-1}(1-d\Delta-\delta)^{H-h-1})}{1-(1-\rho)(1-d\Delta-\delta)}\notag \\
    &= (1-\rho)^{H-h} + \frac{(1-\rho)(1-d\Delta-\delta)(1-(1-\rho)^{H-h-1}(1-d\Delta-\delta)^{H-h-1})}{1-(1-\rho)(1-d\Delta-\delta)}\notag \\
    & = (1-\rho)^{H-h} + (1-\rho)(1-d\Delta-\delta) + \cdots + (1-\rho)^{H-h-1}(1-d\Delta-\delta)^{H-h-1} \notag \\
    & \geq (1-d\Delta-\delta)^H\big( (1-\rho) + \cdots + (1-\rho)^{H-h-1}+ (1-\rho)^{H-h} \big)\label{eq:(1-2/H)^H} \\
    &\geq \Big(1-\frac{2}{H} \Big)^H\big( (1-\rho) + \cdots + (1-\rho)^{H-h-1}+ (1-\rho)^{H-h} \big)\notag \\
    &\geq \frac{1}{12}\sum_{i=1}^{H-h}(1-\rho)^i\label{eq:1/12}
    ,
\end{align}
where \eqref{eq:(1-2/H)^H} holds due to $3d\Delta\leq \delta=1/H$ and \eqref{eq:1/12} holds due to $H\geq 6$. Next, we carefully bound the LHS of \eqref{eq:1/12} with respect to $\rho$. For any $h\leq H/2$ and $\rho\in(0,3/4]$, we have
\begin{align*}
     \frac{1}{12}\sum_{i=1}^{H-h}(1-\rho)^i&\geq \frac{1}{12}\frac{(1-\rho)(1-(1-\rho)^{H/2})}{\rho}\geq \frac{1}{50}\frac{1-(1-\rho)^{H/2}}{\rho}.
\end{align*}
Given the fact that 
\begin{align*}
    \frac{1-(1-\rho)^{H/2}}{\rho} = \Theta\Big(\min\Big(H, \frac{1}{\rho}\Big)\Big),
\end{align*}
there exist a constant $c>0$, such that 
\begin{align*}
    \frac{1-(1-\rho)^{H/2}}{\rho} \geq c\cdot \min\Big(H, \frac{1}{\rho}\Big).
\end{align*}
Then we have
\begin{align}
\label{eq:bound I}
    \sum_{i=1}^{H-h}(1-\rho)^i - T_{h+1} \geq c'\cdot \min\Big(H, \frac{1}{\rho}\Big),
\end{align}
where $c'=c/50$. Moreover, with the choice of parameter $3d\Delta\leq \delta, \delta=1/H$, and $H\geq 6$, we have
\begin{align}
\label{eq:bound II}
    \prod_{j=1}^{h-1}(1-o_j-\delta)\geq (1-4\delta/3)^H\geq 1/3.
\end{align}
Therefore, by \eqref{eq:T_1-S_1}, \eqref{eq:bound I} and \eqref{eq:bound II},
we have 
\begin{align*}
    V_1^{\star,\rho}(x_1) - V_1^{\pi,\rho}(x_1) &=T_1-S_1 \\ 
    & \geq c''\cdot \min\{H, 1/\rho\}\cdot \sum_{h=1}^{H/2}(d\Delta-o_h)\\
    & = c''\cdot \min\{H, 1/\rho\}\cdot \sum_{h=1}^{H/2} \Big(\max_{a\in\cA}\la \bmu_h, \ba\ra - \la\bmu_h,\bar{\ba}_h^\pi \ra\Big),
\end{align*}
where $c''=c'/3$.
This completes the proof.
\end{proof}

\subsection{Proof of \Cref{thm:lower_bound}}
\label{sec:proof of lower bound th}

Next, we present an existing result on lower bounding the regret of linear bandits induced by \Cref{lemma:lower bound - lemma 1}. This result is useful in deriving the lower bound in \Cref{thm:lower_bound}. 
\begin{lemma}\citep[Lemma 25]{zhou2021nearly}
\label{lemma:lower bound - lemma 2}
    Fix a positive real $0\leq \delta \leq 1/3$, and positive integers $K,d$ and assume that $K\geq d^2/(2\delta)$. Let $\Delta=\sqrt{\delta /K}/(4\sqrt{2})$ and consider the linear bandit problems $\cL_{\bmu}$ parameterized with a parameter vector $\bmu\in\{-\Delta, \Delta\}^d$ and action set $\cA = \{-1,1\}^d$ so that the reward distribution for taking action $\ba\in\cA$ is a Bernoulli distribution $\text{Bernoulli}(\delta+\la \bmu,\ba\ra)$. Then for any bandit algorithm $\cB$, there exists a $\bmu^\star\in\{-\Delta, \Delta\}^d$ such that the expected pseudo-regret of $\cB$ over first $K$ steps on bandit $\cL_{\bmu^\star}$ is lower bounded as follows: 
    \begin{align*}
        \EE_{\bmu^\star}Regret(K) \geq \frac{d\sqrt{K\delta}}{8\sqrt{2}}.
    \end{align*}
    Note that the expectation is with respect to a distribution that depends both on $\cB$ and $\mu^\star$, but since $\cB$ is fixed, this dependence is hidden.
\end{lemma}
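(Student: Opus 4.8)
The plan is to prove this as a standard information-theoretic minimax lower bound, exploiting the product structure of the action set $\cA=\{-1,1\}^d$ together with an Assouad-type two-point argument applied coordinatewise. First I would take the hard class to be the uniform prior over $\bmu\in\{-\Delta,\Delta\}^d$ and decompose the pseudo-regret. For a fixed instance $\bmu$, the mean reward of action $\ba$ is $\delta+\la\bmu,\ba\ra$, so the optimal action is $\ba^\star=\mathrm{sign}(\bmu)$ with value $d\Delta$, and the instantaneous regret of playing $\ba_t$ equals $\sum_{i=1}^d(\Delta-\mu_i a_{t,i})$, which is $2\Delta$ on each coordinate $i$ with $a_{t,i}\neq\mathrm{sign}(\mu_i)$ and $0$ otherwise. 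Hence $\mathrm{Regret}(K)=2\Delta\sum_{i=1}^d N_i$, where $N_i=\sum_{t=1}^K\ind\{a_{t,i}\neq\mathrm{sign}(\mu_i)\}$ counts the wrong-sign pulls on coordinate $i$. Before anything else I would record, using $K\geq d^2/(2\delta)$, that $\Delta\leq\delta/(4d)$, so every mean $\delta+\la\bmu,\ba\ra$ lies in $[3\delta/4,5\delta/4]\subseteq(0,1)$ and all Bernoulli parameters appearing below are bounded away from $0$ and $1$.

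The core step is a coordinatewise two-point comparison. Fixing $i$, for each $\bmu$ let $\bmu^{(i)}$ denote the instance obtained by flipping the $i$-th sign. Writing $M_i^{+}$ and $M_i^{-}$ for the number of rounds in which $a_{t,i}=+1$ and $a_{t,i}=-1$, note that $N_i=M_i^{-}$ when $\mathrm{sign}(\mu_i)=+1$ and $N_i=M_i^{+}$ when $\mathrm{sign}(\mu_i)=-1$; hence, treating the pair $(\bmu,\bmu^{(i)})$ with $\mu_i=\Delta$ (the other case is symmetric), $\EE_{\bmu}[N_i]+\EE_{\bmu^{(i)}}[N_i]=\EE_{\bmu}[M_i^{-}]+\EE_{\bmu^{(i)}}[M_i^{+}]$. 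Since $M_i^{-}+M_i^{+}=K$ and both counts are bounded by $K$, I would bound $|\EE_{\bmu}[M_i^{+}]-\EE_{\bmu^{(i)}}[M_i^{+}]|\leq K\cdot d_{\mathrm{TV}}(P_\bmu,P_{\bmu^{(i)}})$, where $P_\bmu$ is the law of the full action–reward history under $\bmu$ and algorithm $\cB$. This yields $\EE_{\bmu}[N_i]+\EE_{\bmu^{(i)}}[N_i]\geq K\big(1-d_{\mathrm{TV}}(P_\bmu,P_{\bmu^{(i)}})\big)$.

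It then remains to show $d_{\mathrm{TV}}(P_\bmu,P_{\bmu^{(i)}})\leq 1/2$. I would apply the divergence-decomposition (chain-rule) identity, valid for adaptively chosen actions, $\mathrm{KL}(P_\bmu\|P_{\bmu^{(i)}})=\sum_{t=1}^K\EE_\bmu\big[\mathrm{KL}(\mathrm{Bern}(\delta+\la\bmu,\ba_t\ra)\,\|\,\mathrm{Bern}(\delta+\la\bmu^{(i)},\ba_t\ra))\big]$, noting the two instances differ only through coordinate $i$, so each round's mean gap is exactly $2\Delta$. Using $\mathrm{KL}(\mathrm{Bern}(p)\|\mathrm{Bern}(q))\leq (p-q)^2/(q(1-q))$ with the range control from the first paragraph bounds each summand by $O(\Delta^2/\delta)$, so the total divergence is $O(K\Delta^2/\delta)$; with the prescribed $\Delta=\sqrt{\delta/K}/(4\sqrt2)$ one has $K\Delta^2/\delta=1/32$, and careful constant tracking keeps the total KL at most $1/2$, whence Pinsker's inequality $d_{\mathrm{TV}}\leq\sqrt{\mathrm{KL}/2}\leq 1/2$. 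Therefore $\EE_{\bmu}[N_i]+\EE_{\bmu^{(i)}}[N_i]\geq K/2$ for every sign-flip pair.

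Finally I would aggregate. Pairing the $2^d$ instances along coordinate $i$ gives $\frac{1}{2^d}\sum_{\bmu}\EE_\bmu[N_i]\geq K/4$, and summing over $i$ gives $\frac{1}{2^d}\sum_\bmu\EE_\bmu[\mathrm{Regret}(K)]=2\Delta\sum_i\frac{1}{2^d}\sum_\bmu\EE_\bmu[N_i]\geq 2\Delta\cdot d\cdot K/4=\Delta dK/2$. Substituting $\Delta=\sqrt{\delta/K}/(4\sqrt2)$ turns this into $d\sqrt{K\delta}/(8\sqrt2)$; since the uniform average over $\bmu$ is at least this value, some $\bmu^\star$ attains it, which is the claim. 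The main obstacle I anticipate is the constant bookkeeping in the KL/total-variation step: the crude Bernoulli KL bound must be sharpened (using that both parameters sit in a narrow window around $\delta$) so that the accumulated divergence stays below $1/2$ and the final constant comes out exactly $1/(8\sqrt2)$; the adaptivity of $\cB$ must also be handled through the chain-rule form of the divergence rather than a naive per-action bound.
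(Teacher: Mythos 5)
The paper does not prove this lemma at all---it is imported verbatim as Lemma 25 of \citet{zhou2021nearly}---so there is no internal proof to compare against, and your blind proof is correct, following essentially the same Assouad-type coordinatewise decoupling argument used in that reference. Your constants check out exactly: $K\geq d^2/(2\delta)$ gives $\Delta\leq \delta/(4d)$ so every mean lies in $[3\delta/4,5\delta/4]$; the divergence decomposition together with $\mathrm{KL}(\mathrm{Bern}(p)\,\Vert\,\mathrm{Bern}(q))\leq (p-q)^2/(q(1-q))$ bounds the total KL by $64K\Delta^2/(7\delta)=2/7$, whence $d_{\mathrm{TV}}\leq\sqrt{1/7}<1/2$ by Pinsker; and the per-pair bound $\EE_{\bmu}[N_i]+\EE_{\bmu^{(i)}}[N_i]\geq K/2$, averaged over the $2^d$ instances and summed over coordinates, gives $\Delta dK/2=d\sqrt{K\delta}/(8\sqrt{2})$, matching the stated constant.
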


Now we are ready to prove the lower bound in \Cref{thm:lower_bound}.
\begin{proof}[Proof of \Cref{thm:lower_bound}]
    By \Cref{lemma:lower bound - lemma 1}, we have
    \begin{align*}
        \EE_{\bxi}\text{AveSubopt}(M_{\bxi},K) &= \frac{1}{K}\EE_{\bxi}\Big[\sum_{k=1}^K[V_1^{\star, \rho}(x_1)-V_1^{\pi, \rho}(x_1)] \Big]\\
        &\geq c\cdot \frac{\min\{H,1/\rho\}}{K} \sum_{h=1}^{H/2}\EE_{\bxi}\bigg[\sum_{k=1}^K \Big(\max_{a\in\cA}\la \bxi_h,\ba \ra - \la \bxi_h, \bar{\ba}_h^{\pi_k}\ra \Big) \bigg].
    \end{align*}
     Note that the learning process is conducted on the nominal environment, which is exactly the MDP in \citet{zhou2021nearly}, thus the rest proof of \Cref{thm:DRLSVIUCB} follows the argument in the proof of Theorem 8 in \citet{zhou2021nearly}. In particular, define $\bxi^{-h} = (\bxi_1, \cdots, \bxi_{h-1}, \bxi_{h+1}, \cdots, \bxi_H)$, then every MDP policy $\pi$ induces a bandit algorithm $\cB_{\pi, h, \bxi^{-h}}$ for the linear bandit of \Cref{lemma:lower bound - lemma 2}. Moreover, our choice of parameters in \eqref{eq: lower bound choice of K and H} satisfy the requirement of \Cref{lemma:lower bound - lemma 2}.
     Denote the regret of this bandit problem on $\cL_{\bxi}$ as $\text{BanditRegret}(\cB_{\pi, h, \bxi^{-h}}, \bxi_h)$, then we have
     \begin{align*}
         \sup_{\bxi}\EE_{\bxi}\text{AveSubopt}(M_{\bxi},K)&\geq \sup_{\bxi}c\cdot\frac{\min\{H,1/\rho\}}{K}\sum_{h=1}^{H/2}\text{BanditRegret}(\cB_{\pi, h, \bxi^{-h}}, \bxi_h)\\
         &\geq \sup_{\bxi}c\cdot\frac{\min\{H,1/\rho\}}{K}\sum_{h=1}^{H/2}\inf_{\tilde{\bxi}^{-h}}\text{BanditRegret}(\cB_{\pi, h, \tilde{\bxi}^{-h}}, \bxi_h)\\
         &=c\cdot\frac{\min\{H,1/\rho\}}{K}\sum_{h=1}^{H/2}\sup_{\bxi}\inf_{\tilde{\bxi}^{-h}}\text{BanditRegret}(\cB_{\pi, h, \tilde{\bxi}^{-h}}, \bxi_h)\\
         &\geq c\cdot\frac{\min\{H,1/\rho\}dH\sqrt{K\delta}}{16\sqrt{2}\cdot K}\\
         &=\frac{c}{16\sqrt{2}}\cdot\frac{d\sqrt{H}\cdot \min\{H,1/\rho\}}{\sqrt{K}}.
     \end{align*}
    This completes the proof.
\end{proof}

\section{Auxiliary Lemmas}
In this section, we present some standard technical results in the literature that our proofs are built on.

\begin{proposition}
\label{prop:strong duality for TV}
(Strong duality for TV \citep[Lemma 4]{shi2023curious}). Given any probability measure $\mu^0$ over $\cS$, a fixed uncertainty level $\rho$, the uncertainty set $ \cU^{\rho}(\mu^0) =\{\mu: \mu\in \Delta(\cS), D_{TV}(\mu||\mu^0)\leq \rho\}$, and any function $V:\cS \rightarrow [0,H]$, we obtain 
\begin{align}
\label{eq:duality}
    {\textstyle\inf_{\mu\in\cU^{\rho}(\mu^0)}\EE_{s\sim\mu}V(s) = \max_{\alpha \in [V_{\min}, V_{\max}]}\big\{\EE_{s\sim \mu^0}[V(s)]_{\alpha} -\rho\big(\alpha - \min_{s'}[V(s')]_{\alpha}\big) \big\}},
\end{align}
where $[V(s)]_{\alpha}=\min\{V(s), \alpha\}$, $V_{\min}=\min_{s}V(s)$ and $V_{\max}=\max_{s}V(s)$. Notably, the range of $\alpha$ can be relaxed to $[0,H]$ without impacting the optimization. 
\end{proposition}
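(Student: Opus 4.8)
The plan is to establish Proposition~\ref{prop:strong duality for TV} by proving the two inequalities between the primal infimum $p^\star := \inf_{\mu\in\cU^\rho(\mu^0)}\EE_{s\sim\mu}V(s)$ and the dual value $d^\star := \max_{\alpha\in[V_{\min},V_{\max}]}\{\EE_{s\sim\mu^0}[V(s)]_\alpha - \rho(\alpha - \min_{s'}[V(s')]_\alpha)\}$ directly, rather than invoking a generic Lagrangian duality theorem. The primal is the minimization of a linear functional of $\mu$ over the intersection of the simplex $\Delta(\cS)$ with the TV ball $\{\mu: D(\mu\|\mu^0)\le\rho\}$, which is convex and compact, so the infimum is attained; Slater's condition holds since $\mu^0$ is strictly feasible when $\rho>0$, but I would avoid abstract duality in favor of an explicit construction that also yields the worst-case distribution. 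A preliminary observation I would record is that for every $\alpha\ge V_{\min}$ one has $\min_{s'}[V(s')]_\alpha = V_{\min}$, so the dual objective simplifies to $\EE_{s\sim\mu^0}[V(s)]_\alpha - \rho(\alpha - V_{\min})$ on the relevant range.

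For the direction $p^\star \ge d^\star$ (weak duality), I would fix any feasible $\mu$ and any $\alpha\in[V_{\min},V_{\max}]$ and use the clipping inequality $V(s)\ge [V(s)]_\alpha$ to get $\EE_{\mu}V \ge \EE_{\mu}[V]_\alpha$. Since $\int(d\mu-d\mu^0)=0$, I can subtract any constant $c$ and write $\EE_{\mu}[V]_\alpha = \EE_{\mu^0}[V]_\alpha + \int([V]_\alpha - c)(d\mu - d\mu^0)$; H\"older's inequality then bounds the last term below by $-\|[V]_\alpha - c\|_\infty\,\|\mu-\mu^0\|_1$, and using $\|\mu-\mu^0\|_1 = 2D(\mu\|\mu^0)\le 2\rho$ this is at least $-2\rho\,\|[V]_\alpha - c\|_\infty$. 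Because $[V(\cdot)]_\alpha$ takes values in $[V_{\min},\alpha]$, choosing $c$ to be the midpoint $(V_{\min}+\alpha)/2$ makes $\|[V]_\alpha-c\|_\infty = (\alpha-V_{\min})/2$, giving $\EE_{\mu}V \ge \EE_{\mu^0}[V]_\alpha - \rho(\alpha - V_{\min})$. Taking the supremum over $\alpha$ and the infimum over $\mu$ yields $p^\star\ge d^\star$.

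For the reverse direction $p^\star \le d^\star$ (tightness), I would exhibit an explicit feasible worst-case measure. Let $\alpha^\star$ attain the dual maximum; I would define $\mu^\star$ by a water-filling that transports total mass $\rho$ from the top of the $V$-distribution to a minimizing state: choose the threshold so that $\mu^0(\{V>\alpha^\star\})\le \rho\le \mu^0(\{V\ge\alpha^\star\})$, remove mass from the states with the largest $V$ (and a partial amount at the level $\alpha^\star$ so that the total deficit is exactly $\rho$), and reassign all of it to a state in $\arg\min_s V(s)$. This $\mu^\star$ stays in $\Delta(\cS)$ and satisfies $D(\mu^\star\|\mu^0)=\rho$. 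A short computation using the identity $\EE_{\mu^0}V - \EE_{\mu^0}[V]_{\alpha^\star} = \int_{V>\alpha^\star}(V-\alpha^\star)\,d\mu^0$ then shows $\EE_{\mu^\star}V = \EE_{\mu^0}[V]_{\alpha^\star} - \rho(\alpha^\star - V_{\min})$, which equals $d^\star$; hence $p^\star\le \EE_{\mu^\star}V = d^\star$. Combining the two directions gives equality, and the final remark that $\alpha$ may be relaxed to $[0,H]$ follows by checking that the dual objective equals $\alpha$ (nondecreasing, hence dominated at $\alpha=V_{\min}$) for $\alpha<V_{\min}$ and equals $\EE_{\mu^0}V-\rho(\alpha-V_{\min})$ (nonincreasing, hence dominated at $\alpha=V_{\max}$) for $\alpha>V_{\max}$.

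The step I expect to be the main obstacle is the worst-case construction in the $p^\star\le d^\star$ direction: making the ``remove mass $\rho$ from the top'' procedure precise when $\rho$ does not coincide with the $\mu^0$-measure of an exact level set of $V$ (so that a level set at height $\alpha^\star$ must be split into a transported and a retained part), verifying that the resulting $\mu^\star$ is a genuine probability measure lying on the boundary of the TV ball, and handling the general, possibly continuous, state space $\cS$ by phrasing the argument through the distribution function of $V$ under $\mu^0$ so that the clipping identity and the mass accounting remain valid.
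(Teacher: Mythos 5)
The paper itself gives no proof of this proposition: it is imported verbatim as Lemma 4 of \citet{shi2023curious}, so there is no in-paper argument to compare yours against, and your proposal should be judged as a self-contained derivation. On that basis it is correct. The weak-duality half is exactly right: for $\alpha\ge V_{\min}$ one has $\min_{s'}[V(s')]_\alpha=V_{\min}$, the centering of $[V]_\alpha$ at $c=(V_{\min}+\alpha)/2$ gives $\Vert [V]_\alpha-c\Vert_\infty\le(\alpha-V_{\min})/2$, and with the paper's convention $\Vert\mu-\mu^0\Vert_1=2D(\mu\|\mu^0)\le 2\rho$ the H\"older step yields $\EE_\mu V\ge \EE_{\mu^0}[V]_\alpha-\rho(\alpha-V_{\min})$ for every feasible $\mu$ and every such $\alpha$. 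The tightness half is also sound, but one point you present as a ``choice'' is really something you should prove: that a dual maximizer $\alpha^\star$ automatically serves as the water-filling threshold. This follows from concavity of $g(\alpha)=\EE_{\mu^0}[V]_\alpha-\rho(\alpha-V_{\min})$, whose right and left derivatives are $\mu^0(V>\alpha)-\rho$ and $\mu^0(V\ge\alpha)-\rho$; first-order optimality for a concave function is precisely the sandwich $\mu^0(V>\alpha^\star)\le\rho\le\mu^0(V\ge\alpha^\star)$ (for $\rho\le 1$; when $\rho>1$ the maximizer is $\alpha^\star=V_{\min}$ and you transport only the available mass $\mu^0(V>V_{\min})$, which is harmless since mass moved from $\{V=V_{\min}\}$ to $s_{\min}$ does not change the expectation). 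With that in place, your mass accounting indeed gives $\EE_{\mu^\star}V=\EE_{\mu^0}[V]_{\alpha^\star}-\rho(\alpha^\star-V_{\min})$, and feasibility only needs $D(\mu^\star\|\mu^0)\le\rho$, so the degenerate cases where the added mass at $s_{\min}$ partially cancels removed mass cause no trouble. Compared to the paper's citation, your route buys three things: it is elementary, it exhibits the explicit worst-case distribution (mass $\rho$ moved from the highest-value states to a minimizing state), and it makes transparent why, under the fail-state condition of \Cref{prop:dual with fail state} where $V_{\min}=V(s_f)=0$, the dual collapses to $\max_{\alpha\in[0,H]}\{\EE_{\mu^0}[V]_\alpha-\rho\alpha\}$, which is the form the algorithm actually uses.
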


\begin{lemma}\citep[Lemma 12]{abbasi2011improved} 
\label{lem12_abbasi}
Let $\Ab$, $\Bb$ and $\Cb$ be positive semi-definite matrices such that $\Ab=\Bb+\Cb$. Then we have that
\begin{align*}
    \sup _{\xb \neq 0} \frac{\xb^{\top} \Ab \xb}{\xb^{\top} \Bb \xb} \leq \frac{\text{det}(\Ab)}{\text{det}(\Bb)} .   
\end{align*}
\end{lemma}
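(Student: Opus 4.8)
The plan is to establish this generalized-eigenvalue bound by reducing the generalized Rayleigh quotient to an ordinary one and then comparing the largest eigenvalue of an identity-plus-PSD matrix with the product of all its eigenvalues. Throughout I would assume $\Bb$ is in fact positive definite, which is the only regime in which the left-hand side is finite and which is exactly the situation in all our applications, where $\Bb$ has the form $\lambda\Ib+\sum_{\tau}(\cdots)$ with $\lambda>0$. Under $\Bb\succ 0$, the symmetric square root $\Bb^{1/2}$ is invertible, so substituting $\yb=\Bb^{1/2}\xb$ (equivalently $\xb=\Bb^{-1/2}\yb$) gives, for every $\xb\neq 0$,
\begin{align*}
    \frac{\xb^{\top}\Ab\xb}{\xb^{\top}\Bb\xb}=\frac{\yb^{\top}\Bb^{-1/2}\Ab\Bb^{-1/2}\yb}{\yb^{\top}\yb}.
\end{align*}
Since $\Bb^{-1/2}\Ab\Bb^{-1/2}$ is symmetric, taking the supremum over $\yb\neq 0$ and invoking the Rayleigh-quotient (Courant--Fischer) characterization of the top eigenvalue yields $\sup_{\xb\neq 0}\tfrac{\xb^{\top}\Ab\xb}{\xb^{\top}\Bb\xb}=\lambda_{\max}\big(\Bb^{-1/2}\Ab\Bb^{-1/2}\big)$.

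Next I would bring in the additive structure $\Ab=\Bb+\Cb$. Setting $\Mb:=\Bb^{-1/2}\Cb\Bb^{-1/2}$, which is symmetric PSD because $\Cb\succeq 0$, we obtain $\Bb^{-1/2}\Ab\Bb^{-1/2}=\Ib+\Mb$. If $\mu_1,\dots,\mu_d\geq 0$ denote the eigenvalues of $\Mb$, then $\Ib+\Mb$ has eigenvalues $1+\mu_i\geq 1$, so
\begin{align*}
    \lambda_{\max}\big(\Ib+\Mb\big)=1+\max_{i}\mu_i\leq\prod_{i=1}^d(1+\mu_i)=\text{det}\big(\Ib+\Mb\big)=\frac{\text{det}(\Ab)}{\text{det}(\Bb)},
\end{align*}
where the inequality uses that each factor $1+\mu_i$ is at least one, the middle equality is the determinant-as-product-of-eigenvalues identity, and the last equality follows from $\text{det}(\Bb^{-1/2}\Ab\Bb^{-1/2})=\text{det}(\Ab)/\text{det}(\Bb)$. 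Chaining this with the Rayleigh-quotient identity from the previous step proves the claim.

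The core argument is entirely elementary; the only delicate point I anticipate is the boundary case where $\Bb$ is merely positive semidefinite, for then $\Bb^{-1/2}$ fails to exist and the quotient can diverge on $\ker(\Bb)$ whenever $\Ab$ is positive there. I would dispose of this either by restricting the supremum to directions with $\xb^{\top}\Bb\xb>0$ or, more simply, by noting that every invocation of the lemma in this paper has $\Bb\succ 0$, so the change of variables above is valid verbatim and no separate treatment is needed. Hence I expect essentially no obstacle beyond flagging this degeneracy.
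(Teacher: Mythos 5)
Your proof is correct; note that the paper itself gives no proof of this statement, importing it verbatim as an auxiliary lemma from \citet{abbasi2011improved}. Your argument (reduce the generalized Rayleigh quotient via $\yb=\Bb^{1/2}\xb$, write $\Bb^{-1/2}\Ab\Bb^{-1/2}=\Ib+\Mb$ with $\Mb\succeq 0$, and bound $1+\max_i\mu_i\leq\prod_i(1+\mu_i)=\det(\Ab)/\det(\Bb)$) is precisely the standard proof in that reference, and your handling of the degenerate case $\Bb\not\succ 0$ — restricting to $\Bb\succ 0$, which holds in every invocation here since $\Bb=\lambda\Ib+\cdots$ with $\lambda>0$ — is the right disposition.
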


\begin{lemma} \citep[Confidence Ellipsoid, Theorem 2]{abbasi2011improved}
\label{lem:original_confidence_ellipsoid}
Let $\{\mathcal{G}_k\}_{k=1}^{\infty}$ be a filtration, and $\{\xb_k, \eta_k\}_{k \geq 1}$ be a stochastic process such that $\xb_k \in \mathbb{R}^d$ is $\mathcal{G}_k$-measurable and $\eta_k \in \mathbb{R}$ is $\mathcal{G}_{k+1}$-measurable. Let $L$, $\sigma, \bSigma, \epsilon>0, \bmu^* \in \mathbb{R}^d$. For $k \geq 1$, let $y_k=\langle\bmu^*, \xb_k\rangle+\eta_k$ and suppose that $\eta_k, \xb_k$ also satisfy
\begin{align*}
    \mathbb{E}[\eta_k \mid \mathcal{G}_k]=0,|\eta_k| \leq R,\|\xb_k\|_2 \leq L .
\end{align*}
For $k \geq 1$, let $\Zb_k=\lambda \Ib+\sum_{i=1}^k \xb_i \xb_i^{\top}, \bb_k=\sum_{i=1}^k y_i \xb_i, \bmu_k=\Zb_k^{-1} \bb_k$, and
\begin{align*}
    \beta_k=R \sqrt{d \log \bigg(1 + \frac{k L^2}{d\lambda}\bigg) + 2 \log \frac{1}{\delta}} .
\end{align*}
Then, for any $0<\delta<1$, we have with probability at least $1-\delta$ that,
\begin{align*}
    \forall k \geq 1, \bigg\|\sum_{i=1}^k \xb_i \eta_i\bigg\|_{\Zb_k^{-1}} \leq \beta_k,\|\bmu_k-\bmu^*\|_{\Zb_k} \leq \beta_k+\sqrt{\lambda}\|\bmu^*\|_2.
\end{align*}
\end{lemma}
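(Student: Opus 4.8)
The plan is to establish the two claimed inequalities in sequence, with the self-normalized tail bound on the martingale term $S_k:=\sum_{i=1}^k \xb_i\eta_i$ being the substantive ingredient and the parameter-error bound following by an algebraic reduction. Since each $\eta_i$ is zero-mean conditioned on $\mathcal{G}_i$ and bounded by $R$ in absolute value, it is conditionally $R$-sub-Gaussian, i.e. $\EE[\exp(s\eta_i)\mid\mathcal{G}_i]\le\exp(s^2R^2/2)$ for all $s\in\RR$; this is the only property of the noise I would use, so the bounded-noise hypothesis is invoked solely through sub-Gaussianity.

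First I would bound $\|S_k\|_{\Zb_k^{-1}}$ uniformly in $k$ via the \emph{method of mixtures}. For a fixed $\xb\in\RR^d$ define the exponential process $M_k^{\xb}=\exp\big(\la\xb,S_k\ra-\tfrac{R^2}{2}\,\xb^\top(\Zb_k-\lambda\Ib)\xb\big)$, and verify, using conditional sub-Gaussianity stage by stage, that $(M_k^{\xb})_{k\ge0}$ is a nonnegative supermartingale with $\EE[M_k^{\xb}]\le1$. Mixing over $\xb\sim\mathcal{N}\big(0,(R^2\lambda)^{-1}\Ib\big)$ and carrying out the Gaussian integral (justified by Tonelli) yields the closed-form mixture supermartingale $\bar M_k=\big(\det(\lambda\Ib)/\det(\Zb_k)\big)^{1/2}\exp\big(\tfrac{1}{2R^2}\|S_k\|_{\Zb_k^{-1}}^2\big)$. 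Applying Ville's maximal inequality to $(\bar M_k)$ gives, with probability at least $1-\delta$ and simultaneously for all $k\ge1$,
\[
\tfrac{1}{2R^2}\|S_k\|_{\Zb_k^{-1}}^2\le\log\!\Big(\tfrac{1}{\delta}\big(\det(\Zb_k)/\det(\lambda\Ib)\big)^{1/2}\Big).
\]
It then remains to control the determinant: since $\|\xb_i\|_2\le L$, the trace bound $\mathrm{tr}(\Zb_k)\le d\lambda+kL^2$ together with AM--GM on the eigenvalues gives $\det(\Zb_k)\le(\lambda+kL^2/d)^d$, whence $\log\big(\det(\Zb_k)/\det(\lambda\Ib)\big)\le d\log(1+kL^2/(d\lambda))$. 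Substituting this in and taking square roots produces exactly $\|S_k\|_{\Zb_k^{-1}}\le\beta_k$.

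Second, I would reduce the parameter error to the martingale bound. Writing $y_i=\la\bmu^*,\xb_i\ra+\eta_i$ gives $\bb_k=\sum_i y_i\xb_i=(\Zb_k-\lambda\Ib)\bmu^*+S_k$, so $\bmu_k=\Zb_k^{-1}\bb_k=\bmu^*-\lambda\Zb_k^{-1}\bmu^*+\Zb_k^{-1}S_k$ and hence $\bmu_k-\bmu^*=\Zb_k^{-1}S_k-\lambda\Zb_k^{-1}\bmu^*$. Taking the $\Zb_k$-norm, using the triangle inequality together with the identities $\|\Zb_k^{-1}S_k\|_{\Zb_k}=\|S_k\|_{\Zb_k^{-1}}$ and $\|\Zb_k^{-1}\bmu^*\|_{\Zb_k}=\|\bmu^*\|_{\Zb_k^{-1}}$, gives $\|\bmu_k-\bmu^*\|_{\Zb_k}\le\|S_k\|_{\Zb_k^{-1}}+\lambda\|\bmu^*\|_{\Zb_k^{-1}}$. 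The first term is at most $\beta_k$ by the previous step; for the second, $\Zb_k\succeq\lambda\Ib$ forces $\|\bmu^*\|_{\Zb_k^{-1}}\le\|\bmu^*\|_2/\sqrt{\lambda}$, so $\lambda\|\bmu^*\|_{\Zb_k^{-1}}\le\sqrt{\lambda}\|\bmu^*\|_2$, which completes the claim.

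The main obstacle is obtaining the time-uniform (``for all $k\ge1$'') guarantee in the first step without losing logarithmic factors: a crude union bound over $k$ would inflate $\beta_k$, so one genuinely needs the supermartingale/method-of-mixtures machinery and Ville's inequality rather than a fixed-time concentration bound. The delicate points are the choice of mixing covariance matched to the regularizer $\lambda$ (so that the integral collapses to the clean ratio $\det(\lambda\Ib)/\det(\Zb_k)$) and the verification that $\bar M_k$ remains a supermartingale after interchanging expectation and integral; by contrast, the determinant--trace estimate and the algebra in the second step are entirely routine.
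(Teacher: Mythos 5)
Your proposal is correct: the paper itself does not prove this lemma but imports it by citation from \citet{abbasi2011improved}, and your argument is a faithful reconstruction of that source's proof --- conditional sub-Gaussianity of bounded zero-mean noise via Hoeffding's lemma, the method-of-mixtures supermartingale with Gaussian mixing matched to the regularizer, Ville's maximal inequality for the time-uniform guarantee, the determinant--trace bound $\det(\Zb_k)\le(\lambda+kL^2/d)^d$, and the algebraic reduction $\bmu_k-\bmu^*=\Zb_k^{-1}S_k-\lambda\Zb_k^{-1}\bmu^*$ for the ellipsoid bound. All steps, including the Gaussian integral yielding $\bar M_k=(\det(\lambda\Ib)/\det(\Zb_k))^{1/2}\exp(\|S_k\|_{\Zb_k^{-1}}^2/(2R^2))$ and the constants in $\beta_k$, check out.
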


\begin{lemma}\citep[Lemma D.1]{jin2020provably}
\label{lemma:self-normalize}
Let $\bLambda_t=\lambda \Ib + \sum_{i=1}^t\bphi_i\bphi_i^{\top}$, where $\bphi_i\in\RR^d$ and $\lambda > 0$. Then we have
\begin{align*}
    \sum_{i=1}^t\bphi_i^{\top}(\bLambda_t)^{-1}\bphi_i \leq d.
\end{align*}
\end{lemma}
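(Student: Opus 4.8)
The plan is to prove this as a pure linear-algebra identity by passing to the trace, so no probabilistic or MDP structure is needed. First I would rewrite each scalar quadratic form as the trace of a rank-one term via the cyclic property of the trace, $\bphi_i^{\top}\bLambda_t^{-1}\bphi_i = \mathrm{tr}\big(\bLambda_t^{-1}\bphi_i\bphi_i^{\top}\big)$. Summing over $i$ and pulling the linear trace operator outside the finite sum yields
\[
\sum_{i=1}^t \bphi_i^{\top}\bLambda_t^{-1}\bphi_i = \mathrm{tr}\Big(\bLambda_t^{-1}\sum_{i=1}^t\bphi_i\bphi_i^{\top}\Big).
\]

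The key algebraic step is to recognize the inner sum as $\sum_{i=1}^t\bphi_i\bphi_i^{\top} = \bLambda_t - \lambda\Ib$, which is exactly the defining relation of $\bLambda_t$. Substituting this in, the right-hand side collapses to $\mathrm{tr}\big(\bLambda_t^{-1}(\bLambda_t - \lambda\Ib)\big) = \mathrm{tr}(\Ib) - \lambda\,\mathrm{tr}(\bLambda_t^{-1}) = d - \lambda\,\mathrm{tr}(\bLambda_t^{-1})$, using $\mathrm{tr}(\Ib) = d$ in $\RR^d$.

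Finally I would discard the subtracted term by a positivity argument. Since $\lambda > 0$, the matrix $\bLambda_t = \lambda\Ib + \sum_i\bphi_i\bphi_i^{\top}$ is the sum of a positive-definite matrix and a positive-semidefinite matrix, hence positive definite and invertible; its inverse is therefore also positive definite, so $\mathrm{tr}(\bLambda_t^{-1}) > 0$. Consequently $\lambda\,\mathrm{tr}(\bLambda_t^{-1}) \geq 0$ and the whole expression is at most $d$, which is the claim. I do not anticipate any genuine obstacle here: the only point needing the slightest care is justifying that $\bLambda_t$ is invertible with positive-definite inverse, so that the dropped term is nonnegative, and this is immediate from $\lambda > 0$; the rest is a routine trace manipulation that follows the proof of Lemma D.1 in \citet{jin2020provably} verbatim.
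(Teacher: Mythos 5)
Your proof is correct and is essentially the same trace argument as the proof of Lemma D.1 in \citet{jin2020provably}, which this paper simply cites rather than reproving. The only step needing care, the invertibility and positive definiteness of $\bLambda_t$ (hence nonnegativity of the discarded term $\lambda\,\mathrm{tr}(\bLambda_t^{-1})$), is handled exactly as you describe, so there is nothing to add.
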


\begin{lemma} \citep[Lemma D.5]{ishfaq2021randomized} \label{lem:inequal_eigen_matrix}
Let $\Ab\in \mathbb{R}^{d \times d}$ be a positive definite matrix where its largest eigenvalue $\lambda_{\max}(\Ab) \leq \lambda$. Let $\xb_1, ..., \xb_k$ be $k$ vectors in $\mathbb{R}^d$. Then it holds that
\begin{align*}
    \bigg\|\Ab \sum_{i=1}^k \xb_i\bigg\| \leq \sqrt{\lambda k} \bigg(\sum_{i=1}^k \|\xb_i\|_\Ab^2 \bigg)^{1/2}.
\end{align*}
\end{lemma}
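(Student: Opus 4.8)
The plan is to factor $\Ab$ through its positive-definite square root so that the target quantity splits into a spectral-norm factor supplying the $\sqrt{\lambda}$ and a sum of vectors supplying the $\sqrt{k}$ via Cauchy--Schwarz. Since $\Ab$ is positive definite, $\Ab^{1/2}$ is well-defined and $\Ab = \Ab^{1/2}\Ab^{1/2}$. First I would apply submultiplicativity of the spectral norm to write
\begin{align*}
    \Big\|\Ab \sum_{i=1}^k \xb_i\Big\|_2 = \Big\|\Ab^{1/2}\Big(\Ab^{1/2}\sum_{i=1}^k \xb_i\Big)\Big\|_2 \leq \big\|\Ab^{1/2}\big\|_2 \cdot \Big\|\Ab^{1/2}\sum_{i=1}^k \xb_i\Big\|_2,
\end{align*}
and then control the first factor via $\|\Ab^{1/2}\|_2 = \sqrt{\lambda_{\max}(\Ab)} \leq \sqrt{\lambda}$, which uses the hypothesis $\lambda_{\max}(\Ab)\leq\lambda$.

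For the remaining factor I would move the matrix inside the sum and apply the Cauchy--Schwarz inequality (which bounds the Euclidean norm of a sum of $k$ vectors by $\sqrt{k}$ times the square root of the sum of their squared norms) to the vectors $\Ab^{1/2}\xb_i$, giving
\begin{align*}
    \Big\|\Ab^{1/2}\sum_{i=1}^k \xb_i\Big\|_2 = \Big\|\sum_{i=1}^k \Ab^{1/2}\xb_i\Big\|_2 \leq \sqrt{k}\Big(\sum_{i=1}^k \big\|\Ab^{1/2}\xb_i\big\|_2^2\Big)^{1/2}.
\end{align*}
The final step is to recognize the identity $\|\Ab^{1/2}\xb_i\|_2^2 = \xb_i^\top \Ab \xb_i = \|\xb_i\|_\Ab^2$ and to multiply the two bounds together, which yields exactly $\sqrt{\lambda k}\,(\sum_{i=1}^k \|\xb_i\|_\Ab^2)^{1/2}$, as claimed.

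There is no substantive obstacle here: this is a standard linear-algebra inequality (cited from \citet{ishfaq2021randomized}). The only points requiring care are to apply Cauchy--Schwarz in the direction that produces the factor $\sqrt{k}$ rather than $k$, and to keep the spectral-norm bound on $\Ab^{1/2}$ cleanly separated from the vector-norm sum so that the contributions of $\lambda$ and of the quadratic forms $\xb_i^\top\Ab\xb_i$ do not get entangled. An alternative route through the eigendecomposition of $\Ab$ would work equally well but is less transparent than the square-root factorization, so I would favor the argument above.
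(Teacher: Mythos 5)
Your proof is correct and complete. Note that the paper itself contains no proof of this statement: it appears in the auxiliary-lemmas section and is cited directly as Lemma D.5 of \citet{ishfaq2021randomized}, so there is no in-paper argument to compare against. Your route — factoring through $\Ab^{1/2}$, bounding $\|\Ab^{1/2}\|_2 \leq \sqrt{\lambda}$, and applying the norm-of-sum Cauchy--Schwarz bound $\|\sum_i \vb_i\|_2 \leq \sqrt{k}(\sum_i\|\vb_i\|_2^2)^{1/2}$ to the vectors $\Ab^{1/2}\xb_i$ — is a clean, standard argument; it is essentially equivalent to the other common route, which keeps $\Ab$ intact, uses $\Ab^2 \preceq \lambda \Ab$ (valid since $\Ab$ is positive semi-definite with $\lambda_{\max}(\Ab)\leq\lambda$) to get $\|\Ab\xb_i\|_2 \leq \sqrt{\lambda}\,\|\xb_i\|_{\Ab}$, and then applies the triangle inequality and scalar Cauchy--Schwarz. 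Either way the factors $\sqrt{\lambda}$ and $\sqrt{k}$ arise exactly as you describe.
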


\begin{lemma}\citep[Covering number of Euclidean ball]{vershynin2018high} 
\label{lem:vershynin2018high}
For any $\varepsilon>0$, $\mathcal{N}_{\varepsilon}$, the $\varepsilon$-covering number of the Euclidean ball of radius $B>0$ in $\mathbb{R}^d$ satisfies
\begin{align*}
    \mathcal{N}_{\varepsilon} \leq\bigg(1+\frac{2 B}{\varepsilon}\bigg)^d \leq\bigg(\frac{3 B}{\varepsilon}\bigg)^d.
\end{align*}
\end{lemma}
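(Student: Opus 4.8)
The plan is to use the standard volumetric packing argument. First I would fix the Euclidean ball $\mathbb{B} = \{x \in \mathbb{R}^d : \|x\|_2 \le B\}$ and choose a \emph{maximal} $\varepsilon$-separated subset $\mathcal{C} = \{c_1, \dots, c_N\} \subseteq \mathbb{B}$, that is, a set satisfying $\|c_i - c_j\|_2 > \varepsilon$ for all $i \ne j$ that cannot be enlarged while preserving this separation property. Such a maximal set exists (it can be built greedily, and in the compact ball the process terminates). Maximality immediately implies that $\mathcal{C}$ is an $\varepsilon$-net of $\mathbb{B}$: if some $x \in \mathbb{B}$ had distance strictly greater than $\varepsilon$ from every $c_i$, then $\mathcal{C} \cup \{x\}$ would still be $\varepsilon$-separated, contradicting maximality. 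Hence $\mathcal{N}_{\varepsilon} \le N$, and it suffices to upper bound the packing size $N$.

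Next I would bound $N$ by comparing volumes. Since the points of $\mathcal{C}$ are pairwise more than $\varepsilon$ apart, the balls $\mathbb{B}(c_i, \varepsilon/2)$ of radius $\varepsilon/2$ centered at the $c_i$ are pairwise disjoint. Moreover, each such ball is contained in the enlarged ball $\mathbb{B}(0, B + \varepsilon/2)$, since $\|c_i\|_2 \le B$ and any point within distance $\varepsilon/2$ of $c_i$ has norm at most $B + \varepsilon/2$. Writing $\mathrm{vol}\big(\mathbb{B}(0,r)\big) = \omega_d\, r^d$, where $\omega_d$ denotes the volume of the unit ball in $\mathbb{R}^d$, disjointness together with containment yields
\[
  N \cdot \omega_d (\varepsilon/2)^d = \sum_{i=1}^N \mathrm{vol}\big(\mathbb{B}(c_i, \varepsilon/2)\big) \le \mathrm{vol}\big(\mathbb{B}(0, B + \varepsilon/2)\big) = \omega_d (B + \varepsilon/2)^d.
\]
The dimensional constant $\omega_d$ cancels, giving $N \le \big((B + \varepsilon/2)/(\varepsilon/2)\big)^d = \big(1 + 2B/\varepsilon\big)^d$, which is the first claimed inequality.

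Finally I would dispatch the second inequality $\big(1 + 2B/\varepsilon\big)^d \le \big(3B/\varepsilon\big)^d$, which reduces to $1 + 2B/\varepsilon \le 3B/\varepsilon$, i.e.\ to $\varepsilon \le B$. This is precisely the regime of interest; when $\varepsilon > B$ a single center covers $\mathbb{B}$, so $\mathcal{N}_{\varepsilon} = 1$ and both displayed bounds hold trivially. I do not anticipate any genuine obstacle in this argument: the only two points that warrant care are the maximality-implies-covering step (to pass from a packing to a covering) and the exact scaling $\mathrm{vol}\big(\mathbb{B}(0,r)\big) \propto r^d$, which is what lets the unknown constant $\omega_d$ factor out cleanly so that no dimensional prefactor survives in the final bound. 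Everything else is elementary.
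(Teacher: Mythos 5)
Your proof is correct and coincides with the standard volumetric packing argument in \citet{vershynin2018high}, which is exactly the source the paper cites for this lemma without reproducing a proof, so there is nothing to compare beyond noting the match. The one imprecision is your closing remark that both bounds hold trivially for $\varepsilon > B$: for $\varepsilon > 3B$ the second bound $(3B/\varepsilon)^d < 1 = \mathcal{N}_{\varepsilon}$ actually fails, but this degeneracy is inherited from the lemma statement itself (Vershynin states it for $\varepsilon \leq B$), so it is a caveat on the statement rather than a gap in your argument.
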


\begin{lemma}\citep[Covering number of an interval]{vershynin2018high}
\label{lemma:Covering number of an interval}
Denote the $\epsilon$-covering number of the closed interval $[a,b]$ for some real number $b>a$ with respect to the distance metric $d(\alpha_1, \alpha_2)=|\alpha_1-\alpha_2|$ as $\cN_{\epsilon}([a,b])$. Then we have $\cN_{\epsilon}([a,b])\leq 3(b-a)/\epsilon$.
\end{lemma}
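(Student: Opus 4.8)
The plan is to construct an explicit $\epsilon$-net for $[a,b]$ and simply count its cardinality; no probabilistic or analytic machinery is needed here, so the whole argument is a short, constructive one. First I would partition $[a,b]$ into $N := \lceil (b-a)/\epsilon \rceil$ consecutive subintervals, each of length at most $\epsilon$, namely $[a,a+\epsilon],\,[a+\epsilon,a+2\epsilon],\,\ldots$, with the final subinterval truncated at $b$. Selecting a single representative from each subinterval — say its left endpoint — produces a candidate set of size $N$.

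Next I would verify that this set is a legitimate $\epsilon$-cover under the metric $d(\alpha_1,\alpha_2)=|\alpha_1-\alpha_2|$. Any $\alpha\in[a,b]$ belongs to one of the subintervals, and since that subinterval has length at most $\epsilon$, the distance from $\alpha$ to the chosen representative of its subinterval is at most $\epsilon$. Hence the representatives form an $\epsilon$-cover and $\cN_\epsilon([a,b]) \leq N = \lceil (b-a)/\epsilon \rceil$.

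Finally I would convert the ceiling into the stated clean bound. Using $\lceil x\rceil \leq x+1$ gives $\cN_\epsilon([a,b]) \leq (b-a)/\epsilon + 1$. In the relevant regime $\epsilon \leq b-a$ one has $1\leq (b-a)/\epsilon$, so $(b-a)/\epsilon + 1 \leq 2(b-a)/\epsilon \leq 3(b-a)/\epsilon$, which is exactly the claim. In the application of this lemma (inside the proof of \Cref{lem:good_event_probability}) the resolution is taken extremely small, $\epsilon = \sqrt{\lambda}/(4H^2 d^3 K)$ against the interval $[a,b]=[0,H]$, so this regime is always in force and the bound is far from tight.

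The only place requiring any care — and it is minor — is the boundary bookkeeping: the constant $3$ rather than $1$ arises solely to absorb the additive $+1$ created by rounding the ceiling up when $\epsilon$ is comparable to $b-a$ (and implicitly one restricts to $\epsilon \leq 3(b-a)$, else the right-hand side drops below $1$ while a cover always needs at least one point). If a sharper constant were desired, one could instead place representatives at the midpoints of the subintervals to obtain covering radius $\epsilon/2$, but the factor-$3$ slack renders this refinement unnecessary for our purposes.
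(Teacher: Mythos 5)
Your proof is correct. Note that the paper does not actually prove this lemma: it is stated as an auxiliary result cited from Vershynin (2018), where it is the one-dimensional specialization of the Euclidean-ball covering bound (\Cref{lem:vershynin2018high}) with $d=1$ and radius $B=(b-a)/2$, yielding $\cN_{\epsilon}([a,b])\leq 1+(b-a)/\epsilon\leq 3(b-a)/\epsilon$. Your argument replaces the citation with an explicit, self-contained grid construction: partition $[a,b]$ into $\lceil (b-a)/\epsilon\rceil$ subintervals of length at most $\epsilon$, take one representative per subinterval, and absorb the additive $+1$ from the ceiling into the constant $3$. The two routes are essentially equivalent in content (both reduce to counting a uniform grid), but yours makes the construction and the constant bookkeeping explicit, which is useful since Vershynin's volumetric argument is stated for balls rather than intervals. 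You also correctly flag the only genuine subtlety, which afflicts the cited bound equally: the clean form $3(b-a)/\epsilon$ cannot hold when $\epsilon>3(b-a)$, since any cover of a nonempty set has at least one point; this is immaterial here because the lemma is invoked (in the proof of \Cref{lem:good_event_probability}) with $[a,b]=[0,H]$ and $\epsilon=\sqrt{\lambda}/(4H^2d^3K)\ll H$, so the small-$\epsilon$ regime is always in force.
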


\begin{lemma}\citep[Theorem 4.3]{zhou2022computationally}
\label{lem:bernstein_concentration}
Let $\{\mathcal{G}_k\}_{k=1}^{\infty}$ be a filtration, and $\{\mathrm{x}_k, \eta_k\}_{k \geq 1}$ be a stochastic process such that $\mathbf{x}_k \in \mathbb{R}^d$ is $\mathcal{G}_k$-measurable and $\eta_k \in \mathbb{R}$ is $\mathcal{G}_{k+1}$-measurable. Let $L, \sigma>0, \mu^* \in \mathbb{R}^d$. For $k \geq 1$, let $y_k=\langle\boldsymbol{\mu}^*, \mathbf{x}_k\rangle+\eta_k$ and suppose that $\eta_k, \mathbf{x}_k$ also satisfy
\begin{align*}
    \mathbb{E}[\eta_k \mid \mathcal{G}_k]=0, \mathbb{E}[\eta_k^2 \mid \mathcal{G}_k] \leq \sigma^2,|\eta_k| \leq R,\|\xb_k\|_2 \leq L .
\end{align*}
For $k \geq 1$, let $\beta_k=\widetilde{O}\big(\sigma \sqrt{d}+\max _{1 \leq i \leq k}|\eta_i| \min \big\{1,\|\mathbf{x}_i\|_{\mathbf{Z}_{i-1}^{-1}}\big\}\big)$ and $\mathbf{Z}_k=\lambda \mathbf{I}+\sum_{i=1}^k \mathbf{x}_i \mathbf{x}_i^{\top}$, $\mathbf{b}_k=$ $\sum_{i=1}^k y_i \mathbf{x}_i$, $\boldsymbol{\mu}_k=\mathbf{Z}_k^{-1} \mathbf{b}_k$. Then, for any $0<\delta<$ 1 , with probability at least $1-\delta$, for all $k \in[K]$, we have
\begin{align*}
    \bigg\|\sum_{i=1}^k \mathbf{x}_i \eta_i\bigg\|_{\mathbf{Z}_k^{-1}} \leq \beta_k,\|\boldsymbol{\mu}_k-\boldsymbol{\mu}^*\|_{\mathbf{Z}_k} \leq \beta_k+\sqrt{\lambda}\|\boldsymbol{\mu}^*\|_2.
\end{align*}
\end{lemma}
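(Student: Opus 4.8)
The plan is to prove the first (self-normalized) inequality, from which the parameter-error bound follows routinely. Indeed, writing $y_i=\langle\boldsymbol{\mu}^*,\mathbf{x}_i\rangle+\eta_i$ gives $\boldsymbol{\mu}_k=\mathbf{Z}_k^{-1}\mathbf{b}_k=\boldsymbol{\mu}^*-\lambda\mathbf{Z}_k^{-1}\boldsymbol{\mu}^*+\mathbf{Z}_k^{-1}\sum_{i=1}^k\mathbf{x}_i\eta_i$, so
\[
\|\boldsymbol{\mu}_k-\boldsymbol{\mu}^*\|_{\mathbf{Z}_k}\le\Big\|\sum_{i=1}^k\mathbf{x}_i\eta_i\Big\|_{\mathbf{Z}_k^{-1}}+\lambda\|\boldsymbol{\mu}^*\|_{\mathbf{Z}_k^{-1}}\le\beta_k+\sqrt{\lambda}\|\boldsymbol{\mu}^*\|_2,
\]
using $\|\mathbf{Z}_k^{-1}\mathbf{v}\|_{\mathbf{Z}_k}=\|\mathbf{v}\|_{\mathbf{Z}_k^{-1}}$ and $\mathbf{Z}_k^{-1}\preceq\lambda^{-1}\mathbf{I}$. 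The real content is therefore the Bernstein-type refinement of the self-normalized bound of \citet{abbasi2011improved}: the sub-Gaussian version would scale with the range $R$, yielding $R\sqrt{d\log(\cdot)}$, whereas we want the variance-aware leading term $\sigma\sqrt{d}$ plus a lower-order correction.

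First I would set up the exponential supermartingale underlying the method of mixtures, but using Bennett's conditional moment generating function bound in place of the sub-Gaussian one. For a fixed direction $\boldsymbol{\theta}$, the hypotheses $\mathbb{E}[\eta_i\mid\mathcal{G}_i]=0$, $\mathbb{E}[\eta_i^2\mid\mathcal{G}_i]\le\sigma^2$, $|\eta_i|\le R$ give
\[
\mathbb{E}\big[\exp(\langle\boldsymbol{\theta},\mathbf{x}_i\rangle\eta_i)\mid\mathcal{G}_i\big]\le\exp\!\big(\sigma^2\,\varphi(|\langle\boldsymbol{\theta},\mathbf{x}_i\rangle|R)/R^2\big),\qquad\varphi(u)=e^u-1-u,
\]
so that the process with the deterministic compensator $\sigma^2\sum_{i\le k}\varphi(|\langle\boldsymbol{\theta},\mathbf{x}_i\rangle|R)/R^2$ subtracted from $\langle\boldsymbol{\theta},\sum_{i\le k}\mathbf{x}_i\eta_i\rangle$ is a supermartingale with initial mean at most one. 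I would then integrate this process against a Gaussian prior $\boldsymbol{\theta}\sim N(0,\lambda^{-1}\mathbf{I})$ and apply Ville's inequality to the mixture, which is automatically uniform over all $k$. The obstacle at this stage is that $\varphi$ is not quadratic, so the Gaussian integral does not close as cleanly as in the sub-Gaussian case; to control it I would restrict to the regime $|\langle\boldsymbol{\theta},\mathbf{x}_i\rangle|R\le1$, on which $\varphi(u)\le u^2$, whence the compensator is dominated by $\sigma^2\boldsymbol{\theta}^\top(\mathbf{Z}_k-\lambda\mathbf{I})\boldsymbol{\theta}$. Completing the square against the prior then produces $\|\sum_i\mathbf{x}_i\eta_i\|_{\mathbf{Z}_k^{-1}}^2\lesssim\sigma^2\big(\log(1/\delta)+\log\det(\mathbf{Z}_k/\lambda)\big)$, and the standard potential bound $\log\det(\mathbf{Z}_k/\lambda)=\widetilde{O}(d)$ (valid since $\|\mathbf{x}_i\|_2\le L$) delivers the leading $\sigma\sqrt{d}$ contribution.

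The main difficulty — and the origin of the refined additive term $\max_i|\eta_i|\min\{1,\|\mathbf{x}_i\|_{\mathbf{Z}_{i-1}^{-1}}\}$ — is the large-deviation regime where the Bennett bound degrades and the naive approach would reintroduce the crude factor $R$. The key observation is that the per-step increment of the self-normalized quantity contributes with weight controlled by the online leverage $\|\mathbf{x}_i\|_{\mathbf{Z}_{i-1}^{-1}}$, so the realized maximal jump is $|\eta_i|\min\{1,\|\mathbf{x}_i\|_{\mathbf{Z}_{i-1}^{-1}}\}$ rather than $R$. I would bound the cubic-and-higher remainder $\varphi(u)-u^2$ by this maximal jump via a peeling argument over dyadic scales of the self-normalized process: on each scale the number of ``outlier'' increments is controlled, and each is bounded pointwise by its realized magnitude times leverage, so the remainder is absorbed into $\beta_k$ as exactly the stated correction. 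Reconciling the conflicting constraints that the Bennett condition places on $|\langle\boldsymbol{\theta},\mathbf{x}_i\rangle|$ (small) and that the mixture places on $\boldsymbol{\theta}$ (spread out) is the delicate point, and is handled by the restriction-plus-peeling combination. A union over the scales together with Ville's inequality gives the uniform-in-$k$ conclusion, and substituting into the decomposition of the first paragraph finishes the proof.
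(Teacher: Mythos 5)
This lemma is not proved in the paper at all: it is imported verbatim as Theorem 4.3 of \citet{zhou2022computationally}, so your attempt can only be compared against that source's proof. Your reduction of the ellipsoid bound to the self-normalized bound is correct and standard, but the core argument has two genuine gaps. First, the mixture step is invalid as written: for $\int M_k(\boldsymbol{\theta})\,d\pi(\boldsymbol{\theta})$ to be a supermartingale to which Ville's inequality applies, the prior $\pi$ must be fixed in advance, whereas you restrict the Gaussian integration to the region $\{|\langle\boldsymbol{\theta},\mathbf{x}_i\rangle|R\le 1,\ i\le k\}$, which depends on the realized adapted sequence $\mathbf{x}_1,\dots,\mathbf{x}_k$ and changes with $k$; this destroys the supermartingale structure, and the completing-the-square computation needs the full Gaussian integral in any case. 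Second, the step where the entire refinement lives --- converting the Bennett remainder into the additive term $\max_i|\eta_i|\min\{1,\|\mathbf{x}_i\|_{\mathbf{Z}_{i-1}^{-1}}\}$ via ``peeling over dyadic scales'' --- is asserted in a single sentence with no martingale specified at each scale, no control of the number of outlier increments, and no accounting of how the union over scales avoids reintroducing the crude factor $R$. Since the plain sub-Gaussian mixture already yields $R\sqrt{d\log(\cdot)}$, everything beyond that bound is exactly this missing step.

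For reference, the actual proof in \citet{zhou2022computationally} does not use the method of mixtures at all. It expands $\big\|\sum_{i\le k}\mathbf{x}_i\eta_i\big\|_{\mathbf{Z}_k^{-1}}^2$ recursively via the rank-one (Sherman--Morrison) update, producing a scalar cross martingale $\sum_i \eta_i\big\langle \mathbf{x}_i,\mathbf{Z}_{i-1}^{-1}\sum_{j<i}\mathbf{x}_j\eta_j\big\rangle/(1+w_i^2)$ with $w_i=\|\mathbf{x}_i\|_{\mathbf{Z}_{i-1}^{-1}}$ plus a diagonal term $\sum_i \eta_i^2 w_i^2/(1+w_i^2)$; it then applies Freedman's inequality to the cross term on the event that the claimed bound holds up to time $i-1$, uses the elliptical-potential bound $\sum_i\min\{1,w_i^2\}=\widetilde{O}(d)$ to control the predictable variance, and closes a self-bounding recursion by induction over $k$ --- which is precisely how the $\sigma\sqrt{d}$ leading term and the leverage-weighted correction emerge together. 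If you wish to salvage a mixture-based route, the standard repair is to truncate $\eta_i$ at a predictable level rather than restrict $\boldsymbol{\theta}$, and to bound the truncation bias separately; but that bias control again requires a Freedman-type argument, so the recursion machinery above is needed regardless. As it stands, your proposal identifies the right difficulty but does not contain a proof of it.
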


\begin{lemma} \citep[Lemma D.4]{jin2020provably} \label{lem:jin_D.4}
Let $\{s_i\}_{i=1}^{\infty}$ be a stochastic process on state space $\mathcal{S}$ with corresponding filtration $\{\mathcal{F}_i\}_{i=1}^{\infty}$. Let $\{\bphi_i\}_{i=1}^{\infty}$ be an $\mathbb{R}^d$-valued stochastic process where $\bphi_i \in \mathcal{F}_{i-1}$, and $\|\bphi_i\| \leq 1$. Let $\bLambda_k=\lambda \Ib+\sum_{i=1}^k \bphi_i \bphi_i^{\top}$. Then for any $\delta>0$, with probability at least $1-\delta$, for all $k \geq 0$, and any $V \in \mathcal{V}$ with $\sup _{s \in \mathcal{S}}|V(s)| \leq H$, we have
\begin{align*}
    \bigg\|\sum_{i=1}^k \bphi_i\{V(s_i)-\mathbb{E}[V(s_i) \mid \mathcal{F}_{i-1}]\}\bigg\|_{\bLambda_k^{-1}}^2 \leq 4 H^2\bigg[\frac{d}{2} \log \bigg(\frac{k+\lambda}{\lambda}\bigg)+\log \frac{\mathcal{N}_{\varepsilon}}{\delta}\bigg]+\frac{8 k^2 \varepsilon^2}{\lambda},
\end{align*}
where $\mathcal{N}_{\varepsilon}$ is the $\varepsilon$-covering number of $\mathcal{V}$ with respect to the distance $\text{dist}(V, V^{\prime})=\sup _{s \in \mathcal{S}} |V(s)-$ $V^{\prime}(s)|$.
\end{lemma}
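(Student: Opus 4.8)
The plan is to prove the bound first for a single fixed $V\in\mathcal{V}$ via a self-normalized martingale concentration inequality, and then to promote it to a bound that is uniform over the whole function class $\mathcal{V}$ using an $\varepsilon$-net and a union bound, with the discretization error absorbed into the additive $\tfrac{8k^2\varepsilon^2}{\lambda}$ term.

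First I would fix $V\in\mathcal{V}$ and set $\eta_i := V(s_i)-\mathbb{E}[V(s_i)\mid\mathcal{F}_{i-1}]$. Since $\bphi_i$ is $\mathcal{F}_{i-1}$-measurable while $s_i$ is $\mathcal{F}_i$-measurable, $\{\eta_i\}$ is a martingale difference sequence adapted to $\{\mathcal{F}_i\}$, and $\sup_s|V(s)|\le H$ gives $|\eta_i|\le 2H$. Invoking the self-normalized concentration bound underlying \Cref{lem:original_confidence_ellipsoid} with $\bmu^*=0$ and noise scale of order $H$, one obtains that with probability at least $1-\delta$, simultaneously for all $k\ge 0$,
\[
\Big\|\sum_{i=1}^k\bphi_i\eta_i\Big\|_{\bLambda_k^{-1}}^2\le 4H^2\log\frac{\det(\bLambda_k)^{1/2}\det(\lambda\Ib)^{-1/2}}{\delta}.
\]
The determinant ratio is then controlled using $\|\bphi_i\|\le 1$: since $\mathrm{tr}(\bLambda_k)\le \lambda d+k$, the AM--GM inequality gives $\det(\bLambda_k)/\det(\lambda\Ib)\le (1+k/(\lambda d))^d\le ((k+\lambda)/\lambda)^d$, so $\tfrac12\log\big(\det(\bLambda_k)\det(\lambda\Ib)^{-1}\big)\le \tfrac{d}{2}\log\frac{k+\lambda}{\lambda}$. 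This reproduces the $4H^2[\tfrac{d}{2}\log\frac{k+\lambda}{\lambda}+\log\tfrac1\delta]$ part of the claim for the fixed $V$.

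Next I would take an $\varepsilon$-net $\mathcal{C}_\varepsilon$ of $\mathcal{V}$ under $\mathrm{dist}(\cdot,\cdot)$, of cardinality $\mathcal{N}_\varepsilon$, apply the fixed-$V$ bound with failure probability $\delta/\mathcal{N}_\varepsilon$ to each $V'\in\mathcal{C}_\varepsilon$, and union bound, which replaces $\log\tfrac1\delta$ by $\log\tfrac{\mathcal{N}_\varepsilon}{\delta}$. For an arbitrary $V\in\mathcal{V}$ I would choose $V'\in\mathcal{C}_\varepsilon$ with $\sup_s|V(s)-V'(s)|\le\varepsilon$, write $\eta_i'$ for the centered increment associated with $V'$, and split $\|\sum_i\bphi_i\eta_i\|_{\bLambda_k^{-1}}\le \|\sum_i\bphi_i\eta_i'\|_{\bLambda_k^{-1}}+\|\sum_i\bphi_i(\eta_i-\eta_i')\|_{\bLambda_k^{-1}}$ by the triangle inequality. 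Since $|\eta_i-\eta_i'|\le 2\varepsilon$, $\bLambda_k^{-1}\preceq \lambda^{-1}\Ib$, and $\|\bphi_i\|\le 1$, the discretization term obeys $\|\sum_i\bphi_i(\eta_i-\eta_i')\|_{\bLambda_k^{-1}}\le \lambda^{-1/2}\sum_i\|\bphi_i\|\,|\eta_i-\eta_i'|\le 2k\varepsilon/\sqrt{\lambda}$ deterministically. Squaring via $(a+b)^2\le 2a^2+2b^2$ then yields the stated $\tfrac{8k^2\varepsilon^2}{\lambda}$ additive term, completing the argument.

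The main obstacle I anticipate is the self-normalized concentration step rather than the covering step: obtaining a bound valid \emph{uniformly over all $k$} for a fixed $V$ relies on the supermartingale / method-of-mixtures machinery behind \Cref{lem:original_confidence_ellipsoid}, and one must be careful that the centering $\mathbb{E}[V(s_i)\mid\mathcal{F}_{i-1}]$ uses exactly the filtration that makes $\bphi_i$ predictable. The covering part is routine once $\mathcal{N}_\varepsilon$ is treated as a black box (its explicit value enters only through \Cref{lem:vershynin2018high} and \Cref{lem:function_class_covering_number} when the lemma is applied); the only mild subtlety there is that replacing $V$ by $V'$ also perturbs the conditional-mean term, which is precisely why the bound $|\eta_i-\eta_i'|\le 2\varepsilon$ (rather than $\varepsilon$) is needed.
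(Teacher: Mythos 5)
Your proposal is correct and is essentially the same argument as the paper's source for this statement: the paper does not reprove the lemma but imports it verbatim from \citet[Lemma D.4]{jin2020provably}, whose proof is exactly your combination of the self-normalized martingale bound behind \Cref{lem:original_confidence_ellipsoid}, the trace/AM--GM determinant estimate $\det(\bLambda_k)/\det(\lambda\Ib)\le((k+\lambda)/\lambda)^d$, an $\varepsilon$-net union bound replacing $\log(1/\delta)$ by $\log(\mathcal{N}_\varepsilon/\delta)$, and the deterministic discretization bound $\|\sum_i\bphi_i(\eta_i-\eta_i')\|_{\bLambda_k^{-1}}\le 2k\varepsilon/\sqrt{\lambda}$ followed by $(a+b)^2\le 2a^2+2b^2$. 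One bookkeeping note: because the final squaring step doubles the net term, the fixed-$V$ bound should be taken as $2H^2\log\big(\det(\bLambda_k)^{1/2}\det(\lambda\Ib)^{-1/2}/\delta\big)$ --- valid since $\eta_i$ has conditional range at most $2H$ and is therefore $H$-sub-Gaussian by Hoeffding's lemma --- rather than the $4H^2$ you quote, which after doubling would produce $8H^2$ in place of the stated $4H^2$ leading constant.
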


\begin{lemma} \citep[Lemma 5.1 (Range Shrinkage)]{liu2024minimax}
\label{lem:range_shrinkage}
    For any $(\rho,\pi, h) \in (0,1] \times \Pi \times [H]$, we have $\max_{s\in\cS}V_h^{\pi, \rho}(s) - \min_{s\in\cS}V_h^{\pi, \rho}(s) \leq  (1-(1-\rho)^{H-h+1})/\rho$.
\end{lemma}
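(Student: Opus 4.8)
The plan is to prove the statement by backward induction on $h$, establishing the one-step recursion
\begin{align}
\label{eq:rs_recursion}
\max_{s}V_h^{\pi,\rho}(s) - \min_{s}V_h^{\pi,\rho}(s) \leq 1 + (1-\rho)\Big(\max_{s}V_{h+1}^{\pi,\rho}(s) - \min_{s}V_{h+1}^{\pi,\rho}(s)\Big),
\end{align}
and then unrolling it. Writing $R_h := \max_s V_h^{\pi,\rho}(s) - \min_s V_h^{\pi,\rho}(s)$, the base case $R_{H+1}=0$ together with \eqref{eq:rs_recursion} yields $R_h \leq \sum_{j=0}^{H-h}(1-\rho)^j = (1-(1-\rho)^{H-h+1})/\rho$, which is exactly the claimed bound. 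So the whole proof reduces to verifying \eqref{eq:rs_recursion}.

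The core of the argument is a scalar contraction for the robust one-step operator. By the $d$-rectangular structure of the uncertainty set, the worst-case expectation factorizes as
\begin{align*}
\inf_{P_h(\cdot|s,a)\in\cU_h^{\rho}(s,a;\bmu_h^0)}[\PP_h V_{h+1}^{\pi,\rho}](s,a) = \sum_{i=1}^d \phi_i(s,a)\, w_{h,i}, \qquad w_{h,i} := \inf_{\mu\in\cU_{h,i}^\rho(\mu_{h,i}^0)}\EE_{s'\sim\mu}V_{h+1}^{\pi,\rho}(s'),
\end{align*}
since each factor $\mu_{h,i}$ is perturbed independently and $\phi_i(s,a)\geq 0$. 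I would then show that the spread of the factor values is contracted, $\max_i w_{h,i} - \min_i w_{h,i} \leq (1-\rho)R_{h+1}$. The lower bound $w_{h,i}\geq \min_s V_{h+1}^{\pi,\rho}(s)$ is immediate, as any expectation is at least the minimum. For the matching upper bound, let $s^-\in\argmin_s V_{h+1}^{\pi,\rho}(s)$ and test the infimum against the feasible distribution $\mu' := (1-\rho)\mu_{h,i}^0 + \rho\,\delta_{s^-}$; a direct computation gives $D(\mu'\|\mu_{h,i}^0) = \tfrac{\rho}{2}\|\delta_{s^-}-\mu_{h,i}^0\|_1 \leq \rho$, so $\mu'\in\cU_{h,i}^\rho(\mu_{h,i}^0)$, and hence $w_{h,i}\leq \EE_{\mu'}V_{h+1}^{\pi,\rho} = (1-\rho)\EE_{\mu_{h,i}^0}V_{h+1}^{\pi,\rho} + \rho\min_s V_{h+1}^{\pi,\rho}(s) \leq (1-\rho)\max_s V_{h+1}^{\pi,\rho}(s) + \rho\min_s V_{h+1}^{\pi,\rho}(s)$. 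Subtracting the lower bound from this upper bound gives the claimed $(1-\rho)$-contraction.

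To close the induction I would combine this contraction with the per-step reward bound $r_h(s,a)\in[0,1]$: since $V_h^{\pi,\rho}(s)=\EE_{a\sim\pi_h(\cdot|s)}\big[r_h(s,a)+\sum_i\phi_i(s,a)w_{h,i}\big]$ is, for every $s$, a convex combination of the reward term (lying in $[0,1]$) and the factor values (lying in $[\min_i w_{h,i}, \max_i w_{h,i}]$, because $\sum_i\phi_i=1$, $\phi_i\ge 0$), every $V_h^{\pi,\rho}(s)$ lies in $[\min_i w_{h,i},\, 1+\max_i w_{h,i}]$, so $R_h \leq 1 + (\max_i w_{h,i}-\min_i w_{h,i}) \leq 1 + (1-\rho)R_{h+1}$, which is \eqref{eq:rs_recursion}. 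I expect the main obstacle to be the scalar contraction step, and specifically checking that the test distribution $\mu'$ stays inside the TV ball while pushing exactly a $\rho$-fraction of mass onto the minimizing state $s^-$; this is precisely where the geometric factor $(1-\rho)$ enters, and hence where the eventual $\min\{1/\rho,H\}$ saturation of the summed series comes from. The remaining pieces — the $d$-rectangular factorization, which mirrors the dual formulation already recorded via \Cref{prop:dual with fail state}, and the final unrolling of the geometric recursion — are routine.
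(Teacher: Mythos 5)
Your proof is correct. One point of comparison worth noting: this paper does not prove the lemma at all --- it imports it verbatim from \citet[Lemma 5.1]{liu2024minimax}, so the only reference point is that paper's proof, which runs through the TV dual formulation (the analogue of \Cref{prop:strong duality for TV} here): one evaluates $\inf_{\mu\in\cU^\rho(\mu^0)}\EE_\mu V = \max_\alpha\{\EE_{\mu^0}[V]_\alpha - \rho(\alpha-\min_{s'}[V(s')]_\alpha)\}$ to extract the same one-step recursion $R_h \leq 1+(1-\rho)R_{h+1}$. Your argument is the primal counterpart: instead of invoking duality, you exhibit the explicit feasible perturbation $\mu'=(1-\rho)\mu^0_{h,i}+\rho\,\delta_{s^-}$, check $D(\mu'\|\mu^0_{h,i})\leq\rho$ directly, and read off the $(1-\rho)$-contraction of the factor values $w_{h,i}$; the $d$-rectangular factorization and the convex-combination step $V_h^{\pi,\rho}(s)\in[\min_i w_{h,i},\,1+\max_i w_{h,i}]$ then close the induction exactly as in the dual route. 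Your version is arguably more elementary and transparent about where the geometric factor comes from, at the cost of two small implicit assumptions you should flag: (i) the per-step reward normalization $r_h\in[0,1]$, which is not stated in \Cref{assumption:linear_mdp} but is the paper's implicit convention (value functions lie in $[0,H]$, the algorithm truncates at $H-h+1$); and (ii) attainment of $\min_s V_{h+1}^{\pi,\rho}(s)$, which over a general state space should be handled by an $\epsilon$-approximate minimizer and a limit --- a cosmetic fix, since the lemma statement itself writes $\min_{s\in\cS}$. Neither is a genuine gap.
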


\bibliography{reference}
\bibliographystyle{ims}
\end{document}